\documentclass{article}
\pdfoutput=1
% if you need to pass options to natbib, use, e.g.:
%     \PassOptionsToPackage{numbers, compress}{natbib}
% before loading neurips_2021

% ready for submission
\PassOptionsToPackage{sort&compress,numbers}{natbib}
\usepackage[preprint]{neurips_2021}
\makeatletter
\renewcommand{\@noticestring}{}
\makeatother

% to compile a preprint version, e.g., for submission to arXiv, add add the
% [preprint] option:
%     \usepackage[preprint]{neurips_2021}

% to compile a camera-ready version, add the [final] option, e.g.:
%     \usepackage[final]{neurips_2021}

% to avoid loading the natbib package, add option nonatbib:
%    \usepackage[nonatbib]{neurips_2021}
%\usepackage{natbib}
\usepackage[utf8]{inputenc} % allow utf-8 input
\usepackage[T1]{fontenc}    % use 8-bit T1 fonts
\usepackage[unicode]{hyperref}       % hyperlinks
\usepackage{url}            % simple URL typesetting
\usepackage{booktabs}       % professional-quality tables
\usepackage{amsfonts}       % blackboard math symbols
\usepackage{nicefrac}       % compact symbols for 1/2, etc.
\usepackage{microtype}      % microtypography
\usepackage{xcolor}         % colors
\usepackage{amsmath,amsfonts,amssymb,amsthm}

\usepackage{todonotes}
\usepackage{IEEEtrantools}
\usepackage[linesnumbered,ruled]{algorithm2e}

\usepackage{bbm}
\usepackage{enumitem}
\usepackage{mathtools}
\usepackage{epstopdf,wrapfig}

% The \author macro works with any number of authors. There are two commands
% used to separate the names and addresses of multiple authors: \And and \AND.
%
% Using \And between authors leaves it to LaTeX to determine where to break the
% lines. Using \AND forces a line break at that point. So, if LaTeX puts 3 of 4
% authors names on the first line, and the last on the second line, try using
% \AND instead of \And before the third author name.

\author{%
  Prateek Jain \\%\thanks{suhask@mit.edu} \\
  Google AI Research Lab,\\
  Bengaluru, India 560016 \\
  \texttt{prajain@google.com} \\
    \And
  Suhas S Kowshik \\%\thanks{suhask@mit.edu} \\
  Department of EECS\\
  MIT,\\
  Cambridge, MA 02139 \\
  \texttt{suhask@mit.edu} \\
  \AND
  Dheeraj Nagaraj \\%\thanks{suhask@mit.edu} \\
  Department of EECS\\
  MIT,\\
  Cambridge, MA 02139 \\
  \texttt{dheeraj@mit.edu} \\
  \And
  Praneeth Netrapalli \\%\thanks{suhask@mit.edu} \\
  Google AI Research Lab,\\
  Bengaluru, India 560016 \\
  \texttt{pnetrapalli@google.com} \\
  }

% The following packages will be automatically loaded:
% amsmath, amssymb, natbib, graphicx, url, algorithm2e
%\usepackage{graphicx}
%\graphicspath{{figs/}}

%\usepackage{cite}
%\usepackage[cmex10]{amsmath}
%\makeatletter
%\newcommand{\distas}[1]{#1}%
\newcommand{\distas}[1]{\mathbin{\overset{#1}{\sim}}}%
\newsavebox{\mybox}\newsavebox{\mysim}
\newcommand{\distras}[1]{%
  \savebox{\mybox}{\hbox{\kern3pt$\scriptstyle#1$\kern3pt}}%
  \savebox{\mysim}{\hbox{$\sim$}}%
  \mathbin{\overset{#1}{\kern\z@\resizebox{\wd\mybox}{\ht\mysim}{$\sim$}}}%
}
%\makeatother
\newtheorem{theorem}{Theorem}
\newtheorem{lemma}[theorem]{Lemma}
\newtheorem{claim}{Claim}
\newtheorem{define}{Definition}
\newtheorem{proposition}{Proposition}
\newtheorem{corollary}{Corollary}
\newtheorem{assumption}{Assumption}
\newtheorem*{remark}{Remark}

%\newtheorem*{remark}{Remark}

% we want normal font for the equation number

%\def\Pcs{S}{P_{c[S^c]}}

\newcommand{\norm}[1]{\left\| #1 \right\|}
\newcommand{\norms}[1]{\| #1 \|}
\newcommand{\lmin}[1]{\sigma_{\min}(#1)}
\newcommand{\lmax}[1]{\sigma_{\max}(#1)}
\newcommand{\Pb}[1]{\mathbb{P}\left[#1 \right]}

\newcommand{\Ex}[1]{\mathbb{E}\left[#1 \right]}

\newcommand{\At}[2]{\mbox{${A^{#1}_{#2}}$}}
\newcommand{\Av}[2]{\mbox{${A^{#1,v}_{#2}}$}}

\newcommand{\Att}[2]{\mbox{${\tilde{A}^{#1}_{#2}}$}}
\newcommand{\Avt}[2]{\mbox{${\tilde{A}^{#1,v}_{#2}}$}}
\newcommand{\Abt}[2]{\mbox{${\tilde{A}^{#1,b}_{#2}}$}}
\newcommand{\Atdiff}[2]{\left(\mbox{${A^{#1,v}_{#2}}$}-\A\right)}
\newcommand{\Attdiff}[2]{\left(\mbox{${\tilde{A}^{#1,v}_{#2}}$}\right)}

\newcommand{\Atto}[2]{\left(\mbox{${\tilde{A}^{#1,b}_{#2}}$}-\A\right)}

\newcommand{\Anoa}[2]{\left(\mbox{${\hat{A}^{b}_{#1,#2}}$}-\A\right)}
\newcommand{\Anoat}[2]{\left(\mbox{${\hat{\tilde A}^{b}_{#1,#2}}$}-\A\right)}
\newcommand{\Anva}[2]{\left(\mbox{${\hat{A}^{v}_{#1,#2}}$}\right)}
\newcommand{\Anvat}[2]{\left(\mbox{${\hat{\tilde A}^{v}_{#1,#2}}$}\right)}
\def\Ana{\hat A_{a,N} }
\def\Anav{\hat A^v_{a,N} }
\def\Anvt{\hat {\tilde A}^v_{N} }
\def\Anab{\hat A^b_{a,N} }
\def\Anat{\hat{\tilde {A}}_{a,N} }
\def\Anavt{\hat{\tilde {A}}^v_{a,N} }
\def\Anovt{\hat{\tilde {A}}^v_{0,N} }
\def\Anoavt{\hat{\tilde {A}}^v_{0,a} }
\def\Anabt{\hat{\tilde {A}}^b_{a,N} }

\newcommand{\Vt}[1]{\tilde V_{#1}}

\newcommand{\kl}[2]{\mathsf{KL}( #1 \| #2 )}
\newcommand{\Xt}[2]{\mbox{${X^{#1}_{#2}}$}}
\newcommand{\Xttr}[2]{\mbox{${X^{#1,\top}_{#2}}$}}
\newcommand{\Xtt}[2]{\mbox{${\tilde{X}}^{#1}_{#2}$}}
\newcommand{\Xtttr}[2]{\mbox{${\tilde{X}}^{#1,\top}_{#2}$}}

\newcommand{\Nt}[2]{\mbox{${\eta^{#1}_{#2}}$}}

\newcommand{\Htt}[3]{\mbox{${\tilde{H}^{#1}_{#2,#3}}$}}
\newcommand{\Htttr}[3]{\mbox{${\tilde{H}^{#1,\top}_{#2,#3}}$}}
\newcommand{\Hth}[3]{\mbox{${\hat{H}^{#1}_{#2,#3}}$}}
\newcommand{\Hthtr}[3]{\mbox{${\hat{H}^{#1,\top}_{#2,#3}}$}}

\newcommand{\prodHtt}[2]{\left(\prod_{s=#2}^{1}\Htt{#1-s}{0}{B-1}\right)}
\newcommand{\prodHtttr}[2]{\left(\prod_{s=1}^{#2}\Htttr{#1-s}{0}{B-1}\right)}
\newcommand{\prodHth}[2]{\left(\prod_{s=#2}^{1}\Hth{#1-s}{0}{B-1}\right)}
\newcommand{\prodHthtr}[2]{\left(\prod_{s=1}^{#2}\Hthtr{#1-s}{0}{B-1}\right)}

\newcommand{\Ptt}[2]{\left(I-2\gamma \Xtt{#1}{#2}\Xtttr{#1}{#2}\right)}

\newcommand{\Ppt}[2]{\mbox{${\tilde{P}^{#1}_{#2}}$}}

\newcommand{\var}{\mathsf{VAR}}
\newcommand{\sgdber}{\mathsf{SGD}-\mathsf{RER}}
\newcommand{\sgder}{\mathsf{SGD}-\mathsf{ER}}
\newcommand{\sgd}{\mathsf{SGD}}
\newcommand{\ols}{\mathsf{OLS}}
\newcommand{\gram}[1]{#1^{\top} #1}

\newcommand{\lossop}{\mathcal{L}_{\mathsf{op}}}
\newcommand{\losspred}{\mathcal{L}_{\mathsf{pred}}}
\newcommand{\lossmm}{\mathcal{L}_{\mathsf{minmax}}}
\newcommand{\ahat}{\hat{A}}
\newcommand{\sym}[1]{\operatorname{Sym}\left(#1\right)}

\def\ch{\mathcal{H}}
\def\cc{\mathcal{C}}
\def\cd{\mathcal{D}}

\def\cl{\mathcal{L}}
\def\clt{\tilde{\cl}}
\newcommand{\ind}[2]{1\left[\cd^{#1,#2}\right]}
\newcommand{\indc}[2]{1\left[\cd^{#1,#2,C}\right]}
\def\cdt{\tilde\cd}
\def\cct{\tilde \cc}

\newcommand{\indt}[2]{1\left[\tilde \cd^{#1,#2}\right]}
\newcommand{\indtc}[2]{1\left[\tilde \cd^{#1,#2,C}\right]}
\def\cdh{\hat\cd}
\newcommand{\indh}[2]{1\left[\cdh^{#1,#2}\right]}

\def\nn{\mathcal{N}}
\def\nx{\norm{X}}
\def\nxx{{\left(2\norm{X}\right)}}
\def\A{{{A^*}}}
\def\Aa{A^*}
\def\a{a^*}
\def\atr{a^{*,\top}}
\def\gammah{\hat{\gamma}}
\DeclareMathOperator{\tr}{Tr}
\DeclareMathOperator{\cro}{Cr}
\DeclareMathOperator{\poly}{Poly}
\def\crot{\widetilde {\cro}}
\DeclareMathOperator{\dg}{Dg}
\def\dgt{\widetilde{\dg}}
\def\prbnd{\frac{1}{T^{\alpha}}}
\def\prbndsq{\frac{1}{T^{\alpha/2}}}

\def\prbndsqinv{T^{\alpha/2}}

\setlength{\textfloatsep}{5pt}

\def\Ieee{IEEEeqnarray*}
\def\Ieeen{\IEEEyesnumber}

\title{Streaming Linear System Identification with Reverse Experience Replay}
%\usepackage{times}
% Use \Name{Author Name} to specify the name.
% If the surname contains spaces, enclose the surname
% in braces, e.g. \Name{John {Smith Jones}} similarly
% if the name has a "von" part, e.g \Name{Jane {de Winter}}.
% If the first letter in the forenames is a diacritic
% enclose the diacritic in braces, e.g. \Name{{\'E}louise Smith}

% Two authors with the same address
% \coltauthor{\Name{Author Name1} \Email{abc@sample.com}\and
%  \Name{Author Name2} \Email{xyz@sample.com}\\
%  \addr Address}

% Three or more authors with the same address:
% \coltauthor{\Name{Author Name1} \Email{an1@sample.com}\\
%  \Name{Author Name2} \Email{an2@sample.com}\\
%  \Name{Author Name3} \Email{an3@sample.com}\\
%  \addr Address}

% Authors with different addresses:

\begin{document}

\maketitle

\begin{abstract}%
We consider the problem of estimating a linear time-invariant (LTI) dynamical system from a single trajectory via streaming algorithms, which is encountered in several applications including reinforcement learning (RL) and time-series analysis. 
While the LTI system estimation problem is well-studied in the {\em offline} setting, the practically important streaming/online setting has received little attention. Standard streaming methods like stochastic gradient descent (SGD) are unlikely to work since streaming points can be highly correlated. 
In this work, we propose a novel streaming algorithm, SGD with Reverse Experience Replay ($\sgdber$), that is inspired by the experience replay (ER)  technique popular in the RL literature. $\sgdber$ divides data into small buffers and runs SGD backwards on the data stored in the individual buffers. We show that this algorithm exactly deconstructs the dependency structure and obtains information theoretically optimal guarantees for both parameter error and prediction error. Thus, we provide the first -- to the best of our knowledge -- optimal SGD-style algorithm for the classical problem of linear system identification with a first order oracle. 
Furthermore, $\sgdber$ can be applied to more general settings like sparse LTI identification with known sparsity pattern, and  non-linear dynamical systems. Our work demonstrates that the knowledge of data dependency structure can aid us in designing statistically and computationally efficient algorithms which can ``decorrelate'' streaming samples. 
\end{abstract}

%introduction

\section{Introduction}
In this paper, we study the problem of learning linear-time invariant (LTI) systems, where the goal is to estimate the matrix $\A\in \mathbb{R}^{d\times d}$ from the given samples $(X_0,\dots,X_T)$ that obey: 
\begin{equation} \label{eq:state_evolution}
	X_{\tau+1} = \A X_\tau + \eta_\tau,\ \ \ X_\tau\in \mathbb{R}^d,\ \ \ \eta_\tau \stackrel{i.i.d.}{\sim} \mu,
\end{equation}
where $\mu$ is an unbiased noise distribution. %Given a single trajectory of {\em streaming samples} $X_0,\dots, X_T$, the goal is to estimate $\A$ accurately while ensuring small space and computational complexity. 
The problem is central in control theory and reinforcement learning (RL) literature \citep{kumar2015stochastic, accikmecse2013lossless}. It is also equivalent to estimating Vector Autoregressive (VAR) model popular in the time-series analysis literature \citep{hamilton2020time}, where it has been used in several applications like finding gene regulatory information network \citep{gene}. 

Despite a long line of classical literature for the problem, most of the existing results focus on the {\em offline} setting, where all the samples $(X_0,\dots,X_T)$ are available apriori. In this setting,  ordinary least squares (OLS) method that estimates $A$ as,   $\hat A=\arg\min_A\sum_{\tau=0}^{T-1}\|X_{\tau+1}-AX_\tau\|^2$ is known to be nearly optimal \citep{simchowitz2018learning, sarkar2019near}. However, such offline solutions do not apply to the streaming setting -- where $\A$ needs to be estimated online -- that has applications in several domains like RL, large-scale forecasting systems, recommendation systems \citep{hanck2019introduction, zheng2016neural}.%, where the samples are streaming in, and the transition matrix $\A$ needs to be estimated online with gradient over the samples. 

%That is, the state $X_\tau$ is observed only during the $\tau$-th time-step and we need to update estimate $A_\tau$ based on $(X_1, \dots, X_\tau)$. A trivial solution to this problem is to apply OLS  estimator at each time-step by maintaining $Cov_\tau=\sum_{s=0}^\tau X_s X_s^\top$ and $Res_\tau=\sum_{s=0}^\tau X_s X_{s+1}^\top$, so that OLS estimator at $\tau$-th step would be given by: $\hat{A}_{OLS}^\tau=Cov_\tau^{-1} Res_\tau$. But such a solution would require $O(d^3)$ computation per step which is expensive for high-dimensional settings. %to the given samples $(X_1, \dots, X_\tau)$. But both the {\em per-step} space as well as time-complexity of this method is $O(T)$ which is infeasible for large-scale problems. 

In this paper, we study the above mentioned problem of learning LTI systems via first order gradient oracle with streaming data. The goal is to design an estimator that provides accurate estimation while ensuring nearly optimal time complexity and space complexity that is nearly {\em independent} of $T$. Note that due to specific form arising in linear regression,  the optimal solution to OLS can be estimated in online fashion using Sherman-Morrison-Woodbury formula. But such a solution is limited and does not apply to practically important settings like  {\em generalized  non-linear dynamical} system or when $\A$ is high-dimensional and has special structure like low-rank or sparsity \citep{basu2015regularized,Basu_2019}. 

So, in this work, we focus on designing  Stochastic Gradient Descent (SGD) style methods that can work directly with first order gradient oracle, and hence is more widely applicable to the settings mentioned above. %Note that for the special case of linear regression, the optimal solution to OLS can be estimated in online fashion by exploiting certain specific form in linear regression. But such solutions do not apply to more general settings like {\em generalized linear model} where optimal solution cannot be written in closed form \citep{billings2013nonlinear}. So, in this work, we focus on more general Stochastic Gradient Descent (SGD) style methods that can work with first order gradient oracle. Such methods can be applied to , or when $\A$ is high-dimensional and has special structure like low-rank or sparsity \citep{basu2015regularized,Basu_2019}
In fact, after the first appearance of this manuscript, the algorithm ($\sgdber$)  and the techniques introduced in this paper were used to obtain near-optimal guarantees for learning certain classes of {\em non-linear dynamical systems} \cite{jain2021nonlinear} as well as in Q-learning tabular MDPs in RL \citep{agarwal2021online}. We note that prior to \cite{jain2021nonlinear}, even optimal \emph{offline} algorithms were unknown for such non-linear systems.

SGD is a popular method for general streaming settings, and has been shown to be {\em optimal} for problems like streaming linear regression \citep{jain2017parallelizing}. 
However, when the data has temporal dependencies, as in the estimation of linear dynamical systems, such a naive implementation of SGD may not perform well as observed in \citep{nagaraj2020least,gyorfi1996averaged}.
%While the method can be easily applied to the streaming linear dynamical systems problem,
 In fact, for linear system identification, our experiments suggest that SGD suffers from a non-zero bias (Section~\ref{sec:experiments}). In order to address temporal dependencies in data, practitioners use a heuristic called \emph{experience replay},
%have attempted to alleviate the sample correlation issue by designing techniques like experience replay, 
which maintains a {\em buffer} of points, and samples points {\em randomly} from the buffer. However, for linear system identification, experience replay does not seem to provide an accurate unbiased estimator for reasonable buffer sizes (see Section~\ref{sec:experiments}).

In this work, we propose {\em reverse experience replay} for linear system identification. Our method maintains a small {\em buffer} of points, but instead of random ordering, we replay the points in a {\em reverse} order. We show that this algorithm exactly unravels the temporal correlations to obtain a consistent estimator for $\A$. Similar to the standard linear regression problem with {\em i.i.d.} samples, we can break the error in two parts: a) bias: that depends on the initial error $\|A_0-\A\|$, b) variance: the steady state error due to noise $\eta$. We show that our proposed method, under fairly standard assumptions and with a small buffer size, is able to decrease the bias at fast rate, while the variance error is nearly optimal (see Theorem~\ref{thm:op_informal}), matching the information theoretic lower bounds \citep[Theorem 2.3]{simchowitz2018learning}. % \pj{we have to verify some of the claims here.}.
To the best of our knowledge, we provide first non-trivial analysis for a purely streaming SGD-style algorithm with optimal computation complexity and nearly bounded space complexity that is dependent logarithmically on $T$. We note here that the idea of reverse experience replay was independently discovered in experimental reinforcement learning by \citep{rotinov2019reverse} based on reverse replay observed in Hippocampal place cells \citep{ambrose2016reverse} in Neurobiology. We also refer to \citep{whelan2021robotic} for more on this connection.  

In addition to the transition matrix estimation error $\|A-\A\|$, we also provide analysis of prediction error, i.e., $E[\|AX-\A X\|^2]$ (see Theorem~\ref{thm:pred_informal}). Here again, we bound the {\em bias} and the {\em variance} part of the error separately.  We further derive new lower bounds for prediction error (see Theorem~\ref{thm:main_lower_bound}) and show that our algorithm is minimax optimal, under standard assumptions on the model. As mentioned earlier, our method work with general first order oracles, hence applies to more general problems like {\em sparse LTI estimation} with known sparsity structure and unlike online OLS methods, $\sgdber$ has nearly optimal time complexity. Finally, we also provide empirical validation of our method on simulated data, and demonstrate that the proposed method is indeed able to provide error rate similar to the OLS method while methods like SGD and standard experience replay, lead to biased estimates.

\paragraph{Related Work.}  Due to applications in RL, recently LTI system identification has been widely studied. In particular,  \citep{oymak2019non} studied the problem in offline setting under the ``stability" condition, i.e., the spectral radius ($\rho(\A)$) of $\A$   is a constant bounded away from $1$. 
The sequence of papers \citep{sarkar2019near,faradonbeh2018finite,simchowitz2018learning,jedra2020finite} provide optimal analyses of the offline OLS estimator beyond assumptions of stability. That is, they show that OLS recovers $\A$ near optimally even the process defined by ~\eqref{eq:state_evolution} is stable but does not mix within time $T$ (when $\rho(\A)$ is $1-O(1/T)$) or is unstable (when $\rho(\A)$ is larger than $1$). Further \cite{simchowitz2018learning,jedra2019sample} provide information theoretic lower bounds for the LTI system identification problem.   \cite{sattar2020non,foster2020learning,jain2021nonlinear} consider the problem of identifying non-linear dynamical systems of the form $X_{t+1} = \phi (\A X_t) + \eta_t$ where $\phi$ is a one dimensional link function which acts co-ordinate wise. In this setting, however, there is no closed for expressions for the estimator of $\A$.  \cite{sattar2020non,foster2020learning} give offline algorithms whose error guarantees are worse off by factors of mixing time whereas  \cite{jain2021nonlinear} obtains near optimal offline and streaming algorithms for this setting. In fact, \cite{jain2021nonlinear} uses $\sgdber$ which was first introduced in this work in order to obtain the streaming algorithm.

LTI identification problem has been studied in time series forecasting literature as well. For example, \citep{lai1983asymptotic} obtains asymptotic consistency results for system identification problem and  \citep{campi2002finite,vidyasagar2006learning} consider the problem of finite time recovery. Both consider a certain parameterized predictor for a linear system with empirical risk minimization for the parameter and analyzes the deviation from population risk. Similarly, \citep{kuznetsov2018theory} also studies generalization error guarantees. In contrast, our work is able to provide precise bias and variance (similar to generalization error) of the estimator in the streaming setting, and show that the asymptotic error is minimax optimal. %In contrast, we consider {\em streaming system identification} problem where an SGD-style algorithm outputs an estimator $\hat{A}_\tau$ for every $\tau$ based on the gradient of given streaming samples. We provide bounds on both the bias as well as variance (similar to generalization error) of the estimator, and show that the asymptotic error is minimax optimal. In contrast to, \citep{simchowitz2018learning} and similar works, we require the additional assumption of stability. Furthermore, for simplicity, we present our results under assumption that the spectral radius of $\A$ is a constant less than $1$, but the proof technique allows the spectral radius of $\A$ to be as large as $1- \frac{1}{T^{\delta}}$ for some fixed $\delta \in (0,1/2)$.  %the same problem from the perspective of $\var$ models and time series forecasting.

\citep{hardt2018gradient} studied SISO systems with observations $(x_\tau,y_\tau) \in \mathbb{R}^2$ and a hidden state $h_\tau$ which is high dimensional, thus their model and applications are significantly different than the LTI system we study. For the SISO system, \citep{hardt2018gradient} analyzes SGD to provide error bounds contain (a large) polynomial in the hidden state dimension. Here, the hidden state has an evolution similar to Equation~\ref{eq:state_evolution} whereas $x_1,\dots,x_T$ are drawn i.i.d from some distribution. %This Furthermore, there is a rich literature for estimation of non-linear dynamical systems \citep{billings2013nonlinear}, where again mostly offline empirical risk minimization (ERM) based techniques are used. Our $\sgdber$ algorithm applies directly to such settings, but estimation/prediction error analysis of the method in such settings is beyond the scope of this work. 

System identification has been studied in the context of partially observed LTI systems as well. Recent works \citep{oymak2019non,tsiamis2019finite,sarkar2021finite,lale2020logarithmic, lee2020improved,lee2020non} focus on identifying a certain Hankel-like matrix of the system. These are not directly comparable to the fully observed setting in this work since the model parameters are identifiable only upto a similarity transformation in the partially observed setting. 

Recently, there has been an exciting line of work in the related domain of online control (see  \citep{cohen2018online,agarwal2019online,hazan2020nonstochastic,chen2020black} and references therein). The state equation studied in these papers also contain an additive term of $B u_\tau$ for some unknown matrix $B$ and a control signal $u_\tau$ and the noise $\eta_\tau$ is either stochastic (as in \citep{cohen2018online}) or adversarial (as in \citep{agarwal2019online,hazan2020nonstochastic,chen2020black}). The goal is to output control signals $u_\tau$ after observing $X_1,\dots,X_\tau$, such that the cost $\sum_\tau c_\tau(X_\tau,u_\tau)$ is minimized  for some sequence of convex costs $c_\tau$. We focus on the LTI system identification(or estimation) problem  while the goal of the above mentioned line of work is to design an online controller. 

We also note here another line of works \citep{ghai2020no,rashidinejad2020slip,kozdoba2019line,hazan2017learning, tsiamis2020sample,tsiamis2020online,lale2020logarithmic} focused on online prediction of both fully observed and partially observed LTI systems, and the similar problem of time series forecasting by regret minimization \citep{kuznetsov2016time,kuznetsov2018theory}.  In particular, the main goal there is to design online prediction algorithms minimizing regret against a certain class (for instance, against a Kalman filter with knowledge of the system parameters in the case of partially observed LTI systems). The situation considered in our work is different in atleast two aspects: 1) we focus significantly on parameter recovery or system identification and 2) our notion of prediction is \emph{prediction at stationarity} which can be thought of as one-step regret (compared to $T$--step regret for instance in \citep{ghai2020no,rashidinejad2020slip}).  

Finally, \citep{basu2015regularized} considers {\em offline} sparse linear regression with $\ell_1$ penalty where the feature vector is derived from an auto regressive model. Similarly, \citep{nagaraj2020least} considers the problem of linear regression where the feature vectors come from a Markov chain. This line of work is different from ours in that we try to estimate the parameters of the Markov process itself. 

\paragraph{Paper Organization.} 
We provide the problem definition and introduce the notations in the next section. We then present our algorithm and the key intuition behind it in Section~\ref{sec:alg}. We then present our main result in Section~\ref{sec:main_results} and provide a proof sketch in Section~\ref{sec:proofsketch}. Finally, we present simulation results in Section~\ref{sec:experiments}. %. Finally, we present the formal statements of the main results in Section~\ref{sec:formal_results} and state the key lemmas in Section and conclude with Section~\ref{sec:conc}. 
%problem_setting

\section{Problem Setting and Notation}\label{sec:prob}
In this section, we first introduce the data generation model, the required assumptions and then provide the precision problem definition. Throughout the paper, we use $\|A\|$ to denote the operator norm of $A$ unless otherwise specified. $\|A\|_F$ denotes the Frobenius norm of $A$. $\sigma_i(A)$ denotes the $i$-th largest singular value of $A$, i.e., $\sigma_{\max}(A)=\sigma_1(A)$. $\kappa(A):=\sigma_{\max}(A)/\sigma_{\min}(A)$ denotes the condition number of $A$. $\rho(A)$ denotes the spectral radius of $A$. For two symmetric matrices $A,B\in\mathbb{R}^{d\times d}$ we say $A\preceq B$ if $B-A$ is positive semidefinite (psd). For notational simplicity, we use $C$ to denote a constant, and it's value can be different in different equations.
\paragraph{Linear Dynamical System/VAR(1) model.} 
 Given an initial (possibly random) data point $X_0$ which is independent of the noise sequence, we generate the  $(X_0,\dots,X_T)$ from the $\var$ model as:
\begin{\Ieee}{LLL}
\label{eq:var1}
X_{\tau+1}=\A X_{\tau} +\eta_{\tau},\,\ \ 0\leq \tau\leq T-1\Ieeen, 
\end{\Ieee}
where $\A\in\mathbb{R}^{d\times d}$ be the transition matrix. Let $\eta_1,\dots,\eta_T \in \mathbb{R}^d$ be an i.i.d noise sequence with $0$ mean and finite second moment with probability measure $\mu$. %corresponding to the matrix $\A$ and noise distribution $\mu$
%where $X_0=0\in\mathbb{R}^d$, and $\eta_t$ are independent noise vectors, $\eta_t\distas{}\nn(0,\sigma^2 I_d)$ (or isotropic subgaussian with variance proxy $\sigma^2$).
We will denote this model by $\var(\A,\mu)$. We also make the following assumptions about $\A$, $\mu$, and $X_0$: 
\begin{assumption}\label{as:norm_condition}
	\textbf{External Stability.}
$\|\A\| < 1$
 \end{assumption}
\begin{assumption}\label{as:noise_concentration}
	\textbf{Sub-Gaussian Noise.} $\mu$ has co-variance $\Sigma$ and for all $x\in \mathbb{R}^d$,  $\langle x,\eta_{\tau}\rangle $ is    $C_{\mu}\langle x , \Sigma \cdot x\rangle$  sub-Gaussian. Further, $\Sigma$ is full rank. Also, let $\mu_4\coloneqq\Ex{\norm{\eta_{\tau}}^4}$ be the fourth moment of the noise. 
\end{assumption}
\begin{assumption}\label{as:stationarity}
	\textbf{Stationarity.} 
	$X_0 \sim \pi$, the stationary distribution corresponding to $(\A,\mu)$. Let $M_4\coloneqq \Ex{\norm{X_0}^4}$.
\end{assumption}

%Using  \citep[Theorem 2.1]{boucheron2013concentration} we have: $\mu_4\leq C_{\mu} d^2\sigma_{\max}(\Sigma)^2$, for constant $C_{\mu}$.  

Due to Assumption~\ref{as:norm_condition}, we can show that the law of  the iterate $X_T$ from the $\var$ model defined above converges to a stationary distribution $\pi$ as $T \to \infty$ for arbitrary choice of $X_0$ and has a mixing time of the order $\tau_{\mathsf{mix}} = O\left(\tfrac{1}{1-\|\A\|}\right)$. For simplicity, we will absorb $C_{\mu}$ into other constants. Finally, we will use $(Z_0,\dots,Z_T) \sim \var(\A,\mu)$ to mean that $Z_0,\dots,Z_T$ is a stationary sequence corresponding to $\var(\A,\mu)$. We also note that the covariance matrix under stationarity, $G:= \mathbb{E}_{X\sim \pi}XX^{\top}=\sum_{s=0}^{\infty}\A^s \Sigma(\A^{\top})^s \succeq \Sigma$.

\begin{remark}
It is indeed possible to replace Assumption~\ref{as:norm_condition} with the weaker condition on the spectral radius of $\A$: $\rho(\A) < 1$. While our results still hold in this case, the bound might have additional condition number factors. See Section~\ref{subsec:spectral_gap} for more details.
\end{remark}
\begin{remark}
The full rank assumption on $\Sigma$ is needed for polynomial sample complexity \citep{tsiamis2021linear}.
\end{remark}
%\begin{assumption}\label{as:symmetry}
%$\eta_t$ and $-\eta_t$ are identically distributed. 
%\end{assumption}
%\begin{remark}
%We will only use Assumption~\ref{as:symmetry} while bound $\lossop$
%\end{remark}

%When Assumption~\ref{as:stationarity} holds, 

\paragraph{Problem Statement.} 
Let $(X_0, X_1, \cdots, X_T)$ be sampled from $\var(\A, \mu)$ model for a fixed horizon $T$. Then, the goal is to design and analyze an online algorithm that uses only first order gradient oracle to estimate the system matrix $\A$. That is, at each time-step $\tau$, we obtain gradient for the transition $(X_\tau,X_{\tau+1})$ and output estimate $A_\tau$. The goal is to ensure that each $A_{\tau}$ has small estimation error wrt $\A$; naturally, we would expect better estimation error with increasing $\tau$. We quantify estimation error using the following two loss functions: \vspace*{-5pt}
\begin{enumerate}[leftmargin=*]
\item Parameter error: $\lossop(A;\A,\mu) = \|A-\A\|$
\item Prediction error at stationarity: $\losspred(A;\A,\mu) := \mathbb{E}_{X_{\tau} \sim \pi}\|X_{\tau+1}-AX_{\tau}\|^2$
\end{enumerate}
Note that the problem is equivalent to $d$ linear regression problems, but with {\em dependent} samples, making it significantly more challenging. Whenever Assumption~\ref{as:norm_condition} holds, stationary distribution $\pi$ exists, so the prediction error $\losspred$ is meaningful. Furthermore: $\losspred(A) - \losspred(\A) = \tr\left[(A-\A)^{\top}(A-\A)G\right]$ where $G := \mathbb{E}_{X\sim\pi}XX^{\top}$. 

%We have the following identity for $\losspred$:
%\begin{lemma}
%\label{lem:loss_identity}
%Suppose Assumption~\ref{as:norm_condition} holds for $\var(\A,\mu)$ and let its stationary distribution be $\pi$. Let $G := \mathbb{E}_{X\sim\pi}XX^{\top}$. Then, 
%$$\losspred(A) - \losspred(\A) = \tr\left[(A-\A)^{\top}(A-\A)G\right]$$
%\end{lemma}

\section{Algorithm}\label{sec:alg}
As mentioned in related works, the standard OLS estimator that minimizes the empirical loss is known to be nearly optimal in the {\em offline setting} \citep{simchowitz2018learning}:%\pj{add citations and prior results found in literature}
\begin{equation}
\label{eq:OLS}
\hat{A}_{OLS}=\arg\min_A \sum_{\tau=0}^{T-1}\norm{A X_{\tau}-X_{\tau+1}}^2.
\end{equation}
 Note that for least squares loss, one can indeed maintain covariance matrix and residual vector to compute the OLS solution {\em online}. But such a solution does not work if we have access to only gradients and breaks down even for generalized linear models, whereas as the techniques introduced in this work has been extended to non-linear systems \cite{jain2021nonlinear}.
%Naturally, this solution does not apply directly to the streaming setting where the goal is to output continuously improving estimate of $\A$. In principle, for every $\tau$, we can maintain terms $Cov_{\tau}=\sum_{s=0}^{\tau} X_s X_s^{\top}$ and $Res_{\tau}=\sum_{s=0}^{\tau} X_s X_{s+1}^{\top}$, so that OLS estimator at $\tau$-th step would be given by: $\hat{A}_{OLS}^{\tau}=Cov_{\tau}^{-1} Res_{\tau}$. However, such a method would require $O(d^3)$ computation per step due to matrix inversion.

On the other hand,  using standard SGD we can obtain update to $A$ efficiently by using gradient at the current point. That is, assuming $A_0=0$, we get the following SGD update (for all $\tau\geq 0$): 
\begin{equation}
\label{eq:SGD}
A_{\tau+1}=A_{\tau}-2\gamma (A_\tau X_\tau-X_{\tau+1})X_\tau^{\top},
\end{equation}
where $\gamma$ is the stepsize. While SGD is known to be an optimal estimator in certain streaming problems with i.i.d. data, for the $\var(\A,\mu)$ problem the standard SGD  does not apply, as samples $(X_\tau, X_{\tau+1})$ and $(X_{\tau+1}, X_{\tau+2})$ are highly correlated. To see why this is the case, let us unroll the recursion for two steps and using Equation~\eqref{eq:var1}:
\begin{align*}%\label{eq:two_step_unroll}\hspace*{-10pt}
A_{2}-\A=(A_{0}-\A)(I-2\gamma X_{0}X_{0}^{\top})(I-2\gamma X_{1}X_{1}^{\top}) + 2\gamma \eta_{1}X^{\top}_{1}+ 2\gamma \eta_{0}X^{\top}_{0}(I-2\gamma X_{1}X_{1}^{\top}). 
\end{align*}

Note that the last term does not have $0$ mean because $X_{1}$ depends on $\eta_{0}$ by Equation~\eqref{eq:var1}. Even in the case when $A_{0} = \A$, this means that $\mathbb{E} A_{2} \neq \A$ in general. In fact, in Section~\ref{sec:experiments}, we show empirically that SGD with constant step-size converges to a significantly larger error than OLS, even when $T$ is very large. This shows that we cannot naively treat this problem as a collection of $d$ linear regressions.  This is consistent with the results in \citep{nagaraj2020least,gyorfi1996averaged} which show a similar behavior for constant step-size SGD with dependent data. Now, one can use techniques like {\em data drop} that drops a large fraction of points (either explicitly or during the mathematical analysis) from the stream to obtain nearly independent samples \citep{DuchiAJJ12, nagaraj2020least}, but such methods waste a lot of samples and have significantly suboptimal error rate than OLS. 

So, the goal is to design a streaming method for the problem of learning dynamical systems that at each time-step $t$  provides an accurate estimate of $\A$, while also ensuring small space+time complexity.We now present a novel algorithm that addresses the above mentioned problem. 
\subsection{SGD with Reverse Experience Replay}
\label{subsec:sgdber_define}

\begin{figure}

  \centering
  \includegraphics[width = 0.7\linewidth]{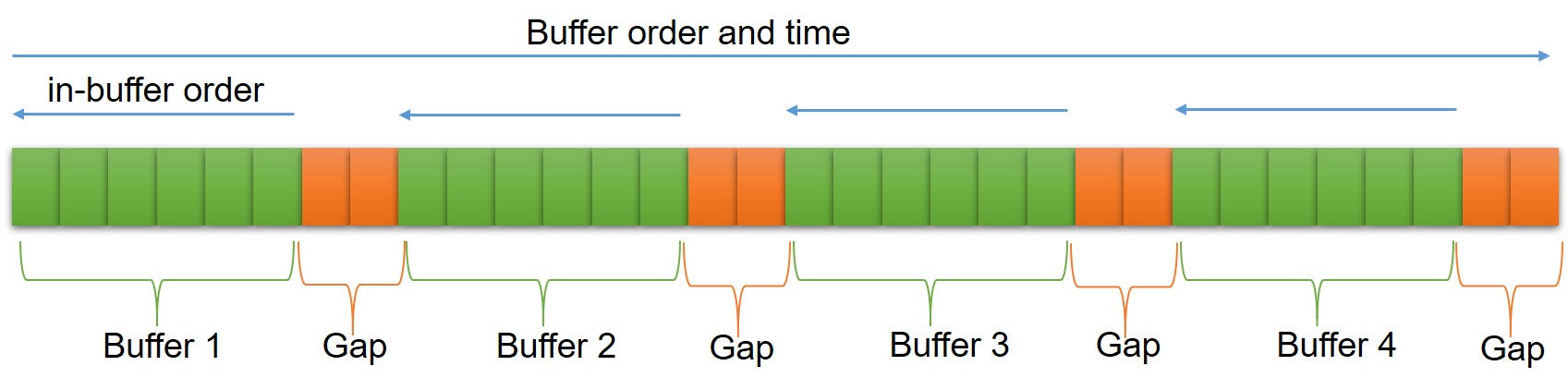}\vspace*{-10pt}
  %\fbox{\rule[-.5cm]{0cm}{4cm} \rule[-.5cm]{4cm}{0cm}}
  \caption{ Data Processing Order in $\sgdber$. A cell represents a data point. Time goes from left to right, buffers are also considered from left to right. Within each buffer, the data is processed in the reverse order. Gaps ensure that data in successive buffers are approximately independent.}
  \label{fig:data_order_sgd_rer}
\end{figure}
We now discuss a novel algorithm called SGD with Reverse Experience Replay ($\sgdber$) that addresses the problem of learning stationary auto-regressive models (or linear dynamical systems) in the streaming setting. Our method is inspired by the experience replay technique \citep{lin1992self}, used extensively in RL to break temporal correlations between dependent data. We make the following crucial observation. Suppose in Equation~\eqref{eq:SGD}, instead of processing the samples in the order $(X_1,X_2) \to (X_2,X_3) \to \dots \to (X_{T-1},X_T)$, we process it in the reverse order. That is: $(X_{T-1},X_T) \to (X_{T-2},X_{T-1}) \to \dots \to (X_1,X_2)$. Then, %along the same lines as Equation~\eqref{eq:two_step_unroll}:
\begin{align}\label{eq:two_step_unroll_rev}
A_{2}-\A=(A_{0}-\A)(I-2\gamma X_{T-1}X_{T-1}^{\top})(I-2\gamma X_{T-2}X_{T-2}^{\top}) + 2\gamma \eta_{T-2}X^{\top}_{T-2} \nonumber \\+ 2\gamma \eta_{T-1}X^{\top}_{T-1}(I-2\gamma X_{T-2}X_{T-2}^{\top}) 
\end{align}
Now, observe that $(X_{T-2},X_{T-1})$ are \emph{independent} of $\eta_{T-1}$. Therefore the problematic last term, $2\gamma \eta_{T-1}X^{\top}_{T-1}(I-2\gamma X_{T-2}X_{T-2}^{\top})$, now has expectation $0$. So the updates for \emph{reverse} order SGD would be {\em unbiased}. This, however, requires us to know all the data points beforehand which is infeasible in the streaming setting. We alleviate this issue by designing $\sgdber$, which is the online variant of the above algorithm. $\sgdber$ uses a buffer of large enough size to store values of consecutive data points and then performs reverse SGD in each of these buffers and then discards this buffer. Experience replay methods also use such (small) buffers of data, but typically samples point randomly from the buffer instead of the reverse order that we propose. We refer to Figure~\ref{fig:data_order_sgd_rer} for an illustration of the proposed data processing order. 

We present a pseudocode of $\sgdber$ in Algorithm~\ref{alg:1}. Note that the algorithm forms non-overlapping buffers of size $S=B+u$. Here $B$ is the actual size of the buffer while $u$ samples are used to interleave between two buffers so that the buffers are {\em almost independent} of each other. Now within a buffer, we perform the usual SGD but with samples read in reverse order. Formally, suppose we index our buffers by $t=0,1,2,\cdots$ and let $S=B+u$ be the total samples (including those that were dropped) in the buffers. Let $N$ denote the total number of buffers in horizon $T$. Within each buffer $t$, we index the samples as $X^{t}_i$ where $i=0,1,2,\cdots, S-1$. That is $X^t_i\equiv X_{tS+i}$ is the $i$-th sample in buffer $t$. Similarly $\Nt{t}{i}\equiv \eta_{tS+i}$. Further let $\Xt{t}{-i}\equiv \Xt{t}{(S-1)-i}$. Similarly we set $\Nt{t}{-i}\equiv \Nt{t}{(S-1)-i}$ Then, the algorithm performs the recursion stated in Line 1 of Algorithm~\ref{alg:1}. Note that the recursion can also be written as, 
\begin{\Ieee}{LLL}
\label{eq:sgd_expreplay2}
\At{t-1}{i+1}-\A=\left(\At{t-1}{i}-\A\right)\left(I-2\gamma \Xt{t-1}{-i}\Xt{t-1}{-i}^{\top}\right)+2\gamma\Nt{t-1}{-i}\Xt{t-1}{-i}.\Ieeen
%\At{0}{0}=A_0\Ieeen\label{eq:sgd_expreplay2_1}
\end{\Ieee}
for $1\leq t\leq N$ and $0\leq i\leq B-1$ with $A^t_0=A^{t-1}_{B}$ and $A^0_0=A_0$.

We then ignore the iterates corresponding to first $a$ buffers as part of the {\em burn-in period}, and output average of the remaining iterates ($t>a$) at each step as that step's estimator (see Line 2 of Algorithm~\ref{alg:1}). That is, we have the tail-averaged iterate: %\vspace*{-3pt}
\begin{\Ieee}{LLL}
\label{eq:tail_avg_defn}
\hat A_{a,t}=\frac{1}{t-a}\sum_{\tau=a+1}^t \At{\tau-1}{B}.\Ieeen %\vspace*{-2pt}
\end{\Ieee}
We output the new iterate $\hat A_{a,t}$ only at the end of each buffer $t$. At intermediate steps, $(t-1)B+1\leq \tau\leq tB$, we  output $\hat A_{a,t-1}$. 
 Also, note that the tail average can be computed in small space and time complexity, by using a running sum of the tail iterates. The update for each point is rank-one, so can be computed in time linear in number of parameters ($O(d^2)$). In the next section, we show that despite using small buffer size $S=B+u$ (that depends logarithmically on $T$), and by throwing away a small constant--independent of {\em any} problem parameter--fraction of points $u$ in each buffer, we are still able to provide error bound similar to that of OLS.

%\begin{algorithm}[t!]
%	\label{alg:1}
%	\DontPrintSemicolon
%	\SetKwInOut{Input}{Input}
%	\SetKwInOut{Output}{Output}
%	\SetKwFunction{RN}{ReadNext}\SetKwFunction{LN}{LeaveNext}
%	\Input{Streaming data $\{X_\tau\}$, horizon $T$, buffer size $B$, buffer gap $u$, bound $R$, tail fraction: $\theta$}
%	\Output{Estimate $\hat A_{\theta N,t}$, for all $\ \theta \cdot N\leq t\leq N-1$; $N=T/(B+u)$}
%	\Begin{
%		Step-size: $\gamma\leftarrow \frac{1}{8RB}$, Total buffer size: $S\leftarrow B+u$, Number of buffers: $N\leftarrow T/S$\;
%		$A^0_0=0$ \textsf{/*Initialization*/}\;
%		\For{$t\leftarrow 0$ \KwTo $N-1$}{
%			Form buffer $\textsf{Buf}^t=\{X^t_0, \dots, X^t_{S-1}\}$, where, $X^t_{i}\leftarrow X_{t\cdot S+i}$\;
%			If $\exists i,\ s.t.,\ \norm{X^t_i}^2>R$, then \textbf{return} $\hat A_{\theta N,N}=0$\;
%%				\KwRet{$\hat A_{\theta N,N}=0$}
%%				\If{
%%			}
%			\For{$i\leftarrow 0$ \KwTo $B-1$}{
%				\nl $A^t_{i+1}\leftarrow A^t_i - 2 \gamma (\At{t}{i}\Xt{t}{S-1-i}-\Xt{t}{S-1-(i+1)})\left(\Xt{t}{S-1-i}\right)^{\top}$
%			}
%			$A^{t+1}_0=A^{t}_B$\;
%			If $t\geq \theta N$, then $\hat A_{\theta N,t}\leftarrow \frac{1}{t-\theta N +1}\sum_{\tau=\theta\cdot N+1}^{t+1} A^\tau_B$
%		}
%		%\nl \KwRet $\hat A_{\theta N,N}\leftarrow \frac{1}{N(1-\theta)}\sum_{t=\theta\cdot N+1}^N A_B^t$\;
%	}
%	\caption{$\sgdber$}
%\end{algorithm}

\begin{algorithm}[t!]
	\label{alg:1}
	\DontPrintSemicolon
	\SetKwInOut{Input}{Input}
	\SetKwInOut{Output}{Output}
	\SetKwFunction{RN}{ReadNext}\SetKwFunction{LN}{LeaveNext}
	\Input{Streaming data $\{X_\tau\}$, horizon $T$, buffer size $B$, buffer gap $u$, bound $R$, tail average start: $a$}
	\Output{Estimate $\hat A_{a,t}$, for all $a< t\leq N-1$; $N=T/(B+u)$}
	\Begin{
		Step-size: $\gamma\leftarrow \frac{1}{8RB}$, Total buffer size: $S\leftarrow B+u$, Number of buffers: $N\leftarrow T/S$\;
		$A^0_0=0$ \textsf{/*Initialization*/}\;
		\For{$t\leftarrow 1$ \KwTo $N$}{
			Form buffer $\textsf{Buf}^{t-1}=\{X^{t-1}_0, \dots, X^{t-1}_{S-1}\}$, where, $X^{t-1}_{i}\leftarrow X_{(t-1)\cdot S+i}$\;
			If $\exists i,\ s.t.,\ \norm{X^{t-1}_i}^2>R$, then \textbf{return} $\hat A_{a,t}=0$\;
%				\KwRet{$\hat A_{\theta N,N}=0$}
%				\If{
%			}
			\For{$i\leftarrow 0$ \KwTo $B-1$}{
				\nl $A^{t-1}_{i+1}\leftarrow A^{t-1}_i - 2 \gamma (\At{t-1}{i}\Xt{t-1}{S-1-i}-\Xt{t-1}{S-i})\left(\Xt{t-1}{S-1-i}\right)^{\top}$
			}
			$A^{t}_0=A^{t-1}_B$\;
			If $t> a$, then $\hat A_{a,t}\leftarrow \frac{1}{t-a}\sum_{\tau=a+1}^{t} A^{\tau-1}_B$
		}
		%\nl \KwRet $\hat A_{\theta N,N}\leftarrow \frac{1}{N(1-\theta)}\sum_{t=\theta\cdot N+1}^N A_B^t$\;
	}
	\caption{$\sgdber$}
\end{algorithm}

%results

\section{Main Results}
\label{sec:main_results}
We now state our main results with leading order terms. For simplicity, we only state the results for the tail average $\hat{A}_{\tfrac{N}{2},N}$ but a similar result holds for any $\hat{A}_{a,t}$ when $a = \Omega(dB\kappa(G)\log^2 T)$. We refer to Section~\ref{sec:formal_proof_sketch} for complete statements.
Recall the problem setting, and the covariance matrix $G := \mathbb{E}_{X\sim\pi}XX^{\intercal}$. Before stating the results, we choose the parameters $B,R,\alpha$ and $u$ as follows, which can be estimated using upper bounds on $\|\A\|$:
\setlist{nolistsep}
\begin{enumerate}[leftmargin=*, noitemsep]
\item $d\leq \poly(T)$. We use this to bound the norm of covariates in the next item.
\item $\alpha \geq 22$ ;\quad $R \geq C(\alpha)\frac{\tr(\Sigma)\log T}{1-\norm{\A}^2} = O(d\tau_{\mathsf{mix}}\log T)$ s.t. $\Pb{\norm{X_\tau}^2\leq R,\, \tau\leq T}\geq 1-\prbnd$. See lemma~\ref{lem:2} in appendix. 
\item $u\geq \alpha \tfrac{\log T}{\log\left(\tfrac{1}{\|A^{*}\|}\right)} = O(\tau_{\mathsf{mix}}\log T)$;\quad $B=10 u$
\end{enumerate}
 
%{\bf Remark:} Let $\rho = \rho\left(\A\right)$. For symmetric matrices, as $\|\A^k\| = \rho^k$, so we can choose $u = O\left(\frac{\log T }{\log \frac{1}{\rho}}\right)$. We have the Gelfand formula for spectral radius which shows that $\lim_{k\to \infty}\|\A^k\|^{1/k} = \rho(\A)$, whereas in general, $\|\A\| \geq \rho(\A)$. Suppose $\Phi(\A):= \sup_{k\geq 1} \frac{\|\A^k\|}{\rho(\A)^{k}} < \infty$. Such an assumption is used in \citep{oymak2019non}, which holds for all diagonalizable matrices. The error bound in \citep{oymak2019non} has a polynomial dependence on $\Phi(\A)$. Here, we can choose $u$ to be logarithmic in $\Phi(\A)$. More generally, we can verify the proof by \citep{petrovgelfand} (by replacing $A$ with $\frac{A}{\|A\|}$ and $\rho(A)$ with $\frac{\rho}{\|A\| }$ in the proof) to show that $\|\A^{k}\| \leq \left(2k\|\A\|\right)^d\rho^{k-d}  $ whenever $k \geq d$. Therefore, in the worst case, we can pick $u = O\left(\frac{\log T + d\log \|A\| + d\log\left(\tfrac{ d}{\log 1/\rho}\right) }{\log 1/\rho}\right)$.\sk{double check this computation}\\

For all the results below, we suppose that Assumptions~\ref{as:norm_condition},~\ref{as:noise_concentration} and~\ref{as:stationarity} hold, the stream of samples ${X_\tau}$ is sampled from $\var(\A, \mu)$ model described in Section~\ref{sec:prob} and that $R,B,\alpha$ and $u$ are chosen as above. Further we hide some mild conditions on $N$ and $T$.  %and $t$ be such that $N \geq t \geq 2a $. Let $\hat{A}_{a,t}$ be the output of $\sgdber$ after buffer $t-1$.
%\praneeth{Best to set $\theta = 1/2, t = N$ and get rid of $\alpha$ and $v$.}
% Further let $T^{\alpha/2}>cd\kappa(G)$ and $\frac{a}{\kappa(G)}=\Omega(\log T)$. 
\begin{theorem}[Informal version of Theorem~\ref{thm:main_upper_bound_op}]\label{thm:op_informal}
	 Let the step size $\gamma < \min \left(\tfrac{C}{B\lmin{G}},\tfrac{1}{8BR}\right)$ for some constant $C$ depending only on $C_{\mu}$. 
Then, with probability at least $1-\frac{1}{T^{100}}$, we have: 
{\small
$$
	 \lossop(\hat{A}_{\tfrac{N}{2},N},\A,\mu) \leq C\sqrt{\frac{(d+\log T) \lmax{\Sigma}}{T \lmin{G}} } +
 \text{ Lower Order Terms} \,.$$
}
\end{theorem}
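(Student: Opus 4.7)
The plan is to analyze $\sgdber$ at the level of each buffer, decompose the final error $\hat A_{\tfrac{N}{2},N}-\A$ into a bias part that depends on the initial error and a variance part driven by the noise, and bound each separately. Setting $F_t := A^t_B - \A$, Equation~\eqref{eq:sgd_expreplay2} unrolled across buffer $t-1$ yields
$$F_{t-1} = F_{t-2}\, P_{t-1} + 2\gamma \sum_{i=0}^{B-1} \Nt{t-1}{-i}\Xt{t-1}{-i}^{\top} Q_{t-1,i},$$
where $P_{t-1}:=\prod_{i=0}^{B-1}\bigl(I-2\gamma \Xt{t-1}{-i}\Xt{t-1}{-i}^{\top}\bigr)$ and $Q_{t-1,i}$ denotes the corresponding tail product of the remaining factors from the recursion. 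The crucial consequence of reverse ordering within a buffer is that $\Nt{t-1}{-i}$ is independent of every $\Xt{t-1}{-j}$ with $j>i$ (those covariates occur \emph{earlier} in real time than the noise generation step), so each summand above has zero conditional mean and the inner sum is a matrix martingale in the reverse scan.

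For the bias, I would work on the high-probability event $\{\norm{X_\tau}^2\le R\ \forall\tau\le T\}$ provided by Lemma~\ref{lem:2}, on which $\gamma\norm{\Xt{t-1}{-i}}^2\le 1/(8B)$ and every factor in $P_{t-1}$ is a contraction. Using the $u$-sample gap to couple each buffer with an independent buffer drawn from the stationary distribution $\pi$ (by the mixing implied by Assumption~\ref{as:norm_condition}) and then expanding $\Ex{P_{t-1}^{\top}P_{t-1}}$ around the stationary covariance $G$, I would show that $F_{t-1}$ contracts in expected Frobenius norm by a factor $(1-c\gamma B\lmin{G})^{1/2}$ per buffer. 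Iterating and tail-averaging over the last $N/2$ buffers, the bias contribution to $\hat A_{\tfrac{N}{2},N}-\A$ decays exponentially in $N$ and is absorbed in the lower-order terms.

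For the variance, substituting the recursion into $\hat A_{\tfrac{N}{2},N}-\A = \tfrac{2}{N}\sum_{t=N/2+1}^{N} F_{t-1}$ and exchanging the order of summation rewrites the variance part as a single weighted sum of martingale terms $\Nt{t-1}{-i}\Xt{t-1}{-i}^{\top}$, each multiplied by a product $\prod_{s>t-1}P_s$ of subsequent buffer contractions. The contraction estimate from the bias analysis, which in expectation gives $\norm{\Ex{\prod_s P_s^{\top}\prod_s P_s}}\lesssim(1-c\gamma B\lmin{G})^{N-t}$, makes the sum of effective variances telescope to $O\!\left(\lmax{\Sigma}/(T\lmin{G})\right)$. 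Applying a matrix Freedman/Bernstein inequality to the $d\times d$-valued martingale obtained by combining the sums over $t$ and $i$ then yields
$$\norm{\text{variance part}}\ \lesssim\ \sqrt{\frac{(d+\log T)\,\lmax{\Sigma}}{T\,\lmin{G}}},$$
where the $d$ factor is the intrinsic dimension in matrix concentration, the $\log T$ comes from the desired failure probability $T^{-100}$, and I use $T\asymp NB$.

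The principal obstacle is rigorously controlling the ``approximate independence across buffers'' and ``approximate stationarity within each buffer,'' since the data are in reality a single highly-correlated trajectory. I plan to handle both through an explicit coupling: construct an auxiliary process in which each buffer is replaced by a fresh draw from $\pi$ made independent of earlier buffers; by Assumption~\ref{as:norm_condition} and the gap $u=O(\tau_{\mathsf{mix}}\log T)$, this coupling succeeds on an event of probability at least $1-T^{-\alpha}$. On the coupling event the martingale and contraction estimates above apply directly; off it, the truncation $\norm{X_\tau}^2\le R$ combined with a pathwise norm bound on the iterates keeps the stray contribution negligible. A final union bound over the coupling event, the sample-norm event, and the matrix-concentration event produces the claimed high-probability inequality.
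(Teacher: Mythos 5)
Your overall strategy is the same as the paper's: couple each buffer to a fictitious independent stationary buffer (Definition~\ref{def:1}), decompose the tail-averaged error into bias and variance, exploit the reverse-scan fact that each noise vector is independent of the covariates it multiplies, get per-buffer contraction at rate $1-c\gamma B\lmin{G}$, and pay $(d+\log T)$ for uniform control over directions. The genuine difference is the endgame for the variance term: you invoke a matrix Freedman/Bernstein inequality for the resulting matrix martingale, whereas the paper runs a bespoke conditional exponential-moment ``peeling'' argument (Lemma~\ref{lem:conditional_chernoff}), conditions on the event that the future-buffer product operator $F_{t-1,N}$ has norm $O(d+\log T+\tfrac{1}{\gamma B\lmin{G}})$ (Lemma~\ref{lem:operator_norm_bound_2}), and finishes with a scalar Chernoff bound plus an $\epsilon$-net (Theorem~\ref{thm:average_iterate_op_bound}). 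Your route is cleaner conceptually and would buy a shorter proof if the technical hypotheses of matrix Freedman can be met; the paper's route buys explicit control of exactly the two places where your sketch is thin.

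Those two places are worth naming. First, Freedman-type bounds require control of the \emph{random} predictable quadratic variation (and a.s.\ or conditionally sub-exponential bounds on the differences), not of its expectation: the bound $\norm{\Ex{\prod_s P_s^{\top}\prod_s P_s}}\lesssim(1-c\gamma B\lmin{G})^{N-t}$ that you import from the bias analysis does not by itself bound the random multipliers $\prod_{s>t-1}P_s$ appearing inside each martingale difference. You need a high-probability operator-norm contraction of these products, which is not automatic from the almost-sure bound $\norm{P_s}\le 1$; the paper gets it via a Paley--Zygmund anti-concentration step (Lemma~\ref{lem:probable_contraction}) plus an $\epsilon$-net and union bound (Lemma~\ref{lem:operator_norm_bound_1}), and something of this kind is unavoidable in your plan too. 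Second, restricting to the event $\{\norm{X_\tau}^2\le R\ \forall\tau\}$ (and to the coupling event) destroys the exact zero-mean/martingale property you rely on; you must either replace conditioning by a stopped/truncated process or, as the paper does, keep the conditional expectations and show the resulting mean shifts are $O(T^{-\alpha})$ (Lemmas~\ref{lem:cross_term} and~\ref{lem:diag_term} and the bookkeeping in Lemma~\ref{lem:conditional_chernoff}). With those two points supplied, your argument goes through; also note your bias bound is stated in expected Frobenius norm, so converting it to the high-probability statement of the theorem needs Markov plus a slightly larger burn-in (the paper instead proves a high-probability operator-norm decay directly, Equation~\eqref{eq:bias_op_bnd_2}), which is fine at the level of ``lower order terms.''
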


\begin{theorem}[Informal version of Theorem~\ref{thm:main_upper_bound_pred}]\label{thm:pred_informal}
	Consider the setting of Theorem~\ref{thm:op_informal} but where the  step size $\gamma = \min \left(\frac{1}{2R},\tfrac{c}{BR}\right)$ for some constant $0<c<1$. Then, the following holds: 
{\small\begin{\Ieee}{LLL}
\Ex{\losspred(\hat{A}_{\frac{N}{2},N};\A,\mu)} -\tr(\Sigma)\leq C\frac{d\tr(\Sigma)}{T} +\text{ Lower Order Terms}
\end{\Ieee}}
where ``lower order'' is with respect to $\frac{d}{T}$.
\end{theorem}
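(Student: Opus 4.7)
The starting point is the identity
\[\losspred(A;\A,\mu) - \tr(\Sigma) \;=\; \tr\bigl[(A-\A)^{\top}(A-\A)\,G\bigr],\]
so the goal reduces to bounding $\mathbb{E}\,\tr\bigl[(\hat A_{N/2,N} - \A)^{\top}(\hat A_{N/2,N} - \A)\,G\bigr]$. I would use the standard bias/variance decomposition: write $\At{t}{i} - \A = \Xi^{t,\mathrm{b}}_i + \Xi^{t,\mathrm{v}}_i$, where $\Xi^{t,\mathrm{b}}$ evolves under the same linear right-multiplicative operator $(I - 2\gamma X X^{\top})$ with all noise terms replaced by $0$, while $\Xi^{t,\mathrm{v}}$ starts from $0$ and is driven only by the noise. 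Linearity of the recursion together with $(a+b)^{\top}(a+b) \preceq 2(a^{\top}a + b^{\top}b)$ lets me bound the two pieces separately in the $G$-weighted trace.

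\textbf{Bias.} Within one buffer the bias is right-multiplied by $P_t := \prod_{i=0}^{B-1}\bigl(I - 2\gamma \Xt{t-1}{-i}(\Xt{t-1}{-i})^{\top}\bigr)$. The gap $u = O(\tau_{\mathsf{mix}}\log T)$ between buffers, together with Assumption~\ref{as:norm_condition}, makes different $P_t$ approximately independent up to total-variation error $T^{-\alpha}$; I would realize this with an explicit coupling to a sequence of freshly sampled buffer heads. On the high-probability event $\{\norm{X_\tau}^2\leq R\}$ from Lemma~\ref{lem:2}, the choice $\gamma = \Theta(1/(BR))$ keeps each factor contractive in expectation and gives $\mathbb{E}\,P_t^{\top} P_t \preceq (I - 2\gamma G)^B + \text{(small)}$, hence a per-buffer contraction $(1 - c\gamma B\lmin{G})$. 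After $N/2$ burn-in buffers the bias is exponentially small in the number of burn-in passes and absorbs into the declared lower-order terms.

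\textbf{Variance.} This term drives the leading $d\tr(\Sigma)/T$ rate. The crucial property of reverse replay, already flagged in~\eqref{eq:two_step_unroll_rev}, is that in
\[\Xi^{t,\mathrm{v}}_{i+1} \;=\; \Xi^{t,\mathrm{v}}_{i}\bigl(I - 2\gamma \Xt{t-1}{-i}(\Xt{t-1}{-i})^{\top}\bigr) \;+\; 2\gamma\,\Nt{t-1}{-i}\,(\Xt{t-1}{-i})^{\top},\]
the noise $\Nt{t-1}{-i}$ is independent of $\Xt{t-1}{-i}$ and of the entire past $\{\Xt{t-1}{-j}\}_{j<i}$ within the buffer, which restores the martingale-difference structure of i.i.d.\ linear regression that forward SGD destroys. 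I would track $\Phi^t_i := \mathbb{E}\bigl[(\Xi^{t,\mathrm{v}}_i)^{\top}(\Xi^{t,\mathrm{v}}_i)\bigr]$ via its one-step recursion: the noise injection per step is $4\gamma^2\,\mathbb{E}\bigl[\Xt{t-1}{-i}\,\Sigma\,(\Xt{t-1}{-i})^{\top}\bigr]$, and after $B$ steps plus the gap-based decoupling the per-buffer covariance is of order $\gamma \Sigma$ in the relevant inner product. Tail-averaging $N/2$ buffers then yields $\mathbb{E}\,\tr\bigl[(\hat\Xi^{\mathrm{v}})^{\top}\hat\Xi^{\mathrm{v}}\,G\bigr] = \Theta(d\tr(\Sigma)/T)$, which is the Cram\'er--Rao rate for $d$ parallel $d$-dimensional linear regressions and matches the informal statement.

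\textbf{Main obstacle.} The hardest step is the variance accounting in the $G$-weighted trace, because within a single buffer the covariates $\Xt{t-1}{-i}$ and $\Xt{t-1}{-(i+1)}$ are adjacent along the VAR trajectory and do not satisfy the clean product identities used in i.i.d.\ averaged-SGD analyses such as \citep{jain2017parallelizing}. I would handle this by (i) working throughout on the event $\{\norm{X_\tau}^2 \leq R\}$ so that $\|I - 2\gamma X_\tau X_\tau^{\top}\| \leq 1 + 1/(4B)$ under the chosen $\gamma$, which keeps products of $B$ factors bounded; and (ii) expanding each cross-term into an ``i.i.d.-like'' main part plus a correction controlled by $\|\A\|^{|i-j|}$ and by the buffer length, then absorbing the total correction into the lower-order terms via the gap $u$ and the step-size choice. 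The gap $u$ together with the almost-sure bound $R$ are exactly what bridge the i.i.d.\ linear regression analysis with the VAR setting.
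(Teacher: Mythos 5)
Your overall architecture matches the paper's ($\sgdber$ analyzed via a bias/variance split, buffers decoupled through an explicit coupling with freshly sampled buffer heads, per-buffer contraction $(1-c\gamma B\lmin{G})$ for the bias, reverse-order unbiasedness for the variance, then tail averaging), but the step that actually produces the leading term $Cd\tr(\Sigma)/T$ is missing. Your accounting — "per-buffer covariance of order $\gamma\Sigma$, then average $N/2$ approximately independent buffers" — does not give the claimed rate: it yields something like $\gamma\tr(\Sigma)\tr(G)/N$, and more importantly it ignores that the variance parts of different buffer outputs are strongly correlated, since noise injected in buffer $t_1$ propagates into every later output through the contraction matrices. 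In the paper the dominant contribution comes precisely from these cross-buffer covariances: $\mathbb{E}[(\tilde A^{t_1-1,v}_B)^{\top}\tilde A^{t_2-1,v}_B\,\cdot\,]$ equals (up to $T^{-\alpha/2}$ corrections) $\tilde V_{t_1-1}\ch^{t_2-t_1}$ (Proposition~\ref{prop:avg_var1}), the geometric series sums to $(I-\ch)^{-1}$, and the whole bound hinges on Lemma~\ref{lem:predloss_iden_vari}, $\tr\left(G(I-\ch)^{-1}\right)\lesssim d/(\gamma B)$, which follows from $\ch+\ch^{\top}\preceq 2(I-c\gamma BG)$. It is this pairing of $(I-\ch)^{-1}\approx (c\gamma BG)^{-1}$ with the $G$-weighting that produces the dimension factor $d$ and cancels $\gamma$, giving $\frac{1}{N^2}\sum_t \gamma\tr(\Sigma)\cdot\frac{d}{\gamma B}=\frac{d\tr(\Sigma)}{NB}$. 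Without engaging with the cross-buffer terms you have not shown where the factor $d$ comes from; invoking the Cram\'er--Rao rate is not a substitute.

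Two further problems sit inside your variance step. First, the stated independence is backwards: $\eta^{t-1}_{-i}$ is independent of $X^{t-1}_{-i}$ and of the \emph{not-yet-processed} covariates $\{X^{t-1}_{-j}\}_{j>i}$ (earlier in forward time), but it is \emph{not} independent of the already-processed $\{X^{t-1}_{-j}\}_{j<i}$, which lie later in forward time and are built from $\eta^{t-1}_{-i}$. Hence the cross terms in your one-step recursion for $\Phi^t_i$ do not vanish by conditioning on the processed past; they vanish only after fully unrolling, because each term of the variance iterate contains linearly a noise that is latest in forward time within that term (this is the paper's Lemma~\ref{lem:cross_term}), and under the indicator conditioning they are only $O(T^{-\alpha/2})$, not zero. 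Second, your proposed fix for the "main obstacle" — expanding within-buffer covariate cross terms into an i.i.d.-like part plus corrections of size $\|\A\|^{|i-j|}$ — is neither what is needed nor safe for the constant: the paper never decorrelates covariates inside a buffer, but instead uses the exact telescoping isometry of Claim~\ref{claim:single_buffer_isometry} together with stationarity ($\mathbb{E}\sum_i X_{-i}X_{-i}^{\top}=BG$), which is what keeps the leading term free of $\tau_{\mathsf{mix}}$ and $\kappa(G)$ factors; a geometric-correction expansion would have to be carried out at exactly the right place to avoid polluting $d\tr(\Sigma)/T$. (Minor points: the per-step injected covariance in your Gram recursion should be $4\gamma^2\tr(\Sigma)\,\mathbb{E}[XX^{\top}]=4\gamma^2\tr(\Sigma)G$, not $4\gamma^2\mathbb{E}[X\Sigma X^{\top}]$, and you must also account for the complement of the event $\{\|X_\tau\|^2\le R\}$, which the algorithm's early return handles.)
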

See Section~\ref{subsec:variance_last_iterate}, Section~\ref{subsec:bias_last_iterate} for a detailed proof of the parameter error bound and see  Section~\ref{sec:pred_var}, Section~\ref{sec:pred_bias} for a detailed proof of the prediction error bound.

We now make the following observations: 
\noindent
\begin{enumerate}[label=(\arabic*),leftmargin=*]
\item  The dominant term in our bound on $\lossop$ (Theorem~\ref{thm:op_informal}) matches the information theoretically optimal bound (up to logarithmic factors) for the $\var(\A,\mu)$ estimation problem \citep{simchowitz2018learning} as long as $\|\A\| \leq 1-\frac{1}{T^{\xi}}$ for $\xi \in (0,1/2)$. Note that despite working with dependent data, leading term in our error bound is nearly independent of mixing time $\tau_{\mathsf{mix}}$. In contrast, most of the existing streaming/SGD style methods for dependent data have strong dependence on $\tau_{\mathsf{mix}}$ \cite{nagaraj2020least}. %Furthermore, for large enough $T$ and $\Sigma=\sigma^2 I$,  $\sigma_{\min}(G)\geq \sigma^2$. That is, the parameter error is $\approx \sqrt{\frac{}%, and is in fact stronger than the bound of offline OLS by \citep{simchowitz2018learning}. \praneeth{Can we specify the bound in \citep{simchowitz2018learning}?}
\item SGD for linear regression with {\em independent} data \citep{jain2017parallelizing,defossez2015averaged}, but with similar problem setting incurs error  $O(\frac{d \tr(\Sigma)}{T})$ for $\losspred$. So  our bound for $\sgdber$ matches the independent data setting bound in the minimax sense.
\item The space complexity of our method is $O(Bd+d^2)$ where $B=O(\tau_{\mathsf{mix}} \log T)$ is independent of $d$ and only logarithmically dependent on $T$. 
\item {\bf Sparse matrices with known support}: Suppose $\A$ is known to be sparse and \emph{we know the support} (say by running $L_1$ regularized OLS on a small set of samples). Let $s_j$ denote the sparsity of row $j$ of $\A$. Then the $\sgdber$ algorithm can be modified to run row by row such that it operates only on the support of row $j$. That is the covariates can be projected onto the support of each row. Then it can be shown that the prediction error is bounded as $O\left(\sum_{j=1}^d \sigma_j^2 s_j/T\right)$ where $\sigma_j^2$ is the $j$-th diagonal entry of $\Sigma$. Note that $\sgdber$ requires only $O(|\mathsf{supp}(\A)|)$ operations per iteration while applying online version of standard OLS would require $O(d^2)$ operations. In the simple case of $\Sigma = \sigma^2 I$, we note that $G \succeq \sigma^2 I$ and hence the bound for $\losspred$ becomes $O\left(\tfrac{|\mathsf{supp}(\A)|}{T}\right)$. We refer to Section~\ref{sec:sparse_system} for a sketch of this extension. 
\end{enumerate}
Next, we show that our error bounds are nearly information theoretically optimal. For the lower bound on $\lossop$ we directly use ~\citep[Theorem 2.3]{simchowitz2018learning}. 
\begin{theorem}
\label{thm:main_lower_bound_lossop}
Let $\rho<1$ and $\delta\in(0,1/4)$. Let $\mu$ be the distribution $\mathcal{N}(0,\sigma^2 I)$. For  any estimator $\hat A\in\mathcal{F}$, there exists an matrix $\A\in\mathbb{R}^{d\times d}$ where $\A = \rho O$ for some orthogonal matrix $O$ such that $|\sigma_{\max}(\A)|=\rho$ and we have that with probability at least $\delta$:
\begin{equation}
\label{eq:main_lower_bound_lossop}
\|\hat A -\A\| = \Omega\sqrt{\frac{(d+\log(1/\delta))(1-\rho)}{T}}.
\end{equation}
\end{theorem}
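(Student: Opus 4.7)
The plan is to reduce this to a direct invocation of~\citep[Theorem 2.3]{simchowitz2018learning}, with only mild bookkeeping to match the statement. Simchowitz et al.\ establish precisely a local minimax lower bound of the form $\Omega\bigl(\sqrt{(d+\log(1/\delta))(1-\rho)/T}\bigr)$ on the operator-norm error for the Gaussian LTI identification problem around any transition matrix whose largest singular value is $\rho$. So the first step of the proof is simply to verify that the hypotheses of their theorem are met by our setting: $\mu = \mathcal{N}(0,\sigma^2 I)$ is the Gaussian noise they consider, the model in~\eqref{eq:var1} is the same linear dynamical model, and the class $\mathcal{F}$ of estimators is arbitrary. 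Choosing $\A = \rho O$ for an orthogonal $O$ (as in the theorem statement) is a valid choice of the ``base point'' around which their local lower bound is stated, with $\sigma_{\max}(\A)=\rho$.

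If one wants to reconstruct the core of the Simchowitz et al.\ argument rather than simply cite it, the natural route is a standard Le Cam / Fano two-point (or packing) argument, which I would carry out as follows. First, construct a local packing $\{\A_v : v\in\mathcal{V}\}$ of perturbations $\A_v = \rho O(I + \epsilon U_v)$ with $\|U_v\|\le 1$ and the $U_v$'s pairwise $\Omega(1)$-separated in operator norm; a Gilbert--Varshamov construction on $d$-dimensional rank-one perturbations gives a packing of size $\exp(\Omega(d))$. Second, compute the KL divergence between the trajectory laws under $\A$ and $\A_v$. For Gaussian noise this equals
\begin{equation*}
\kl{P_{\A}^T}{P_{\A_v}^T} \;=\; \frac{1}{2\sigma^2}\sum_{t=0}^{T-1}\Exx{X_t\sim\pi}{\|(\A-\A_v)X_t\|^2} \;=\; \frac{T}{2\sigma^2}\,\tr\bigl((\A-\A_v)G(\A-\A_v)^\top\bigr),
\end{equation*}
where, crucially, orthogonality of $O$ makes the stationary covariance isotropic, $G=\tfrac{\sigma^2}{1-\rho^2}I$, so the KL is bounded by $\tfrac{T\epsilon^2 \rho^2}{1-\rho^2}\cdot\|U_v\|_F^2$.

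Third, choose $\epsilon^2$ so that the KL equals $c\cdot(d+\log(1/\delta))$ for a small constant $c$; this yields $\epsilon^2 \asymp (1-\rho^2)(d+\log(1/\delta))/T \asymp (1-\rho)(d+\log(1/\delta))/T$, since $1-\rho^2 = (1-\rho)(1+\rho)$ and $\rho<1$ is bounded. Fourth, apply Fano's inequality (or Le Cam's two-point method for the $\log(1/\delta)$ piece) to conclude that any estimator must, with probability at least $\delta$, suffer error $\Omega(\epsilon)$, giving exactly the stated bound.

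The main obstacle in self-contained execution would be the KL computation, because under stationarity $X_0\sim\pi$ depends on the hypothesis $\A$ itself, so the initial law contributes an extra term to the KL. This is handled in~\citep{simchowitz2018learning} either by conditioning on $X_0=0$ and absorbing a $O(\tau_{\mathsf{mix}})$ burn-in, or by a careful direct calculation showing the stationary contribution is lower order; the bound is robust to this choice since $T\gg \tau_{\mathsf{mix}}$ in the regime of interest. Beyond this, the proof is a routine packing/Fano argument, and given that the paper commits to ``directly use'' their theorem, I would simply quote it to obtain~\eqref{eq:main_lower_bound_lossop}.
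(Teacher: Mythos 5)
Your proposal matches the paper exactly: the paper offers no independent proof of Theorem~\ref{thm:main_lower_bound_lossop} and simply invokes Theorem~2.3 of \citep{simchowitz2018learning}, which is precisely your first (and decisive) step. Your additional packing/Fano sketch, including the isotropic stationary covariance $G=\tfrac{\sigma^2}{1-\rho^2}I$ for $\A=\rho O$ and the handling of the initial-condition term in the KL, is a sound reconstruction of the cited argument but is not needed beyond the citation.
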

Notice that in the setting of Theorem~\ref{thm:main_lower_bound_lossop}, we have $G = \sum_{i=0}^{\infty}\sigma^2 (\A)^{i} (\A)^{i,\top} = \frac{\sigma^2}{1-\rho^2}I$. Therefore, $\lmin{G} = \frac{1}{1-\rho^2} \sim \frac{1}{1-\rho} $. The bound in Theorem~\ref{thm:op_informal} matches the above minimax bound up to logarithmic factors. 

Next we consider the prediction loss. We fix dimension $d$ and horizon $T$ and consider the class of $\var$ models $\mathcal{M}$ such that Assumptions~\ref{as:norm_condition},~\ref{as:noise_concentration}, and~\ref{as:stationarity} hold such that $\mathsf{Tr}(\Sigma(\mu)) = \beta \in \mathbb{R}^{+}$ be fixed. Let $\mathcal{F}$ be the class of all estimators for parameter $\A$ given data $(Z_0,\dots,Z_T)$.	 We want to lower bound the minimax error:
$$\lossmm(\mathcal{M}) :=  \inf_{f\in \mathcal{F}}\sup_{\left(\A,\mu\right)\in \mathcal{M}} \mathbb{E}_{(Z_t)\sim \var(\A,\mu)} \losspred(f(Z_0,\dots,Z_T);\A,\mu) - \losspred(\A;\A,\mu).$$
\begin{theorem}\label{thm:main_lower_bound}
For some universal constant $c$, we have:
$$\lossmm(\mathcal{M}) \geq c\beta (d-1) \min\left(\frac{1}{T},\frac{1}{d^2}\right), \text{ where } \beta = \tr(\Sigma(\mu)).$$
\end{theorem}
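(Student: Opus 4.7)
The plan is to prove the lower bound via Fano's method applied to a Gilbert-Varshamov packing of the parameter space, within a convenient Gaussian submodel of $\mathcal{M}$. I would fix the noise to be isotropic Gaussian $\mu = \mathcal{N}(0, \tfrac{\beta}{d} I_d)$, so $\tr(\Sigma) = \beta$ as required by $\mathcal{M}$, and restrict attention to transition matrices with $\|\A\| \leq \epsilon$ for a parameter $\epsilon \leq 1/2$ to be tuned. Under this restriction all three assumptions hold, and the stationary covariance satisfies $\Sigma \preceq G \preceq \Sigma/(1-\epsilon^2)$ via the Lyapunov identity $G = \Sigma + \A G \A^\top$.

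For the packing, Gilbert-Varshamov applied to $\{-1,+1\}^{d^2}$ yields a set $V$ with $\log|V| \geq c_0 d^2$ and pairwise Hamming distance at least $d^2/4$. I would define $\A_v := (\epsilon/d)\,v$ for $v \in V$ (reshaped as $d\times d$), which gives $\|\A_v\| \leq (\epsilon/d)\|v\|_F \leq \epsilon$ and, for distinct $v, v' \in V$, $\epsilon^2 \leq \|\A_v - \A_{v'}\|_F^2 \leq 4\epsilon^2$. Using the identity $\losspred(A;\A,\mu) - \losspred(\A;\A,\mu) = \tr[(A-\A)^\top(A-\A)G]$ from Section~\ref{sec:prob}, together with $G_{v'} \succeq \Sigma = (\beta/d) I$, the pairwise prediction-loss separation is at least $\beta\epsilon^2/d$.

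The next step is to bound the KL divergence. A direct calculation on the Gaussian conditional law $X_{\tau+1}\mid X_\tau \sim \mathcal{N}(\A X_\tau, \Sigma)$ gives
\begin{equation*}
\kl{P_{\A_v}^T}{P_{\A_{v'}}^T} \;=\; \kl{\pi_v}{\pi_{v'}} \;+\; \tfrac{T}{2}\,\tr\bigl[(\A_v - \A_{v'})^\top \Sigma^{-1} (\A_v - \A_{v'})\, G_v\bigr].
\end{equation*}
Substituting $\Sigma^{-1} = (d/\beta) I$ and $G_v \preceq \Sigma/(1-\epsilon^2)$ bounds the trajectory term by $\tfrac{T}{2(1-\epsilon^2)}\|\A_v - \A_{v'}\|_F^2 \leq \tfrac{2T\epsilon^2}{1-\epsilon^2}$, and the initial-distribution term is $O(1)$ because the Lyapunov map $\A \mapsto G$ is Lipschitz when $\|\A\|$ is bounded away from $1$. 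Fano's inequality then yields $\lossmm(\mathcal{M}) \geq \tfrac{\beta \epsilon^2}{8d}$ whenever $\kl + \log 2 \leq \tfrac{1}{2}\log|V|$. Tuning $\epsilon$ by cases: for $T \geq d^2$, set $\epsilon^2 = c_1 d^2/T \leq 1/4$, yielding $\Omega(\beta d/T)$; for $T < d^2$, cap $\epsilon = 1/2$, which still satisfies Fano's constraint (possibly after shrinking $c_0$), yielding the saturated $\Omega(\beta/d)$. Both combine into $\Omega(\beta d \min(1/T, 1/d^2))$, matching the stated bound (the $d$ versus $d-1$ factor is absorbed into absolute constants).

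The main obstacle will be making the initial-distribution contribution $\kl{\pi_v}{\pi_{v'}}$ genuinely negligible. Since $\pi_v = \mathcal{N}(0, G_v)$ depends on $v$ through the stationary covariance, I must show it contributes $O(1)$ uniformly over $V$. This requires a quantitative Lipschitz estimate $\|G_v - G_{v'}\| \leq C\|\A_v - \A_{v'}\|$ derived by differentiating the Lyapunov equation together with the uniform spectral-gap bound $1-\epsilon^2 \geq 3/4$, and then feeding this into the standard centered-Gaussian KL formula via a second-order Taylor expansion around $G_v = G_{v'}$. Once this is controlled, the $T$-scaling trajectory term dominates and the Fano bound goes through cleanly.
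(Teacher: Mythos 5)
Your proposal is correct in substance but takes a genuinely different route from the paper. The paper's proof is an Assouad/Le Cam-style argument: it puts a uniform prior on a hypercube $M\in\{0,1\}^{d(d-1)/2}$ indexing symmetric matrices $A(M)$ (diagonal $\tfrac12$, off-diagonal $\tfrac{1}{4d}-\epsilon M_{ij}$), decouples the loss coordinate-by-coordinate, and for each coordinate compares the two chains that differ only in that entry; the trajectory KL is bounded by $3\epsilon^2T+5\epsilon d$, converted to a TV bound via Pinsker, and a maximal coupling forces any estimator to pay $\asymp\epsilon^2\sigma^2$ per coordinate, after which summing the $d(d-1)/2$ coordinates and choosing $\epsilon\asymp\min(1/\sqrt{T},1/d)$ gives the stated bound. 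You instead run a global Fano argument over a Gilbert--Varshamov packing of $\{-1,+1\}^{d^2}$ with $\A_v=(\epsilon/d)v$ and isotropic Gaussian noise: the same chain-rule KL decomposition (per-step term $\tfrac{T}{2}\tr[(\A_v-\A_{v'})^\top\Sigma^{-1}(\A_v-\A_{v'})G_v]$ plus the stationary term) appears, but the separation is converted to a loss lower bound via $G_v\asymp(\beta/d)I$ and nearest-point decoding rather than per-coordinate coupling. The Assouad route only needs the KL between Hamming-neighbours to be $O(1)$ and hands you the $(d-1)$ factor automatically; your Fano route needs $\log|V|\asymp d^2$ and the loss-to-Frobenius-metric equivalence, but is arguably more standard and extends more readily to non-symmetric perturbations. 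Two small repairs: in the regime $T<d^2$ the knob to shrink is the cap on $\epsilon$ (take $\epsilon^2$ a sufficiently small universal constant so that $T\epsilon^2\lesssim\log|V|$), not the GV constant $c_0$, since $\log|V|\le d^2\log 2$ cannot be enlarged; and the stationary term $\kl{\pi_v}{\pi_{v'}}$ need not be $O(1)$ under the crude operator-norm Lipschitz bound (it is $O(d\epsilon^2)$ that way), but this is still dominated by $\log|V|\asymp d^2$, and a sharper estimate using $\norm{G_v-G_{v'}}_F\lesssim \tfrac{\beta}{d}\,\epsilon\,\norm{\A_v-\A_{v'}}_F$ (the difference is linear in $\A_v-\A_{v'}$ but carries an extra factor $\norm{\A_v}\le\epsilon$) gives $O(\epsilon^4)$, so the Fano condition holds either way and the final rate $\beta d\min(1/T,1/d^2)$ matches the theorem up to absolute constants.
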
 
Note that the theorem shows that our algorithm is minimax optimal with respect to the prediction loss at stationarity, $\losspred$. See Section~\ref{sec:lower_bounds} for a detailed proof of the above lower bound. 

\section{Idea Behind Proofs}\label{sec:proofsketch}
In this section, we provide an overview of the key techniques to prove our results. As observed in the discussion following Equation~\eqref{eq:two_step_unroll_rev}, when the data is processed in the reverse order within a buffer, it behaves similar to SGD for linear regression with i.i.d. data. Due to the gaps of size $u$, we can take the buffers to be approximately independent. Therefore, we analyze the algorithm as follows:
\begin{enumerate}
\item Analyze reverse order \emph{within} a buffer using the property noted in Equation~\eqref{eq:two_step_unroll_rev}.
\item Treat \emph{different} buffers to be i.i.d. due to gap and present an i.i.d data type analysis.
\end{enumerate}

To execute the proposed proof strategy, we introduce the following technical notions:\vspace*{-3pt}
\paragraph{Coupled Process.} 
For the real data points $(X_\tau)$, the points in different buffers are \emph{weakly} dependent. In order to make the analysis straight forward, we introduce the \emph{fictitious} coupled process $\tilde{X}_{\tau}$ such that $\norm{\tilde{X}_{\tau} -X_{\tau}} \lesssim \frac{1}{T^{\alpha}}$ for large enough $\alpha$, for every data point $X_{\tau}$ used by $\sgdber$. We have the additional property that the successive buffers are actually independent for this coupled process. We refer to Definition~\ref{def:1} in the appendix for the construction of the coupled process $\tilde{X}_{\tau}$.

Suppose we run $\sgdber$ with the coupled process $\tilde{X}_{\tau}$ instead of $X_{\tau}$ to obtain the coupled iterates $\tilde{A}^{t}_i$. We can then show that $\tilde{A}_i^{t} \approx A^{t}_i$. Thus it suffices analyze the coupled iterates $\tilde{A}^{t}_i$. We refer to Sections~\ref{subsec:basic_lemmas} and~\ref{sec:initial_coupling} for the details. \vspace*{-3pt}
\paragraph{Bias Variance Decomposition.} 
We consider the standard bias variance decomposition with individual buffers as the basic unit as opposed to individual data points. We refer to Section~\ref{sec:bias_variance} for the details. We decompose the error in the iterates into the bias part $\Atto{t-1}{B}=(A_0-\A)\prod_{s=0}^{t-1}\Htt{s}{0}{B-1}$ and the variance part $\Attdiff{t-1}{B}=2\gamma\sum_{r=1}^{t}\sum_{j=0}^{B-1}\Nt{t-r}{-j}\Xtttr{t-r}{-j}\Htt{t-r}{j+1}{B-1}{\prod_{s=r-1}^{1}\Htt{t-s}{0}{B-1}} $ where the matrices $\Htt{s}{0}{B-1}=\prod_{i=0}^{B-1}\Ptt{s}{-i}$ are the independent 'contraction' matrices associated with each buffer $s$. This result in the geometric decay of the initial distance between $(A_0-\A)$. The variance part is due to the inherent noise present in the data. In Section~\ref{subsec:variance_last_iterate} we first establish the exponential decay of the `bias'. We then consider the second moment of the variance term. Observe that the distinct terms in the expression for $\Attdiff{t-1}{B}$ are uncorrelated either due to reverse order \emph{within} a buffer as noted in Equation~\eqref{eq:two_step_unroll_rev} or due to independence between the data in distinct buffers (due to coupling). This allows us to split the second moment into diagonal terms with non-zero mean and cross terms with zero mean. Diagonal terms are analyzed via a recursive argument in Claim~\ref{claim:single_buffer_isometry} and the following discussion in order to remove dependence on mixing time factors. The analysis for parameter recovery (the result of Theorem~\ref{thm:pred_informal}) is similar but we bound the relevant exponential moments using sub-Gaussianity of the noise sequence $\eta_t$ to obtain high-probability bounds which when combined with standard $\epsilon$-net arguments give us guarantees for the operator norm error $\lossop$.\vspace*{-3pt}
\paragraph{Averaged Iterates.} 
We then combine the bias and variance bounds obtained for individual iterates in Section~\ref{subsec:variance_last_iterate} to analyze the tail averaged output. Using techniques standard in the analysis of SGD for linear regression, we finally show that this averaging leads error rates of the order $\frac{d^2}{T}$. We refer to Sections~\ref{sec:main_op_bound} (for parameter recover) and ~\ref{sec:pred_loss} (for prediction error) for the detailed results.  \vspace*{-3pt}
\paragraph{Picking the Step Sizes and Conditioning.} 
Due to the auto-regressive nature of the data generation, the iterates can grow to be of the size $O(\frac{d}{1-\rho})$. The step sizes need to be set small enough so that the $\gamma \|X_{\tau}X_{\tau}^{\top}\| \leq 1$ in order for the $\sgdber$ iterations to not diverge to infinity. In the statement of Theorem~\ref{thm:pred_informal}, we condition on the event where $\|X_{\tau}\|^2$ are all bounded by a sufficiently large number $R$ for every $\tau$ in order to ensure this property. The relevant events where the norm is bounded are defined in Section~\ref{subsec:basic_lemmas}. Conditioning on these events results in previously zero mean terms to be not zero mean. Routine calculations using triangle inequality and Cauchy-Schwarz inequality ensure that the means are still of the order $\frac{1}{T^{\alpha}}$ for any fixed constant $\alpha > 0$. Furthermore, we actually require step sizes such that $\gamma \norm{\sum_{\tau\in\mathrm{Buffer}}X_{\tau} X^{\top}_{\tau}}\leq 1$ to show exponential contraction of $\Htt{s}{0}{B-1}$ matrices due to the Grammian $G$ as described next.\vspace*{-3pt}
\paragraph{Probabilistic Results.} 
We establish some properties of $\Htt{s}{0}{B-1}$, which are products of dependent random matrices in Section~\ref{sec:op_norm}. Specifically we refer to Lemmas~\ref{lem:contraction},~\ref{lem:almost_sure_contraction},~\ref{lem:probable_contraction}, and~\ref{lem:operator_norm_bound_1} which establish that $\norm{\prod_{s=0}^{t-1}\Htt{s}{0}{B-1}} \lesssim (1-\gamma B \lmin{G})^{t}$ with high probability. 
%experiments 

\section{Experiments}
\label{sec:experiments}
%\begin{wrapfigure}{r}{0.5\textwidth}     
%	%\begin{center}
%	%\resizebox{11cm}{5.2cm} {\includegraphics *[width=\linewidth]{{{isit_ebn0_mu_pe_0.001_k_100_new}}}}
%	\vspace*{-10pt}
%	\includegraphics[width=.5\columnwidth]{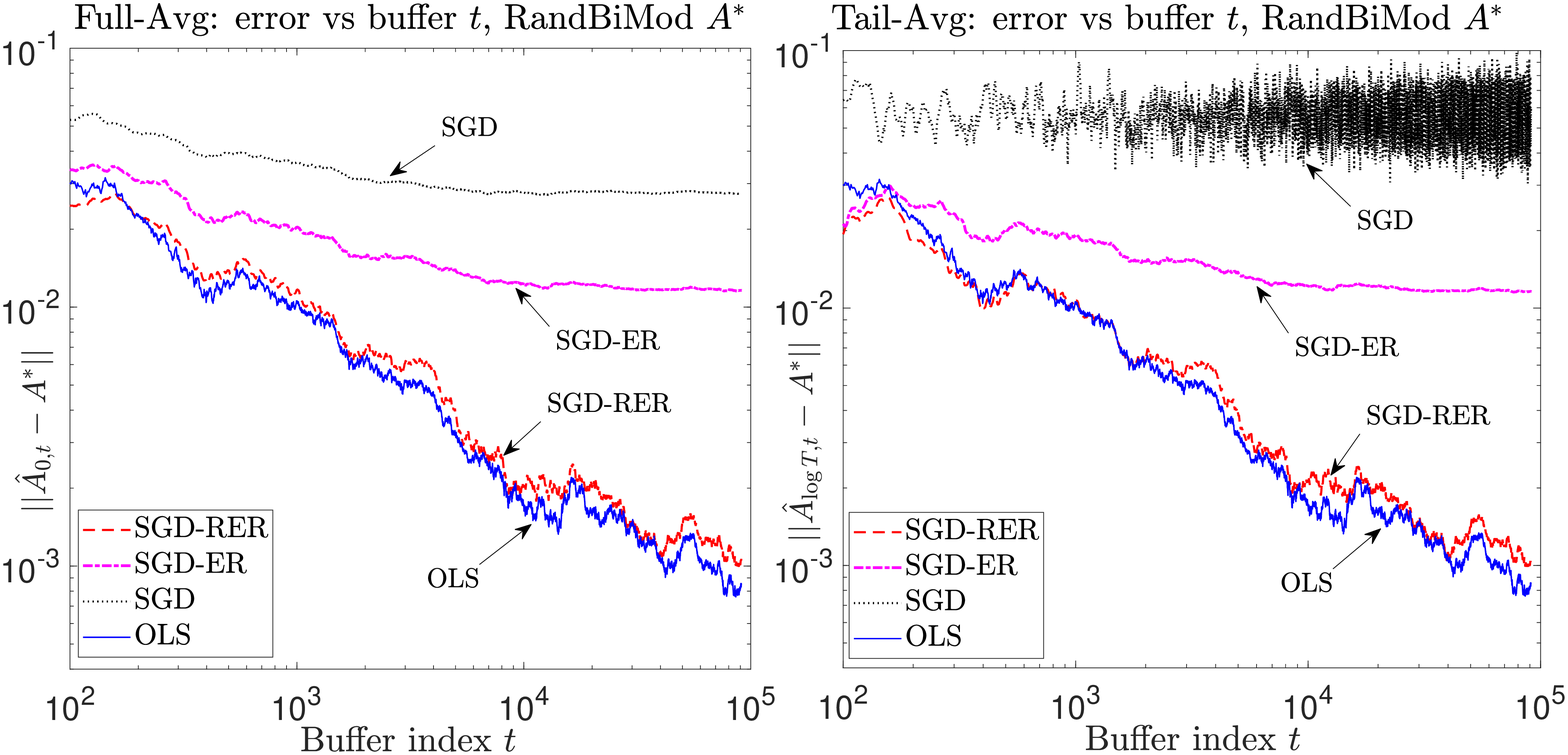}\vspace*{-10pt}
%	%	\includegraphics{}
%	\caption {Gaussian $\var(\A,\mu)$:  Parameter error for  tail averaged and full average iterates of $\sgdber$ and baselines. $\sgdber$ and $\ols$ incur similar parameter error, while error incurred by $\sgd$ and $\sgder$ saturate at significantly higher level, indicating non-zero bias. The parameters used are $\rho=0.9$, $d=5$, $T=10^7$, $B=100$, $u=10$. $R$ is estimated and $\gamma=1/2R$. }%$\norm{\hat A_{N/2,N}-\A}$ vs $T$ for $\mu\distas{}\mathcal{N}(0,\sigma^2 I)$, $d=5$, RandBiMod $\A$ with $\rho=0.9$, $\sigma^2=1$, $\gamma=\frac{1}{2}\frac{\max(1-\rho^2,1/T)}{\sigma^2 d\log T}$, $B=100$, $u=10$}
%	\label{fig:1}
%	%\end{center} 
%\end{wrapfigure}
\begin{figure}[t!]
\begin{center}
	%\resizebox{11cm}{5.2cm} {\includegraphics *[width=\linewidth]{{{isit_ebn0_mu_pe_0.001_k_100_new}}}}
	\vspace*{-10pt}
	\includegraphics[width=\columnwidth, height=6cm]{}\vspace*{-10pt}
	\caption {Gaussian $\var(\A,\mu)$:  Parameter error for  tail averaged and full average iterates of $\sgdber$ and baselines. $\sgdber$ and $\ols$ incur similar parameter error, while error incurred by $\sgd$ and $\sgder$ saturate at significantly higher level, indicating non-zero bias. The parameters used are $\rho=0.9$, $d=5$, $T=10^7$, $B=100$, $u=10$. $R$ is estimated and $\gamma=1/2R$. }%$\norm{\hat A_{N/2,N}-\A}$ vs $T$ for $\mu\distas{}\mathcal{N}(0,\sigma^2 I)$, $d=5$, RandBiMod $\A$ with $\rho=0.9$, $\sigma^2=1$, $\gamma=\frac{1}{2}\frac{\max(1-\rho^2,1/T)}{\sigma^2 d\log T}$, $B=100$, $u=10$}
	\label{fig:1}
\end{center} 
\end{figure}

In this section, we compare performance of our $\sgdber$ method on synthetic data against the performance of standard baselines $\ols$ and $\sgd$, along with $\sgder$ method that applies standard experience replay technique, but where points from a buffer are sampled {\em randomly}. \\
%performs perform numerical simulations on synthetically generated data to compare the performance of four algorithms:
%\begin{enumerate}
%\item $\sgdber$
%\item $\sgder$: SGD with ER where in each buffer, the covariates are randomly sampled (without replacement)
%\item $\sgd$: Standard SGD where covariates are read in order. 
%\item $\ols$
%\end{enumerate}
\textbf{Synthetic data}: We sample data from $\var(\A,\mu)$  with $X_0=0$, $\mu\distas{}\mathcal{N}(0,\sigma^2 I)$ and $\A\in \mathbb{R}^{d\times d}$ is generated from the "RandBiMod" distribution. That is, $\A=U\Lambda U^{\top}$ with random orthogonal $U$, and $\Lambda$ is diagonal with $\lceil d/2\rceil$ entries on diagonal being $\rho$ and the remaining diagonal entries are set to $\rho/3$. We set $d=5$, $\rho=0.9$ and $\sigma^2=1$. 
We fix a horizon $T=10^7$ and set the buffer size as $B=100$ and $u=10$. To estimate $R$ from the data, we use the first $\lfloor 2\log T\rfloor=32$ samples and set $R$ as the sum of the norms of these samples. We let the stepsize to be $\gamma=\frac{1}{2R}$ which is \emph{aggressive} compared to our theorems. We start the $\sgdber$ and other $SGD$-like algorithms from the second buffer onward.

For tail averaging, as described in algorithm~\ref{alg:1}, we ignore the first $\lfloor \log T \rfloor=16$ buffers, and maintain a running tail average at the end of each of the subsequent buffers. In figure~\ref{fig:1}, we plot the parameter errors $\norm{\hat A_{\log T,t}-\A}$ and $\norm{\hat A_{0,t}-\A}$ versus the buffer index $t$ as the algorithm runs for horizon $T$. For $\ols$, we include samples in the first buffer as well (which were used for estimating $R$). Clearly,  $\sgdber$ has very similar performance as that of $\ols$ whereas $\sgder$ and $\sgd$ seem to display residual bias for the chosen step-size (which is logarithmic in the horizon $T$) and buffer lengths. We also observe a similar behavior when we choose $\A=\rho I$. 

%For a horizon $T$, we set $R=\frac{\max(1-\rho^2,1/T)}{\sigma^2 d\log T}$ and run $\sgdber$ with stepsize $\gamma=\frac{1}{2R}$ which is \emph{aggressive} compared what is required in theory in this paper. For simplicity, we set \iffalse{$R=\max_\tau \|X_\tau\|^2$,}\fi $B=100$, $u=10$, and $\theta\in \{0,1/2\}$. Figure~\ref{fig:1} plots parameter error for different methods. Clearly,  $\sgdber$ has very similar performance as that of $\ols$ whereas $\sgder$ and $\sgd$ seem to display residual bias for the chosen step-size (which is logarithmic in the horizon $T$) and buffer lengths. We also observe a similar behavior when we choose $\A=\rho I$. 

%We consider ``RandBiMod'' $\A$: it is generated as $U\Lambda U^{\top}$ with random orthogonal $U$, and $\Lambda$ is diagonal with $\lceil d/2\rceil$ entries on diagonal being $\rho$ and the remaining diagonal entries are set to $\rho/3$. We then generate data from $\var(\A,\mu)$ where $\mu\distas{}\mathcal{N}(0,\sigma^2 I)$. In figure~\ref{fig:1}, we consider $d=5$, $\rho=0.9$ and $\sigma^2=1$. These are provided to the algorithm \sk{what if $\sigma$ is unknown? $\rho$ is unknown? add discussion and/or plots for unknown case by estimating $R$ from data}. For a horizon $T$, we set the stepsize $\gamma=\frac{1}{2}\frac{\max(1-\rho^2,1/T)}{\sigma^2 d\log T}$ which is \emph{aggressive} compared what is required in theory in this paper. 
%We see from figure~\ref{fig:1} that

%\includegraphics{}

%\input{sgd_shortcomings}

%conclusions

\section{Conclusion}
In this paper, we studied the problem of linear system identification in streaming setting and provided an efficient algorithm ($\sgdber$). We proved that $\sgdber$ achieves nearly minimax optimal error rate, both in terms of parameter error as well as prediction error. Furthermore, using experiments, we validated that standard SGD as well as SGD with experience replay can have large bias error. Our algorithm and analysis demonstrates that the knowledge of dependency structure can aid us in designing accurate algorithms for dependent data.

This work opens up a myriad of open questions about learning from dependent data in general and Markov processes in particular. Our work currently assumes a specific Markovian dependency structure -- extending the intuition and techniques to handle more general data dependencies is an interesting open question. Further, our work does not address the question of recovering a sparse system matrix with unknown sparsity pattern. So online learning of  such linear dynamical systems with (unknown) sparsity pattern or low-rank structure is an exciting question with applications to domains like bioinformatics. Moreover, even in our linear setting, extending $\sgdber$ to the situation of partially observed states with or without control inputs would be another direction to pursue. Finally, it would be interesting to understand how the techniques introduced in this work perform in practical RL settings where learning with data from Markov processes is essential. 
%\input{formal_results}
%\input{analysis_of_algorithm}

%\begin{algorithm}[tb]
%   \caption{Bubble Sort}
%   \label{alg:example}
%\begin{algorithmic}
%   \STATE {\bfseries Input:} data $x_i$, size $m$
%   \REPEAT
%   \STATE Initialize $noChange = true$.
%   \FOR{$i=1$ {\bfseries to} $m-1$}
%   \IF{$x_i > x_{i+1}$}
%   \STATE Swap $x_i$ and $x_{i+1}$
%   \STATE $noChange = false$
%   \ENDIF
%   \ENDFOR
%   \UNTIL{$noChange$ is $true$}
%\end{algorithmic}
%\end{algorithm}

% Acknowledgments---Will not appear in anonymized version
%\acks{We thank a bunch of people.}

\begin{ack}
D.N. was supported in part by NSF grant DMS-2022448.\\
S.S.K was supported in part by Teaching Assistantship (TA) from EECS, MIT.\\
Part of this work was done when S.S.K was visiting Microsoft Research Lab India Pvt Ltd during summer 2020.
\end{ack}

\clearpage
\bibliographystyle{unsrtnat}
\bibliography{refs}

\clearpage

%%%%%%%%%%%%%%%%%%%%%%%%%%%%%%%%%%%%%%%%%%%%%%%%%%%%%%%%%%%%
\clearpage
\appendix
%appendix

%\input{setting_the_gap}

\section*{Organization of the appendix}
%\subsection{Organization}
We provide a map of the results in the appendix. 
\begin{enumerate}
\item In section~\ref{sec:formal_proof_sketch} we provide formal statements of theorems~\ref{thm:op_informal} and \ref{thm:pred_informal}. We also discuss the more general spectral gap condition $\max_i |\lambda_i(A)|<1$ instead of the stronger condition $\norm{A}<1$ and its impact on the results. 
\item In section~\ref{subsec:basic_lemmas} we construct the coupled process $\tilde{X}_t$ and setup notations used in the rest of the paper. The coupled process has the additional property that the successive buffers are independent.
\item In section~\ref{sec:initial_coupling} we show that the $\sgdber$ iterates generated using the coupled process are close to ones generated by the actual data. After this, we only deal with the coupled iterates.
\item In section~\ref{sec:bias_variance} we provide the bias-variance decomposition
\item In section~\ref{sec:main_op_bound} we provide the proof of the parameter error bound of theorem~\ref{thm:op_informal}. Required intermediary results are discussed in section~\ref{sec:op_norm}.
\item In section~\ref{sec:bias_var_analysis} we present the bounds on the bias and variance terms separately (for last and average iterates), which are necessary to prove theorem~\ref{thm:main_upper_bound_pred}. Most of the proofs are relegated to sections \ref{sec:proof_prop_1}, \ref{sec:prop_avg_var1}, \ref{sec:proof_thm_last_bias}, \ref{sec:proof_prop_avg_bias} and \ref{sec:technical_proofs}.
\item In section~\ref{sec:pred_loss}  we prove theorem~\ref{thm:pred_informal}. 
\item In section~\ref{sec:lower_bounds}, we prove the lower bounds for the prediction error given in theorem~\ref{thm:main_lower_bound}.
\item In section~\ref{sec:sparse_system} we discuss the scenario of $\var(\A,\mu)$ where $\A$ is sparse with known sparsity pattern. We provide a proof sketch of the bound on prediction error in terms of sparsity. 
\end{enumerate}
\section{Formal Results and Proof Sketch}

\label{sec:formal_proof_sketch}
In this Section, we formally state the full results and sketch the outline of our proof. Recall the definitions of  $\lossop$ and $\losspred$ from section~\ref{sec:prob}. For all the theorems below, we suppose that Assumptions~\ref{as:norm_condition},~\ref{as:noise_concentration} and~\ref{as:stationarity} hold.  Assume that $u,\gamma,B,\alpha$ and $R$ are as chosen in section~\ref{sec:main_results}. %Let $a = \theta N$ and $t$ be such that $N \geq t \geq 2a $. Let $\hat{A}_{a,t}$ be the tail averaged output of $\sgdber$ after buffer $t-1$. Further let $T^{\alpha/2}>cd\kappa(G)$ and $\frac{a}{\kappa(G)}=\Omega(\log T)$. 

%Let $a = \frac{N}{2}$. Let $\hat{A}_{N/2,N}$ be the tail averaged output of $\sgdber$. Further let $T^{\alpha/2}>cd\kappa(G)$ and $T=\Omega(B\kappa(G)(\log T)^2)$. 

Let $t>a$ and let $\hat{A}_{a,t}$ be the tail averaged output of $\sgdber$ after buffer $t-1$. Further let $T^{\alpha/2}>cd\kappa(G)$.

%\begin{theorem}\label{thm:main_upper_bound_op}
%Suppose we pick the step size $\gamma < \min \left(\tfrac{C}{B\lmin{G}},\tfrac{1}{8BR}\right)$ for some constant $C$ depending only on $C_{\mu}$.  Let $\upsilon > 1$ be such that $\upsilon < \alpha/2 - 1$. 
%Then, with probability at least $1-\frac{1}{T^{\upsilon}}$, we have: 
%
%
%
%$$\lossop(\hat{A}_{a,t},\A,\mu) \leq C\sqrt{\frac{(d+\upsilon\log T) \lmax{\Sigma}}{(t-a) B  \lmin{G}} } +\beta_{b} + C T^{\upsilon}  \sqrt{d\gamma \tr(\Sigma)}  (Bt)^2\norm{\A^u},$$
%where, $\beta_b = T^{\upsilon}\sqrt{d}\left(\frac{c_2e^{-c_3 a B \gamma \sigma_{\min}(G)}}{\sqrt{(t-a)B\gamma \sigma_{\min}(G)}}+c_4\left(\sqrt{T\norm{\A^{u}}} +\frac{1}{T^{\alpha/2}}\right) \right)\norm{A_0-\A}$. 
%%$$ \beta_{1,v} =C T^{\upsilon}  \sqrt{d\gamma \tr(\Sigma)}  (Bt)^2\norm{\A^u} $$
%\end{theorem}

\begin{theorem}\label{thm:main_upper_bound_op}
Suppose we pick the step size $\gamma = \min \left(\tfrac{C}{B\lmin{G}},\tfrac{1}{8BR}\right)$ for some constant $C$ depending only on $C_{\mu}$.  %Let $\upsilon > 1$ be such that $T^{\upsilon} \leq \frac{T^{\alpha}}{6}$. 
%Then, there are constants $c_i>0,\,0\leq i\leq 4$ if such that $t/2>a>c_0\left(d+\upsilon\log T\right)$  then with probability at least $1-\frac{1}{T^{\upsilon}}$, we have: 
Then, there are constants $C,c_i>0,\,0\leq i\leq 4$ such that if $a>c_0\left(d+\alpha\log T\right)$  then with probability at least $1-\frac{C}{T^{\alpha}}$, we have: 

\begin{\Ieee}{LLL}
\label{eq:main_upper_bound_op}
%\lossop(\hat{A}_{\frac{N}{2},N},\A,\mu) \leq c_1\sqrt{\frac{(d+\alpha\log T) \sigma_{\max}(\Sigma)}{T  \lmin{G}} }+\beta_b\norm{A_0-\Aa}+c_4 \frac{T^2}{B^2} \norm{\A^u}\Ieeen\\
\lossop(\hat{A}_{a,t},\A,\mu)\leq c_1\sqrt{\frac{(d+\alpha\log T) \sigma_{\max}(\Sigma)}{(t-a) B  \lmin{G}} }+\beta_b\norm{A_0-\A}+c_4\frac{T^2}{B^2} \norm{\A^u}\Ieeen
\end{\Ieee}
where 
\begin{equation}
\label{eq:beta_b_defn}
\beta_b=c_3 \frac{d\kappa(G)\log T}{t-a}e^{-c_2\frac{a}{d\kappa(G)\log T}}
%\beta_b=c_3 \frac{d\kappa(G)B\log(T)}{T}e^{-c_2\frac{T}{d\kappa(G)B\log T} }
\end{equation}

\end{theorem}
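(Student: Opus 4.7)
The plan is to follow the roadmap sketched in Section~\ref{sec:proofsketch} and combine four ingredients: (i) a coupling argument that replaces the true data $X_\tau$ by an auxiliary process $\tilde X_\tau$ for which successive buffers are genuinely independent, (ii) a bias/variance decomposition at the \emph{buffer} level, (iii) high-probability contraction bounds on the random products $\prod_s \tilde H^s_{0,B-1}$, and (iv) a sub-Gaussian concentration plus $\epsilon$-net step to pass from Frobenius-type moment bounds to the operator norm. The third error term $c_4 \tfrac{T^2}{B^2}\|\A^u\|$ will be the price paid for the coupling: by construction $\|\tilde X_\tau - X_\tau\|$ is controlled by $\|\A^u\|$ times a polynomial in $T$, and so after the $O(T/B)$ SGD updates per realization the coupling error between the true iterate $\hat A_{a,t}$ and the coupled iterate $\hat{\tilde A}_{a,t}$ is at most the last summand. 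Hence it suffices to bound $\lossop(\hat{\tilde A}_{a,t}, \A, \mu)$.

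For the bias/variance step I would write, for each buffer $s$, the per-buffer contraction matrix $\tilde H^s_{0,B-1} = \prod_{i=0}^{B-1}(I - 2\gamma \tilde X^s_{-i}\tilde X^{s,\top}_{-i})$ and unroll the recursion in reverse order. Because within each buffer reverse processing makes $\tilde X^s_{-i}$ independent of the noise $\tilde\eta^s_{-i}$ (this is exactly the observation after Equation~\eqref{eq:two_step_unroll_rev}), the variance piece becomes a \emph{sum of conditionally-mean-zero terms} across buffers and, after averaging, takes the form
\begin{equation*}
\hat{\tilde A}_{a,t} - \mathbb{E}\hat{\tilde A}_{a,t} \;=\; \frac{2\gamma}{t-a}\sum_{\tau=a+1}^{t}\sum_{r=1}^{\tau}\sum_{j=0}^{B-1} \tilde\eta^{\tau-r}_{-j}\tilde X^{\tau-r,\top}_{-j}\,\tilde H^{\tau-r}_{j+1,B-1}\!\prod_{s=r-1}^{1}\!\tilde H^{\tau-s}_{0,B-1}.
\end{equation*}
Using the independence of buffers and Lemmas~\ref{lem:contraction}--\ref{lem:operator_norm_bound_1} (which deliver $\|\prod_{s} \tilde H^s_{0,B-1}\| \lesssim (1-\gamma B\lmin{G})^{\text{count}}$ with probability $1-T^{-\alpha}$), I would bound the operator norm of each inner sum by the geometric series with ratio $1-\gamma B\lmin{G}$. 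The bias part $(A_0-\A)\tfrac{1}{t-a}\sum_\tau \prod_{s<\tau}\tilde H^s_{0,B-1}$ is handled by the same contraction lemmas: averaging a geometric decay with ratio $(1-\gamma B\lmin{G})^a$ produces the $\beta_b$ factor, where the $\tfrac{d\kappa(G)\log T}{t-a}$ in front comes from summing the geometric tail after the burn-in of $a$ buffers.

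To upgrade the per-direction variance bound to an operator norm bound I would invoke Assumption~\ref{as:noise_concentration}. For any fixed unit vectors $v,w \in \mathbb{R}^d$, $v^\top(\hat{\tilde A}_{a,t} - \mathbb{E}\hat{\tilde A}_{a,t})w$ is a sum of independent (across buffers) sub-Gaussian increments with proxy controlled by $\gamma \|w^\top \tilde X^\cdot \| \|\Sigma^{1/2} v\|$ weighted by the contraction factors; a Hanson--Wright style or Azuma-type argument gives a sub-Gaussian tail of the form $\exp(-c\tfrac{s^2(t-a)B\lmin{G}}{\sigma_{\max}(\Sigma)})$. A standard $\tfrac14$-net on the unit sphere in $\mathbb{R}^d$ (of size $9^d$) and a union bound then upgrades this to an operator norm bound, contributing the $d+\alpha\log T$ in the numerator inside the square root. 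The burn-in condition $a > c_0(d+\alpha\log T)$ is exactly what is needed so that the high-probability contraction Lemma~\ref{lem:probable_contraction} holds uniformly on the net.

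The main obstacle, in my view, is not the martingale/concentration bookkeeping but rather the interplay between the aggressive choice $\gamma B\lmin{G} = \Theta(1)$ and the contraction of the products of \emph{dependent} within-buffer matrices $\tilde H^s_{0,B-1}$. Individually one only has $I - 2\gamma X X^\top \preceq I$, so the contraction has to come from the cumulative Grammian $\gamma \sum_{i} \tilde X^s_{-i}\tilde X^{s,\top}_{-i} \succeq c\gamma B G$, which requires a matrix Bernstein argument over a single buffer (Section~\ref{sec:op_norm}) together with the event $\|X_\tau\|^2 \le R$ to avoid divergence. Conditioning on this event introduces small biases in previously mean-zero terms; because $R$ and the failure probability are set so the induced bias is $O(T^{-\alpha})$, Cauchy--Schwarz absorbs these into lower order. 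Once these contraction bounds are in hand with high probability, the rest of the proof is assembling the three terms (coupling error, tail-averaged bias, tail-averaged variance) and applying the triangle inequality to reach \eqref{eq:main_upper_bound_op}.
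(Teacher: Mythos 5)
Your proposal is correct and follows essentially the same route as the paper's proof: couple to the buffer-independent process (paying the $\gamma^2R^2T^2\|\A^u\| \sim T^2\|\A^u\|/B^2$ term via Lemma~\ref{lem:coupled_iterate_replacement}), decompose the tail average into bias and variance at the buffer level, control the bias with the high-probability contraction of $\prod_s \Htt{s}{0}{B-1}$ from Lemma~\ref{lem:operator_norm_bound_1} (whence the burn-in condition $a\gtrsim d+\alpha\log T$ and the $\beta_b$ factor), and control the variance by a conditional sub-Gaussian MGF/peeling argument plus a $1/4$-net, exactly as in Theorem~\ref{thm:average_iterate_op_bound}. The only caveat is a detail, not a gap: the within-buffer Grammian lower bound $\sum_i \Xtt{}{-i}\Xtt{\top}{-i}\succeq cBG$ is \emph{not} obtained in the paper by a matrix Bernstein bound over a single buffer (which would be delicate since within-buffer covariates are consecutive, strongly dependent chain states and would reintroduce mixing-time factors), but by a Paley--Zygmund constant-probability contraction per direction (Lemma~\ref{lem:probable_contraction}) boosted across independent buffers and an $\epsilon$-net.
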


The techniques for the proof is developed in Section~\ref{sec:op_norm} and the Theorem~\ref{thm:main_upper_bound_op} is proved in Section~\ref{sec:main_op_bound}.
%\begin{theorem}
%\label{thm:main_upper_bound_pred}
%Let $R,B,u,\alpha$ be chosen as in section~\ref{sec:main_results}. Let $\gamma = \frac{c}{4RB}\leq \frac{1}{2R}$ for $0<c<1$. Then there are constants $c_1,c_2,c_3,c_4>0$ such that for $\prbndsqinv>c_1\frac{\sqrt{M_4}}{\sigma_{\min}(G)}$ and any $t>a$  the expected prediction loss $\cl$ (defined in \eqref{eq:pred_loss_def2}) is bounded as
%\begin{\Ieee}{LLL}
%\label{eq:main_upper_bound_pred}
%\Ex{\losspred(\hat{A}_{a,t};\A,\mu) \ind{0}{t-1}}&\leq & c_2\left[ \frac{d\tr(\Sigma)}{B(t-a)}+ \frac{d^2\sigma_{\max}(\Sigma)}{B(t-a)}\frac{\sqrt{\kappa(G)}}{B}\right]+\\
%&& c_3\left[\frac{d^2\sigma_{\max}(\Sigma)}{B^2(t-a)^2}\sqrt{\kappa(G)}\frac{1}{\gamma\sigma_{\min}(G)}+\right.\\
%%&&\left.  \frac{ d\sigma_{\max}(\Sigma)}{R} \frac{T^2}{B^2}\prbndsq \tr(G)+\right.\\
%&&\left. \frac{1}{B(t-a)} d\kappa(G)RB e^{-c_4  \frac{\sigma_{\min}(G)}{R}a}\norm{A_0-\A}^2+\right.\\
%&& \left.\left(  \frac{T^3}{B^3}\norm{\A^u}+\frac{ d\sigma_{\max}(\Sigma)}{R} \frac{T^2}{B^2}\prbndsq\right)\tr(G)\right]\\
%\Ieeen
%\end{\Ieee}
\begin{theorem}
\label{thm:main_upper_bound_pred}
Let $R,B,u,\alpha$ be chosen as in section~\ref{sec:main_results}. Let $\gamma = \frac{c}{4RB}\leq \frac{1}{2R}$ for $0<c<1$. Then there are constants $c_1,c_2,c_3,c_4>0$ such that for $\prbndsqinv>c_1\frac{\sqrt{M_4}}{\sigma_{\min}(G)}$ the expected prediction loss $\losspred$ is bounded as
\begin{\Ieee}{LLL}
\label{eq:main_upper_bound_pred}
\Ex{\losspred(\hat{A}_{a,t};\A,\mu)}-\tr(\Sigma)&\leq & c_2\left[ \frac{d\tr(\Sigma)}{B(t-a)}+ \frac{d^2\sigma_{\max}(\Sigma)}{B(t-a)}\frac{\sqrt{\kappa(G)}}{B}\right]+\\
&& c_3\left[\frac{d^2\sigma_{\max}(\Sigma)}{B^2(t-a)^2}(\kappa(G))^{3/2}dB\log T+\right.\\
%&&\left.  \frac{ d\sigma_{\max}(\Sigma)}{R} \frac{T^2}{B^2}\prbndsq \tr(G)+\right.\\
&&\left. \beta_b\tr(G)\norm{A_0-\A}^2+\right.\\
&& \left.\left(  \frac{T^3}{B^3}\norm{\A^u}+\frac{ d\sigma_{\max}(\Sigma)}{R} \frac{T^2}{B^2}\prbndsq\right)\tr(G)\right]\\
\Ieeen
\end{\Ieee}

where $\beta_b$ is defined in \eqref{eq:beta_b_defn}.

%Hence, if $\norm{\A}<c_0<1$  and $T\geq CdB\kappa(G)(\log T)^2$ and $B,u$ as in section~\ref{sec:main_results} we get
%\begin{\Ieee}{LLL}
%\label{eq:main_upper_bound_pred_1}
%\Ex{\losspred(\hat{A}_{\frac{N}{2},N};\A,\mu)}&\leq & c_2 \frac{d\tr(\Sigma)}{T}+o\left(\frac{1}{T}\right)
%\Ieeen
%\end{\Ieee}

\end{theorem}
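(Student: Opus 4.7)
The starting identity $\losspred(A;\A,\mu)-\tr(\Sigma)=\tr[(A-\A)^{\top}(A-\A)G]$ reduces the task to bounding $\Ex{\tr[(\hat A_{a,t}-\A)^{\top}(\hat A_{a,t}-\A)G]}$. My first step is to swap the real iterates for the coupled iterates $\hat{\tilde A}_{a,t}$ obtained by running $\sgdber$ on the fictitious process $\tilde X_\tau$ constructed in Section~\ref{subsec:basic_lemmas}; the pointwise coupling error is $O(T^{-\alpha})$ on an event of probability at least $1-T^{-\alpha}$, and conditioning on $\cap_\tau\{\norm{X_\tau}^2\le R\}$ costs at most a polynomially small bias by triangle plus Cauchy--Schwarz. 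Tracking these two costs through the $G$-weighted trace produces exactly the last bracket of \eqref{eq:main_upper_bound_pred}: the $(T^3/B^3)\norm{\A^u}$ term arises from the independence-of-buffers property of $\tilde X_\tau$, while the $(T^2/B^2)\prbndsq\,d\sigma_{\max}(\Sigma)/R$ term is the norm-conditioning bias propagated through the $\tr(G)$ factor.

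\textbf{Bias--variance decomposition.} Once the coupled process is in play, successive buffers are genuinely independent, so I invoke the decomposition of Section~\ref{sec:bias_variance}: for each $\tau$,
\begin{equation*}
\hat{\tilde A}_{a,t}-\A \;=\; \underbrace{\tfrac{1}{t-a}\sum_{\tau=a+1}^{t}(A_0-\A)\prod_{s=0}^{\tau-1}\Htt{s}{0}{B-1}}_{\text{bias}} \;+\; \underbrace{\tfrac{1}{t-a}\sum_{\tau=a+1}^{t}\Attdiff{\tau-1}{B}}_{\text{variance}},
\end{equation*}
where the variance term is the sum of the linear-in-$\eta$ contributions visible in \eqref{eq:sgd_expreplay2}. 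After taking the $G$-weighted trace and expectation, the bias$\times$variance cross term vanishes: inside a buffer, the noise $\Nt{s}{-j}$ is independent of every $\Xtt{s}{-i}$ for $i\le j$ by the reverse order (the mechanism of \eqref{eq:two_step_unroll_rev}), and across buffers, independence is inherited from the coupling.

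\textbf{Controlling the two pieces.} For the bias I feed each product $\prod_{s=0}^{\tau-1}\Htt{s}{0}{B-1}$ into the operator-norm contraction estimates of Section~\ref{sec:op_norm} (Lemmas~\ref{lem:contraction}--\ref{lem:operator_norm_bound_1}), where each factor contracts by roughly $1-\gamma B\,\sigma_{\min}(G)$ in expectation; a standard geometric-tail-averaging identity then produces the $\beta_b$ factor of \eqref{eq:beta_b_defn}, and $G$-weighting yields the $\beta_b\,\tr(G)\norm{A_0-\A}^2$ contribution. For the variance I expand
\begin{equation*}
\tfrac{1}{(t-a)^2}\sum_{\tau,\tau'}\Ex{\tr\bigl[\Attdiff{\tau-1}{B}^{\top}\,\Attdiff{\tau'-1}{B}\,G\bigr]}
\end{equation*}
and split into diagonal $\tau=\tau'$ terms and off-diagonal $\tau\neq\tau'$ terms. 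The diagonal contribution, evaluated by the in-buffer backward recursion of Claim~\ref{claim:single_buffer_isometry}, supplies the leading $d\,\tr(\Sigma)/(B(t-a))$ plus the subleading $d^2\sigma_{\max}(\Sigma)\sqrt{\kappa(G)}/(B^2(t-a))$ term; the off-diagonal sums collapse via buffer independence to a chain of contracting $\Htt{\cdot}{0}{B-1}$ factors and, after geometric summation, yield the $(\kappa(G))^{3/2}\,d^2\sigma_{\max}(\Sigma)\,dB\log T/(B^2(t-a)^2)$ piece.

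\textbf{Main obstacle.} The delicate step is the in-buffer diagonal bound: the factors $(I-2\gamma\,\Xtt{s}{-i}\,\Xtttr{s}{-i})$ are only conditionally contractive going backward through the buffer, while $\Nt{s}{-j}$ multiplies a partial product running forward from index $j+1$ to $B-1$. Claim~\ref{claim:single_buffer_isometry} handles this via a backward recursion, but it crucially requires $\gamma\lesssim 1/(BR)$ so that $\gamma\norm{\sum_i\Xtt{s}{-i}\,\Xtttr{s}{-i}}\le 1$ on the norm-conditioning event; this is precisely the hypothesis $\gamma=c/(4RB)$ of the theorem. A second subtle point is that conditioning on bounded norms turns several originally martingale-difference terms into biased ones, and one must verify that their means remain of order $T^{-\alpha/2}$ when propagated through $G$; this bookkeeping is what fixes the power of $T$ in the $\prbndsq$ prefactor appearing in the conditioning remainder.
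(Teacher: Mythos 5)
Your overall skeleton (coupling via Lemma~\ref{lem:coupling_AA^t} and norm-conditioning to produce the last bracket, contraction of the products $\prod_s\Htt{s}{0}{B-1}$ for the bias, within-buffer structure plus cross-buffer independence for the variance) is the paper's route, but two of your intermediate claims are false as stated. First, the bias$\times$variance cross term does \emph{not} vanish. In $\Ex{\left(\Atto{\tau-1}{B}\right)^{\top}\Attdiff{\tau-1}{B}}$ the bias factor contains the full product $\Htt{\tau-r}{0}{B-1}$, whose factors $\Xtt{\tau-r}{-i}$ with $i<j$ occur \emph{later} in time than $\Nt{\tau-r}{-j}$ and hence depend on it, so the noise is multiplied by functions of itself; already for a single buffer with $B=2$ a scalar computation produces a nonzero contribution of order $\gamma^2\,\A\,\Ex{\eta^2}\,(A_0-\A)$. (Your independence statement is also reversed: $\Nt{s}{-j}$ is independent of $\Xtt{s}{-i}$ for $i\ge j$, not $i\le j$; and conditioning on the norm events degrades even the valid zero-mean identities.) The paper never needs the cross term to cancel: it uses the crude bound $\gram{A+B}\preceq 2\,\gram{A}+2\,\gram{B}$ (Lemma~\ref{lem:3}, Equation~\eqref{eq:pred_bias_var_decomp}), and your argument must do the same, at the harmless cost of a constant.

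Second, your accounting of the variance double sum is inverted, and the claimed bounds on its two blocks would fail. With $\gamma=\tfrac{c}{4RB}$ and $R\gtrsim\tr(G)\log T$, the diagonal ($\tau=\tau'$) block is $\tfrac{1}{(t-a)^2}\sum_{\tau}\tr\left(\Vt{\tau-1}G\right)\lesssim\tfrac{\gamma\tr(\Sigma)\tr(G)}{t-a}\lesssim\tfrac{\tr(\Sigma)}{B(t-a)\log T}$, which is smaller than the leading term by roughly $d\log T$; the leading $\tfrac{d\tr(\Sigma)}{B(t-a)}$ actually comes from the \emph{off-diagonal} pairs, which by buffer independence sum to $\Vt{\tau-1}\sum_{s\ge 1}\ch^{s}\approx\Vt{\tau-1}(I-\ch)^{-1}$ and are therefore $\Theta(1/(t-a))$, not $O(1/(t-a)^2)$ as you assert. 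Moreover, controlling this block at the stated sharpness requires the $G$-weighted resolvent trace bound $\tr\left(G(I-\ch)^{-1}\right)\le c\,d/(\gamma B)$ (Lemma~\ref{lem:predloss_iden_vari}, built on Lemma~\ref{lem:H_plus_HT_bound}) together with $\norm{\Vt{\tau-1}-\gamma\tr(\Sigma)I}$ control (Lemma~\ref{lem:Vt1_minus_I}); a naive bound $\norm{(I-\ch)^{-1}}\,\norm{\Vt{\tau-1}}\tr(G)$ gives $\tfrac{d\kappa(G)\tr(\Sigma)}{B(t-a)}$, i.e.\ an extra condition-number (mixing-time) factor that the theorem explicitly avoids. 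So while your decomposition and the ingredients you cite (Claim~\ref{claim:single_buffer_isometry}, buffer independence, geometric summation) are the right ones, the proof as proposed misses the key trace lemma and assigns the dominant contribution to the wrong block, so the stated intermediate estimates do not yield \eqref{eq:main_upper_bound_pred}.
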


The above theorem is proven only for the case $t=N$. The proof for general $t$ is almost the same. The proof follows by first considering $\Ex{\losspred(\hat{A}_{a,N};\A,\mu)\ind{0}{N-1}}$ ($\cd^{0,N-1}$ is defined in \ref{subsec:notations}) and using theorem~\ref{thm:predloss_vari} and theorem~\ref{thm:pred_bias} along with lemma~\ref{lem:coupling_AA^t} in the appendix sections~\ref{sec:pred_var}, \ref{sec:pred_bias} and \ref{sec:initial_coupling}. Then noting that if the norm of any of the covariates $X_t$ exceed $\sqrt{R}$ the algorithm returns the zero matrix we have that $\Ex{\losspred(\hat{A}_{a,N};\A,\mu)\indc{0}{N-1}}\leq c\norm{\A}\tr(G)\prbnd$.

\begin{remark}\rm
\noindent
\begin{enumerate}[label=(\arabic*)]
\item In theorem~\ref{thm:main_upper_bound_pred} the term $\frac{d^2\sigma_{\max}(\Sigma)}{B(t-a)}\frac{\sqrt{\kappa(G)}}{B}$ is strictly a lower order term compared to $\frac{d\tr(\Sigma)}{B(t-a)}$ when $\norm{\A}<c_0<1$. To see this note that $\sigma_{\max}(G)\leq \frac{\sigma_{\max}(\Sigma)}{1-\norm{\Aa}^2}$ and $\sigma_{\min}(G)\geq\sigma_{\min}(\Sigma)$. Hence $\kappa(G)\leq \frac{\kappa(\Sigma)}{1-\norm{\Aa}^2}=O(\tau_{\mathsf{mix}}\kappa(\Sigma))$. By the choice of $B$ in the section~\ref{sec:main_results} we see that $\frac{\sqrt{\kappa(G)}}{B}=o(1)$ and it \emph{does not depend on condition number of $\A$}.
\item If $a=\Omega\left(d\kappa(G)\left(\log T\right)^2\right)$ the $\beta_b$ is a lower order term. Further choosing $u$ and $\alpha$ as in section~\ref{sec:main_results} we see that the terms depending on $\norm{\A^u}$ and $\prbndsq$ are strictly lower order.
\item Thus for the choice of $a$ as in the previous remark such that $a<(1+c)t$ (for some $c>0$), we get minimax optimal rates: $\frac{d\tr(\Sigma)}{Bt}$ for $\losspred$ and up to log factors, $\sqrt{\frac{d\sigma_{\max}(\Sigma)}{T\sigma_{\min}(G)}}$ for $\lossop$
\end{enumerate}
\end{remark}

%%%%%%%%%%%%%%%%%%%%%%%%%
\subsection{Spectral Gap Condition}\label{subsec:spectral_gap}
In Assumption~\ref{as:norm_condition}, we could have used the more general spectral radius condition $\rho(\A) = \sup_i |\lambda_i(\A)| < 1$ rather than the one on the operator norm.  We have the Gelfand formula for spectral radius which shows that $\lim_{k\to \infty}\|\A^k\|^{1/k} = \rho(\A)$. Now, if $\A$ is such that $\rho(\A)<1$ but $\|\A\|>1$ (a case studied by \citep{simchowitz2018learning}), then we need to make $u$ as large as $C d\log T$ which would lead to a relatively large buffer size $B$ of $d\log T$. To see this, we verify the proof by \citep{petrovgelfand} (by replacing $A$ with $\frac{A}{\|A\|}$ and $\rho(A)$ with $\frac{\rho}{\|A\| }$ in the proof) to show that $\|\A^{k}\| \leq \left(2k\|\A\|\right)^d\rho^{k-d}  $ whenever $k \geq d$. Therefore, in the worst case, we can pick $u = O\bigr(\left(\log \left(T\lmax{G}\right) + d\log d\|A\| \right)/\log 1/\rho\bigr)$.

In the case of $\rho<1$ but $\norm{\A}>1$, $\kappa(G)$ can grow super linearly in $d$. For instance, consider $\A$ to be nilpotent of order $d$ (i.e. $\A^{d-1}\neq 0$ but $\A^d=0$). Here $\sigma_{\max}(G)$ can grow like $\norm{\A}^d$. So we need exponentially (in $d$) many samples for bias decay. However, in many cases of interest (ex: symmetric matrices, normal matrices etc) the spectral radius is the same as the operator norm.

%basic lemmas
\section{Basic Lemmas and Notations}
\label{subsec:basic_lemmas}

Since the covariates $\{X_\tau\}_{\tau \leq T}$ are correlated, we will introduce a coupled process such that we have independence across buffers and that Euclidean distance between the covariates of the original process and the coupled process can be controlled. 

\begin{remark}
Note that the coupled process is imaginary and we do not actually run the algorithm with the coupled process. We construct it to make the analysis simple by first analyzing the algorithm with the imaginary coupled process and then showing that the output of the actual algorithm cannot deviate too much when run with the actual data. 
\end{remark}
\begin{define}[Coupled process]
\label{def:1}
Given the covariates $\{X_\tau:\tau=0,1,.\cdots T\}$ and noise $\{\eta_\tau:\tau=0,1,\cdots,T\}$, we define $\{\tilde{X}_\tau:\tau=0,1,\cdots,T\}$ as follows:
\begin{enumerate}
\item For each buffer $t$ generate, independently of everything else, $\tilde{X}^t_0\distas{}\pi$, the stationary distribution of the $\var(\A,\mu)$ model.

\item Then, each buffer has the same recursion as eq \eqref{eq:var1}:
\begin{equation}
\label{eq:coupling}
\tilde{X}^t_{i+1}=\A \tilde{X}^t_i+\eta^t_i,\, i=0,1,\cdots S-1, 
\end{equation}
where the noise vectors as same as in the actual process $\{X_\tau\}$.
\end{enumerate}
\end{define}

\noindent With this definition, we have the following lemma: 
\begin{lemma}
\label{lem:1}
For any buffer $t$, $\norms{\Xt{t}{i}-\tilde{X}^t_i}\leq \norms{\A^i}\norms{\Xt{t}{0}-\tilde{X}^t_0},\, a.s.$. That is, 
\begin{equation}
\label{eq:contraction}
\norms{\Xt{t}{i}\Xt{t}{i}^T-\tilde{X}^t_i \Xtt{t}{i}^T}\leq 2\nx\norms{\Xt{t}{i}-\Xtt{t}{i}}\leq \nxx^2\norms{\A^i}. 
\end{equation}
Here $\|X\|$ denotes $\sup_{\tau \leq T} \|X_{\tau}\|$.
\end{lemma}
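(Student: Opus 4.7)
The plan is to exploit the key structural feature of Definition~\ref{def:1}: inside a single buffer, the original process $X^t_i$ and the coupled process $\tilde{X}^t_i$ are driven by \emph{exactly the same} noise sequence $\eta^t_i$. This reduces the evolution of their difference to a deterministic, noise-free linear recursion governed purely by $\A$, which is the geometric source of the contraction factor $\|\A^i\|$.

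Concretely, I would first write down the two recursions side by side: $X^t_{i+1} = \A X^t_i + \eta^t_i$ from Equation~\eqref{eq:var1} and $\tilde{X}^t_{i+1} = \A \tilde{X}^t_i + \eta^t_i$ from Equation~\eqref{eq:coupling}, and subtract them to obtain $X^t_{i+1} - \tilde{X}^t_{i+1} = \A(X^t_i - \tilde{X}^t_i)$ almost surely. A one-line induction on $i$ yields $X^t_i - \tilde{X}^t_i = \A^i (X^t_0 - \tilde{X}^t_0)$, and submultiplicativity of the operator norm delivers the first inequality $\|X^t_i - \tilde{X}^t_i\| \leq \|\A^i\| \|X^t_0 - \tilde{X}^t_0\|$.

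For the rank-one outer-product comparison, I would use the standard telescoping identity
\[
X^t_i (X^t_i)^\top - \tilde{X}^t_i (\tilde{X}^t_i)^\top = (X^t_i - \tilde{X}^t_i)(X^t_i)^\top + \tilde{X}^t_i (X^t_i - \tilde{X}^t_i)^\top,
\]
followed by the triangle inequality and the fact that for vectors $u,v$, $\|u v^\top\|_{\mathrm{op}} = \|u\|\|v\|$. This gives the general bound $(\|X^t_i\| + \|\tilde{X}^t_i\|)\|X^t_i - \tilde{X}^t_i\|$. Interpreting $\|X\|$ as an almost-sure upper bound on both sequences (which is the regime of the subsequent analysis, in particular under the norm-control event built into Algorithm~\ref{alg:1} via the threshold $R$ and using that $\tilde{X}^t_0 \sim \pi$ has the same marginal as $X^t_0$), both norms are dominated by $\|X\|$, yielding the factor $2\|X\|$. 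Combining with the first inequality and applying the same domination to $\|X^t_0 - \tilde{X}^t_0\| \leq \|X^t_0\| + \|\tilde{X}^t_0\| \leq 2\|X\|$ gives the final estimate $(2\|X\|)^2 \|\A^i\|$.

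I do not anticipate a real obstacle here: the coupling was designed precisely so that within a buffer the noise cancels on subtraction, turning a stochastic comparison into a purely deterministic linear iteration. The only subtlety worth being careful about is notational — making explicit that $\|X\|$ is being used as a simultaneous upper bound on the original and coupled trajectories (which is what the rest of the paper's conditioning arguments ensure), rather than strictly the supremum of the original trajectory alone.
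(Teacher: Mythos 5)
Your proposal is correct and is essentially the intended argument: the shared noise within a buffer cancels upon subtracting the two recursions, giving $X^t_i-\tilde X^t_i=\A^i(X^t_0-\tilde X^t_0)$, and the outer-product bound follows from the standard decomposition $uu^\top-vv^\top=(u-v)u^\top+v(u-v)^\top$ together with norm bounds on the trajectories; the paper treats this as immediate and gives no separate proof. Your remark about $\|X\|$ having to dominate the coupled trajectory as well is a fair reading of the paper's (slightly loose) notation, and it is exactly how the bound is used later, where the event $\cdh^{0,N-1}$ controls the norms of both $X_\tau$ and $\tilde X_\tau$.
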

%\begin{remark}
%\label{rem:1}
%We could have defined the coupling process with respect to a single buffer. That is, we restart the process only once from a buffer, and this gives significantly better coupling between the true process and the new process for subsequent buffers. But the analysis, although can be made, becomes too long and only improves the dependence of required $u$ on $T$ from $\log T$ to $\log\log T$. ({\bf < need to check again>})
%\end{remark}

%We now introduce the following two notations:
%\begin{\Ieee}{LLL}
%\Ppt{t}{i}=\Ptt{t}{i}\label{eq:proj_x_tilde}\Ieeen\\
%\Htt{t}{i}{j}=\begin{cases}\prod_{s=i}^{j}\Ppt{t}{-s} & i\leq j\\
%I & i>j 
%\end{cases}\label{eq:H_tilde}\Ieeen\\
%\end{\Ieee}
%

\begin{lemma}\label{lem:data_subgaussianity}
Suppose $\mu$ obeys Assumption~\ref{as:noise_concentration} and $\A$ obeys Assumption~\ref{as:norm_condition}. Suppose $X \sim \pi$, which is the stationary distribution of $\var(\A,\mu)$. 
$\langle X, x\rangle$ has mean $0$ and is sub-Gaussian with variance proxy $C_{\mu}x^{\top} Gx$
\end{lemma}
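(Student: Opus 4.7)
The plan is to represent the stationary random vector $X$ as an explicit infinite sum of independent noise increments, after which the sub-Gaussian claim follows from standard facts on sums of independent sub-Gaussians. Concretely, since $\norm{\A}<1$ by Assumption~\ref{as:norm_condition}, iterating the $\var$ recursion \eqref{eq:var1} backward shows that $X_{\tau}$ is distributed as $\sum_{s=0}^{\infty} \A^{s} \eta_{-s-1}$ where $(\eta_{-s-1})_{s\geq 0}$ is an i.i.d.\ sequence drawn from $\mu$. This series converges absolutely in $L^{2}$ because $\sum_{s \geq 0}\norm{\A}^{2s}\tr(\Sigma) < \infty$, and its law is indeed the stationary distribution $\pi$ (one can verify this is a fixed point of the recursion). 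Hence, under $X \sim \pi$, I may write
\[
\langle X, x\rangle \;=\; \sum_{s=0}^{\infty} \langle \eta_{-s-1}, \A^{s,\top} x\rangle,
\]
a sum of independent mean-zero random variables.

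Next, I would apply Assumption~\ref{as:noise_concentration} to each summand: for fixed $s$, the random variable $\langle \eta_{-s-1}, \A^{s,\top} x\rangle$ is centered and $C_{\mu}\, (\A^{s,\top} x)^{\top}\Sigma(\A^{s,\top} x) = C_{\mu}\, x^{\top} \A^{s}\Sigma \A^{s,\top} x$ sub-Gaussian. Summing variance proxies of independent sub-Gaussians (which I would justify via the standard MGF product identity, together with monotone/dominated convergence to pass to the infinite sum), the partial sums are sub-Gaussian with proxy $C_{\mu}\, x^{\top}\bigl(\sum_{s=0}^{n}\A^{s}\Sigma\A^{s,\top}\bigr)x$, and taking $n \to \infty$ yields the limiting MGF bound with proxy $C_{\mu}\, x^{\top} G x$, where $G = \sum_{s=0}^{\infty}\A^{s}\Sigma \A^{s,\top}$ is the stationary covariance identified in Section~\ref{sec:prob}. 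Mean zero follows immediately by linearity, since each $\eta_{-s-1}$ is centered.

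The only mildly delicate step is justifying the interchange of limit and expectation when passing from finite partial sums to the infinite series at the MGF level; this is where the argument could fail if one were careless. I would handle it by noting that the partial sums $S_n := \sum_{s=0}^{n}\langle \eta_{-s-1}, \A^{s,\top} x\rangle$ converge almost surely (and in $L^2$) to $\langle X, x\rangle$, and that $\mathbb{E}[e^{\lambda S_n}] = \prod_{s=0}^{n}\mathbb{E}[e^{\lambda \langle \eta_{-s-1}, \A^{s,\top}x\rangle}] \leq \exp\bigl(\tfrac{\lambda^{2}}{2} C_{\mu}\, x^{\top}(\sum_{s=0}^{n}\A^{s}\Sigma \A^{s,\top})x\bigr)$, so Fatou's lemma gives $\mathbb{E}[e^{\lambda \langle X,x\rangle}] \leq \exp\bigl(\tfrac{\lambda^{2}}{2}C_{\mu}\, x^{\top} G x\bigr)$ for all $\lambda \in \mathbb{R}$, which is exactly the desired sub-Gaussian bound.
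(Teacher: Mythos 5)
Your proposal is correct and takes essentially the same route as the paper: both bound the moment generating function of the partial sums $\sum_{s\leq n}\langle \eta_{-s-1},\A^{s,\top}x\rangle$ by $\exp\bigl(\tfrac{\lambda^{2}C_{\mu}}{2}\,x^{\top}G_{n}x\bigr)\leq \exp\bigl(\tfrac{\lambda^{2}C_{\mu}}{2}\,x^{\top}Gx\bigr)$ using independence and Assumption~\ref{as:noise_concentration}, and then pass to the limit with Fatou's lemma. The only cosmetic difference is that you realize $X\sim\pi$ directly as the almost surely convergent series $\sum_{s\geq 0}\A^{s}\eta_{-s-1}$, whereas the paper obtains almost sure convergence of the partial-sum iterates to $X$ via the Skorokhod representation theorem applied to the distributional limit $\pi_{n}\to\pi$; both devices play the same role.
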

\begin{proof}
Suppose $\eta_1,\dots,\eta_n,\dots$ is a sequence of i.i.d random vectors drawn from the noise distribution $\mu$. We consider the partial sums $\sum_{i=0}^{n}\A^{i}\eta_{i}$. Call the law of this to be $\pi_n$. Clearly $\pi_n$ converges in distribution to $\pi$ as $n\to \infty$ since $\pi_n$ is the law of the $n+1$-th iterate of $\var(\A,\mu)$ chain stated at $X_0 = 0$. By Skorokhod representation theorem, we can define the infinite sequence $X^{(1)},\dots, X^{(n)},\dots,$ and another random variable $X$ such that $X^{(i)} \sim \pi_i$, $X \sim \pi$ and $\lim_{n\to \infty } X^{(n)} = X $ a.s. Define $G_n = \sum_{i=0}^{n} \A^i \Sigma (\A^i)^{T}$. Clearly, $G_n \preceq G = \sum_{i=0}^{\infty}\A^i \Sigma (\A^i)^{T}$. A simple evaluation of Chernoff bound for $\langle X^{(n)},x \rangle$ by decomposing it into the partial sum of noises shows that:
$$\mathbb{E} \exp(\lambda \langle X^{(n)},x \rangle) \leq \exp\left(\frac{\lambda^2 C_{\mu}}{2} \langle x,G_nx\rangle\right) \leq \exp\left(\frac{\lambda^2 C_{\mu}}{2} \langle x,Gx\rangle\right)$$

We now apply Fatou's lemma, since $X^{(n)} \to X$ almost surely, to the inequality above to conclude that:
$$\mathbb{E} \exp(\lambda \langle X,x \rangle) \leq \exp\left(\frac{\lambda^2 C_{\mu}}{2} \langle x,Gx\rangle\right).	$$
\end{proof}

Hence $\langle x,X_t\rangle$ is subgaussian with mean $0$ and variance proxy $C_\mu \sigma_{\max}(G)\norm{x}^2$. This will provide uniform variance for all $x$ such that $\norm{x}^2=1$.

From subgaussianity and standard $\epsilon$-net argument we have the following lemma.
%\begin{lemma}
%\label{lem:2}
%There are constants $c_1,c_2,c_3>0$ such that for all $\beta>0$ we have
%\begin{\Ieee}{LLL}
%\label{eq:conc_ineq_norm_X}
%\Pb{\exists \tau\leq T\,:\,\norm{X_\tau}^2>(1+\beta)d\sigma_{\max}(G)}\leq c_1e^{-c_2(1+\beta)d+c_3d}\Ieeen
%\end{\Ieee}
%\sk{should the RHS above be $c_1 T e^{-c_2(1+\beta)+c_3 d}$? Check}
%Hence for any $\alpha>0$ we can find $c>0$ (that scales like $1+\alpha/d$) such that
%\begin{\Ieee}{LLL}
%\label{eq:conc_ineq_norm_X_1}
%%\Pb{\exists \tau \leq T \,:\,\norm{X_\tau}^2>c\tr{G}\log T }\leq \prbnd\Ieeen
%\Pb{\exists \tau \leq T \,:\,\norm{X_\tau}^2>cd\sigma_{\max}\log T }\leq \prbnd\Ieeen
%\end{\Ieee}
%\end{lemma}

%\begin{remark}
%Once can also get the deviation from $\tr(G)$ instead of $d\sigma_{\max}(d)$ by using spectral decomposition of $G$ instead of $\epsilon$-net. In that case we get
%$$\Pb{\exists \tau \leq T \,:\,\norm{X_\tau}^2>c\tr{G}\log T }\leq \frac{d}{\prbndinv}\Ieeen$$
%\end{remark}
\begin{lemma}
\label{lem:2}
For any $\beta>0$ there is a constant $c>0$ such that 
\begin{\Ieee}{LLL}
\label{eq:conc_ineq_norm_X}
\Pb{\exists \tau \leq T \,:\,\norm{X_\tau}^2>c\tr{G}\log T }\leq \frac{d}{T^{\beta}}\Ieeen
\end{\Ieee}
Thus as long as $d<\poly(T)$, for every $\alpha>0$ there is a $c>0$ such that
\begin{\Ieee}{LLL}
\label{eq:conc_ineq_norm_X_1}
%\Pb{\exists \tau \leq T \,:\,\norm{X_\tau}^2>c\tr{G}\log T }\leq \prbnd\Ieeen
\Pb{\exists \tau \leq T \,:\,\norm{X_\tau}^2>c\tr{G}\log T }\leq\prbnd\Ieeen
\end{\Ieee}
\end{lemma}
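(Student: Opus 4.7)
The plan is to combine a directional sub-Gaussian tail (Lemma~\ref{lem:data_subgaussianity}) with a net argument on the unit sphere of $\mathbb{R}^d$ to bound $\|X_\tau\|^2$ for a single $\tau$, and then union bound over $\tau\in\{0,\ldots,T\}$. Since Assumption~\ref{as:stationarity} forces $X_\tau\sim\pi$ for every $\tau$, the per-$\tau$ tail is uniform in $\tau$, so only one single-time bound is needed.

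For a fixed $\tau$, I would take a $1/2$-net $N$ of the unit sphere with $|N|\leq 5^d$. Lemma~\ref{lem:data_subgaussianity} gives, for each $x\in N$, $\Pb{|\langle X_\tau,x\rangle|\geq s}\leq 2\exp(-s^2/(2C_\mu\sigma_{\max}(G)))$. Combining this with the standard fact $\|X_\tau\|\leq 2\sup_{x\in N}|\langle X_\tau,x\rangle|$ and a union bound over $N$ yields $\Pb{\|X_\tau\|^2>C\sigma_{\max}(G)(d+\log T)}\leq T^{-\beta-1}$ for appropriate constants. This is slightly weaker than the advertised scale $\tr(G)\log T$, since $\sigma_{\max}(G)\cdot d$ can exceed $\tr(G)\log T$ when $G$ is moderately anisotropic.

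To attain the advertised $\tr(G)\log T$ scale, I would instead invoke Hanson-Wright on the representation $X_\tau=\sum_{i\geq 0}\A^i\eta'_i$ arising in the proof of Lemma~\ref{lem:data_subgaussianity}. This writes $\|X_\tau\|^2$ as a quadratic form in an iid sub-Gaussian sequence whose associated matrix has trace $\tr(G)$, Frobenius norm $\|G\|_F\leq \tr(G)$, and operator norm $\sigma_{\max}(G)\leq\tr(G)$. Hanson-Wright then gives $\Pb{|\,\|X_\tau\|^2-\tr(G)\,|>t}\leq 2\exp(-c\min(t^2/\|G\|_F^2,\,t/\sigma_{\max}(G)))$, and choosing $t=C\tr(G)\log T$ forces both ratios to be at least $c'\log T$. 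A final union bound over $\tau\leq T$ delivers the first displayed inequality. The ``thus'' assertion follows by picking $\beta$ larger than $\alpha$ by the polynomial-in-$T$ exponent of $d$, so the factor $d$ is absorbed into $T^{-\alpha}$.

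The principal obstacle is calibrating the deviation scale to $\tr(G)$ rather than the looser $\sigma_{\max}(G)\cdot d$: a plain $\epsilon$-net inflates the scale by an ambient-dimension factor, so upgrading to the covariance-aware $\tr(G)$ requires the Hanson-Wright bound (or an equivalent Bernstein-type tail on $\|X\|^2-\tr(G)$ via the moment generating function of the noise), which must be justified for the sub-Gaussian noise vector $\eta_\tau$ even when its coordinates need not be independent.
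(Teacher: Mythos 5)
Your plan ultimately rests on Hanson--Wright applied to the stacked noise sequence, and that is precisely the step that does not go through under the paper's assumptions. Assumption~\ref{as:noise_concentration} only guarantees that every one-dimensional marginal $\langle x,\eta_\tau\rangle$ is sub-Gaussian with variance proxy $C_\mu\langle x,\Sigma x\rangle$; it does not make the coordinates of $\eta_\tau$ (or of its whitened version $\Sigma^{-1/2}\eta_\tau$) independent, and the standard Hanson--Wright inequality requires independence of the coordinates of the vector entering the quadratic form. For sub-Gaussian vectors with dependent coordinates the inequality can fail, so the bound you invoke is exactly the point you flag as ``must be justified,'' and your proposal leaves it open. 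Your first route (a $1/2$-net of the sphere) you correctly self-diagnose: it yields the scale $\sigma_{\max}(G)(d+\log T)$ with a $5^d$-size union bound, which is not the claimed $c\,\tr(G)\log T$ with failure probability $d/T^{\beta}$. So as written, neither route establishes the lemma.

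The intended argument is far more elementary and avoids the issue entirely: apply Lemma~\ref{lem:data_subgaussianity} with $x=e_i$ to get that each coordinate $\langle e_i,X_\tau\rangle$ is mean-zero sub-Gaussian with variance proxy $C_\mu G_{ii}$, so for a suitable $c=c(\beta,C_\mu)$ one has $\Pb{\langle e_i,X_\tau\rangle^2 > c\, G_{ii}\log T}\leq 2T^{-\beta-1}$; on the complement of the union of these events, $\norm{X_\tau}^2=\sum_{i=1}^d\langle e_i,X_\tau\rangle^2\leq c\log T\sum_i G_{ii}=c\,\tr(G)\log T$. A union bound over the $d$ coordinates and the $T+1$ time steps (each $X_\tau\sim\pi$ by stationarity, as you note) gives failure probability $O(d/T^{\beta})$ --- the factor $d$ in the lemma's probability bound is exactly the footprint of this coordinate-wise union bound, not of an exponential-size net. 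The second display then follows, as you say, by taking $\beta$ large enough to absorb $d\leq\poly(T)$. If you insist on a Hanson--Wright-type statement you would need an additional assumption (e.g.\ Gaussian noise, or independent coordinates after whitening); but for the one-sided tail at scale $\tr(G)\log T$ the diagonal decomposition already suffices and matches the lemma exactly.
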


%\begin{lemma}
%\label{lem:2}
%Using Hanson-Wright inequality, there is a constant $c>0$ such that for all $\beta>0$ with probability $1-2e^{-\beta c}$ we have $\norm{X}_t^2\leq (1+\beta)\sigma^2\tr \Gamma_t$. Hence with high probability $\nx\leq C\sigma^2 \tr\Gamma_T \log T \leq C\sigma^2 \tr\Gamma \log T$
%\end{lemma}
%
%Because of lemma \ref{lem:2} we will assume that $\nx$ is deterministic from now on. 
\subsection{Notations}
\label{subsec:notations}

Before we analyze this algorithm, we define some notations. We work in a probability space $(\Omega,\mathcal{F},\mathbb{P})$ and all the random elements are defined on this space. We define the following notations:
\begin{align*}
&X^t_{-i}=X^t_{(S-1)-i},\, 0\leq i\leq S-1, \quad
%\nx=\max_{t\leq T}\norms{X_t}\label{eq:normx}\Ieeen \\
G=\sum_{s=0}^{\infty}\A^s \Sigma(\A^{\top})^s, \quad 
G_t=\sum_{s=0}^{t-1}\A^s \Sigma(\A^{\top})^s,\\
&\Ppt{t}{i}=\Ptt{t}{i},\quad 
\Htt{t}{i}{j}=\begin{cases}\prod_{s=i}^{j}\Ppt{t}{-s} & i\leq j\\
	I & i>j 
\end{cases},\\
&\hat{\gamma}=4\gamma(1-\gamma R),\quad 
%\norms{A}\equiv \norms{A}_{\mathrm{op}}\, \mathrm{the\, operator \, norm \, of \, matrix\,} A\Ieeen\\
\cc^t_{-j}=\left\{\norms{X^{t}_{-j}}^2\leq R\right\},\quad
\cct^{t}_{-j}=\left\{\norms{\tilde X^{t}_{-j}}^2\leq R\right\},\\
&\cd^t_{-j}=\left\{\norms{X^{t}_{-i}}^2\leq R:\,j\leq i\leq B-1\right\}=\bigcap_{i=j}^{B-1}\cc^t_{-i},\\
&\cd^{s,t}=\begin{cases} \bigcap_{r=s}^t \cd^r_{-0} & s\leq t\\
\Omega & s>t 
\end{cases},\quad
\tilde\cd^t_{-j}=\left\{\norms{\tilde X^{t}_{-i}}^2\leq R:\,j\leq i\leq B-1\right\}=\bigcap_{i=j}^{B-1}\cct^t_{-i},\\
&\tilde \cd^{s,t}=\begin{cases} \bigcap_{r=s}^t \tilde \cd^r_{-0} & s\leq t\\
\Omega & s>t 
\end{cases},\quad
\cdh^t_{-j}=\cd^{t}_{-j}\cap \cdt^t_{-j},\quad
\cdh^{s,t}=\cd^{s,t}\cap \cdt^{s,t}.
\end{align*}

Lastly $c$ and $c_i$ for $i=0,1,\cdots$ denote absolute constants that can change from line to line in the proofs. 

\section{Initial Coupling}
\label{sec:initial_coupling}
We consider the coupled process introduced in Definition~\ref{def:1} and run $\sgdber$ with the fictitious coupled process $\tilde{X}_{\tau}$ instead of $X_{\tau}$ in order to obtain the iterates $\tilde{A}^{t}_i$ instead of $\At{t-1}{i}$. Using Lemma~\ref{lem:1}, we can show that $\tilde{A}^{t-1}_i \approx \At{t-1}{i}$. It is easier to analyze the iterates $\tilde{A}^{t}_i$ due to buffer independence.  

\begin{lemma}
\label{lem:bounded_iterates}

 Let $\gamma \leq \frac{1}{2R}$. Under the event $\cd^{0,N-1}$, for every $t \in [N]$ and $0\leq i\leq B-1$ we have:
$$\|A^{t-1}_i\| \leq 2\gamma R T  \,.$$
\end{lemma}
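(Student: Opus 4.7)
\textbf{Proof plan for Lemma~\ref{lem:bounded_iterates}.} The plan is a straightforward induction on the total number of SGD updates performed since initialization, exploiting the fact that under $\cd^{0,N-1}$ every covariate that appears in an update has squared norm bounded by $R$.

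First, I would rewrite the update in Line~1 of Algorithm~\ref{alg:1} in the multiplicative-plus-additive form
\begin{equation*}
A^{t-1}_{i+1} \;=\; A^{t-1}_i\,\bigl(I - 2\gamma\, X^{t-1}_{S-1-i}(X^{t-1}_{S-1-i})^{\top}\bigr) \;+\; 2\gamma\, X^{t-1}_{S-i}(X^{t-1}_{S-1-i})^{\top},
\end{equation*}
and observe that on the event $\cd^{0,N-1}$ the squared norms of all the covariates $X^{t-1}_{S-1-i}$ and $X^{t-1}_{S-i}$ occurring in these updates are at most $R$.

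Next I would bound the two pieces separately. The matrix $I - 2\gamma xx^{\top}$ is symmetric with spectrum $\{1,\, 1-2\gamma\|x\|^2\}$, so whenever $\|x\|^2 \leq R$ and $\gamma \leq \tfrac{1}{2R}$ we have $1-2\gamma\|x\|^2 \in [0,1]$, hence $\|I - 2\gamma\, X^{t-1}_{S-1-i}(X^{t-1}_{S-1-i})^{\top}\| \leq 1$. For the additive rank-one term, Cauchy--Schwarz gives $\|2\gamma\, X^{t-1}_{S-i}(X^{t-1}_{S-1-i})^{\top}\| \leq 2\gamma \|X^{t-1}_{S-i}\|\,\|X^{t-1}_{S-1-i}\| \leq 2\gamma R$. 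Combining via sub-multiplicativity and the triangle inequality yields the one-step bound
\begin{equation*}
\|A^{t-1}_{i+1}\| \;\leq\; \|A^{t-1}_i\| + 2\gamma R.
\end{equation*}

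Finally, I would chain this across buffers using the reset rule $A^{t}_0 = A^{t-1}_B$ and the initialization $A^0_0 = 0$ of Algorithm~\ref{alg:1}. The iterate $A^{t-1}_i$ is reached after exactly $(t-1)B + i$ such one-step updates, so by induction $\|A^{t-1}_i\| \leq 2\gamma R\bigl((t-1)B + i\bigr)$. The conclusion then follows from the crude bound $(t-1)B + i \leq NB \leq N(B+u) = T$.

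There is no real obstacle: the one non-trivial step is the contraction bound $\|I - 2\gamma xx^{\top}\| \leq 1$, which is precisely what the step-size condition $\gamma \leq \tfrac{1}{2R}$ is designed to guarantee. The conditioning on $\cd^{0,N-1}$ is used in a purely deterministic way and no probabilistic argument is needed here.
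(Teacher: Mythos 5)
Your proof is correct and follows essentially the same route as the paper's: rewrite the update as $A^{t-1}_{i+1}=A^{t-1}_i(I-2\gamma X X^{\top})+2\gamma X' X^{\top}$, use $\gamma\leq\tfrac{1}{2R}$ and the norm bound $R$ under $\cd^{0,N-1}$ to get $\|I-2\gamma XX^{\top}\|\leq 1$ and a $2\gamma R$ bound on the additive term, and chain the resulting one-step inequality $\|A^{t-1}_{i+1}\|\leq\|A^{t-1}_i\|+2\gamma R$ over at most $T$ updates. The only difference is that you spell out the final counting step, which the paper leaves implicit.
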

%\begin{proof}
%Consider the $\sgdber$ iteration: 
%\begin{align}
%\At{t-1}{i+1} &=  \At{t-1}{i} - 2 \gamma (\At{t-1}{i}\Xt{t-1}{-i}-\Xt{t-1}{-(i+1)})\Xt{t-1,\top}{-i} \nonumber \\
%&= \At{t-1}{i} (I-2\gamma\Xt{t-1}{-i}\Xt{t-1,\top}{-i} ) + 2\gamma \Xt{t-1}{-(i-1))}\Xt{t-1,\top}{-(i+1)}  
%\end{align}
%Observe that for our choice of $\gamma$ and under the event $\cd^{0,N-1}$, we have $\|(I-2\gamma\Xt{t-1}{-i}\Xt{t-1,\top}{-i} ) \| \leq 1$ and $\|\Xt{t-1}{-(i+1)}\Xt{t-1,\top}{-i} \| \leq R$. Therefore, triangle inequality implies:
%$$\|\At{t-1}{i+1}\| \leq \|\At{t-1}{i} \|+ 2\gamma R$$
%We conclude the bound in the Lemma.
%
%\end{proof}
\begin{lemma}
\label{lem:coupled_iterate_replacement}
 Suppose $\gamma < \frac{1}{2R}$. Under the event $\cdh^{0,N-1}$ we have for every $t \in [N]$ and $0\leq i \leq B-1$.
$\|\At{t-1}{i} - \Att{t-1}{i}\| \leq (16\gamma^2R^2T^2 + 8\gamma RT)\norm{\A^u}$ 
\end{lemma}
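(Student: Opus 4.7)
The plan is to define the discrepancy matrix $\Delta^{t-1}_i := \At{t-1}{i} - \Att{t-1}{i}$ with $\Delta^{0}_{0} = 0$, derive a per-step recursive inequality, and iterate it over all $tB \le T$ $\sgdber$ updates performed so far. Writing the reverse-order update of Algorithm~\ref{alg:1} compactly as
\begin{equation*}
\At{t-1}{i+1} = \At{t-1}{i}\,\Pp{t-1}{-i} + 2\gamma\, \Xt{t-1}{S-i}\,\Xt{t-1}{S-1-i}^{\top},\quad \Pp{t-1}{-i} := I - 2\gamma \Xt{t-1}{S-1-i}\Xt{t-1}{S-1-i}^{\top},
\end{equation*}
together with the analogous identity for the coupled iterates using $\Ppt{t-1}{-i}$, subtraction yields
\begin{equation*}
\Delta^{t-1}_{i+1} = \Delta^{t-1}_i \Pp{t-1}{-i} + \Att{t-1}{i}\bigl(\Pp{t-1}{-i} - \Ppt{t-1}{-i}\bigr) + 2\gamma\bigl[\Xt{t-1}{S-i}\Xt{t-1}{S-1-i}^{\top} - \Xtt{t-1}{S-i}\Xtt{t-1}{S-1-i}^{\top}\bigr].
\end{equation*}

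The key observation is that the reverse-order pass only accesses indices in $\{u, u+1,\dots, S\}$ within buffer $t-1$, all of which are at least $u$. Hence, on the event $\cdh^{0,N-1}$, Lemma~\ref{lem:1} combined with $\norm{\Xt{t-1}{0} - \Xtt{t-1}{0}}\le 2\sqrt{R}$ gives $\norm{\Xt{t-1}{j} - \Xtt{t-1}{j}} \le 2\sqrt{R}\,\norm{\A^u}$ for every index $j$ touched by the update. Applying the elementary identity $\norm{xy^{\top} - \tilde x \tilde y^{\top}} \le \norm{x}\norm{y-\tilde y} + \norm{x-\tilde x}\norm{\tilde y}$ together with the covariate bound $\norm{\cdot}\le \sqrt{R}$ then yields $\norm{\Pp{t-1}{-i} - \Ppt{t-1}{-i}} \le 8\gamma R\,\norm{\A^u}$ and a $4R\,\norm{\A^u}$ bound on the last bracket. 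Since $\gamma \le 1/(2R)$ implies $\norm{\Pp{t-1}{-i}} \le 1$, and Lemma~\ref{lem:bounded_iterates} (which applies verbatim to the coupled iterates on $\tilde\cd^{0,N-1} \supseteq \cdh^{0,N-1}$, since its proof only uses the covariate bound $R$) gives $\norm{\Att{t-1}{i}} \le 2\gamma RT$, the recursion collapses to the scalar inequality
\begin{equation*}
\norm{\Delta^{t-1}_{i+1}} \;\le\; \norm{\Delta^{t-1}_i} \;+\; \bigl(16\gamma^{2}R^{2}T + 8\gamma R\bigr)\,\norm{\A^{u}}.
\end{equation*}
Iterating from $\Delta^{0}_{0} = 0$ over the at most $tB \le T$ steps performed up to iterate $\At{t-1}{i}$ yields the claimed bound $\norm{\At{t-1}{i} - \Att{t-1}{i}} \le (16\gamma^{2}R^{2}T^{2} + 8\gamma R T)\norm{\A^{u}}$.

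The only genuinely subtle point, and the place where the argument really uses the gap $u$, is the edge case $i = 0$: the index $S - i = S$ corresponds to $X_{tS}$, which in the true stream is the first sample of the \emph{next} real buffer rather than any sample of buffer $t-1$. I would handle this by extending buffer $t-1$'s coupled sequence by one extra step, $\Xtt{t-1}{S} := \A\,\Xtt{t-1}{S-1} + \Nt{t-1}{S-1}$, which by construction remains independent of the buffer $t$ coupled trajectory, and to which Lemma~\ref{lem:1} still applies at index $S \ge u$ to yield $\norm{\Xt{t-1}{S} - \Xtt{t-1}{S}} \le 2\sqrt{R}\norm{\A^u}$. Once this indexing is set up, no spectral or probabilistic machinery beyond Lemmas~\ref{lem:1} and~\ref{lem:bounded_iterates} is needed; the rest is routine triangle-inequality telescoping.
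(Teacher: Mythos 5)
Your proof is correct and follows essentially the same route as the paper's: subtract the two update recursions, bound the per-step perturbation by $(16\gamma^2R^2T+8\gamma R)\norm{\A^u}$ via Lemma~\ref{lem:1} and Lemma~\ref{lem:bounded_iterates}, use that the contraction factor has operator norm at most $1$ on $\cdh^{0,N-1}$, and telescope over at most $T$ steps. The only differences are cosmetic (you contract the discrepancy by the real matrix $\Pp{t-1}{-i}$ and charge $\Att{t-1}{i}(\Pp{t-1}{-i}-\Ppt{t-1}{-i})$, whereas the paper contracts by $\Ppt{t}{i}$ and charges the analogous term with $\At{t-1}{i}$), and your $i=0$ edge case is already handled by Definition~\ref{def:1}, which defines $\Xtt{t-1}{S}$ within each coupled buffer.
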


%\begin{proof}
%We again consider the evolution equation: $\Xtt{t-1}{-i}$
%\begin{align}
%\At{t-1}{i+1} &= \At{t-1}{i} - 2 \gamma (\At{t-1}{i}\Xt{t-1}{-i}-\Xt{t-1}{-(i+1)})\Xttr{t-1}{-i} \nonumber \\
%&= \At{t-1}{i} - 2\gamma(\At{t-1}{i}\Xtt{t-1}{-i}-\Xtt{t-1}{-(i+1)})\Xtttr{t-1}{-i} + \Delta_{t,i}
%\end{align}
%Where  
%$$\Delta_{t,i} = 2\gamma\At{t-1}{i}\left(\Xtt{t-1}{-i} \Xtttr{t-1}{-i}  - \Xt{t-1}{-i} \Xttr{t-1}{-i} \right) + 2\gamma\left(\Xt{t-1}{-(i+1)}\Xttr{t-1}{-i} - \Xtt{t-1}{-(i+1)}\Xtttr{t-1}{-i}\right) $$
% Using Lemmas~\ref{lem:bounded_iterates} and~\ref{lem:1}, we conclude that:
%$$\|\Delta_{t,i}\| \leq (16\gamma^2R^2T + 8\gamma R)\norm{\A^u} $$
%Using the recursion for $\tilde{A}_{i}^{t}$, we conclude:
%\begin{align}
%\At{t-1}{i+1} -\Att{t-1}{i+1}  &= (\At{t-1}{i}  - \Att{t-1}{i})\Ppt{t}{i} + \Delta_{t,i} \nonumber \\
%\implies \norm{\At{t-1}{i+1} -\Att{t-1}{i+1}} &\leq \norm{\At{t-1}{i}  - \Att{t-1}{i}}\norm{\Ppt{t}{i}} + (16\gamma^2R^2T + 8\gamma R)\norm{\A^u}  \nonumber \\
%\implies \norm{\At{t-1}{i+1} -\Att{t-1}{i+1}} &\leq \norm{\At{t-1}{i}  - \Att{t-1}{i}} +  (16\gamma^2R^2T + 8\gamma R)\norm{\A^u} \label{eq:coupling_distance_recursion}
%\end{align}
%In the last step we have used the fact that under the event $\cdh^{0,N-1}$, we must have $\norm{\Ppt{t}{i}}\leq 1$. We conclude the statement of the lemma from Equation~\eqref{eq:coupling_distance_recursion}. 
%\end{proof}
 We can now just analyze the iterates $\Att{t-1}{i}$ and then use Lemma~\ref{lem:coupled_iterate_replacement} to infer error bounds for $\At{t-1}{i}$. Henceforth, we will only consider $\Att{t-1}{i}$. 

\begin{lemma}
\label{lem:coupling_AA^t}
Consider the algorithmic iterates obtained from the actual process and coupled process $(\At{t}{j})$ and $(\Att{t}{j})$. Then
\begin{\Ieee}{LLL}
\label{eq:coupling_AA^t}
\Ex{\gram{\left(\At{t-1}{j}-\A\right)}\ind{0}{t-1}}\preceq \Ex{\gram{\left(\Att{t-1}{j}-\A\right)}\indt{0}{t-1}}\\
+c\left(\gamma^3 R^3 T^3\norm{\A^u}+\gamma^2 d\sigma_{\max}(\Sigma) R T^2\prbndsq\right) I\Ieeen
\end{\Ieee}
for some constant $c$. 
Furthermore, the same conclusion holds for the average iterates. That is let 
\begin{\Ieee}{LLL}
\label{eq:coupling_AA^t_a}
\Ana=\frac{1}{N-a}\sum_{t=a+1}^N\At{t-1}{B}\\
\Anat=\frac{1}{N-a}\sum_{t=a+1}^N\Att{t-1}{B}
\end{\Ieee}
Then
\begin{\Ieee}{LLL}
\label{eq:coupling_AA^t_b}
&&\Ex{\gram{\left(\Ana-\A\right)}\ind{0}{N-1}}\\
&\preceq & \Ex{\gram{\left(\Anat-\A\right)}\indt{0}{N-1}}\\
&&+c\left(\gamma^3 R^3 T^3\norm{\A^u}+\gamma^2 d\sigma_{\max}(\Sigma) R T^2\prbndsq\right) I\Ieeen
\end{\Ieee}

\end{lemma}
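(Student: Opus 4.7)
The plan is to decompose the iterate algebraically and split the indicator into a ``good'' intersection on which Lemma~\ref{lem:coupled_iterate_replacement} applies, plus a residual of small probability that is dominated by the deterministic bound from Lemma~\ref{lem:bounded_iterates} and the concentration from Lemma~\ref{lem:2}. Write $\At{t-1}{j}-\A = (\Att{t-1}{j}-\A) + D^{t-1}_j$ with $D^{t-1}_j := \At{t-1}{j}-\Att{t-1}{j}$, and apply the PSD inequality $(X+Y)^\top(X+Y)\preceq (1+\eta)X^\top X + (1+\eta^{-1})Y^\top Y$ for a parameter $\eta>0$ to obtain
\begin{align*}
\gram{(\At{t-1}{j}-\A)}\preceq (1+\eta)\gram{(\Att{t-1}{j}-\A)} + (1+\eta^{-1})\gram{D^{t-1}_j}.
\end{align*}
The parameter $\eta$ will be taken of order $\gamma RT\norm{\A^u}$ so that the $(1+\eta)$ multiplier contributes only a lower-order correction to the leading term.

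Next I split $\ind{0}{t-1} = \indh{0}{t-1} + \ind{0}{t-1}\bigl(1-\indt{0}{t-1}\bigr)$, using $\cdh^{0,t-1}=\cd^{0,t-1}\cap\cdt^{0,t-1}$. On $\indh{0}{t-1}$, Lemma~\ref{lem:coupled_iterate_replacement} gives $\norm{D^{t-1}_j}\leq C(\gamma^2 R^2T^2+\gamma RT)\norm{\A^u}$, so $(1+\eta^{-1})\gram{D^{t-1}_j}\preceq C\eta^{-1}(\gamma RT)^{4}\norm{\A^u}^{2} I$; with $\eta \asymp \gamma RT\norm{\A^u}$ this matches the advertised $\gamma^3 R^3 T^3\norm{\A^u}$ scale, and the companion correction $\eta\cdot\gram{(\Att{t-1}{j}-\A)}\preceq C\eta\gamma^2 R^2 T^2 I$ lands at the same order and is absorbed. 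Since $\indh{0}{t-1}\leq\indt{0}{t-1}$, the $(1+\eta)$-weighted main piece is upper bounded by $(1+\eta)\Ex{\gram{(\Att{t-1}{j}-\A)}\indt{0}{t-1}}$, from which one can discard the $(1+\eta)$ factor at the cost of another lower-order term of the same type.

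For the residual indicator $\ind{0}{t-1}\bigl(1-\indt{0}{t-1}\bigr)$, I use that on $\cd^{0,t-1}$ Lemma~\ref{lem:bounded_iterates} yields $\norm{\At{t-1}{j}}\leq 2\gamma RT$ deterministically, and $\Pb{(\cdt^{0,t-1})^{c}}\leq \prbnd$ by applying Lemma~\ref{lem:2} to each coupled buffer (each $\tilde X^{t}$ is by construction a stationary $\var(\A,\mu)$ trajectory and so the same tail bound applies). A direct bound yields $C\gamma^2 R^2 T^2 \prbnd\, I$; to reach the sharper $\gamma^2 d\sigma_{\max}(\Sigma) R T^2 \prbndsq\, I$ stated in the lemma, one keeps one factor $\norm{\At{t-1}{j}}$ in an $L^2$ form, using the $\var$ recursion to bound $\Ex{\norm{X_\tau}^2}\lesssim d\sigma_{\max}(\Sigma)/(1-\norm{\A}^{2})$, and applies Cauchy--Schwarz against $\sqrt{\Pb{(\cdt^{0,t-1})^{c}}}$.

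For the averaged version, write $\Ana-\A = (\Anat-\A) + \bar D$ with $\bar D = \tfrac{1}{N-a}\sum_{t=a+1}^N D^{t-1}_B$, and note $\norm{\bar D}\leq \max_{t}\norm{D^{t-1}_B}$, so Lemma~\ref{lem:coupled_iterate_replacement} supplies the same uniform bound and the PSD expansion and indicator split go through verbatim. The main technical point is the calibration of the split parameter $\eta$: making it too small inflates the $\gram{D^{t-1}_j}$ contribution, while making it too large destroys the rate of the main $\gram{(\Att{t-1}{j}-\A)}$ term, so the choice $\eta\asymp \gamma RT\norm{\A^u}$ is exactly what lands both pieces at the common scale $\gamma^3 R^3 T^3 \norm{\A^u}$. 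A secondary subtlety is recovering the $d\sigma_{\max}(\Sigma)$ factor in the probability term rather than a second copy of $R$, which requires using the stationary second-moment bound in place of the pointwise bound on $\norm{X_\tau}^{2}$ before invoking Cauchy--Schwarz.
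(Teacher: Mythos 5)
Your proposal is correct and takes essentially the same route as the paper: split the indicator into $\cdh^{0,t-1}$ plus the residual event where $\cdt^{0,t-1}$ fails (controlled via Lemma~\ref{lem:bounded_iterates} and the tail bound of Lemma~\ref{lem:2}), and on the good event compare the actual and coupled Gram matrices using Lemmas~\ref{lem:coupled_iterate_replacement} and~\ref{lem:bounded_iterates}. The only cosmetic difference is that the paper bounds the Gram difference directly, $\norm{\gram{\left(\At{t-1}{j}-\A\right)}-\gram{\left(\Att{t-1}{j}-\A\right)}}\leq\norm{\At{t-1}{j}-\Att{t-1}{j}}\left(\norm{\At{t-1}{j}-\A}+\norm{\Att{t-1}{j}-\A}\right)\lesssim\gamma^3R^3T^3\norm{\A^u}$, which yields the additive correction at the same order without your $(1+\eta)$ Young-type split and the attendant calibration of $\eta$.
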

\begin{remark}
\label{rem:coupling_AA^t}
The above lemma holds as is when $\At{t-1}{j},\Att{t-1}{j}$ is replaced by $\Av{t-1}{j},\Avt{t-1}{j}$ respectively. 
\end{remark}

We refer to Section~\ref{sec:technical_proofs} for the proofs of the three lemmas.
%\begin{proof}
%First we have
%\begin{\Ieee}{LLL}
%\label{eq:coupling_AA^t_1}
%\Ex{\gram{\left(\At{t-1}{j}-\A\right)}\ind{0}{t-1}}&\preceq & \Ex{\gram{\left(\At{t-1}{j}-\A\right)}\indh{0}{t-1}}\\
%&&+4\gamma^2 (Bt)^2 R\sqrt{\mu_4}\prbndsq I\\
%&\preceq & \Ex{\gram{\left(\At{t-1}{j}-\A\right)}\indh{0}{t-1}}\\
%&&+ c\gamma^2 d\sigma_{\max}(\Sigma) R T^2\prbndsq I\Ieeen
%\end{\Ieee}
%
%Next, we have 
%\begin{\Ieee}{LLL}
%\label{eq:coupling_AA^t_2}
%\norm{\gram{\left(\At{t-1}{j}-\A\right)}-\gram{\left(\Att{t-1}{j}-\A\right)} }\\
%\leq \norm{\At{t-1}{j}-\Att{t-1}{j}} \left(\norm{\left(\At{t-1}{j}-\A\right)}+\norm{\left(\Att{t-1}{j}-\A\right)}\right)\\
%\leq \norm{\At{t-1}{j}-\Att{t-1}{j}} \left(2\norm{\A}+\norm{\At{t-1}{j}}+\norm{\Att{t-1}{j}}\right)\Ieeen
%\end{\Ieee}
%Thus on the event $\cdh^{0,t-1}$, using lemma~\ref{lem:coupled_iterate_replacement} and lemma~\ref{lem:bounded_iterates} we get
%\begin{\Ieee}{LLL}
%\label{eq:coupling_AA^t_3}
%\norm{\gram{\left(\At{t-1}{j}-\A\right)}-\gram{\left(\Att{t-1}{j}-\A\right)} }\\
%\leq c(\gamma^2 R^2 T^2 +\gamma R T)(\gamma R T+\norm{\A}+\norm{A_0})\norm{\A^u}
%\leq c\gamma^3 R^3 T^3\norm{\A^u}\Ieeen
%\end{\Ieee}
%for some constant $c$. (We have suppressed the dependence on $A_0$ and $\A$ since they are constants and $\gamma R T$ grows with $T$).
%
%The proof follows by combining \eqref{eq:coupling_AA^t_1} and \eqref{eq:coupling_AA^t_3}. 
%
%The proof of \eqref{eq:coupling_AA^t_b} follows similarly. 
%\end{proof}

\section{Bias Variance Decomposition}
\label{sec:bias_variance}
 Now, we can unroll the recursion in \eqref{eq:sgd_expreplay2}, but for the coupled iterates $\Att{t-1}{i}$ as 
\begin{equation}
\label{eq:expreplay3}
\Att{t-1}{B}-\A=\Atto{t-1}{B}+\Attdiff{t-1}{B}, 
\end{equation}
where 
\begin{equation}
\label{eq:At0}
\Atto{t-1}{B}=(A_0-\A)\prod_{s=0}^{t-1}\Htt{s}{0}{B-1}
\end{equation}
is the {\em bias} term, and the {\em variance} term is given by:  
\begin{equation}
\label{eq:Atv}
\Attdiff{t-1}{B}=2\gamma\sum_{r=1}^{t}\sum_{j=0}^{B-1}\Nt{t-r}{-j}\Xtttr{t-r}{-j}\Htt{t-r}{j+1}{B-1}{\prod_{s=r-1}^{1}\Htt{t-s}{0}{B-1}} 
\end{equation}

Here we use the convention that whenever $r = 1$, the product $\prod_{s=r-1}^{1}$ is empty i.e, equal to $1$.  The `bias' term is obtained when the noise terms are set to $0$, and captures the movement of the algorithm towards the optimal $\A$ when we set the initial iterate far away from it. The `variance' term $\Atdiff{t}{B}$ capture the uncertainty due to the inherent noise in the data. Our main goal is to understand the performance (estimation and prediction) of the tail-averaged iterates output by $\sgdber$. Here, we consider just the last iterate, but the same technique applies to all the outputs of $\sgdber$. That is, $\Anat=\frac{1}{N-a}\sum_{t=a+1}^N \Att{t-1}{B}$, for $a=\lceil\theta N \rceil$ with $0<\theta <1$. We can decompose the above into bias and variance as: $\Anat=\Anavt+\Anabt$, with, 
\begin{\Ieee}{LLL}
\Anavt=\frac{1}{N-a}\sum_{t=a+1}^N \Avt{t-1}{B}\Ieeen\label{eq:tail_variance}\\\\
\Anabt=\frac{1}{N-a}\sum_{t=a+1}^N \Abt{t-1}{B}.\Ieeen\label{eq:tail_bias}\\
\end{\Ieee}
Similarly, we can decompose the final error into `bias' and `variance' as in Lemma~\ref{lem:3} below.
\begin{lemma}[Bias-Variance Decomposition]
\label{lem:3}
 We have the following decomposition: 
\begin{\Ieee}{LLL}
\label{eq:bias_var}
\gram{\left(\Att{t-1}{B}-\A\right)} &\preceq &  2\left[\gram{ \Atto{t-1}{B}}+\right.\\
&& \left. \gram{ \Attdiff{t-1}{B}} \right].%\Ieeen
\end{\Ieee}
\end{lemma}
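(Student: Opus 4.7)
The plan is to derive the desired PSD inequality directly from the additive decomposition already established in Equation~\eqref{eq:expreplay3}, which states
\begin{equation*}
\Att{t-1}{B}-\A = \Atto{t-1}{B} + \Attdiff{t-1}{B}.
\end{equation*}
Setting $X \eqdef \Atto{t-1}{B}$ and $Y \eqdef \Attdiff{t-1}{B}$, the claim reduces to the generic matrix identity $(X+Y)^{\top}(X+Y) \preceq 2(X^{\top}X + Y^{\top}Y)$, which holds for any two matrices of compatible dimension.

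To establish this generic fact, I would start from the trivial PSD inequality $(X-Y)^{\top}(X-Y) \succeq 0$. Expanding, this gives $X^{\top}X + Y^{\top}Y \succeq X^{\top}Y + Y^{\top}X$. Adding $X^{\top}X + Y^{\top}Y$ to both sides yields $2(X^{\top}X + Y^{\top}Y) \succeq X^{\top}X + X^{\top}Y + Y^{\top}X + Y^{\top}Y = (X+Y)^{\top}(X+Y)$, which is exactly the inequality claimed. (Equivalently, this is just the operator form of the AM-GM style bound $\|a+b\|^2 \leq 2\|a\|^2 + 2\|b\|^2$ applied entrywise in the PSD order.)

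Substituting back $X = \Atto{t-1}{B}$ and $Y = \Attdiff{t-1}{B}$ and invoking Equation~\eqref{eq:expreplay3} then gives the statement of Lemma~\ref{lem:3} immediately. There is no real obstacle here; the lemma is essentially a restatement of the decomposition in~\eqref{eq:expreplay3} combined with the elementary PSD inequality. Its role in the paper is bookkeeping: it lets later sections bound $\mathbb{E}\,\gram{(\Att{t-1}{B}-\A)}$ by separately controlling the bias term $\gram{\Atto{t-1}{B}}$ (handled via the contraction properties of $\prod_{s}\Htt{s}{0}{B-1}$ developed in Section~\ref{sec:op_norm}) and the variance term $\gram{\Attdiff{t-1}{B}}$ (handled via the orthogonality of summands enabled by the reverse-order processing within buffers and the independence across buffers coming from the coupled process). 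The same one-line argument will also immediately transfer to the averaged iterates $\Anat$ by linearity of the decomposition across $t$.
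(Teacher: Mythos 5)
Your proof is correct and matches the paper's (implicit) argument: the lemma is just the elementary PSD inequality $(X+Y)^{\top}(X+Y) \preceq 2\left(X^{\top}X + Y^{\top}Y\right)$, which follows from $(X-Y)^{\top}(X-Y)\succeq 0$, applied to the decomposition in Equation~\eqref{eq:expreplay3}. Nothing further is needed.
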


\section{Parameter Error Bound--Proof of Theorem~\ref{thm:main_upper_bound_op}}
\label{sec:main_op_bound}
In this section, we formally prove the bounds on $\lossop(;\A,\mu)$, by combining several operator norm inequalities that we prove in Section~\ref{sec:op_norm}. As mentioned previously, we will just focus on the algorithmic iterates from the coupled process $(\Att{t-1}{j})$. Recall the output $\Att{t-1}{B}$ after the $t-1$-th buffer from Equation~\eqref{eq:expreplay3}. For any initial buffer index $a \in \{0,1,\dots,N-1\}$, the tail averaged output of our algorithm is:
$$\Anat := \frac{1}{N-a}\sum_{t=a+1}^{N} \Att{t-1}{B}.$$
Recall the quantities $\Att{t-1,v}{B}$ and $\Att{t-1,b}{B}$ as defined in \eqref{eq:At0} and \eqref{eq:Atv}. We can use this decomposition to write:
$$\Anat-\A=\Anabt-\A+\Anavt.$$
Here $\Anabt-\A := \frac{1}{N-a}\sum_{t=a+1}^{N} \Atto{t-1}{B}$ denotes the bias part and $\Anavt := \frac{1}{N-a}\sum_{t=a+1}^{N} \Attdiff{t-1}{B}$ denotes the variance part. 
%We define:
%$\Anavt := \frac{1}{N-a}\sum_{t=a+1}^{N} \At{t,1}{B}$, i.e.,  
%\begin{equation}\label{eq:tail_decomposition}
%	\Anavt = \frac{N}{N-a} \left(\Anovt\right)- \frac{a}{N-a}\left(\Anoavt \right) 
%\end{equation}

\subsection{Variance}
Note that
\begin{\Ieee}{LLL}
\label{eq:tail_decomposition}
\Anavt = \frac{N}{N-a} \left(\Anovt\right)- \frac{a}{N-a}\left(\Anoavt\right) \Ieeen
\end{\Ieee}

Now, we apply Theorem~\ref{thm:average_iterate_op_bound} with $\delta$ in the definition of $\tilde{\mathcal{M}}^{0,N-1}$ to be $\frac{1}{T^{\upsilon}}$ for some fixed $\upsilon \geq 1$.  We conclude that conditioned on the event $\tilde{\mathcal{M}}^{0,N-1}\cap \cdt^{0,N-1}$, with probability at least $1- \frac{1}{T^{\upsilon}}$, we have: 

$$\|\Anovt \| \leq C \sqrt{\frac{\gamma (d+\upsilon\log T)^2 \lmax{\Sigma}}{N} } + C \sqrt{\frac{(d+\upsilon\log T) \sigma_{\max}(\Sigma)}{N B  \lmin{G}} }\,.$$

Similarly, applying Theorem~\ref{thm:average_iterate_op_bound} with $N = a$ shows that with probability at least $1- \frac{1}{T^{\upsilon}}$ conditioned on the event $\tilde{\mathcal{M}}^{0,N-1}\cap \cdt^{0,N-1}$:
$$\|\Anoavt \| \leq C \sqrt{\frac{\gamma (d+\upsilon\log T)^2 \lmax{\Sigma}}{a} } + C \sqrt{\frac{(d+\upsilon\log T) \sigma_{\max}(\Sigma)}{aB  \lmin{G}} }\,.$$

Here, the constant $C$ depends only on $C_{\mu}$. We also note that when we pick $\gamma B R \leq C_0$ where $R \gtrsim \tr(G)+ \upsilon\log T$, the first term in the equations above becomes smaller than the second term. Therefore, under this assumption we can simplify the expressions to:
\begin{equation}\label{eq:first_triangle}
	\|\Anovt \| \leq  C \sqrt{\frac{(d+\upsilon\log T) \sigma_{\max}(\Sigma)}{N B  \lmin{G}} }\,.
\end{equation}

\begin{equation}\label{eq:second_triangle}
	\|\Anoavt \| \leq  C \sqrt{\frac{(d+\upsilon\log T) \sigma_{\max}(\Sigma)}{aB  \lmin{G}} } \,.
\end{equation}

Applying Equations~\eqref{eq:first_triangle} and~\eqref{eq:second_triangle} to Equation~\eqref{eq:tail_decomposition} we conclude that conditioned on the event $\tilde{\mathcal{M}}^{0,N-1}\cap \cdt^{0,N-1}$, with probability at least $1- \frac{2}{T^{\upsilon}}$, we have: 
\begin{align}
	\|\Anavt\| &\leq \frac{N}{N-a}\| \left(\Anovt\right)\|+ \frac{a}{N-a}\|\left(\Anoavt \right)\| \nonumber \\
	&\leq \frac{CN}{N-a} \sqrt{\frac{(d+\upsilon\log T) \sigma_{\max}(\Sigma)}{N B  \lmin{G}} } + \frac{C a}{N-a}\sqrt{\frac{(d+\upsilon\log T) \sigma_{\max}(\Sigma)}{aB  \lmin{G}} }.\label{eq:coupled_tail_bound}
\end{align}
Choose $a<N/2$. Since 
$$\Pb{\tilde{\mathcal{M}}^{0,N-1}\cap \cdt^{0,N-1}}\geq 1-(\frac{1}{T^{\upsilon}}+\prbnd)$$
 we have
\begin{\Ieee}{LLL}
\Pb{\|\Anavt\| > C\sqrt{\frac{(d+\upsilon\log T) \sigma_{\max}(\Sigma)}{(N-a) B  \lmin{G}} } }\\
\leq \prbnd+\frac{3}{T^{\upsilon}}\Ieeen\label{eq:coupled_tail_bound_new}
\end{\Ieee}

\subsection{Bias}
We now consider the bias term:$\Anabt -\A:= \frac{1}{N-a}\sum_{t=a+1}^{N} \Atto{t-1}{B}$. First note that, from equation~\eqref{eq:At0}, we have
\begin{\Ieee}{LLL}
\label{eq:bias_op_bnd_1}
\norm{\Anabt -\A}\leq \frac{1}{N-a}\sum_{t=a+1}^{N}\norm{A_0-\A}\norm{\prod_{s=0}^{t-1}\Htt{s}{0}{B-1}}\Ieeen
\end{\Ieee}

Now from lemma~\ref{lem:operator_norm_bound_1}, if $a>c_1\left(d+\log \frac{N}{\delta}\right)$ then conditional on $\cdt^{0,N-1}$ with probability at least $1-\delta$, for all $a+1\leq t\leq N$ we have
\begin{\Ieee}{LLL}
\label{eq:bias_op_bnd_2}
\norm{\prod_{s=0}^{t-1}\Htt{s}{0}{B-1}}\leq 2\left(1-\gamma B\sigma_{\min}(G)\right)^{c_2 t}\Ieeen
\end{\Ieee}
Note that in lemma~\ref{lem:operator_norm_bound_1} we only condition on $\cdt^{0,t-1}$ but due to buffer independence and that $\Pb{\cdt^{0,N-1}}\geq 1-\prbnd$ we can condition on $\cdt^{0,N-1}$. 

Note that in the proof of lemma~\ref{lem:operator_norm_bound_1} the constant $c_2$ is actually at most $1$ i.e., $0<c_2\leq 1$. Hence from Bernoulli's inequality, for $x<1$ 
$$(1-x)^{c_2}\leq 1-c_2 x$$

Thus conditional on $\cdt^{0,N-1}$ with probability at least $1-\delta$
\begin{\Ieee}{LLL}
\label{eq:bias_op_bnd_3}
\norm{\Anabt -\A}&\leq & \frac{\norm{A_0-\A}}{N-a}\sum_{t=a+1}^{\infty}2\left(1-\gamma B\sigma_{\min}(G)\right)^{c_2 t}\\
&=& 2\frac{\norm{A_0-\A}}{N-a}\frac{\left(1-\gamma B\sigma_{\min}(G)\right)^{c_2 a}}{c_2\gamma B\sigma_{\min}(G)}\\
&\leq & c_3 \frac{\norm{A_0-\A}}{N-a}\frac{e^{-c_2 a\gamma B\sigma_{\min}(G)}}{\gamma B\sigma_{\min}(G)}\Ieeen
\end{\Ieee}

Hence choosing $\delta=\frac{1}{T^{\upsilon}}$ we have for $a>c_1\left(d+\log \frac{N}{\delta}\right)$ 
\begin{\Ieee}{LLL}
\label{eq:bias_op_bnd_4}
\Pb{\norm{\Anabt -\A}>c_3 \frac{\norm{A_0-\A}}{N-a}\frac{e^{-c_2 a\gamma B\sigma_{\min}(G)}}{\gamma B\sigma_{\min}(G)}}\leq \prbnd+\frac{1}{T^{\upsilon}}\Ieeen
\end{\Ieee}

Define $\beta_b$ as 
\begin{\Ieee}{LLL}
\label{eq:bias_op_bnd_5}
\beta_b=c_3 \frac{1}{N-a}\frac{e^{-c_2 a\gamma B\sigma_{\min}(G)}}{\gamma B\sigma_{\min}(G)}\Ieeen
\end{\Ieee}
Thus by union bound and equations \eqref{eq:coupled_tail_bound_new} and \eqref{eq:bias_op_bnd_4} we get
\begin{\Ieee}{LLL}
\label{eq:op_bnd_main_1}
\Pb{\norm{\Anat-\A}>C\sqrt{\frac{(d+\upsilon\log T) \sigma_{\max}(\Sigma)}{(N-a) B  \lmin{G}} }+\beta_b\norm{A_0-\A}}\\
\leq \frac{2}{T^{\alpha}}+\frac{4}{T^{\upsilon}}\Ieeen
\end{\Ieee}

Now from lemma~\ref{lem:coupled_iterate_replacement} we see that on the event $\cdh^{0,N-1}$
\begin{\Ieee}{LLL}
\label{eq:avg_couple_diff}
\norm{\Ana-\Anat}\leq c\gamma^2 R^2 T^2\norm{\A^u}\Ieeen
\end{\Ieee}

Since $\Pb{\cdh^{0,N-1}}\geq 1-\prbnd$, we obtain
\begin{\Ieee}{LLL}
\label{eq:avg_couple_diff_1}
\Pb{\norm{\Ana-\Anat}\leq c\gamma^2 R^2 T^2\norm{\A^u}}\geq 1-\prbnd\Ieeen
\end{\Ieee}

Therefore choosing $\delta=\frac{1}{T^{\upsilon}}$ we have for $N/2>a>c_1\left(d+\log \frac{N}{\delta}\right)$ 
\begin{\Ieee}{LLL}
\label{eq:op_bnd_main_2}
\Pb{\norm{\Ana-\A}>C\sqrt{\frac{(d+\upsilon\log T) \sigma_{\max}(\Sigma)}{(N-a) B  \lmin{G}} }+\beta_b\norm{A_0-\A}+c_4\gamma^2 R^2 T^2 \norm{\A^u}}\\
\leq \frac{3}{T^{\alpha}}+\frac{4}{T^{\upsilon}}\Ieeen
\end{\Ieee}
where $\beta_b$ is defined in \eqref{eq:bias_op_bnd_5}. 

The theorem follows by adjusting the constants (in choosing $\delta$) such the above probability is at most $\frac{3}{T^{\alpha}}+\frac{1}{2T^{\upsilon}}$ and then choosing $\upsilon$ such that $\frac{3}{T^{\alpha}}\leq \frac{1}{2T^{\upsilon}}$.

\section{Bias Variance Analysis of Last and Average Iterate}
\label{sec:bias_var_analysis}
In this section, our goal is to provide a PSD upper bound on 
$$\Ex{\gram{\left(\Att{t-1}{B}-\A\right)}}, \Ex{\gram{\left(\Anat-\A\right)}}$$
using the bias variance decomposition in \eqref{eq:expreplay3} and \eqref{eq:tail_bias}. This bound leads to Theorem~\ref{th:4} which is critical for our parameter error proof (Theorem~\ref{thm:main_upper_bound_op}). 

%last iterate analysis

\subsection{Variance of the Last Iterate}
\label{subsec:variance_last_iterate}

The goal of this section is to bound error due to $\Attdiff{t-1}{B}$. For brevity, we will introduce the following notation:

\begin{\Ieee}{LLL}
\Vt{t-1}=\Ex{\gram{\Attdiff{t-1}{B}} \indt{0}{t-1}}.\Ieeen\label{eq:Vt}
\end{\Ieee}

%The following theorem is the main result of this section. 
%\begin{theorem}
%\label{th:1}
%Consider the SGD with reverse buffer experience replay algorithm given in recursion \eqref{eq:sgd_expreplay2} with $A_0=\A$ and $u$  such that $\norm{\A^u}<\frac{1}{4(Bt)}$. Further, let $\gamma\leq \frac{1}{2R}$. Then the variance of the last iterate in buffer $(t-1)$ is given by, 
%
%\begin{\Ieee}{LLL}
%\label{eq:th1}
%\Ex{\gram{\Atdiff{t}{B}}\indh{0}{t-1}}\preceq  c_1\gamma \tr(\Sigma)I +c_2\gamma d\sigma_{\max}(\Sigma)(Bt)^2\prbndsq I,\Ieeen
%\end{\Ieee}
%for some absolute constants $c_1,c_2>0$.
%\end{theorem}
%The proof of this theorem follows by  using Proposition \ref{prop:1} 

The following proposition is the main result of this section. 
\begin{proposition}
\label{prop:1}
Let $\gamma\leq \frac{1}{2R}$. Let the noise covariance be $\Ex{\eta_t \eta_t^T}=\Sigma$. Then, {\small
\begin{align*}
&\Vt{t-1}\preceq   \frac{\gamma \tr(\Sigma)}{1-\gamma R}\left[I-\Ex{\prodHtttr{t}{t}\prodHtt{t}{t}\indt{0}{t-1}}\right]  + c_1 \gamma^2 d\sigma_{\max}(\Sigma)(Bt)^2\prbndsq I,\\
&\Vt{t-1} \succeq  \gamma \tr(\Sigma)\left[I-\Ex{\prodHtttr{t}{t}\prodHtt{t}{t}\indt{0}{t-1}}\right] -c_4 \gamma^2 d\sigma_{\max}(\Sigma)(Bt)^2\prbndsq I,
\end{align*}}
for some absolute constants $c_i>0,\,1\leq i\leq 4 $.% Also, $\Hth{t-s}{j}{B-1}=\Htt{t-s}{j}{B-1}1\left[\cdt^{t-s}_{-j}\right]$. 
\end{proposition}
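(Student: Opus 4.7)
The plan is to bound $\Vt{t-1}$ by expanding $\gram{\Attdiff{t-1}{B}}$ as a double sum indexed by pairs $\bigl((r_1,j_1),(r_2,j_2)\bigr)$ and separating it into a diagonal part (matching indices) and an off-diagonal part. The diagonal part, once the scalar $\|\Nt{t-r}{-j}\|^2$ is pulled out as $\tr(\Sigma)$, telescopes algebraically into the advertised expression $I-\Ex{\prodHtttr{t}{t}\prodHtt{t}{t}\indt{0}{t-1}}$, and the off-diagonal part has vanishing unconditional expectation so contributes only the $T^{-\alpha/2}$ remainder coming from the indicator $\indt{0}{t-1}$. Throughout, the key structure (enforced by passing to the coupled process $\tilde X$) is that distinct buffers are independent, and within a single buffer processed in reverse the noise $\Nt{t-r}{-j}$ is independent of $\{\Xtt{t-r}{-k}\}_{k\geq j}$ and of $\Nt{t-r}{-j'}$ for $j'\ne j$.

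For the diagonal terms the central algebraic identity is
\begin{\Ieee}{LLL}
(I-2\gamma\Xtt{t-r}{-j}\Xtttr{t-r}{-j})^2=I-4\gamma(1-\gamma\|\Xtt{t-r}{-j}\|^2)\Xtt{t-r}{-j}\Xtttr{t-r}{-j}.
\end{\Ieee}
Combined with $\Htt{t-r}{j}{B-1}=(I-2\gamma\Xtt{t-r}{-j}\Xtttr{t-r}{-j})\Htt{t-r}{j+1}{B-1}$ this yields the pathwise equality
\begin{\Ieee}{LLL}
\Htttr{t-r}{j+1}{B-1}\Xtt{t-r}{-j}\Xtttr{t-r}{-j}\Htt{t-r}{j+1}{B-1}=\tfrac{1}{4\gamma(1-\gamma\|\Xtt{t-r}{-j}\|^2)}\bigl(C^{t-r}_{j+1}-C^{t-r}_{j}\bigr),
\end{\Ieee}
with $C^{t-r}_{j}:=\Htttr{t-r}{j}{B-1}\Htt{t-r}{j}{B-1}$. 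On $\cdt^{0,t-1}$ the scalar prefactor lies in $[1/(4\gamma),1/\hat\gamma]$ while $C^{t-r}_{j+1}-C^{t-r}_{j}\succeq 0$, giving a two-sided PSD sandwich whose constants produce $\gamma\tr(\Sigma)/(1-\gamma R)$ and $\gamma\tr(\Sigma)$ after combining with $4\gamma^{2}\tr(\Sigma)$. Summing over $j=0,\dots,B-1$ telescopes the matrix difference to $I-C^{t-r}_{0}$, and summing over $r=1,\dots,t$ telescopes once more via the identity $\Htt{t-r}{0}{B-1}\prodHtt{t}{r-1}=\prodHtt{t}{r}$ (which conjugates $C^{t-r}_{0}$ from the inside to $\prodHtttr{t}{r}\prodHtt{t}{r}$) down to $I-\prodHtttr{t}{t}\prodHtt{t}{t}$, yielding the advertised leading expression.

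For the off-diagonal terms, in each summand some mean-zero noise factor is independent of everything else in the product: when $r_1\ne r_2$ this follows from buffer independence, and when $r_1=r_2=r$ with $j_1<j_2$ the noise $\Nt{t-r}{-j_1}$ at forward position $S-1-j_1$ is independent of $\Nt{t-r}{-j_2}$, of the covariates $\Xtt{t-r}{-k}$ for every $k\geq j_1$, and of all other-buffer quantities. Hence each off-diagonal summand has unconditional expectation zero. Writing $\Ex{Y\,\indt{0}{t-1}}=\Ex{Y}-\Ex{Y\cdot 1[(\cdt^{0,t-1})^{c}]}$, applying a Cauchy--Schwarz PSD bound against $\Pb{(\cdt^{0,t-1})^{c}}\leq T^{-\alpha}$ (Lemma~\ref{lem:2}), and controlling the $L^{2}$ norm of $Y$ through the sub-Gaussian moment bounds from Lemma~\ref{lem:data_subgaussianity} produces an $O(\gamma^{2}d\sigma_{\max}(\Sigma)T^{-\alpha/2})$ PSD correction per pair, which aggregates to the stated $(Bt)^{2}$ bound after summing over all $O(B^{2}t^{2})$ off-diagonal pairs.

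The main technical obstacle I anticipate is the coupling between the indicator $\indt{0}{t-1}$ and the individual noises: $\indt{0}{t-1}$ depends on $\Nt{t-r}{-j}$ through the events $\{\|\Xtt{t-r}{-k}\|^{2}\leq R\}$ for $k<j$, since those covariates evolve under $\Nt{t-r}{-j}$. For the diagonal computation I plan to circumvent this by splitting $\indt{0}{t-1}$ into a ``later-positions-only'' piece independent of $\Nt{t-r}{-j}$ (so that $\tr(\Sigma)$ can be pulled out cleanly) and a small remainder whose contribution is bounded in PSD order by Cauchy--Schwarz against the $T^{-\alpha}$ tail of Lemma~\ref{lem:2}; the same bookkeeping simultaneously handles the off-diagonal correction. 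Apart from this indicator management the argument is essentially algebraic telescoping, so the bulk of the technical work lies in the conditionings and in ensuring that the resulting bounds remain PSD rather than merely operator-norm estimates.
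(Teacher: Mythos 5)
Your route is the same as the paper's: split $\gram{\Attdiff{t-1}{B}}$ into diagonal and cross terms, kill the cross terms by the zero-mean/independence structure, and telescope the diagonal part within and across buffers. Your pathwise identity $(I-2\gamma XX^{\top})^{2}=I-4\gamma(1-\gamma\|X\|^{2})XX^{\top}$, with the scalar prefactor sandwiched in $[\tfrac{1}{4\gamma},\tfrac{1}{4\gamma(1-\gamma R)}]$ on the good event, is exactly the content of the paper's Claim~\ref{claim:single_buffer_isometry}, and your two telescoping steps reproduce Equations~\eqref{eq:diag_lower_isometry}--\eqref{eq:diag_upper_isometry}; this correctly produces the constants $\gamma\tr(\Sigma)$ and $\gamma\tr(\Sigma)/(1-\gamma R)$.

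The one step that would fail as written is the cross-term correction in your third paragraph. You propose $\Ex{Y\,\indt{0}{t-1}}=-\Ex{Y\,1[(\cdt^{0,t-1})^{c}]}$ followed by Cauchy--Schwarz against the \emph{unconditional} second moment of the whole summand $Y$. But $Y$ contains products of up to $Bt$ factors $I-2\gamma\Xtt{s}{-k}\Xtttr{s}{-k}$, which are not contractions off the event $\cdt^{0,t-1}$; Lemma~\ref{lem:data_subgaussianity} controls a single covariate, not the moments of such long products, and a priori $\Ex{\|Y\|^{2}}$ can grow far faster than any fixed power of $T$, so the $T^{-\alpha/2}$ factor does not obviously rescue the bound, and even on the good event you would pick up an extra $R$ rather than the clean $O(\gamma^{2}d\sigma_{\max}(\Sigma))$ per pair. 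The correct execution is the one you sketch in your closing paragraph for the diagonal terms, and it is precisely what the paper does in Lemmas~\ref{lem:cross_term} and~\ref{lem:diag_term}: since $\cdt^{0,t-1}$ forces the matrix factors to be bounded (the $H$-products are contractions and the covariate factors have norm at most $\sqrt{R}$), insert that bounded event (equivalently, split the indicator into a piece measurable with respect to later positions and other buffers, hence independent of the distinguished noise, plus a small remainder bounded via Lemma~\ref{lem:2}), and apply the mean-zero/complement trick only to the noise vector, whose moments are $\tr(\Sigma)$ resp.\ $\sqrt{\mu_4}$ regardless of the conditioning. Once that bookkeeping is used uniformly for both the diagonal and the cross terms, as your last paragraph indicates, your argument coincides with the paper's proof.
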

We refer to Section~\ref{sec:proof_prop_1} in the appendix for a full proof. Note that we have, $\frac{1}{1-\gamma\nx^2}\leq 2$. 
\begin{corollary}
\label{coro:var_last_iter}
In the same setting as Proposition~\ref{prop:1}, we have: 
\begin{equation}
\label{eq:var_last_1b}
\Vt{t-1}\preceq c_1\gamma \tr(\Sigma)I+c_2 \gamma^2 d\sigma_{\max}(\Sigma)(Bt)^2\prbndsq I,
\end{equation}
for some constants $c_1,c_2>0$. If $\prbndsqinv>T^2$, then $V_{t,1}\preceq c \gamma d\sigma_{\max} I$, for some constant $c>0$.
\end{corollary}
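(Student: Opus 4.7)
The corollary is a direct consequence of Proposition~\ref{prop:1}, so my plan is simply to massage the PSD upper bound stated there. The starting point is the inequality
\[
\Vt{t-1} \preceq \frac{\gamma \tr(\Sigma)}{1-\gamma R}\bigl[I-\Ex{\prodHtttr{t}{t}\prodHtt{t}{t}\indt{0}{t-1}}\bigr] + c_1 \gamma^2 d\sigma_{\max}(\Sigma)(Bt)^2\prbndsq I.
\]
Two very short simplifications yield the first inequality of the corollary. First, the stepsize condition $\gamma\leq \frac{1}{2R}$ gives $\gamma R\leq \tfrac{1}{2}$ and hence $\frac{1}{1-\gamma R}\leq 2$. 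Second, $\prodHtttr{t}{t}\prodHtt{t}{t}$ is almost surely PSD, and the indicator $\indt{0}{t-1}$ is nonnegative, so $\Ex{\prodHtttr{t}{t}\prodHtt{t}{t}\indt{0}{t-1}}\succeq 0$, which gives the PSD bound $I-\Ex{\prodHtttr{t}{t}\prodHtt{t}{t}\indt{0}{t-1}}\preceq I$. Plugging these into the proposition yields
\[
\Vt{t-1}\preceq 2\gamma \tr(\Sigma)\, I + c_1 \gamma^2 d\sigma_{\max}(\Sigma)(Bt)^2\prbndsq\, I,
\]
which is the first claim after relabeling constants.

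For the second claim, I use the hypothesis $\prbndsqinv > T^2$, i.e.\ $\prbndsq < 1/T^2$. Since we always have $Bt \leq T$ (buffers partition the horizon), $(Bt)^2\prbndsq \leq T^2/T^2 = 1$. Therefore the second term in the first bound is at most $c_1\gamma^2 d\sigma_{\max}(\Sigma)\, I$, and using $\gamma\leq \frac{1}{2R}\leq 1$ together with $\tr(\Sigma)\leq d\sigma_{\max}(\Sigma)$ gives
\[
\Vt{t-1}\preceq \bigl(2\gamma\tr(\Sigma) + c_1\gamma^2 d\sigma_{\max}(\Sigma)\bigr)\, I \preceq c\, \gamma d\sigma_{\max}(\Sigma)\, I,
\]
for an absolute constant $c>0$, as claimed.

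There is no real obstacle here; the work was entirely in establishing Proposition~\ref{prop:1}. The only subtle check is that the two small-probability error terms picked up by the proposition (coming from conditioning on $\tilde\cd^{0,t-1}$ and from truncation when norms exceed $\sqrt{R}$) are already collected into the single $(Bt)^2\prbndsq$ factor, so the only things left to invoke are the stepsize condition, PSD monotonicity of conditional expectation, and the trivial bound $(Bt)^2 \leq T^2$.
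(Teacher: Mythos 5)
Your proof is correct and follows the same route the paper intends: drop the PSD term $\Ex{\prodHtttr{t}{t}\prodHtt{t}{t}\indt{0}{t-1}}\succeq 0$, use $\gamma\le \tfrac{1}{2R}$ to get $\tfrac{1}{1-\gamma R}\le 2$, and then bound $(Bt)^2\prbndsq\le 1$ together with $\tr(\Sigma)\le d\sigma_{\max}(\Sigma)$ for the second claim. Nothing is missing.
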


\subsection{Variance of the Average Iterate}

\label{subsec:average_variance}
 In this section we are interested in bounding: $\Ex{\gram{\Anvat{a}{N}}\indt{0}{N-1}}$, for $a=\theta N$ with $0\leq \theta <1$, where, 
\begin{equation}
\label{eq:avg_iterate}
\Anavt=\frac{1}{N-a}\sum_{t=a+1}^{N}\Avt{t-1}{B},
\end{equation}
and further, recall that $T=N(B+u)$. The main bound in this section is given in Proposition~\ref{prop:avg_var1}. 
%Before that we start with a lemma.
%\begin{lemma}
%\label{lem:avg_var_on_D}
%\begin{multline}
%\label{eq:avg_var_on_D}
%\Ex{\gram{\Anva{a}{N}}\ind{0}{N-1}}\\
%\preceq \Ex{\gram{\Anva{a}{N}}\indh{0}{N-1}} +c\gamma^2 R d\sigma_{\max}(\Sigma)T^2\prbndsq I.
%\end{multline}
%\end{lemma}
%\begin{proof}
%Note that from lemma~\ref{lem:5} we have
%\begin{multline}
%\label{eq:avg_var_on_D_1}
%\Ex{\gram{\Anva{a}{N}}\ind{0}{N-1}}= \Ex{\gram{\Anva{a}{N}}\indh{0}{N-1}}\\
%-\Ex{\gram{\Anva{a}{N}}\ind{0}{N-1}\indtc{0}{N-1}}.\Ieeen
%\end{multline}
%Now, 
%\begin{\Ieee}{LLL}
%\label{eq:avg_var_on_D_2}
%\Ex{\gram{\Anva{a}{N}}\ind{0}{N-1}\indtc{0}{N-1}}\\
%\preceq \frac{1}{N-a}\sum_{t=a+1}^N \Ex{\gram{\left(\Av{t}{B}-\A\right)}\ind{0}{N-1}\indtc{0}{N-1}}.\Ieeen
%\end{\Ieee}
%But,  {\small
%\begin{align*}
%&\Ex{\norm{\gram{\left(\Av{t}{B}-\A\right)}\ind{0}{N-1}\indtc{0}{N-1}}}\\
%&\leq  4\gamma^2 (Bt) \sum_{r}\sum_{j}\Ex{\norm{\Nt{t-r}{j}}^2\indtc{0}{N-1}}\leq  c\gamma^2 (Bt)^2 R \sqrt{\mu_4}\prbndsq\leq  c\gamma^2 R d\sigma_{\max}(\Sigma)T^2\prbndsq. 
%\end{align*}}
%Hence proved.
%\end{proof}
%Because of the lemma above, we now focus on bounding: {\small $\Ex{\gram{\Anva{a}{N}}\indh{0}{N-1}},$} for the remainder of this section and the next on bias.
 Note that we have, 
\begin{\Ieee}{LLL}
\label{eq:An1}
&&\Ex{\gram{\Anvat{a}{N}}\indt{0}{N-1}}\\
&= &\frac{1}{(N-a)^2}\sum_{t=a+1}^N\Ex{\gram{\Attdiff{t-1}{B}}\indt{0}{N-1}}\\
&&+ \frac{1}{(N-a)^2}\sum_{t_1\neq t_2}\Ex{\Attdiff{t_1-1}{B}^{\top}\Attdiff{t_2-1}{B}\indt{0}{N-1}}\Ieeen
\end{\Ieee}

\begin{proposition}
\label{prop:avg_var1}
Let $\gamma\leq \min\{\frac{c}{6RB}\frac{1}{2R}\}$ for $0<c<1$. Then for $\Anavt$ defined in \eqref{eq:avg_iterate}, there are constants $c_1,c_2>0$ such that if $\prbndsqinv>c_1\frac{\sqrt{M_4}}{\sigma_{\min}(G)}$, then: 
\begin{\Ieee}{LLL}
&&\Ex{\gram{\Anvat{a}{N}}\indt{0}{N-1}}\\
&\preceq & \frac{1}{(N-a)^2} \sum_{t=a+1}^N \left[\Vt{t-1} \left(\sum_{s=0}^{N-t}\ch^s\right) + \left(\sum_{s=0}^{N-t}\ch^s\right)^{\top} \Vt{t-1}\right] + c_2\delta I\Ieeen\label{eq:avg_var_new1}\\
 &=&\frac{1}{(N-a)^2} \sum_{t=a+1}^N \left[\Vt{t-1} \left(I-\ch\right)^{-1}+ \left(I-\ch^{\top}\right)^{-1}\Vt{t-1} \right]+c_2\delta I +\\
 && \frac{1}{(N-a)^2} \sum_{t=a+1}^N \left[\Vt{t-1} \left(I-\ch\right)^{-1}\ch^{N-t+1}+ \left(\ch^{\top}\right)^{N-t+1}\left(I-\ch^{\top}\right)^{-1}\Vt{t-1} \right] \Ieeen\label{eq:avg_var_new1a}
\end{\Ieee}
and, 
\begin{\Ieee}{LLL}
\label{eq:avg_var_new2}
\delta\equiv \delta(N,B,R)= \gamma^2 T^2 R d\sigma_{\max}(\Sigma)\prbndsq \Ieeen
\end{\Ieee}
and $\ch$ is given by, 
\begin{\Ieee}{LLL}
\label{eq:avg_var_new3}
\ch=\Ex{\prod_{j=0}^{B-1}\left(I-2\gamma \Xtt{0}{-j} \Xtttr{0}{-j}\right)1\left[\cap_{j=0}^{B-1}\left\{\norms{\Xtt{0}{-j}}^2\leq R\right\}\right]},\Ieeen
\end{\Ieee}
with $\tilde X_0$ sampled from the stationary distribution $\pi$ and $\tilde X_t$ follows the $\var(\A,\mu)$. 
\end{proposition}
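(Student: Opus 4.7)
The proof plan starts from the expansion~\eqref{eq:An1}, splitting $\Ex{\gram{\Anvat{a}{N}}\indt{0}{N-1}}$ into a diagonal sum $\sum_{t}\Ex{\gram{\Attdiff{t-1}{B}}\indt{0}{N-1}}$ and a cross-term sum over $t_1\neq t_2$. The diagonal terms reduce to $V_{t-1}$ (the $s=0$ contribution in the bracketed expression) after replacing $\indt{0}{N-1}$ by $\indt{0}{t-1}$; the resulting error is controlled by $\Ex{1-\indt{0}{N-1}}\leq T\prbnd$ times the PSD upper bound $\gram{\Attdiff{t-1}{B}}\preceq O(\gamma^2 B^2 R^2 T^2)I$ from Lemma~\ref{lem:bounded_iterates}, which fits inside $c_2\delta I$.

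\textbf{Cross-term decomposition.} For $t_1<t_2$ I would unroll the recursion~\eqref{eq:Atv} by splitting the outer sum at the buffer boundary to write
$$\Attdiff{t_2-1}{B} \;=\; \Attdiff{t_1-1}{B}\prod_{s=t_1}^{t_2-1}\Htt{s}{0}{B-1} \;+\; W_{t_1,t_2},$$
where $W_{t_1,t_2}$ is the analogous noise-weighted sum restricted to buffers $r'\in[t_1,t_2-1]$. Two structural properties of the coupled process drive the rest of the argument: (i) by Definition~\ref{def:1}, data in distinct buffers are mutually independent, so $\Attdiff{t_1-1}{B}$ is independent of both $\prod_s \Htt{s}{0}{B-1}$ and $W_{t_1,t_2}$; (ii) within any single buffer the reverse processing order makes $\Nt{r'}{-j}$ independent of $\Xtttr{r'}{-j}\Htt{r'}{j+1}{B-1}$, which renders every summand of $W_{t_1,t_2}$ mean zero.

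\textbf{Taking expectations.} Writing $\indt{0}{N-1}=\indt{0}{t_1-1}\,\indt{t_1}{N-1}$ and applying (i), the first piece $\Ex{\Attdiff{t_1-1}{B}^{\top}\Attdiff{t_1-1}{B}\prod_s\Htt{s}{0}{B-1}\,\indt{0}{N-1}}$ factors as $V_{t_1-1}\cdot\Ex{\prod_{s=t_1}^{t_2-1}\Htt{s}{0}{B-1}\,\indt{t_1}{N-1}}$. A second application of (i) inside the remaining expectation factors it across buffers, producing $\ch^{t_2-t_1}\cdot\prod_{s\in[t_2,N-1]}\Ex{\inct{s}{0}}$; the probability product differs from $1$ by $O(T\prbnd)$, absorbable into $c_2\delta I$. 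For the second piece $\Ex{\Attdiff{t_1-1}{B}^{\top}W_{t_1,t_2}\,\indt{0}{N-1}}$, decompose $\indt{0}{N-1}=1-(1-\indt{0}{N-1})$: the contribution of the constant $1$ vanishes by combining (i) and (ii), while the $1-\indt{0}{N-1}$ contribution is controlled by Cauchy--Schwarz with a crude second-moment bound on $\|W_{t_1,t_2}\|_F^2$ and $\Ex{1-\indt{0}{N-1}}\leq T\prbnd$, giving another error inside $c_2\delta I$.

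\textbf{Summation, series form, and main obstacle.} Summing the cross-term approximations $V_{t_1-1}\,\ch^{t_2-t_1}$ over $t_2\in[t_1+1,N]$ yields $V_{t_1-1}\sum_{s=1}^{N-t_1}\ch^{s}$; adding the transpose contribution from $t_1>t_2$ together with the $s=0$ diagonal term produces the bracketed form in~\eqref{eq:avg_var_new1}. The reformulation~\eqref{eq:avg_var_new1a} then follows from the finite geometric identity $\sum_{s=0}^{N-t}\ch^{s}=(I-\ch)^{-1}(I-\ch^{N-t+1})$, which requires $I-\ch$ to be invertible. Under the step-size constraint $\gamma\leq c/(6RB)$ I would invoke the one-buffer contraction estimate of Lemma~\ref{lem:contraction} to obtain $\|\ch\|\leq 1-c'\gamma B\sigma_{\min}(G)<1$; the hypothesis $\prbndsqinv > c_1\sqrt{M_4}/\sigma_{\min}(G)$ enters precisely here, ensuring that the indicator-induced perturbation of $\ch$ away from its unconditional analogue does not destroy this contraction. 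The main technical obstacle is the $W_{t_1,t_2}$ cross term on the bad event: because $\indt{t_1}{N-1}$ depends on the very noises that make $W_{t_1,t_2}$ mean zero, one must balance the fourth-moment size of these contributions against the polynomially small $\prbnd$, and this balance is what dictates the quantitative form of the hypothesis involving $\sqrt{M_4}/\sigma_{\min}(G)$.
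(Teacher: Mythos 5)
Your proposal is correct and follows essentially the same route as the paper's proof: the split in \eqref{eq:An1}, the cross-buffer unrolling of $\Attdiff{t_2-1}{B}$, buffer independence plus the within-buffer mean-zero property to reduce the cross terms to $\sym{\Vt{t_1-1}\ch^{t_2-t_1}}$ with indicator-induced errors absorbed into $c_2\delta I$ (the mechanism of Lemma~\ref{lem:cross_term}/Claim~\ref{claim:5}), and the geometric-series identity once $\norm{\ch}<1$. One cosmetic remark: the hypothesis $\prbndsqinv>c_1\sqrt{M_4}/\sigma_{\min}(G)$ is needed solely to guarantee the contraction $\norm{\ch}<1$ (Lemma~\ref{lem:7}), not to balance the bad-event cross terms as your closing sentence suggests—those are handled by the explicit $\prbndsq$ factor inside $\delta$—but since you also identify the correct role earlier, this does not affect the argument.
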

See section~\ref{sec:prop_avg_var1} in the appendix for the proof.

\subsection{Bias of the Last Iterate}
\label{subsec:bias_last_iterate}
In this we will analyze the bias term of the last iterate. That is we want to bound: $$\Ex{\gram{\Atto{t-1}{B}}\indt{0}{t-1}}\,.$$
 Where $\Atto{t-1}{B}$ is defined in \eqref{eq:At0}.

\begin{theorem}
\label{th:3}
Let $\gamma RB\leq \frac{c}{6}$ for some $0<c<1$ with $B$ such that $\gamma R\leq \frac{1}{2}$. Then there are constants $c_1,c_2,c_3>0$ such that if $\prbndsqinv>c_1\frac{\sqrt{M_4}}{\sigma_{\min}(G)}$ (where $M_4=\Ex{\norms{\Xtt{0}{-0}}^4}$) then
\begin{\Ieee}{LLL}
\label{eq:th3}
\Ex{\gram{\Atto{t-1}{B}}\indt{0}{t-1}}\preceq \norm{A_0-\A}^2 \left(1-c_2\gamma B\sigma_{\min}(G)\right)^{t}I\Ieeen
\end{\Ieee}
\end{theorem}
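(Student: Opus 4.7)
The plan is to reduce the second-moment bias bound to an iterated one-buffer PSD contraction and establish that contraction by an SGD-style Lyapunov calculation. Starting from the bias formula $\Atto{t-1}{B} = (A_0-\A)\,L_t$ with $L_t := \prod_{s=0}^{t-1}\Htt{s}{0}{B-1}$, the PSD inequality $\gram{(A_0-\A)} \preceq \norm{A_0-\A}^2 I$ gives
\[
\gram{\Atto{t-1}{B}} \;=\; L_t^{\top}\,\gram{(A_0-\A)}\,L_t \;\preceq\; \norm{A_0-\A}^2\,L_t^{\top}L_t,
\]
so it suffices to establish the operator-valued bound $\Phi_t := \Ex{L_t^{\top}L_t\,\indt{0}{t-1}} \preceq (1-c_2\gamma B \sigma_{\min}(G))^{t}\,I$.

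By the coupling construction in Definition~\ref{def:1}, the buffer blocks $\{\Htt{s}{0}{B-1}\}_s$ are i.i.d.\ across $s$, and the event indicator factorizes as $\indt{0}{t-1} = \prod_{s=0}^{t-1}\indt{s}{s}$. Peeling off the outermost buffer therefore yields the recursion $\Phi_t = \Ex{P^{\top}\Phi_{t-1} P\,\indt{0}{0}}$, where $P$ denotes a single generic buffer block. The key claim is the single-buffer PSD contraction
\[
\Ex{P^{\top}P\,\indt{0}{0}} \;\preceq\; \bigl(1-c_2\gamma B \sigma_{\min}(G)\bigr)\,I \qquad (\star).
\]
Given $(\star)$, the induction on $t$ closes at once: if $\Phi_{t-1}\preceq\lambda^{t-1}I$ with $\lambda := 1-c_2\gamma B\sigma_{\min}(G)$, then $P^{\top}\Phi_{t-1}P\preceq\lambda^{t-1}P^{\top}P$ (PSD order is preserved by fixed congruences), and taking expectation against the indicator yields $\Phi_t\preceq\lambda^{t}I$.

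To prove $(\star)$, observe that under $\indt{0}{0}$ one has $\gamma\norm{\tilde X_{-i}}^2\leq 1/2$ for every $i$ (since $\gamma R\leq 1/2$), so each single-step factor satisfies
\[
(I-2\gamma\tilde X_{-i}\tilde X_{-i}^{\top})^{\top}(I-2\gamma\tilde X_{-i}\tilde X_{-i}^{\top})
\;=\; I-4\gamma(1-\gamma\norm{\tilde X_{-i}}^2)\tilde X_{-i}\tilde X_{-i}^{\top}
\;\preceq\; I-2\gamma\,\tilde X_{-i}\tilde X_{-i}^{\top}.
\]
Expanding $P^{\top}P$ from the inside out and iterating this inequality yields an upper PSD estimate of the form $\Ex{P^{\top}P\,\indt{0}{0}} \preceq I - 2\gamma\sum_{i=0}^{B-1}\Ex{\tilde X_{-i}\tilde X_{-i}^{\top}\,\indt{0}{0}} + \mathcal R$, where the remainder $\mathcal R = O(\gamma^2 B R)\cdot I$ collects the $\gamma^2$-level cross terms and is absorbed using $\gamma RB\leq c/6$. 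By stationarity $\Ex{\tilde X_{-i}\tilde X_{-i}^{\top}} = G$, while a Cauchy--Schwarz bound on the complementary event $(\indt{0}{0})^{c}$ (of probability $\leq \prbnd$ by Lemma~\ref{lem:2}, with fourth moment controlled by $M_4$) gives $\Ex{\tilde X_{-i}\tilde X_{-i}^{\top}\,\indt{0}{0}}\succeq G - O(\sqrt{M_4}\,\prbndsq)\,I$. Under the hypothesis $\prbndsqinv > c_1\sqrt{M_4}/\sigma_{\min}(G)$ this correction is at most a small constant fraction of $\sigma_{\min}(G)\,I$, whence summing over $i=0,\dots,B-1$ delivers $(\star)$.

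The main obstacle is the single-buffer estimate $(\star)$: within one buffer the covariates $\tilde X_{-i}$ are highly dependent successive states of a VAR chain, so neither the $O(\gamma^2)$ cross terms in the expansion of $P^{\top}P$ nor the conditioning bias introduced by $\indt{0}{0}$ factor cleanly, and both must be kept strictly below the first-order $\gamma B\sigma_{\min}(G)$ contraction. The two technical hypotheses---$\gamma RB\leq c/6$ (to tame cross terms) and $\prbndsqinv > c_1\sqrt{M_4}/\sigma_{\min}(G)$ (to tame the conditioning bias via the fourth moment $M_4$)---are tailored precisely to absorb these two error sources into a small fraction of the main contraction.
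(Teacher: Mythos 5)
Your overall route is the paper's: factor out $\norm{A_0-\A}^2$ via $\gram{\Atto{t-1}{B}}\preceq \norm{A_0-\A}^2 L_t^{\top}L_t$, use independence of the coupled buffers and the product structure of $\indt{0}{t-1}$ to reduce $\Ex{L_t^{\top}L_t\,\indt{0}{t-1}}$ to the $t$-th power of a single-buffer contraction, and handle the conditioning bias by Cauchy--Schwarz through the fourth moment $M_4$ together with the hypothesis $\prbndsqinv>c_1\sqrt{M_4}/\lmin{G}$. That is exactly Lemma~\ref{lem:almost_sure_contraction} iterated, as in Section~\ref{sec:proof_thm_last_bias}, and your peeling/induction step is sound.

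The gap is inside your proof of $(\star)$, which is the heart of the matter. First, the ``inside-out'' iteration of $(I-2\gamma\Xtt{}{-i}\Xtt{\top}{-i})^2\preceq I-2\gamma\Xtt{}{-i}\Xtt{\top}{-i}$ does not yield $I-2\gamma\sum_i\Xtt{}{-i}\Xtt{\top}{-i}+\mathcal R$ directly: it yields $P^{\top}P\preceq I-2\gamma\sum_i \Htt{\top}{i+1}{B-1}\Xtt{}{-i}\Xtt{\top}{-i}\Htt{}{i+1}{B-1}$, and the conjugating factors cannot simply be stripped, because they can only shrink the subtracted PSD terms, which is the wrong direction for an upper bound; controlling the discrepancy is precisely what the full expansion in Lemma~\ref{lem:contraction} together with the AM--GM Claim~\ref{claim:AM-GM} supplies. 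Second, the stated remainder bound $\mathcal R=O(\gamma^2BR)\cdot I$ is not valid and, even granted, is not absorbable. With nearly aligned covariates of norm $\norm{\Xtt{}{-i}}^2\approx R$ (allowed under the event), the $\gamma^2$-level cross terms have operator norm of order $\gamma^2B^2R^2$, which under $\gamma RB\leq c/6$ is a constant, not $o(\gamma B\lmin{G})$; and comparing any bound of the form $O(\gamma^2BR)\cdot I$ against the first-order contraction $\gamma B\lmin{G}$ would require $\gamma R\ll\lmin{G}$, which your hypotheses do not give (note $\gamma B\lmin{G}$ is scale-invariant while $\gamma^2 BR$ is not). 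The absorbable statement keeps the remainder as a constant fraction of the first-order matrix itself, namely $\Htt{\top}{0}{B-1}\Htt{}{0}{B-1}\preceq I-4\gamma\bigl(1-\tfrac{2\gamma BR}{1-4\gamma BR}\bigr)\sum_i\Xtt{}{-i}\Xtt{\top}{-i}$ (Lemma~\ref{lem:contraction}); only then does taking expectations with the indicator, using $\Ex{\Xtt{}{-i}\Xtt{\top}{-i}\indt{0}{0}}\succeq G-O(\sqrt{M_4}\prbndsq)I$ and the hypothesis on $\prbndsqinv$, deliver $(\star)$ with contraction of order $\gamma B\lmin{G}$. With that lemma substituted for your remainder estimate, the rest of your argument goes through; without it, the absorption step as written does not follow.
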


See section~\ref{sec:proof_thm_last_bias} for the proof. 

\subsection{Bias of the Tail-Averaged Iterate}
\label{subsec:tail_bias}
 We define the tail averaged bias as 
 \begin{\Ieee}{LLL}
 \label{eq:bias_tail}
 \Anabt=\frac{1}{N-a}\sum_{t=a+1}^N\Abt{t-1}{B} \Ieeen
 \end{\Ieee}

%We begin with a lemma similar to lemma~\ref{lem:avg_var_on_D}
%\begin{lemma}
%\label{lem:avg_bias_on_D}
%\begin{\Ieee}{LLL}
%\label{eq:avg_bias_on_D}
%&&\Ex{\gram{\Anoa{a}{N}}\ind{0}{t-1}}\\
%&\preceq & \Ex{\gram{\Anoa{a}{N}}\indh{0}{t-1}} +\norm{A_0-A}^2\prbnd I\Ieeen
%\end{\Ieee}
%\end{lemma}
%\begin{proof}
%Similar to lemma~\ref{lem:avg_var_on_D} we have
%\begin{\Ieee}{LLL}
%\label{eq:avg_bias_on_D_1}
%&&\Ex{\gram{\Anoa{a}{N}}\ind{0}{N-1}}\\
%&=&\Ex{\gram{\Anoa{a}{N}}\indh{0}{N-1}}\\
%&&-\Ex{\gram{\Anoa{a}{N}}\ind{0}{N-1}\indtc{0}{N-1}}\Ieeen
%\end{\Ieee}
%
%Now
%\begin{\Ieee}{LLL}
%\label{eq:avg_bias_on_D_2}
%\Ex{\gram{\Anoa{a}{N}}\ind{0}{N-1}\indtc{0}{N-1}}\\
%\preceq \frac{1}{N-a}\sum_{t=a+1}^N \Ex{\gram{\left(\Ab{t}{B}-\A\right)}\ind{0}{N-1}\indtc{0}{N-1}}\Ieeen
%\end{\Ieee}
%
%But 
%\begin{\Ieee}{LLL}
%\label{eq:avg_bias_on_D_3}
%&&\Ex{\norm{\gram{\left(\Ab{t}{B}-\A\right)}\ind{0}{N-1}\indtc{0}{N-1}}}\\
%&\leq & \norm{A_0-A}^2\prbnd\Ieeen
%\end{\Ieee}
%Hence the lemma is proved.
%
%
%\end{proof}
% 
%In light of the Lemma above, it is sufficient to bound
%$$\Ex{\gram{\Anoa{a}{N}}\indh{0}{t-1}}\,.$$

\begin{theorem}
\label{th:4}
Let $\gamma R B\leq \frac{c}{6}$ for some $0<c<1$ and $B$ such that $\gamma R\leq \frac{1}{2}$. There exist constants $c_1,c_2>0$ such that if $T=N(B+u)$ satisfies $\prbndsqinv>c_1\frac{\sqrt{M_4}}{\sigma_{\min}(G)}$ then for $a=\theta N$ with $0<\theta<1$ we have
\begin{\Ieee}{LLL}
\label{eq:th4}
\norm{\Ex{\gram{\Anoat{a}{N}}\indt{0}{N-1}}}\leq \\
 c_2\frac{1}{B(N-a)}\frac{e^{-c_3 B\gamma \sigma_{\min}(G)a}}{\gamma \sigma_{\min}(G)}\norm{A_0-\A}^2 \Ieeen
\end{\Ieee}
\end{theorem}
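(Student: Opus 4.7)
The plan is to reduce Theorem~\ref{th:4} directly to the per-iterate bias bound already established in Theorem~\ref{th:3} via triangle inequality, Cauchy--Schwarz, and a geometric series summation. The hypotheses on $\gamma$, $R$, $B$, and $\prbndsqinv$ in Theorem~\ref{th:3} match those in Theorem~\ref{th:4}, so Theorem~\ref{th:3} is directly available.

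The first step is to pass from the matrix operator norm to a scalar bound. Recall $\Anoat{a}{N} = \frac{1}{N-a}\sum_{t=a+1}^N \Atto{t-1}{B}$; since $\Ex{\gram{\Anoat{a}{N}}\indt{0}{N-1}}$ is PSD, its operator norm equals $\sup_{\|v\|=1}\Ex{\|\Anoat{a}{N}v\|^2 \indt{0}{N-1}}$. Fix a unit vector $v$. The triangle inequality followed by Cauchy--Schwarz (equivalently, the convexity of $x\mapsto x^2$) gives
$$\|\Anoat{a}{N}v\|^2 \;\leq\; \frac{1}{(N-a)^2}\Bigl(\sum_{t=a+1}^N \|\Atto{t-1}{B}v\|\Bigr)^2 \;\leq\; \frac{1}{N-a}\sum_{t=a+1}^N \|\Atto{t-1}{B}v\|^2.$$
Since $\tilde{\cd}^{0,N-1}\subseteq\tilde{\cd}^{0,t-1}$ for every $t\leq N$, we have $\indt{0}{N-1}\leq \indt{0}{t-1}$ pointwise, and therefore
$$\Ex{\|\Atto{t-1}{B}v\|^2 \indt{0}{N-1}} \;\leq\; v^\top \Ex{\gram{\Atto{t-1}{B}}\indt{0}{t-1}} v \;\leq\; \|A_0-\A\|^2\bigl(1-c_2\gamma B\sigma_{\min}(G)\bigr)^t,$$
where the last inequality is exactly Theorem~\ref{th:3}.

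Summing the resulting geometric series and using $1-x\leq e^{-x}$ yields
$$\sum_{t=a+1}^{N}\bigl(1-c_2\gamma B\sigma_{\min}(G)\bigr)^t \;\leq\; \frac{(1-c_2\gamma B\sigma_{\min}(G))^{a+1}}{c_2\gamma B\sigma_{\min}(G)} \;\leq\; \frac{e^{-c_2\gamma B\sigma_{\min}(G)\,a}}{c_2\gamma B\sigma_{\min}(G)}.$$
Chaining the three bounds and taking the supremum over unit vectors $v$ gives
$$\bigl\|\Ex{\gram{\Anoat{a}{N}}\indt{0}{N-1}}\bigr\| \;\leq\; \frac{\|A_0-\A\|^2}{N-a}\cdot\frac{e^{-c_2\gamma B\sigma_{\min}(G)\,a}}{c_2\gamma B\sigma_{\min}(G)},$$
which, after relabeling constants (identifying $c_3$ with $c_2$ in the exponent and $c_2$ with $1/c_2$ in the prefactor), is precisely the statement of Theorem~\ref{th:4}. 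All of the probabilistic heavy lifting was carried out in Theorem~\ref{th:3}, so no novel concentration argument is required here. The only technical subtlety is the indicator monotonicity used in Step~2 to align conditioning on the global event $\tilde{\cd}^{0,N-1}$ with Theorem~\ref{th:3}'s local event $\tilde{\cd}^{0,t-1}$; this is a routine set-theoretic observation, so I do not anticipate any serious obstacle beyond algebra.
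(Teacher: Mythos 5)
Your proposal is correct and follows essentially the same route as the paper: reduce the averaged bias to the per-buffer bias via the Jensen/Cauchy--Schwarz inequality $\gram{\Anoat{a}{N}}\preceq \frac{1}{N-a}\sum_{t=a+1}^N\gram{\Atto{t-1}{B}}$, use the indicator monotonicity $\indt{0}{N-1}\le\indt{0}{t-1}$, invoke Theorem~\ref{th:3}, and sum the geometric series with $1-x\le e^{-x}$. The only cosmetic difference is that you phrase the PSD comparisons through quadratic forms with unit vectors, whereas the paper states them directly in the semidefinite order (via Lemma~\ref{lem:5}); the content is identical.
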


See section~\ref{sec:proof_prop_avg_bias} for the proof.

\section{Prediction Error}
\label{sec:pred_loss}
Recall the definition of the prediction error at stationarity. 
\begin{equation}
\label{eq:pred_loss_def1}
\losspred(\hat A;\A,\mu) := \mathbb{E}_{X_t \sim \pi}\|X_{t+1}-\hat A X_t\|^2
\end{equation}
where $\pi$ is the stationary distribution. 

Note that the prediction loss is a function of possibly random estimator $\hat A$. Hence the expectation in \eqref{eq:pred_loss_def1} is only with respect to the process $(X_t)$ (which is considered independent of $\hat A$).  Letting $G=\Ex{X_t X_t^{\top}} $ as the covariance matrix of the process at stationarity, we can write
\begin{\Ieee}{LLL}
\label{eq:pred_loss_def2}
\losspred(\hat A;\A,\mu)= \tr(G\gram{(\hat A-\A)})+\tr(\Sigma)\Ieeen
\end{\Ieee}

We are interested in bounding the expected prediction loss of the estimator which is the average iterate $\Ana$ of our algorithm $\sgdber$ (with $a=\theta N$). Note that $\Ana=\Anab+\Anav$ where the superscripts $b$ and $v$ correspond to bias and variance respectively (c.f.~\eqref{eq:tail_bias})

Hence
\begin{\Ieee}{LLL}
\label{eq:pred_bias_var_decomp}
\Ex{\losspred(\Ana;\A,\mu)}&=&\tr(\Sigma)+\tr\left(G^{1/2}\Ex{\gram{\left(\Ana-\A\right)}}G^{1/2}\right)\\
&\leq & \tr(\Sigma)+2\tr\left(G^{1/2}\Ex{\gram{\Anva{a}{N}}}G^{1/2}\right)\\
&&+2\tr\left(G^{1/2}\Ex{\gram{\Anoa{a}{N}}}G^{1/2}\right)\Ieeen
\end{\Ieee}

But we will only bound $\Ex{\losspred(\Ana;\A,\mu)\ind{0}{N-1}}$ so that we have a tight upper bound on the conditional expectation of $\losspred$ over a high probability event. 

As before we will just focus on the prediction error obtained using the algorithmic iterates from the coupled process, i.e., we will bound $\Ex{\losspred(\Anat;\A,\mu)\indt{0}{N-1}}$
\subsection{Variance of prediction error}
\label{sec:pred_var}

In this section we will focus on analyzing the variance part of the expected prediction loss under the coupled process
\begin{\Ieee}{LLL}
\label{eq:pred_var}
\clt^v=\tr\left(G^{1/2}\Ex{\gram{\Anvat{a}{N}}\indt{0}{N-1}}G^{1/2}\right)\Ieeen
\end{\Ieee}
where $T=N(B+u)$.

We begin with few lemmata which would be useful in bounding $\clt^v$. Recall  the definition of $\ch$
\begin{\Ieee}{LLL}
\label{eq:cal_H}
\ch=\Ex{\prod_{j=0}^{B-1}\left(I-2\gamma \Xtt{0}{-j} \Xtttr{0}{-j}\right)1[\cdt^{0}_{-0}]}\Ieeen
\end{\Ieee}
with $\tilde X_0$ sampled from the stationary distribution $\pi$.

\begin{lemma}
\label{lem:H_plus_HT_bound}
Let $\gamma\leq \frac{1}{8RB}$. Then 
\begin{\Ieee}{LLL}
\label{eq:H_plus_HT_bound}
\ch +\ch^{\top} \preceq 2\left(I-\frac{4}{3}\gamma B G\right)+\frac{8}{3}\gamma B\sqrt{M_4}\prbndsq I \Ieeen
\end{\Ieee} 
where $M_4=\Ex{\norms{\Xtt{0}{-0}}^4}$. 
For simplicity, we just say that for $\gamma RB<\frac{c}{4}$ with $0<c<1$ then
\begin{\Ieee}{LLL}
\label{eq:H_plus_HT_bound_0}
\ch +\ch^{\top} \preceq 2\left(I-c_1\gamma B G\right)+c_2\gamma B\sqrt{M_4}\prbndsq I\Ieeen
\end{\Ieee}
for some absolute constants $c_1,c_2>0$.
\end{lemma}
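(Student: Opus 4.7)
The plan is to expand $\ch+\ch^\top$ via the telescoping identity for $\prod_{j=0}^{B-1}(I-2\gamma\Xtt{0}{-j}\Xtttr{0}{-j})$, extract the leading contraction $-\tfrac{8}{3}\gamma BG$ by exploiting the step-size smallness $\gamma RB\leq 1/8$, and then pay a small Cauchy--Schwarz price to remove the indicator. Write $W_j := \Xtt{0}{-j}\Xtttr{0}{-j}$, $P_j := I-2\gamma W_j$, and $L_j := P_0 P_1\cdots P_{j-1}$ (with $L_0=I$, so $L_B$ is the product inside $\ch$). The standard telescoping $I-L_B = \sum_{j=0}^{B-1} L_j(I-P_j) = 2\gamma\sum_j L_j W_j$ symmetrizes to
\begin{equation*}
2I-(L_B+L_B^\top) = 2\gamma\sum_{j=0}^{B-1}\bigl(L_j W_j + W_j L_j^\top\bigr).
\end{equation*}

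Under $\cdt^{0}_{-0}$, every $\|\Xtt{0}{-j}\|^2\leq R$, so $\gamma\leq 1/(8RB)$ yields $\|I-P_j\|\leq 1/(4B)$ and $0\preceq P_j\preceq I$; iterating the same telescoping, $\|I-L_j\|\leq 2\gamma jR\leq 1/4$. Splitting $L_j=I+(L_j-I)$ decomposes each bracket as $2W_j$ plus a cross-term $(L_j-I)W_j+W_j(L_j-I)^\top$. I would bound the cross-term from below using the rank-one Young inequality $uv^\top+vu^\top \succeq -\epsilon^{-1}uu^\top - \epsilon vv^\top$ applied with $u=(L_j-I)\Xtt{0}{-j}$, $v=\Xtt{0}{-j}$, and $\epsilon$ of order $\gamma BR$. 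Summing over $j$ and collecting the resulting geometric series in $\gamma RB$, this yields on $\cdt^{0}_{-0}$:
\begin{equation*}
2I-(L_B+L_B^\top) \succeq \tfrac{8}{3}\gamma\sum_{j=0}^{B-1} W_j.
\end{equation*}

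Finally, I would take expectation and remove the indicator. Stationarity of the coupled process gives $\Ex{W_j}=G$ for every $j$, so
\begin{equation*}
\ch+\ch^\top \preceq 2I - \tfrac{8}{3}\gamma B G + \tfrac{8}{3}\gamma\sum_{j=0}^{B-1}\Ex{W_j\,\mathbf{1}\bigl[(\cdt^{0}_{-0})^c\bigr]}.
\end{equation*}
Cauchy--Schwarz combined with Lemma~\ref{lem:2} bounds each remaining expectation in operator norm by $\sqrt{\Ex{\|\Xtt{0}{-j}\|^4}\,\Pb{(\cdt^{0}_{-0})^c}}\leq \sqrt{M_4}/T^{\alpha/2}$, producing the claimed correction $\tfrac{8}{3}\gamma B\sqrt{M_4}\,T^{-\alpha/2}\,I$.

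The main technical subtlety is the Young step: the per-$j$ bound $L_j W_j + W_j L_j^\top\succeq \tfrac{4}{3}W_j$ fails in PSD order (simple two-dimensional examples witness this, because $(L_j-I)W_j(L_j-I)^\top$ is rank-one but in a different direction than $W_j$), so the argument must exploit the aggregation over $j$ together with the inductive telescoping $L_j-I = -2\gamma\sum_{i<j}L_iW_i$ so that the cross-terms can be collected into a convergent series controlled by $\gamma BR\leq 1/8$. The indicator removal and Cauchy--Schwarz step are routine, and the slack between the constant $\tfrac{4}{3}$ stated and what the argument actually produces absorbs the lower-order errors from this aggregation.
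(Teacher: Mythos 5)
Your outer skeleton --- symmetrized telescoping, an almost-sure PSD bound under $\cct^{0}_{-0}$-type events, then expectation using stationarity ($\Ex{W_j}=G$) and Cauchy--Schwarz with Lemma~\ref{lem:2} to remove the indicator at cost $\sqrt{M_4}T^{-\alpha/2}$ --- is the same as the paper's (the paper skips the proof, saying it combines Lemmas~\ref{lem:contraction} and~\ref{lem:almost_sure_contraction}), and your first and last steps are fine. The gap is in the middle step, and it is not a constants issue. After writing $L_jW_j+W_jL_j^\top=2W_j+(u_jv_j^\top+v_ju_j^\top)$ with $u_j=(L_j-I)\Xtt{0}{-j}$, $v_j=\Xtt{0}{-j}$, Young's inequality leaves you with $\epsilon^{-1}\sum_j u_ju_j^\top=\epsilon^{-1}\sum_j(L_j-I)W_j(L_j-I)^\top$. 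Because $L_j-I$ rotates the rank-one direction, this is not dominated by any multiple of $\sum_j W_j$; the only almost-sure controls available are through the identity, e.g. $u_ju_j^\top\preceq\|u_j\|^2 I\leq 4\gamma^2B^2R^3\,I$ (or, after substituting $L_j-I=-2\gamma\sum_{i<j}L_iW_i$, via the telescoping bound $\sum_i L_iW_iL_i^\top\preceq I/(4\gamma)$, which is again identity-proportional). Multiplying by the prefactor $2\gamma$ and summing over $j\leq B$, the error is of order $\epsilon^{-1}(\gamma BR)^3 I$, i.e. with your choice $\epsilon\asymp\gamma BR$ a term of order $(\gamma BR)^2 I$ --- a \emph{constant} multiple of $I$. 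This cannot be absorbed into the slack in the coefficient of $G$: the lemma is only useful because its sole $I$-term is $O(\gamma B\sqrt{M_4}T^{-\alpha/2})$; in its application (Lemma~\ref{lem:predloss_iden_vari}) the bound is compared against $\gamma B\sigma_{\min}(G)$, and since $R\gtrsim\tr(G)\log T\gg\sigma_{\min}(G)$ in general, an $I$-term of size $(\gamma BR)^2$ (or $\gamma BR^2$ after rescaling) swamps the contraction entirely. Your own observation that the per-$j$ bound fails in PSD order is correct, but the proposed aggregation over $j$ does not repair it: any one-level split that keeps a rotated vector $L_i\Xtt{0}{-i}$ on one side of the rank-one term is stuck with identity-sized errors.

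The device that closes the gap is the one you would be forced into if you pushed the inductive telescoping all the way: expand the product completely into terms $W_{i_1}W_{i_2}\cdots W_{i_k}$ and bound them \emph{in quadratic form} as in Claim~\ref{claim:AM-GM}, namely $|x^\top W_{i_1}\cdots W_{i_k}x|=|\langle x,\Xtt{0}{-i_1}\rangle|\bigl(\prod_{l=1}^{k-1}|\langle\Xtt{0}{-i_l},\Xtt{0}{-i_{l+1}}\rangle|\bigr)|\langle\Xtt{0}{-i_k},x\rangle|\leq\tfrac{R^{k-1}}{2}\bigl(x^\top W_{i_1}x+x^\top W_{i_k}x\bigr)$, so every higher-order term is dominated by a multiple of $W_{i_1}+W_{i_k}$, never by $I$. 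Summing, the corrections form a geometric series in $\gamma BR$ multiplying $\sum_jW_j$ itself (this is where $\gamma\leq\tfrac{1}{8RB}$ enters), so they only degrade the constant in front of $\gamma BG$, and the only $I$-term comes from the indicator removal, exactly as in your final step. An even quicker route to the stated ``$c_1,c_2$'' form: since $(I-\Htt{0}{0}{B-1})^\top(I-\Htt{0}{0}{B-1})\succeq 0$, one has $\Htt{0}{0}{B-1}+\Htttr{0}{0}{B-1}\preceq I+\Htttr{0}{0}{B-1}\Htt{0}{0}{B-1}$; multiply by the indicator, apply the upper bound of Lemma~\ref{lem:contraction} and then the expectation step of Lemma~\ref{lem:almost_sure_contraction}, which yields the claimed inequality with absolute constants.
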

The proof is similar to the combined proofs of Lemmas~\ref{lem:contraction} and~\ref{lem:almost_sure_contraction}. We therefore skip it.

Next we will bound $\tr(G(I-\ch)^{-1})$. 

\begin{lemma}
\label{lem:predloss_iden_vari}%prediction error for identity last iterate variance
Let $\gamma RB<\frac{c_1}{4}$ with $0<c_1<1$. Then for $T$ such that $\prbndsqinv>c_2\frac{\sqrt{M_4}}{\sigma_{\min}(G)}$ we have
\begin{\Ieee}{LLL}
\label{eq:predloss_iden_vari}
\tr\left(G(I-\ch)^{-1}\right)\leq c\frac{d}{\gamma B}\Ieeen
\end{\Ieee}
for some absolute constant $c>0$.
\end{lemma}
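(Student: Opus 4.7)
The plan is to reduce the bound on $\tr(G(I-\ch)^{-1})$ to the easier bound on $\tr(G(I-\Psi)^{-1})$, where $\Psi := \tfrac12(\ch+\ch^\top)$ is the symmetric part of $\ch$. The point is that Lemma~\ref{lem:H_plus_HT_bound} controls only $\ch+\ch^\top$, and once we are inside the symmetric cone a one-line Loewner argument finishes the job; the real work is showing that the antisymmetric part $A := \tfrac12(\ch-\ch^\top)$ cannot hurt us.

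First I would invoke Lemma~\ref{lem:H_plus_HT_bound} to obtain $\Psi \preceq I - c_1\gamma B G + c_2\gamma B\sqrt{M_4}\,T^{-\alpha/2}I$. Under the standing hypothesis $T^{\alpha/2} > c\sqrt{M_4}/\sigma_{\min}(G)$ with $c$ large enough, the perturbation is dominated via $\sigma_{\min}(G) I \preceq G$, yielding $I-\Psi \succeq c_3\gamma B G \succ 0$. Hence $(I-\Psi)^{-1} \preceq \tfrac{1}{c_3\gamma B}G^{-1}$, so $\tr(G(I-\Psi)^{-1}) \leq d/(c_3\gamma B)$, which is exactly the target bound. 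It remains to prove $\tr(G(I-\ch)^{-1}) \leq \tr(G(I-\Psi)^{-1})$.

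For this step, set $L := (I-\Psi)^{1/2}$ (symmetric, invertible) and factor $I - \ch = (I-\Psi) - A = L\,(I - B_0)\,L$, where $B_0 := L^{-1}AL^{-1}$ is antisymmetric. Setting $\tilde G := L^{-1}GL^{-1}$, a cyclic trace manipulation gives $\tr(G(I-\ch)^{-1}) = \tr(\tilde G(I-B_0)^{-1})$. Now I use the identity $(I-B_0)^{-1} = (I-B_0^2)^{-1}(I+B_0)$, valid because $B_0$ commutes with $B_0^2$. Since $(I-B_0^2)^{-1}$ is symmetric and commutes with $B_0$, the product $(I-B_0^2)^{-1}B_0$ is antisymmetric; as the trace of a symmetric matrix against an antisymmetric matrix vanishes,
\begin{align*}
\tr\bigl(\tilde G(I-B_0)^{-1}\bigr) \;=\; \tr\bigl(\tilde G(I-B_0^2)^{-1}\bigr).
\end{align*}
Antisymmetry of $B_0$ gives $-B_0^2 = B_0^\top B_0 \succeq 0$, so $I-B_0^2 \succeq I$ and $(I-B_0^2)^{-1} \preceq I$. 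Thus $\tr(G(I-\ch)^{-1}) \leq \tr(\tilde G) = \tr(G(I-\Psi)^{-1})$, completing the reduction.

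The main obstacle is handling the non-symmetry of $\ch$ in the third paragraph: a naive bound via $\|(I-\ch)^{-1}\| \leq \|L^{-1}\|^2$ would leak an extra factor of $\kappa(G)$ and destroy the sharp $d/(\gamma B)$ scaling. The antisymmetric-factorization identity above is precisely what avoids this loss and lets the symmetric-part bound carry over without a condition-number penalty.
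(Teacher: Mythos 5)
Your proof is correct, and it handles the asymmetry of $\ch$ by a genuinely different mechanism than the paper, even though both arguments ultimately rest on the same key input (Lemma~\ref{lem:H_plus_HT_bound} bounding $\ch+\ch^{\top}$). The paper works with $Q=G^{-1}-G^{-1/2}\ch G^{-1/2}$, uses $\tr(Q^{-1})\leq d\norm{Q^{-1}}=d/\sigma_{\min}(Q)$, and then lower-bounds the smallest \emph{singular} value of the non-symmetric $Q$ through the AM--GM relation $\tfrac{Q^{\top}Q}{\theta}+\theta I\succeq \sym{Q}$ with the tuned choice $\theta=\Theta(\gamma B)$, which gives $\sigma_{\min}(Q)\gtrsim\gamma B$ and hence the $d/(\gamma B)$ bound. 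You instead prove the clean intermediate inequality $\tr\left(G(I-\ch)^{-1}\right)\leq \tr\left(G\left(I-\tfrac{\ch+\ch^{\top}}{2}\right)^{-1}\right)$ via the factorization $I-\ch=L(I-B_0)L$ with $L=(I-\Psi)^{1/2}$ and $B_0$ skew-symmetric, using that the trace of a symmetric matrix against the skew part vanishes and that $-B_0^2\succeq 0$ only helps; the finish is then a one-line Loewner bound $(I-\Psi)^{-1}\preceq \tfrac{1}{c\gamma B}G^{-1}$ (your invertibility of $I-\ch$ even falls out of the factorization, so you do not need $\norm{\ch}<1$ separately). Each route buys something: the paper's singular-value argument also yields the operator-norm estimate $\norm{G^{1/2}(I-\ch)^{-1}G^{1/2}}\lesssim \tfrac{1}{\gamma B}$, which is reused verbatim in Lemma~\ref{lem:pred_loss_delta_vari}, whereas your trace-only reduction does not directly give that; on the other hand, your symmetrization inequality is a reusable general fact (any PSD $G$, any $\ch$ with $I-\sym{\ch}/1\succ 0$) and avoids the factor-of-two/AM--GM bookkeeping, while matching the paper's avoidance of any $\kappa(G)$ loss.
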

\begin{proof}
First note that
\begin{\Ieee}{LLL}
\label{eq:predloss_iden_vari_1}
\tr\left(G(I-\ch)^{-1}\right)) &=& \tr\left(G^{1/2}(I-\ch)^{-1}G^{1/2}\right))\\
&=&\tr\left(\left(G^{-1}-G^{-1/2}\ch G^{-1/2}\right)^{-1}\right)\\
&\leq & d\norm{\left(G^{-1}-G^{-1/2}\ch G^{-1/2}\right)^{-1}}\\
&=&\frac{d}{\sigma_{\min}\left(G^{-1}-G^{-1/2}\ch G^{-1/2}\right)}\Ieeen
\end{\Ieee}

Let $Q=\left(G^{-1}-G^{-1/2}\ch G^{-1/2}\right)$. Let $\sym{Q}=Q+Q^{\top}$. We will relate $\sigma_{\min}(Q)$ with $\sigma_{\min}\left(\frac{\sym{Q}}{2}\right)$. From AM-GM inequality, for any $\theta>0$, we have
\begin{\Ieee}{LLL}
\label{eq:AM_GM_1}
\frac{Q^{\top} Q}{\theta}+\theta I\succeq \sym{Q}\Ieeen
\end{\Ieee}

Also
\begin{equation}
\label{eq:sigma_min}
\sigma_{\min}^2(Q)=\inf_{x:\norm{x}=1}x^{\top} Q^{\top} Qx
\end{equation}

Further, from lemma~\ref{lem:H_plus_HT_bound} we have
\begin{\Ieee}{LLL}
\label{eq:predloss_iden_vari_2}
\sym{Q}&= & G^{-1}-G^{-1/2}\frac{\ch+\ch^T}{2}G^{-1/2}\\
&\succeq & c_1\gamma B I -c_2\gamma B\sqrt{M_4}\prbndsq G^{-1}\\
&\succeq & c_1\gamma B I -c_2\gamma B\sqrt{M_4}\prbndsq \frac{1}{\sigma_{\min}(G)}I\Ieeen
\end{\Ieee}

Hence combining equations \eqref{eq:AM_GM_1}, \eqref{eq:sigma_min} and \eqref{eq:predloss_iden_vari_2} we have: 
\begin{\Ieee}{LLL}
\label{eq:predloss_iden_vari_3}
\frac{\sigma_{\min}^2(Q)}{\theta}+\theta \succeq c_1\gamma B  -c_2\gamma B\sqrt{M_4}\prbndsq \frac{1}{\sigma_{\min}(G)}.\Ieeen
\end{\Ieee}

Now choosing $\theta=\frac{1}{2}c_1\gamma B$ we get: 
\begin{\Ieee}{LLL}
\label{eq:predloss_iden_vari_4}
\sigma_{\min}^2(Q) \geq \frac{c_1^2}{4}\gamma^2 B^2  -\frac{c_2 c_1}{2}\gamma^2 B^2\sqrt{M_4}\prbndsq \frac{1}{\sigma_{\min}(G)}.\Ieeen
\end{\Ieee}
Now choose $T$ large enough such that $\frac{c_2 c_1}{2}\sqrt{M_4}\prbndsq \frac{1}{\sigma_{\min}(G)}\leq \frac{c_1^2}{8}$. 
Then, $\sigma_{\min}^2(Q) \geq c_3\gamma^2 B^2$, for some constant $c_3>0$. Hence from \eqref{eq:predloss_iden_vari_1},
\begin{\Ieee}{LLL}
\label{eq:predloss_iden_vari_6}
\tr\left(G(I-\ch)^{-1}\right) \leq c_4\frac{d}{\gamma B}.
\end{\Ieee}
\end{proof}

Next we bound $\tr(\Delta (I-\ch)^{-1} G)$ for any symmetric matrix $\Delta$. Let $\kappa(G)=\frac{\sigma_{\max(G)}}{\sigma_{\min}(G)}$ denote the condition number of $G$. 

\begin{lemma}
\label{lem:pred_loss_delta_vari}
Let $\gamma R B\leq \frac{c_1}{4}$ with $0<c_1<1$. Then for $T$ such that $\prbndsqinv>c_2\frac{\sqrt{M_4}}{\sigma_{\min}(G)}$ we have
\begin{\Ieee}{LLL}
\label{eq:predloss_delta_vari}
\left |\tr\left(\Delta(I-\ch)^{-1}G\right)\right|\leq c\frac{d}{\gamma B}\norm{\Delta}\sqrt{\kappa(G)}\Ieeen
\end{\Ieee}
for some absolute constant $c>0$.
\end{lemma}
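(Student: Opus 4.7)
The plan is to reuse the symmetrization done in the proof of Lemma~\ref{lem:predloss_iden_vari}, which already provides everything we need about the spectrum of $(I-\ch)$ relative to $G$. Define
$$Q \;:=\; G^{-1} - G^{-1/2}\ch G^{-1/2} \;=\; G^{-1/2}(I-\ch)G^{-1/2}.$$
From the proof of Lemma~\ref{lem:predloss_iden_vari} (equation~\eqref{eq:predloss_iden_vari_4} together with the assumption $\prbndsqinv>c_2\sqrt{M_4}/\sigma_{\min}(G)$), we already know that $\sigma_{\min}(Q)\geq c\,\gamma B$ for an absolute constant $c>0$. The key observation is that this factorization inverts cleanly: $(I-\ch)^{-1} = G^{-1/2}Q^{-1}G^{-1/2}$, and hence
$$(I-\ch)^{-1}G \;=\; G^{-1/2}Q^{-1}G^{1/2}.$$

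Next I would apply cyclicity of the trace and the standard duality $|\tr(XY)|\leq \|X\|_*\,\|Y\|$ between nuclear and operator norms:
$$\bigl|\tr\!\bigl(\Delta(I-\ch)^{-1}G\bigr)\bigr| \;=\; \bigl|\tr\!\bigl(G^{1/2}\Delta G^{-1/2}\,Q^{-1}\bigr)\bigr| \;\leq\; \bigl\|G^{1/2}\Delta G^{-1/2}\bigr\|_*\,\bigl\|Q^{-1}\bigr\|.$$

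For the first factor, since $G^{1/2}\Delta G^{-1/2}$ has rank at most $d$ we use $\|X\|_*\leq d\|X\|$ and submultiplicativity to get
$$\bigl\|G^{1/2}\Delta G^{-1/2}\bigr\|_* \;\leq\; d\,\bigl\|G^{1/2}\bigr\|\,\|\Delta\|\,\bigl\|G^{-1/2}\bigr\| \;=\; d\,\|\Delta\|\,\sqrt{\kappa(G)}.$$
For the second factor, $\|Q^{-1}\|=1/\sigma_{\min}(Q)\leq 1/(c\gamma B)$ by the bound inherited from Lemma~\ref{lem:predloss_iden_vari}. Combining these yields
$$\bigl|\tr\!\bigl(\Delta(I-\ch)^{-1}G\bigr)\bigr| \;\leq\; \frac{d}{c\,\gamma B}\,\|\Delta\|\,\sqrt{\kappa(G)},$$
which is exactly the claim of the lemma.

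The main obstacle is essentially bookkeeping: one has to be careful to route the $G^{1/2}$ and $G^{-1/2}$ factors so that (a) only the ratio $\sqrt{\sigma_{\max}(G)/\sigma_{\min}(G)}$ appears (rather than $\kappa(G)$), and (b) the remaining operator-norm factor is precisely $\|Q^{-1}\|$, for which the quantitative lower bound $\sigma_{\min}(Q)\gtrsim \gamma B$ was already established. Once the identity $(I-\ch)^{-1}G = G^{-1/2}Q^{-1}G^{1/2}$ is in hand, the nuclear/operator norm duality handles the rest without any further probabilistic work, so this lemma is really a corollary of the spectral estimate on $Q$ derived in the proof of Lemma~\ref{lem:predloss_iden_vari}.
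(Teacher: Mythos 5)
Your proof is correct and is essentially the paper's own argument: the paper likewise rewrites the trace as $\tr\bigl(G^{1/2}\Delta G^{-1/2}\,G^{1/2}(I-\ch)^{-1}G^{1/2}\bigr)$, bounds it by $d\,\|G^{1/2}\Delta G^{-1/2}\|\,\|G^{1/2}(I-\ch)^{-1}G^{1/2}\|$ (your nuclear-norm/rank step), and controls the second factor — which is exactly your $\|Q^{-1}\|$ — by $c/(\gamma B)$ via the bound $\sigma_{\min}(Q)\gtrsim \gamma B$ already established in the proof of Lemma~\ref{lem:predloss_iden_vari}. The only cosmetic difference is that you make the factorization $(I-\ch)^{-1}G=G^{-1/2}Q^{-1}G^{1/2}$ explicit before invoking trace duality, which is fine since $\sigma_{\min}$ here denotes the smallest singular value, so $\|Q^{-1}\|=1/\sigma_{\min}(Q)$ holds for the (non-symmetric) $Q$.
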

\begin{proof}
We have 
\begin{\Ieee}{LLL}
\label{eq:predloss_delta_vari_1}
\left |\tr\left(\Delta(I-\ch)^{-1}G\right)\right| &=& \left |\tr\left(G^{1/2}\Delta G^{-1/2}G^{1/2}(I-\ch)^{-1}G^{1/2}\right)\right| \\
&\leq & d\norm{G^{1/2}\Delta G^{-1/2}}\norm{G^{1/2}(I-\ch)^{-1}G^{1/2}}\\
&\leq & d\sqrt{\kappa(G)}\norm{\Delta}\norm{G^{1/2}(I-\ch)^{-1}G^{1/2}}\Ieeen
\end{\Ieee}

From the proof of lemma~\ref{lem:predloss_iden_vari}, we know that 
\begin{\Ieee}{LLL}
\label{eq:predloss_delta_vari_2}
\norm{G^{1/2}(I-\ch)^{-1}G^{1/2}}\leq c\frac{1}{\gamma B}\Ieeen
\end{\Ieee}
for $T$ satisfying the condition the statement of the lemma. 

Hence: 
\begin{\Ieee}{LLL}
\label{eq:predloss_delta_vari_3}
\left |\tr\left(\Delta(I-\ch)^{-1}G\right)\right| \leq c\sqrt{\kappa(G)}\norm{\Delta}\frac{d}{\gamma B}\Ieeen
\end{\Ieee}

\end{proof}
Our goal is to bound $\tr(\Vt{t-1}(I-\ch)^{-1}G)$. From proposition~\ref{prop:1} we can decompose $\Vt{t-1}$ as: 
\begin{equation}
\label{eq:Vt1_decomp}
\Vt{t-1}=\gamma\tr(\Sigma)I  + (\Vt{t-1}-\gamma \tr(\Sigma)I),
\end{equation}
and hence,
\begin{equation}
\label{eq:predloss_tr_decomp}
\tr(\Vt{t-1}(I-\ch)^{-1}G)=\gamma\tr(\Sigma)\tr((I-\ch)^{-1}G)+\tr\left((\Vt{t-1}-\gamma\tr(\Sigma))(I-\ch)^{-1}G\right).
\end{equation}
To bound the second term in \eqref{eq:predloss_tr_decomp} we want to use lemma~\ref{lem:pred_loss_delta_vari}. Hence we need to bound the norm of $\Vt{t-1}-\gamma\tr(\Sigma)$.

\begin{lemma}
\label{lem:Vt1_minus_I}
Let $\gamma \leq \min\left\{\frac{c}{4RB},\frac{1}{2R}\right\}$ for $0<c<1$. Then there are constants $c_1,c_2,c_3>0$ such that for $\prbndsqinv>c_1\frac{\sqrt{M_4}}{\sigma_{\min}(G)}$ we have
\begin{\Ieee}{LLL}
\label{eq:Vt1_minus_I}
\norm{\Vt{t-1}-\gamma\tr(\Sigma)}\leq c_2\gamma d\sigma_{\max}\left[\frac{1}{B}+\left(1-c_3\gamma B\sigma_{\min}(G)\right)^{t}\right] \Ieeen
\end{\Ieee}
for some constant $c_1>0$.
\end{lemma}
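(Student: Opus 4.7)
\textbf{Proof proposal for Lemma~\ref{lem:Vt1_minus_I}.}

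The plan is to deduce the bound directly from the two-sided PSD control on $\Vt{t-1}$ provided by Proposition~\ref{prop:1}, combined with the geometric decay of the second-moment operator already established in Theorem~\ref{th:3}. Write $E_t \coloneqq \Ex{\prodHtttr{t}{t}\prodHtt{t}{t}\indt{0}{t-1}}$, which is PSD and, under the event $\indt{0}{t-1}$ and the step size $\gamma\leq 1/(2R)$, satisfies $\|E_t\|\leq 1$ since each factor $\Ptt{s}{-i}$ is a contraction. Let $C\coloneqq \gamma\tr(\Sigma)$. Subtracting $C I$ from the upper and lower bounds of Proposition~\ref{prop:1} gives
\begin{align*}
\Vt{t-1}-CI &\preceq \tfrac{\gamma^2 R\,\tr(\Sigma)}{1-\gamma R}I \;-\; \tfrac{\gamma\tr(\Sigma)}{1-\gamma R}E_t \;+\; c_1\gamma^2 d\sigma_{\max}(\Sigma)(Bt)^2\prbndsq I,\\
\Vt{t-1}-CI &\succeq -\gamma\tr(\Sigma)\,E_t \;-\; c_4\gamma^2 d\sigma_{\max}(\Sigma)(Bt)^2\prbndsq I.
\end{align*}
Since $E_t\succeq 0$ the first inequality immediately yields $\Vt{t-1}-CI \preceq \bigl(\tfrac{\gamma^2 R\,\tr(\Sigma)}{1-\gamma R}+ \text{err}\bigr)I$, and since $\|E_t\|\leq \lambda$ for whatever contraction rate $\lambda$ we prove below, the second inequality gives $\Vt{t-1}-CI \succeq -(\gamma\tr(\Sigma)\lambda+\text{err})I$. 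Combining,
$$\|\Vt{t-1}-CI\|\leq \max\!\Bigl(\tfrac{\gamma^2 R\,\tr(\Sigma)}{1-\gamma R},\,\gamma\tr(\Sigma)\|E_t\|\Bigr)+c\,\gamma^2 d\sigma_{\max}(\Sigma)(Bt)^2\prbndsq.$$

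Next I would bound the two terms. Using $\gamma \leq \tfrac{c}{4RB}$, one gets $\gamma R\leq c/(4B)\leq 1/4$, hence $\tfrac{\gamma R}{1-\gamma R}\leq \tfrac{c'}{B}$, which combined with $\tr(\Sigma)\leq d\sigma_{\max}(\Sigma)$ shows $\tfrac{\gamma^2 R\,\tr(\Sigma)}{1-\gamma R}\leq \tfrac{c_2}{2}\tfrac{\gamma d\sigma_{\max}(\Sigma)}{B}$, which produces the first summand in the statement. For $\|E_t\|$, I will invoke the same geometric contraction argument that already underlies Theorem~\ref{th:3}: that theorem establishes $\Ex{\Htttr{s}{0}{B-1}\cdots \Htttr{0}{0}{B-1}\,M\,\Htt{0}{0}{B-1}\cdots \Htt{s}{0}{B-1}\indt{0}{s}}\preceq \|M\|(1-c_3\gamma B\sigma_{\min}(G))^{s+1}I$ for $M=\gram{(A_0-\A)}$, and the proof there uses only buffer independence (which is exact in the coupled process) and the one-buffer contraction inequality $\Phi(M)\preceq \|M\|(1-c_3\gamma B\sigma_{\min}(G))I$ for the PSD operator $\Phi(M)\coloneqq \Ex{\Htttr{0}{0}{B-1}M\Htt{0}{0}{B-1}\indt{0}{-0}}$. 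Applying exactly the same iteration with $M=I$ yields $\|E_t\|\leq (1-c_3\gamma B\sigma_{\min}(G))^t$, which gives the second summand $\gamma d\sigma_{\max}(\Sigma)(1-c_3\gamma B\sigma_{\min}(G))^t$.

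Finally, the residual error $c\,\gamma^2 d\sigma_{\max}(\Sigma)(Bt)^2\prbndsq$ needs to be absorbed. Since $Bt\leq T$ and $\gamma\leq \tfrac{1}{2R}$, this residual is at most $\tfrac{c}{2}\tfrac{\gamma d\sigma_{\max}(\Sigma)}{B}\cdot \tfrac{\gamma B^2 T^2}{R}\prbndsq\cdot \tfrac{1}{\text{(const)}}$; using $\gamma B\leq c/(4R)$ the factor $\gamma B^2 T^2/R$ is at most $c T^2/(4R^2)$, and the assumption $\prbndsqinv>c_1\sqrt{M_4}/\sigma_{\min}(G)$ (equivalently, $\alpha$ chosen as in Section~\ref{sec:main_results}, in particular $\alpha\geq 22$) ensures $T^2\prbndsq\ll 1$, so this term is dominated by the $1/B$ summand. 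Adjusting $c_2$ to absorb the constants yields the stated inequality.

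The main obstacle is the rate $\|E_t\|\leq (1-c_3\gamma B\sigma_{\min}(G))^t$. The delicacy is that the naive pointwise bound $\|\prodHtt{t}{t}\|\leq 1$ only gives $\|E_t\|\leq 1$, which is not enough. One needs the \emph{in-expectation} per-buffer contraction $\Phi(I)\preceq (1-c_3\gamma B\sigma_{\min}(G))I + \mathrm{l.o.t.}$; verifying this requires the second-moment analysis of the single-buffer product of dependent matrices that is carried out in Section~\ref{sec:op_norm} (analogous to Lemma~\ref{lem:H_plus_HT_bound} but for the full second-moment operator). Once this per-buffer bound is in hand, iteration over $t$ independent buffers is routine, and the rest of the proof is an algebraic rearrangement as sketched above.
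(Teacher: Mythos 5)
Your proposal is correct and follows essentially the same route as the paper: it starts from the two-sided PSD bounds of Proposition~\ref{prop:1}, isolates the $\tfrac{\gamma R}{1-\gamma R}\lesssim \tfrac1B$ term and the $\gamma\tr(\Sigma)\norm{E_t}$ term, bounds $\norm{E_t}\leq(1-c_3\gamma B\sigma_{\min}(G))^{t}$ by iterating the single-buffer in-expectation contraction of Lemma~\ref{lem:almost_sure_contraction} over independent buffers (exactly what the paper invokes via Lemma~\ref{lem:7}/Theorem~\ref{th:3}), and absorbs the $T^{2}\prbndsq$ residual using the choice of $\alpha$. The only cosmetic difference is that you take a max of the upper and lower deviations instead of summing them, which is harmless.
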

\begin{proof}
From proposition~\ref{prop:1} we have
\begin{\Ieee}{LLL}
\label{eq:Vt1_minus_I_1}
\norm{\Vt{t-1}-\gamma\tr(\Sigma)I}&\leq &\gamma\tr(\Sigma) \frac{\gamma R}{1-\gamma R}+\\
&&c_1\gamma\tr(\Sigma)\norm{\Ex{\prodHtttr{t}{t}\prodHtt{t}{t}\indt{0}{t-1}}}\\
&&+c_2\gamma d\sigma_{\max}(\Sigma)T^2\prbndsq.\Ieeen
\end{\Ieee}
From lemma~\ref{lem:7} equation~\eqref{eq:lem7_3} we can show that
\begin{\Ieee}{LLL}
\label{eq:prod_HH^T_bound}
\norm{\Ex{\prodHtttr{t}{t}\prodHtt{t}{t}\indt{0}{t-1}}}\leq \left(1-c_3\gamma B\sigma_{\min}(G)\right)^{t}.\Ieeen
\end{\Ieee}

Hence
\begin{\Ieee}{LLL}
\label{eq:Vt1_minus_I_2}
&&\norm{\Vt{t-1}-\gamma\tr(\Sigma)I} \leq  c_4\gamma d\sigma_{\max}(\Sigma)\left[ \frac{\gamma R}{1-\gamma R}+\left(1-c_3\gamma B\sigma_{\min}(G)\right)^{t}\right]\\
&\leq & c_5\gamma d\sigma_{\max}\left[\gamma R+\left(1-c_3\gamma B\sigma_{\min}(G)\right)^{t}\right]\leq c_6\gamma d\sigma_{\max}\left[\frac{1}{B}+\left(1-c_3\gamma B\sigma_{\min}(G)\right)^{t}\right].\Ieeen
\end{\Ieee}

\end{proof}

Now we have all required ingredients for the main theorem of this section
\begin{theorem}
\label{thm:predloss_vari}
Let $\gamma \leq \min\left\{\frac{c}{4RB},\frac{1}{2R}\right\}$ for $0<c<1$. Then there are constants $c_1,c_2,c_3,c_4>0$ such that for $\prbndsqinv>c_1\frac{\sqrt{M_4}}{\sigma_{\min}(G)}$   the variance part of the expected prediction loss $\clt^v$ (defined in \eqref{eq:pred_var}) for $a=\theta N$ is bounded as
\begin{\Ieee}{LLL}
\label{eq:predloss_vari}
\clt^v &\leq & c_1 \frac{d\tr(\Sigma)}{NB(1-\theta)}+c_2 \frac{d^2\sigma_{\max}(\Sigma)}{NB(1-\theta)}\frac{\sqrt{\kappa(G)}}{B}+
c_3\frac{d^2\sigma_{\max}(\Sigma)}{(NB)^2(1-\theta)^2}\sqrt{\kappa(G)}\frac{1}{\gamma\sigma_{\min}(G)} \\
&& +c_4\gamma^2 R d\sigma_{\max}(\Sigma)T^2\prbndsq \tr(G)\Ieeen
\end{\Ieee}
 
\end{theorem}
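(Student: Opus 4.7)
The plan is to start from the PSD upper bound on $\Ex{\gram{\Anvat{a}{N}}\indt{0}{N-1}}$ given by Proposition~\ref{prop:avg_var1} and take trace against $G$. Using cyclicity of trace, the dominant contribution is
\[
\clt^v \;\leq\; \frac{2}{(N-a)^2}\sum_{t=a+1}^{N}\mathrm{Re}\,\tr\!\bigl(\Vt{t-1}(I-\ch)^{-1}G\bigr) \;+\; \text{(decay terms)} \;+\; c\,\delta\,\tr(G),
\]
where the decay terms are those involving $(I-\ch)^{-1}\ch^{N-t+1}$ from~\eqref{eq:avg_var_new1a} and $\delta$ is as in~\eqref{eq:avg_var_new2}. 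The $\delta\,\tr(G)$ piece will immediately yield the fourth (lower order) term in the theorem's bound.

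For each $t$, I will decompose $\Vt{t-1}=\gamma\tr(\Sigma)I+\bigl(\Vt{t-1}-\gamma\tr(\Sigma)I\bigr)$ as in~\eqref{eq:Vt1_decomp}. The identity part contributes $\gamma\tr(\Sigma)\tr\bigl((I-\ch)^{-1}G\bigr)$, which is bounded by $c\,d\tr(\Sigma)/B$ via Lemma~\ref{lem:predloss_iden_vari}. Summing this across $t=a+1,\dots,N$ and dividing by $(N-a)^2$ produces $c\,d\tr(\Sigma)/(NB(1-\theta))$, which matches the first term of~\eqref{eq:predloss_vari}. For the remainder $\Vt{t-1}-\gamma\tr(\Sigma)I$, I will invoke Lemma~\ref{lem:pred_loss_delta_vari} to get
\[
\bigl|\tr\!\bigl((\Vt{t-1}-\gamma\tr(\Sigma)I)(I-\ch)^{-1}G\bigr)\bigr| \;\leq\; c\,\tfrac{d}{\gamma B}\,\sqrt{\kappa(G)}\,\norm{\Vt{t-1}-\gamma\tr(\Sigma)I},
\]
then substitute the two-part bound from Lemma~\ref{lem:Vt1_minus_I}, namely
$\norm{\Vt{t-1}-\gamma\tr(\Sigma)I}\leq c\gamma d\sigma_{\max}(\Sigma)\bigl[\tfrac{1}{B}+(1-c\gamma B\sigma_{\min}(G))^{t}\bigr]$.

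Summing the resulting expression over $t$ naturally splits into two geometric sums: the $1/B$ piece gives $(N-a)\cdot c\,d^{2}\sigma_{\max}(\Sigma)\sqrt{\kappa(G)}/B^{2}$, which after dividing by $(N-a)^{2}$ matches the second term of~\eqref{eq:predloss_vari}; while $\sum_{t=a+1}^{N}(1-c\gamma B\sigma_{\min}(G))^{t}\leq 1/(c\gamma B\sigma_{\min}(G))$ yields the third term. The leftover ``decay terms'' from~\eqref{eq:avg_var_new1a} involving $\ch^{N-t+1}$ can be absorbed into the same third-term bucket because $\norm{\ch}\leq 1$ under the step size assumption (a consequence of Lemma~\ref{lem:H_plus_HT_bound}), so they contribute at most geometric corrections of the same order.

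I expect the main obstacle to be bookkeeping: matching the constants while keeping each sum in its correct bucket, and in particular verifying that the ``decay tail'' terms arising from the second line of~\eqref{eq:avg_var_new1a} do not produce additional leading-order contributions. This reduces to checking that $\norm{(I-\ch)^{-1}\ch^{N-t+1}G^{1/2}}$ remains controlled uniformly in $t$ via the contraction estimate for $\ch+\ch^{\top}$ from Lemma~\ref{lem:H_plus_HT_bound}. Once these tails are shown to fit inside $c_{3}d^{2}\sigma_{\max}(\Sigma)\sqrt{\kappa(G)}/\bigl((NB)^{2}(1-\theta)^{2}\gamma\sigma_{\min}(G)\bigr)$, the theorem follows by collecting the four contributions and adjusting constants.
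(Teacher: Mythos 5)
Your proposal follows essentially the same route as the paper's proof: start from Proposition~\ref{prop:avg_var1} (form~\eqref{eq:avg_var_new1a}), split $\Vt{t-1}=\gamma\tr(\Sigma)I+(\Vt{t-1}-\gamma\tr(\Sigma)I)$ as in~\eqref{eq:predloss_tr_decomp}, and use Lemma~\ref{lem:predloss_iden_vari}, Lemma~\ref{lem:pred_loss_delta_vari} and Lemma~\ref{lem:Vt1_minus_I} to produce the first three terms, with the $\delta\tr(G)$ piece giving the fourth. That part is correct and matches the paper.

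The one place where your justification, as written, would not go through is the ``decay tail'' $\frac{2}{(N-a)^2}\sum_{t}\tr\bigl(\Vt{t-1}(I-\ch)^{-1}\ch^{N-t+1}G\bigr)$. You propose to absorb it using $\norm{\ch}\leq 1$ and Lemma~\ref{lem:H_plus_HT_bound}, but neither suffices: Lemma~\ref{lem:H_plus_HT_bound} only controls the symmetrization $\ch+\ch^{\top}$, which does not bound $\norm{\ch^{N-t+1}}$ for the non\-symmetric matrix $\ch$; and with only $\norm{\ch}\leq 1$ each tail term is of size $c\,\frac{d^{2}\sigma_{\max}(\Sigma)}{B}\sqrt{\kappa(G)}$ (via Lemma~\ref{lem:pred_loss_delta_vari} and $\norm{\Vt{t-1}}\lesssim\gamma d\sigma_{\max}(\Sigma)$), so the sum over $t$ would give a contribution of order $\frac{d^{2}\sigma_{\max}(\Sigma)\sqrt{\kappa(G)}}{B(N-a)}$, which exceeds the claimed third-term bucket by a factor of roughly $(N-a)\gamma B\sigma_{\min}(G)$ and is in fact $B$ times larger than the second term. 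What is actually needed (and what the paper uses) is the strict contraction $\norm{\ch}\leq 1-\tfrac{c}{2}\gamma B\sigma_{\min}(G)$ from Lemma~\ref{lem:7}, combined with Corollary~\ref{coro:var_last_iter} for $\norm{\Vt{t-1}}$ and the fact that $(I-\ch)^{-1}$ and $\ch^{N-t+1}$ commute; then $\sum_{t=a+1}^{N}\norm{\ch^{N-t+1}}\leq \frac{1}{c\gamma B\sigma_{\min}(G)}$ and the tail lands exactly in the third term. With that substitution your argument is complete.
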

\begin{proof}
From \eqref{eq:pred_var} and proposition~\ref{prop:avg_var1} equation \eqref{eq:avg_var_new1a} we have 
\begin{\Ieee}{LLL}
\clt^v &\leq &\frac{2}{(N-a)^2}\sum_{t=a+1}^N \tr\left(\Vt{t-1}(I-\ch)^{-1}G\right) \Ieeen\label{eq:predloss_vari_1a}\\
&&+ \frac{2}{(N-a)^2}\sum_{t=a+1}^N \tr\left(\Vt{t-1}(I-\ch)^{-1}\ch^{N-t+1}G\right) \Ieeen\label{eq:predloss_vari_1b}\\
&&+ c\delta \tr(G)\Ieeen\label{eq:predloss_vari_1c}
\end{\Ieee}
where $\delta=\gamma^2 T^2 R d\sigma_{\max}(\Sigma)\prbndsq$ as defined in \eqref{eq:avg_var_new2}

For the first term \eqref{eq:predloss_vari_1a} we have from \eqref{eq:predloss_tr_decomp}, lemma~\ref{lem:predloss_iden_vari}, lemma~\ref{lem:pred_loss_delta_vari} and lemma~\ref{lem:Vt1_minus_I}
\begin{\Ieee}{LLL}
\tr\left(\Vt{t-1}(I-\ch)^{-1}G\right)& \leq & c_1\gamma\tr(\Sigma)\frac{d}{\gamma B}+\\
&& c_2\frac{d}{\gamma B}\sqrt{\kappa(G)}\gamma d\sigma_{\max}(\Sigma)\left[\frac{1}{B}+\left(1-c_3\gamma B\sigma_{\min}(G)\right)^{t}\right]\\
&=& c_1 \frac{d\tr(\Sigma)}{B}+c_2 \frac{d^2\sigma_{\max}(\Sigma)}{B}\frac{\sqrt{\kappa(G)}}{B}+\\
&&c_4\frac{d^2\sigma_{\max}(\Sigma)}{B}\sqrt{\kappa(G)}\left(1-c_3\gamma B\sigma_{\min}(G)\right)^{t}\Ieeen\label{eq:predloss_vari_2}
\end{\Ieee}

Therefore
\begin{\Ieee}{LLL}
\label{eq:predloss_vari_3}
\frac{2}{(N-a)^2}\sum_{t=a+1}^N \tr\left(\Vt{t-1}(I-\ch)^{-1}G\right) &\leq & c_1 \frac{d\tr(\Sigma)}{NB(1-\theta)}+c_2 \frac{d^2\sigma_{\max}(\Sigma)}{NB(1-\theta)}\frac{\sqrt{\kappa(G)}}{B}+\\
&&c_5\frac{d^2\sigma_{\max}(\Sigma)}{N^2 B(1-\theta)^2}\sqrt{\kappa(G)}\frac{\left(1-c_3\gamma B\sigma_{\min}(G)\right)^{a+1}}{\gamma B\sigma_{\min}(G)}\Ieeen
\end{\Ieee}

Similarly, for the second term \eqref{eq:predloss_vari_1b}, from corollary~\ref{coro:var_last_iter}, lemma~\ref{lem:pred_loss_delta_vari}, lemma~\ref{lem:7} and the fact that $(I-\ch)^{-1}$ and $\ch^{N-t+1}$ commute, we get
\begin{\Ieee}{LLL}
\label{eq:predloss_vari_4}
&&\left|\tr\left(\Vt{t-1}(I-\ch)^{-1}\ch^{N-t+1}G\right)\right|\leq c_1\frac{d}{\gamma B}\sqrt{\kappa}\norms{\Vt{t-1}}\norms{\ch^{N-t+1}}\\
&\leq & c_2\frac{d}{\gamma B}\sqrt{\kappa(G)}\gamma d \sigma_{\max}(\Sigma)\left(1-c_3\gamma B\sigma_{\min}(G)\right)^{(N-t+1)}\\
&=& c_2\frac{d^2\sigma_{\max}(\Sigma)}{B}\sqrt{\kappa(G)}\left(1-c_3\gamma B\sigma_{\min}(G)\right)^{(N-t+1)}\Ieeen
\end{\Ieee}

Therefore
\begin{\Ieee}{LLL}
\label{eq:predloss_vari_5}
\left|\frac{2}{(N-a)^2}\sum_{t=a+1}^N \tr\left(\Vt{t-1}(I-\ch)^{-1}\ch^{N-t+1}G\right)\right|\leq c\frac{d^2\sigma_{\max}(\Sigma)}{N^2 B(1-\theta)^2}\sqrt{\kappa(G)}\frac{1}{\gamma B\sigma_{\min}(G)}\Ieeen
\end{\Ieee}

Hence we obtain, 
\begin{\Ieee}{LLL}
\label{eq:predloss_vari_6}
\clt^v &\leq & c_1 \frac{d\tr(\Sigma)}{NB(1-\theta)}+c_2 \frac{d^2\sigma_{\max}(\Sigma)}{NB(1-\theta)}\frac{\sqrt{\kappa(G)}}{B}+\\
&&c_3\frac{d^2\sigma_{\max}(\Sigma)}{N^2 B^2(1-\theta)^2}\sqrt{\kappa(G)}\frac{1}{\gamma\sigma_{\min}(G)} + c_4\gamma^2 R d\sigma_{\max}(\Sigma)T^2\prbndsq\tr(G).\Ieeen
\end{\Ieee}
\end{proof}

\subsection{Bias of prediction error}
\label{sec:pred_bias}

In this section we will focus on analyzing the (tail-averaged) bias part of the expected prediction loss from the coupled process
\begin{\Ieee}{LLL}
\label{eq:pred_bias}
\clt^b=\tr\left(G^{1/2}\Ex{\gram{\left(\Anoat{a}{N}\right)}\indt{0}{N-1}}G^{1/2}\right)\Ieeen
\end{\Ieee}
where $T=N(B+u)$ and $a=\theta N$ for $0<\theta<1$.

\begin{theorem}
\label{thm:pred_bias}
Let $\gamma R B\leq \frac{c}{6}$ for some $0<c<1$ and $B$ such that $\gamma R\leq \frac{1}{2}$. There exist constants $c_1,c_2, c_3,c_4>0$ such that if $T$ satisfies $\prbndsqinv>c_1\frac{\sqrt{M_4}}{\sigma_{\min}(G)}$ then for $a=\theta N$ with $0<\theta<1$ we have
\begin{\Ieee}{LLL}
\label{eq:pred_bias_1}
\clt^b \leq &c_2\frac{1}{NB(1-\theta)}\frac{ \tr(G)}{\gamma\sigma_{\min}(G) }e^{-c_3 NB \gamma \sigma_{\min}(G)\theta}\norm{A_0-\A}^2 \Ieeen
\end{\Ieee}

\end{theorem}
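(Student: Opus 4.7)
The plan is to reduce $\clt^b$ to the operator-norm bound already established in Theorem~\ref{th:4} via the standard trace inequality $\tr(G^{1/2} M G^{1/2}) \leq \|M\|\,\tr(G)$, which holds for any PSD matrix $M$ and PSD matrix $G$. First, I would use the cyclic property of the trace to rewrite
\[
\clt^b = \tr\bigl(G\,\Ex{\gram{(\Anoat{a}{N})}\indt{0}{N-1}}\bigr).
\]
Since $M := \Ex{\gram{(\Anoat{a}{N})}\indt{0}{N-1}}$ is PSD (it is the expectation of a Gram matrix, times a nonnegative indicator), I have $M \preceq \|M\| I$, and therefore
\[
\tr(GM) \leq \|M\|\,\tr(G).
\]

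Next I would directly invoke Theorem~\ref{th:4}, which under the same hypotheses on $\gamma R B$, $\gamma R$, and the largeness condition $T^{\alpha/2} > c_1 \sqrt{M_4}/\sigma_{\min}(G)$, yields
\[
\|M\| \leq c_2\,\frac{1}{B(N-a)}\,\frac{e^{-c_3 B\gamma\sigma_{\min}(G)\,a}}{\gamma\sigma_{\min}(G)}\,\|A_0 - \A\|^2.
\]
Substituting $a = \theta N$ so that $N - a = N(1-\theta)$ and $a = N\theta$, and multiplying by $\tr(G)$, gives exactly the claimed bound
\[
\clt^b \leq c_2\,\frac{1}{NB(1-\theta)}\,\frac{\tr(G)}{\gamma\sigma_{\min}(G)}\,e^{-c_3 NB\gamma\sigma_{\min}(G)\theta}\,\|A_0-\A\|^2.
\]

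There is essentially no obstacle here beyond correctly tracking constants: all the hard work (proving the exponential contraction of the product of $\Htt{s}{0}{B-1}$ matrices and absorbing the coupling/boundary errors into the operator-norm bound on the averaged bias iterate) has already been done in Theorem~\ref{th:4}. The only mild subtlety is that passing from operator-norm to trace costs a factor of $\tr(G)$ rather than $d$, which is precisely what appears in the target inequality and which is sharper than a crude $d\,\sigma_{\max}(G)$ bound would give. Thus the entire argument is a one-line application of the trace inequality followed by substitution of Theorem~\ref{th:4} with $a = \theta N$.
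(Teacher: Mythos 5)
Your proposal is correct and is essentially the paper's own proof: the paper simply states that Theorem~\ref{thm:pred_bias} "follows directly from \eqref{eq:pred_bias} and Theorem~\ref{th:4}," which is exactly your argument of writing $\clt^b=\tr\bigl(G\,\Ex{\gram{(\Anoat{a}{N})}\indt{0}{N-1}}\bigr)$, bounding it by $\tr(G)$ times the operator norm, and substituting $a=\theta N$.
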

\begin{proof}
Proof follows directly from \eqref{eq:pred_bias} and theorem~\ref{th:4}.
\end{proof}
\subsection{Overall Prediction Error}
Combining theorem~\ref{thm:predloss_vari} and theorem~\ref{thm:pred_bias} along with lemma~\ref{lem:coupling_AA^t} we obtain the main theorem on prediction error of $\sgdber$

\begin{theorem}
\label{thm:predloss_main}
Let $R,B,u,\alpha$ be chosen as in section~\ref{sec:main_results}. Let $\gamma = \frac{c}{4RB}\leq \frac{1}{2R}$ for $0<c<1$. Then there are constants $c_1,c_2,c_3,c_4>0$ such that for $\prbndsqinv>c_1\frac{\sqrt{M_4}}{\sigma_{\min}(G)}$   the expected prediction loss $\cl$ (defined in \eqref{eq:pred_loss_def2}) is bounded as
\begin{\Ieee}{LLL}
\label{eq:predloss_main}
\Ex{\losspred(\Ana;\A,\mu) \ind{0}{N-1}}&\leq & c_2\left[ \frac{d\tr(\Sigma)}{B(N-a)}+ \frac{d^2\sigma_{\max}(\Sigma)}{B(N-a)}\frac{\sqrt{\kappa(G)}}{B}\right]+\\
&& c_3\left[\frac{d^2\sigma_{\max}(\Sigma)}{B^2(N-a)^2}\sqrt{\kappa(G)}\frac{1}{\gamma\sigma_{\min}(G)}+\right.\\
%&&\left.  \frac{ d\sigma_{\max}(\Sigma)}{R} \frac{T^2}{B^2}\prbndsq \tr(G)+\right.\\
&&\left. \frac{1}{B(N-a)} d\kappa(G)RB e^{-c_4  \frac{\sigma_{\min}(G)}{R}a}\norm{A_0-\A}^2+\right.\\
&& \left.\left(  \frac{T^3}{B^3}\norm{\A^u}+\frac{ d\sigma_{\max}(\Sigma)}{R} \frac{T^2}{B^2}\prbndsq\right)\tr(G)\right]\\
\Ieeen
\end{\Ieee}

Hence, if $\norm{\A}<c_0<1$ then choosing $a\geq C\frac{R\log T}{\sigma_{\min}(G)}$ such that $B(N-a)=\Theta(T)$ and $B,u$ as in section~\ref{sec:main_results} we get
\begin{\Ieee}{LLL}
\label{eq:predloss_main_1}
\Ex{\losspred(\Ana;\A,\mu) \ind{0}{N-1}}&\leq & c_2 \frac{d\tr(\Sigma)}{T}+o\left(\frac{1}{T}\right)
\Ieeen
\end{\Ieee}

\end{theorem}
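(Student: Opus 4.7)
The plan is to assemble Theorem~\ref{thm:predloss_main} as a direct consequence of the bias--variance decomposition together with the coupling lemma and the two sub-theorems of Sections~\ref{sec:pred_var} and~\ref{sec:pred_bias}. No genuinely new ingredient is needed; the work is careful bookkeeping once the stepsize is specialized to $\gamma=c/(4RB)$.

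The first step is to apply the decomposition \eqref{eq:pred_bias_var_decomp} after multiplying through by $\ind{0}{N-1}$. This splits $\mathbb{E}[\losspred(\Ana;\A,\mu)\,\ind{0}{N-1}]-\tr(\Sigma)$ into twice a variance trace and twice a bias trace, both of the form $\tr\!\bigl(G^{1/2}\,\mathbb{E}[\gram{(\cdot)}\,\ind{0}{N-1}]\,G^{1/2}\bigr)$, with $(\cdot)$ being $\Anva{a}{N}$ and $\Anoa{a}{N}$ respectively. The second step is to invoke Lemma~\ref{lem:coupling_AA^t} together with Remark~\ref{rem:coupling_AA^t} to swap each of these expectations for the corresponding expectation on the coupled process, that is to replace $\gram{\Anva{a}{N}}$ by $\gram{\Anvat{a}{N}}$ and $\gram{\Anoa{a}{N}}$ by $\gram{\Anoat{a}{N}}$ (with the indicator also replaced by $\indt{0}{N-1}$), paying a PSD slack term of order $(\gamma^3 R^3 T^3\|\A^u\|+\gamma^2 d\sigma_{\max}(\Sigma) R T^2 T^{-\alpha/2})I$. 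Conjugating this slack by $G^{1/2}$ and tracing contributes exactly the two ``lower-order'' summands $\bigl(\tfrac{T^3}{B^3}\|\A^u\|+\tfrac{d\sigma_{\max}(\Sigma)}{R}\tfrac{T^2}{B^2}T^{-\alpha/2}\bigr)\tr(G)$ that appear in \eqref{eq:predloss_main}.

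The third step is to invoke Theorem~\ref{thm:predloss_vari} for the variance piece $\clt^v$ and Theorem~\ref{thm:pred_bias} for the bias piece $\clt^b$. The variance theorem directly produces the three variance summands in \eqref{eq:predloss_main}: the leading $d\tr(\Sigma)/(B(N-a))$ term, the $\tfrac{d^2\sigma_{\max}(\Sigma)}{B(N-a)}\cdot\tfrac{\sqrt{\kappa(G)}}{B}$ sub-leading term, and the cross-buffer term $\tfrac{d^2\sigma_{\max}(\Sigma)}{B^2(N-a)^2}\sqrt{\kappa(G)}/(\gamma\sigma_{\min}(G))$. The bias theorem yields a single exponentially-decaying term in $\|A_0-\A\|^2$.

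The fourth and final step is to specialize $\gamma=c/(4RB)$ in the bias bound from Theorem~\ref{thm:pred_bias}. This substitution turns $\tfrac{\tr(G)}{\gamma\sigma_{\min}(G)}$ into $O(dRB\,\kappa(G))$ via $\tr(G)\leq d\sigma_{\max}(G)=d\kappa(G)\sigma_{\min}(G)$, and turns $NB\gamma\sigma_{\min}(G)\theta$ into $\Theta(a\sigma_{\min}(G)/R)$, producing the stated $\tfrac{1}{B(N-a)}\,dB\kappa(G) R\,e^{-c_4 a\sigma_{\min}(G)/R}\|A_0-\A\|^2$ form. The main obstacle, such as it is, is verifying that the side-conditions required by the invoked theorems (the hypothesis $T^{\alpha/2}>c\sqrt{M_4}/\sigma_{\min}(G)$, the smallness requirements on $\gamma RB$, and the conditioning bound $\Pb{\indt{0}{N-1}}\geq 1-T^{-\alpha}$) are all simultaneously satisfied under the parameter choices of Section~\ref{sec:main_results}, and that the two coupling slack terms really are $o(1/T)$ under those choices so that the leading $d\tr(\Sigma)/T$ rate survives once $a$ is taken in $[\Omega(R\log T/\sigma_{\min}(G)),\,N/2]$ with $B(N-a)=\Theta(T)$.
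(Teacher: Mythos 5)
Your proposal is correct and follows essentially the same route as the paper, which likewise obtains Theorem~\ref{thm:predloss_main} by combining Theorem~\ref{thm:predloss_vari} and Theorem~\ref{thm:pred_bias} with the coupling Lemma~\ref{lem:coupling_AA^t} and the decomposition~\eqref{eq:pred_bias_var_decomp}; your specialization $\gamma=c/(4RB)$ together with $\tr(G)\le d\,\kappa(G)\,\sigma_{\min}(G)$ reproduces exactly the stated bias term and lower-order coupling terms. The only cosmetic difference is that you couple the bias and variance pieces separately (via Remark~\ref{rem:coupling_AA^t}) rather than applying the coupling bound~\eqref{eq:coupling_AA^t_b} once to the averaged iterate, which changes nothing in the resulting bound.
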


\section{Proof of Proposition~\ref{prop:1}}
\label{sec:proof_prop_1}
\begin{proof}[Proof of Proposition~\ref{prop:1}]

First note that
\begin{\Ieee}{LLL}
\gram{\Attdiff{t-1}{b}}&=&\sum_{r=1}^t \sum_{j=0}^{B-1}\dgt(t,r,j)+\sum_{r_1,r_2=1}^t\sum_{j_1,j_2=0}^{B-1}\crot(t,r_1,j_1,r_2,j_2)\Ieeen\label{eq:prop1_0}
\end{\Ieee}

where
\begin{\Ieee}{RLL}
\dgt(t,r,j)&=& 4\gamma^2\norm{\Nt{t-r}{-j}}^2\cdot \\
&& \prodHtttr{t}{r-1}\Htttr{t-r}{j+1}{B-1} \Xtt{t-r}{-j}\Xtttr{t-r}{-j} \Htt{t-r}{j+1}{B-1} \prodHtt{t}{r-1}\Ieeen\label{eq:diag_term}\\
\crot(t,r_1,j_1,r_2,j_2)&=& 4\gamma^2\left(\Nt{t-r_1}{-j_1}\Xtttr{t-r_1}{-j_1} \Htt{t-r_1}{j_1+1}{B-1}\prod_{s=r_1-1}^{1}\Htt{t-s}{0}{B-1}\right)^{\top}\cdot \\
&& \left(\Nt{t-r_2}{-j_2}\Xtttr{t-r_2}{-j_2} \Htt{t-r_2}{j_2+1}{B-1}\prod_{s=r_2-1}^{1}\Htt{t-s}{0}{B-1}\right)\Ieeen\label{eq:cross_term}\\
\end{\Ieee}
denote the diagonal and cross terms respectively. 

We begin by noting the following two facts about $\Attdiff{t-1}{b}$:
\begin{itemize}

\item It has zero mean
\begin{\Ieee}{LLL}
\Ex{\Attdiff{t-1}{B}}=0\Ieeen \label{eq:prop1_1}
\end{\Ieee}

\item Let $(r_1,j_1)\neq (r_2,j_2)$. Then
\begin{\Ieee}{LLL}
%\Ex{\left(\Nt{t-r_1}{-j_1}\Xtttr{t-r_1}{-j_1} \Htt{t-r_1}{j_1+1}{B-1}\prod_{s=r_1-1}^{1}\Htt{t-s}{0}{B-1}\right)^T\left(\Nt{t-r_2}{-j_2}\Xtttr{t-r_2}{-j_2} \Htt{t-r_2}{j_2+1}{B-1}\prod_{s=r_2-1}^{1}\Htt{t-s}{0}{B-1}\right)}\\
\Ex{\crot(t,r_1,j_1,r_2,j_2)}=0 \Ieeen\label{eq:prop1_2}
\end{\Ieee}
\end{itemize}

This follows because, assuming $r_1>r_2$, the term $\Nt{t-r_1}{-j_1}\Xtttr{t-r_1}{-j_1} \Htt{t-r_1}{j_1+1}{B-1}$ is independent of everything else in that expression, and that $\Nt{t-r_1}{-j_1}$ is independent of $\Xtttr{t-r_1}{-j_1} \Htt{t-r_1}{j_1+1}{B-1}$. A similar argument can be made for the case when $r_1=r_2$ but $j_1\neq j_2$.\\

But we are interested in expectation on the event $\cdt^{0, t-1}$. 
%Note that 
%\begin{\Ieee}{LLL}
%\label{eq:prop1_2a}
%\Ex{\gram{\Ata{t}{B}}\indh{0}{ t-1}}\preceq \Ex{\gram{\Ata{t}{B}}\indt{0}{t-1}}\Ieeen
%\end{\Ieee}

We will bound the expectation of cross terms in the following lemma.

\begin{lemma}
\label{lem:cross_term}
We have
\begin{\Ieee}{LLL}
\norm{\Ex{\sum_{r_1,r_2}\sum_{j_1,j_2}\crot(t,r_1,j_1,r_2,j_2)}\indt{0}{t-1}}\leq 8(Bt)^2 \gamma^2 R\tr(\Sigma)\prbndsq \Ieeen
\end{\Ieee}

\end{lemma}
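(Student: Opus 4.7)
The plan is to exploit the unconditional zero-mean property of each cross term (equation~\eqref{eq:prop1_2}) to reduce the conditional expectation to a remainder supported on the rare event $(\cdt^{0,t-1})^c$, and then bound that remainder via Cauchy--Schwarz combined with the tail estimate from Lemma~\ref{lem:2}. Concretely, fix $(r_1,j_1)\neq(r_2,j_2)$; since $\Ex{\crot(t,r_1,j_1,r_2,j_2)}=0$, we have $\Ex{\crot(t,r_1,j_1,r_2,j_2)\,\indt{0}{t-1}}=-\Ex{\crot(t,r_1,j_1,r_2,j_2)\,\indtc{0}{t-1}}$. Applying the matrix-valued Cauchy--Schwarz inequality $\norm{\Ex{Z\,\indtc{0}{t-1}}}\leq \sqrt{\Ex{\norm{Z}^2}\,\Pb{(\cdt^{0,t-1})^c}}$ (obtained by testing against unit vectors and invoking scalar Cauchy--Schwarz) together with $\Pb{(\cdt^{0,t-1})^c}\leq\prbnd$ from Lemma~\ref{lem:2} yields the per-term estimate
\begin{equation*}
\norm{\Ex{\crot(t,r_1,j_1,r_2,j_2)\,\indt{0}{t-1}}}\;\leq\;\sqrt{\Ex{\norm{\crot(t,r_1,j_1,r_2,j_2)}^2}}\cdot\prbndsq.
\end{equation*}

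To bound $\Ex{\norm{\crot}^2}$ unconditionally I would use $\norm{I-2\gamma xx^\top}\leq 1+2\gamma\norm{x}^2$ on every factor inside the products $\Htt{t-r}{j+1}{B-1}$ and $\prodHtt{t}{r-1}$, combined with sub-Gaussianity of the noise (Assumption~\ref{as:noise_concentration}), stationarity of the coupled covariates (Assumption~\ref{as:stationarity} and Lemma~\ref{lem:data_subgaussianity}), and the buffer-level independence built into the coupled process (Definition~\ref{def:1}). The resulting estimate has the form $\Ex{\norm{\crot}^2}\lesssim \gamma^4 R^2\tr(\Sigma)^2\cdot T^{C}$ for some absolute constant $C$; since $\alpha$ is chosen freely in Section~\ref{sec:main_results}, the polynomial prefactor is absorbed into $\prbndsq$ by taking $\alpha$ larger than $2C$, yielding a per-term bound of order $\gamma^2 R\tr(\Sigma)\prbndsq$. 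Summing over the at most $(Bt)^2$ off-diagonal tuples $(r_1,j_1,r_2,j_2)$ via the triangle inequality then produces the claimed inequality $8(Bt)^2\gamma^2 R\tr(\Sigma)\prbndsq$.

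The main technical obstacle is the unconditional moment estimate on $\norm{\crot}^2$: without the conditioning on $\cdt^{0,t-1}$, the $\Htt{s}{i}{j}$-products can have large operator norm on rare events, so one must carefully use moment bounds on $\norm{\Xtt{t-r}{-k}}$ via sub-Gaussianity and handle the (weak) cross-buffer dependencies through the coupling. Fortunately the slack afforded by taking $\alpha$ large makes a crude polynomial-in-$T$ estimate sufficient, so no delicate decorrelation analysis is required at this step.
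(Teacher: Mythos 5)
Your opening reduction (unconditional zero mean of each cross term, so $\Ex{\crot\,\indt{0}{t-1}}=-\Ex{\crot\,\indtc{0}{t-1}}$, followed by Cauchy--Schwarz against the indicator) is sound, but the proof does not close because of the step where you claim $\Ex{\norm{\crot}^2}\lesssim \gamma^4R^2\tr(\Sigma)^2\,T^{C}$ for an absolute constant $C$ from the per-factor bound $\norm{I-2\gamma xx^{\top}}\leq 1+2\gamma\norm{x}^2$. Without any conditioning, the products $\Htt{t-r}{j+1}{B-1}\prod_{s}\Htt{t-s}{0}{B-1}$ contain up to $Bt\leq T$ factors, and integrating $\prod_j(1+2\gamma\norm{\Xtt{}{-j}}^2)$ costs roughly $\exp(c\,\gamma B\tr(G))=\exp(c/\log T)$ per buffer (since $\gamma\sim 1/(RB)$ and $R\gtrsim \tr(G)\log T$); multiplying over up to $N$ independent buffers gives $\exp(cN/\log T)$ with $N\sim T/(\tau_{\mathsf{mix}}\log T)$, which is super-polynomial in $T$ in the paper's regime. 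So the ``crude polynomial-in-$T$'' moment bound is not actually delivered by the argument you describe; to make the unconditional second moment manageable you would have to use that each $\Htt{s}{0}{B-1}$ is a contraction unless some $\norm{\Xtt{s}{-j}}^2>1/(2\gamma)=4RB$ (a super-polynomially rare event), i.e.\ you would be re-introducing exactly the conditioning you set out to avoid. Moreover, even granting a genuine $T^{C}$ prefactor, absorbing it by ``taking $\alpha>2C$'' does not prove the lemma as stated: $\alpha$ is fixed in Section~\ref{sec:main_results} (and $R$, $u$, and all downstream statements use that same $\alpha$), so your bound would be of order $T^{C-\alpha/2}$ rather than the stated $8(Bt)^2\gamma^2R\tr(\Sigma)\prbndsq$.

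The paper avoids any unconditional operator-norm moment. For each cross term it isolates the single noise vector $\Nt{t-r_1}{-j_1}$ (WLOG the one with $r_1>r_2$, or $r_1=r_2$, $j_1<j_2$), which is unconditionally independent of every other factor, writes $\crot=E_1\,\Nt{t-r_1,\top}{-j_1}\Nt{t-r_2}{-j_2}E_2$, and notes that on $\cdt^{0,t-1}$ one has $\norm{E_1}\leq 4\gamma^2\sqrt{R}$ and $\norm{E_2}\leq\sqrt{R}$ almost surely. Conditioning on $\sigma(E_1,E_2,\Nt{t-r_2}{-j_2})$ and applying your zero-mean/Cauchy--Schwarz device only to that one noise vector (whose second moment is $\tr(\Sigma)$) gives the per-term bound $4\gamma^2R\tr(\Sigma)\prbndsq$, and summing over the at most $(Bt)^2$ off-diagonal tuples finishes. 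If you wish to keep your global structure, you must supply the missing high-probability contraction estimate for the $H$-products; as written the proposal has a genuine gap at precisely that point.
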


\begin{proof}
Let 

Consider a single cross term: $\crot(t,r_1,j_1,r_2,j_2)$ and without loss of generality, assume that either $r_1> r_2$ or $r_1=r_2$ but $j_1 < j_2$. In either case, we note that $\Nt{t-r_1}{-j_1}$ is unconditionally independent of all other terms present in  $\crot(t,r_1,j_1,r_2,j_2)$. The main problem here is to bound the expectation over the event $\cdt^{0,t-1}$. For the sake of convenience, only in this proof, we will define the following notation:

$$\crot(t,r_1,j_1,r_2,j_2) = E_1\Nt{t-r_1,\top}{-j_1}\Nt{t-r_2}{-j_2}E_2 $$
Where $E_1$ and $E_2$ are random matrices defined according to the definition of $\crot(t,r_1,j_1,r_2,j_2)$ and are unconditionally independent of $\Nt{t-r_1,\top}{-j_1}$. Let $\mathcal{F}_{E} = \sigma(E_1,E_2,\Nt{t-r_2}{-j_2})$. Note that when conditioned on the event $\cdt^{0,t-1}$, we must have the event $ \mathcal{M} := \{\|E_1\| \leq 4\gamma^2 \sqrt{R}\}\cap\{\|E_2\| \leq \sqrt{R}\}$ almost surely. Therefore, we conclude:
\begin{align}
\Ex{\crot(t,r_1,j_1,r_2,j_2)\indt{0}{t-1}}&= \Ex{\crot(t,r_1,j_1,r_2,j_2)\indt{0}{t-1}1\left[\mathcal{M}\right] } \nonumber \\
&= \mathbb{E}\left[1\left[\mathcal{M}\right]E_1\mathbb{E}\left[\Nt{t-r_1,\top}{-j_1}\indt{0}{t-1}\biggr| \mathcal{F}_{E}\right]\Nt{t-r_2}{-j_2}E_2\right] \nonumber \\
&\leq \mathbb{E}\left[1\left[\mathcal{M}\right]\norm{E_1}\norm{\mathbb{E}\left[\Nt{t-r_1,\top}{-j_1}\indt{0}{t-1}\biggr| \mathcal{F}_{E}\right]}\norm{\Nt{t-r_2}{-j_2}}\norm{E_2}\right] \nonumber \\
&\leq 4\gamma^2R \mathbb{E}\left[\norm{\mathbb{E}\left[\Nt{t-r_1,\top}{-j_1}\indt{0}{t-1}\biggr| \mathcal{F}_{E}\right]}\norm{\Nt{t-r_2}{-j_2}}\right] \label{eq:first_cross_bound} 
\end{align}

In the third step, we have used the fact that under the event $\mathcal{M}$, the norms $\|E_1\|, \|E_2\|$ are bounded. We will now bound $\mathbb{E}\left[\Nt{t-r_1,\top}{-j_1}\indt{0}{t-1}\biggr| \mathcal{F}_{E}\right]$. Clearly, due to the unconditional independence, we must have:

\begin{align}
&\mathbb{E}\left[\Nt{t-r_1,\top}{-j_1}\biggr| \mathcal{F}_{E}\right] = 0 \nonumber\\
&\implies \mathbb{E}\left[\Nt{t-r_1,\top}{-j_1}\indt{0}{t-1}\biggr| \mathcal{F}_{E}\right] = -\mathbb{E}\left[\Nt{t-r_1,\top}{-j_1}\indtc{0}{t-1}\biggr| \mathcal{F}_{E}\right] \nonumber \\
&\implies \norm{\mathbb{E}\left[\Nt{t-r_1,\top}{-j_1}\indt{0}{t-1}\biggr| \mathcal{F}_{E}\right]} \leq \sqrt{\tr{\Sigma}} \sqrt{\mathbb{P}\left(\cdt^{0,t-1,C}\biggr| \mathcal{F}_{E}\right)}
\end{align}
In the last step, we have used Cauchy Schwarz inequality and the fact that $\Nt{t-r_1,\top}{-j_1}$ is independent of $\mathcal{F}_E$. We combine the Equation above with Equation~\eqref{eq:first_cross_bound} and apply Jensen's inequality once again to conclude:
\begin{equation}
\norm{\Ex{\crot(t,r_1,j_1,r_2,j_2)\indt{0}{t-1}}} \leq 4\gamma^2R \tr(\Sigma)\sqrt{\Pb{\cdt^{0,t-1,C}}} \leq 4\gamma^2 R \frac{\tr(\Sigma)}{T^{\alpha/2}}
\end{equation}
 In the last step, we have used Lemma~\ref{lem:2} to bound $\mathbb{P}\left(\cdt^{0,t-1,C}\right)$. Summing over all the indices $(r_1,j_1,r_2,j_2)$, we conclude the statement of the lemma.

\end{proof}

%
%To analyze the diagonal terms we need additional notation as follows.
%
%\begin{\Ieee}{LLL}
%\Xth{t-r}{-j}=\Xtt{t-r}{-j}1\left[\norm{\Xtt{t-r}{-j}}^2\leq R\right]=\Xtt{t-r}{-j}\inct{t-r}{-j}\Ieeen\label{eq:X_hat}\\
%\Hth{t-s}{j}{B-1}=\Htt{t-s}{j}{B-1}1\left[\cdt^{t-s}_{-j}\right]\Ieeen\label{eq:H_hat}\\
%\Pph{t-r}{-j}=\Ppt{t-r}{-j}1\left[\norm{\Xtt{t-r}{-j}}^2\leq R\right]=\Ppt{t-r}{-j}\inct{t-r}{-j}\Ieeen\label{eq:P_hat}
%\end{\Ieee}
%and recollect the definition of $\cet{t-r}{j}$ from \eqref{eq:E_event}. 

\begin{lemma}
\label{lem:diag_term}

We have:
\begin{\Ieee}{LLL}
\label{eq:diag_1a}
\Ex{\sum_{r=1}^t\sum_{j=0}^{B-1}\dgt(t,r,j)\indt{0}{t-1}}\preceq 4\gamma^2\tr(\Sigma)\Ex{\sum_{r=1}^{t}\sum_{j=0}^{B-1}\prodHtttr{t}{r-1}\Htttr{t-r}{j+1}{B-1} \Xtt{t-r}{-j}\cdot \right. \\
\left.\Xtttr{t-r}{-j} \Htt{t-r}{j+1}{B-1} \prodHtt{t}{r-1}\indt{0}{t-1}}+\delta_{\dg} I \Ieeen
\end{\Ieee}
and 

\begin{\Ieee}{LLL}
\label{eq:diag_1b}
\Ex{\sum_{r=1}^t\sum_{j=0}^{B-1}\dgt(t,r,j)\indt{0}{ t-1}}\succeq 4\gamma^2\tr(\Sigma)\Ex{\sum_{r=1}^{t}\sum_{j=0}^{B-1}\prodHtttr{t}{r-1}\Htttr{t-r}{j+1}{B-1} \Xtt{t-r}{-j}\cdot \right. \\
\left.\Xtttr{t-r}{-j} \Htt{t-r}{j+1}{B-1} \prodHtt{t}{r-1}\indt{0}{t-1}}-\delta_{\dg} I \Ieeen
\end{\Ieee}

where 
\begin{\Ieee}{LLL}
\label{eq:diag_2}
\delta_{\dg}\equiv \delta_{\dg}(T,\Sigma,R,\mu_4) =4\gamma^2 (Bt) R \sqrt{\mu_4} \prbndsq\Ieeen
\end{\Ieee}

\end{lemma}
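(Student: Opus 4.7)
The plan is to show that, up to operator-norm error of size $\delta_{\dg}$, each summand $\dgt(t,r,j) = 4\gamma^2 \norm{\Nt{t-r}{-j}}^2 Z_{r,j}$, with $Z_{r,j} := \prodHtttr{t}{r-1}\Htttr{t-r}{j+1}{B-1} \Xtt{t-r}{-j}\Xtttr{t-r}{-j} \Htt{t-r}{j+1}{B-1} \prodHtt{t}{r-1}$, factorizes under the expectation against $\indt{0}{t-1}$ as $4\gamma^2 \tr(\Sigma)\,\Ex{Z_{r,j}\indt{0}{t-1}}$. The key observation, enabled by Definition~\ref{def:1}, is an independence structure: every $\Xtt{t'}{-i}$ appearing inside $Z_{r,j}$ either lives in a buffer $t' \neq t-r$ (hence is independent of $\Nt{t-r}{-j}$ by the fresh-start construction of the coupled process) or lies in buffer $t-r$ at a position $-i$ with $i \geq j$ (hence precedes the injection of $\Nt{t-r}{-j}$ in the within-buffer VAR recursion). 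Consequently $Z_{r,j}$ is itself independent of $\Nt{t-r}{-j}$; the only coupling between $\indt{0}{t-1}$ and $\Nt{t-r}{-j}$ is through the norm-bounds on $\Xtt{t-r}{-i}$ for $0 \leq i < j$.

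I would then isolate this dependence by writing $\indt{0}{t-1} = 1[\mathcal{A}_{r,j}] - 1[\mathcal{A}_{r,j}\cap \mathcal{B}_{r,j}^C]$, where $\mathcal{A}_{r,j}$ consists of all of the norm-bounds defining $\cdt^{0,t-1}$ \emph{except} those on $\Xtt{t-r}{-i}$ for $0\leq i<j$, and $\mathcal{B}_{r,j} := \{\norm{\Xtt{t-r}{-i}}^2 \leq R : 0\leq i < j\}$. Then $\mathcal{A}_{r,j}$ is measurable with respect to variables independent of $\Nt{t-r}{-j}$, while $\mathcal{A}_{r,j}\cap \mathcal{B}_{r,j}^C \subseteq \cdt^{0,t-1,C}$. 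On $\mathcal{A}_{r,j}$ every $H$-factor inside $Z_{r,j}$ involves only $\tilde X$'s with $\norm{\tilde X}^2\leq R$, so $\gamma\leq 1/(2R)$ forces $\norm{\Ppt{t'}{-i}}\leq 1$ for every such factor; combined with $\norm{\Xtt{t-r}{-j}\Xtttr{t-r}{-j}}\leq R$ this yields the \emph{deterministic} bound $\norm{Z_{r,j}}\leq R$ on $\mathcal{A}_{r,j}$.

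Substituting this decomposition of $\indt{0}{t-1}$ and using independence of $\Nt{t-r}{-j}$ from the pair $(Z_{r,j}, 1[\mathcal{A}_{r,j}])$ peels off the main term $\tr(\Sigma)\Ex{Z_{r,j}\indt{0}{t-1}}$ and leaves two error pieces, $\tr(\Sigma)\Ex{Z_{r,j}\,1[\mathcal{A}_{r,j}\cap \mathcal{B}_{r,j}^C]}$ and $-\Ex{\norm{\Nt{t-r}{-j}}^2 Z_{r,j}\,1[\mathcal{A}_{r,j}\cap \mathcal{B}_{r,j}^C]}$. Using $\norm{Z_{r,j}}\leq R$ on $\mathcal{A}_{r,j}$, the probability bound $\Pb{\cdt^{0,t-1,C}} \leq T^{-\alpha}$ from Lemma~\ref{lem:2}, and Cauchy--Schwarz with $\Ex{\norm{\Nt{t-r}{-j}}^4} = \mu_4$, each per-term error is bounded in operator norm by $R\sqrt{\mu_4}\,T^{-\alpha/2}$ (the $R\tr(\Sigma) T^{-\alpha}$ contribution from the first piece is lower order, since $\tr(\Sigma) \leq \sqrt{\mu_4}$). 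Multiplying by the prefactor $4\gamma^2$ and summing over the $Bt$ pairs $(r,j)$ gives the advertised error $\delta_{\dg} = 4\gamma^2(Bt) R\sqrt{\mu_4}\,T^{-\alpha/2}$; finally, converting the operator-norm bound on the signed error matrix $M$ via $-\norm{M}I \preceq M \preceq \norm{M}I$ yields both \eqref{eq:diag_1a} and \eqref{eq:diag_1b} simultaneously.

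The principal obstacle will be the measurability bookkeeping: precisely identifying, for every $(r,j)$, which subevents of $\cdt^{0,t-1}$ are independent of $\Nt{t-r}{-j}$ and verifying that the safe event $\mathcal{A}_{r,j}$ indeed controls every $H$-factor inside $Z_{r,j}$ with contraction constant $1$, so that the bound $\norm{Z_{r,j}}\leq R$ holds almost surely on $\mathcal{A}_{r,j}$ (rather than only on the smaller event $\cdt^{0,t-1}$). Once this structure is in place the residual calculations are clean applications of Fubini, Cauchy--Schwarz, and the tail bound from Lemma~\ref{lem:2}.
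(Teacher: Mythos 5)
Your proof is correct and follows essentially the same route as the paper's: the key ingredients---unconditional independence of $\Nt{t-r}{-j}$ from the PSD factor $Z_{r,j}$ and from all norm constraints except those on the post-noise covariates of buffer $t-r$, the almost-sure bound $\norm{Z_{r,j}}\leq R$ on the enlarged event, and Cauchy--Schwarz with $\mu_4$ plus Lemma~\ref{lem:2} to control the indicator mismatch---are exactly the paper's, which merely packages the bookkeeping differently by conditioning on the sigma-algebra generated by the matrix factor and inserting the auxiliary event that its norm is at most $R$. One minor point: lumping your two signed error pieces into a single operator-norm bound yields $2\delta_{\dg}$ rather than $\delta_{\dg}$; since $Z_{r,j}\succeq 0$, the $+\tr(\Sigma)$ piece is PSD and only hurts the upper bound while the $-\norm{\Nt{t-r}{-j}}^2$ piece is negative semidefinite and only hurts the lower bound, so discarding the favorably signed piece in each direction (as the paper's asymmetric treatment implicitly does) recovers the stated constant exactly.
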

\begin{proof}
The evaluation of expectations is clear when there is no indicator $\indt{0}{ t-1}$ within the expectation. We will now deal with it just like in the proof of Lemma~\ref{lem:cross_term}. Consider $\dgt(t,r,j)$. For the sake of convenience, only in this proof, we will use the following notation:

$$\dgt(t,r,j) = 4\gamma^2\norm{\Nt{t-r}{-j}}^2 E \,.$$
Where the random PSD matrix $E$ is unconditionally independent of $\Nt{t-r}{-j}$. Let $\mathcal{M} = \{\|E\|\leq R\}$. Conditioned on the event $\cdt^{0,t-1}$, the event $\mathcal{M}$ holds almost surely. Let $\mathcal{F}_{E} = \sigma(E)$. 

Now consider:
\begin{align}
\Ex{\dgt(t,r,j)\indt{0}{t-1}} &= \Ex{\dgt(t,r,j)\indt{0}{t-1}1\left[\mathcal{M}\right]}  \nonumber  \\
 &= 4\gamma^2\Ex{\norm{\Nt{t-r}{-j}}^2 E \indt{0}{t-1}1\left[\mathcal{M}\right]}  \nonumber \\
&= 4\gamma^2\Ex{\mathbb{E}\left[\norm{\Nt{t-r}{-j}}^2\indt{0}{t-1}|\mathcal{F}_{E}\right] E 1\left[\mathcal{M}\right]} \label{eq:diag_bound_1}
\end{align} 

It can be easily shown via similar techniques used in Lemma~\ref{lem:cross_term} that:
$$\tr(\Sigma) - \sqrt{\mu_4} \sqrt{\mathbb{P}\left(\cdt^{0,t-1,C}\bigr|\mathcal{F}_E\right)} \leq \mathbb{E}\left[\norm{\Nt{t-r}{-j}}^2\indt{0}{t-1}|\mathcal{F}_{E}\right] \leq \tr(\Sigma)$$
Using this in Equation~\eqref{eq:diag_bound_1}, we conclude:

\begin{align}
\Ex{\dgt(t,r,j)\indt{0}{t-1}} &\preceq 4\gamma^2 \tr(\Sigma) \mathbb{E}\left[E 1\left[\mathcal{M}\right]\right] \nonumber \\
&=  4\gamma^2 \tr(\Sigma) \mathbb{E}\left[E 1\left[\mathcal{M}\right]\indt{0}{ t-1} + E 1\left[\mathcal{M}\right]\indtc{0}{ t-1}\right] \nonumber \\
&=  4\gamma^2 \tr(\Sigma) \mathbb{E}\left[E \indt{0}{ t-1} + E 1\left[\mathcal{M}\right]\indtc{0}{ t-1}\right] \nonumber \\
&\preceq 4\gamma^2 \tr{\Sigma}\mathbb{E}\left[E \indh{0}{ t-1}\right] + 4\gamma^2 \tr(\Sigma)R\frac{I}{T^{\alpha}} \label{eq:diag_upper}
\end{align}
In the third step, we have used the fact that $\cdt^{0,t-1} \subseteq \mathcal{M}$. In the last step we have used the fact that $E$ is PSD and over the event $\mathcal{M}$, $E \preceq R I$. We have used Lemma~\ref{lem:2} to bound $\mathbb{P}(\cdt^{0,t-1,C})$. Using a similar technique as above, we can show that:

\begin{align}\label{eq:diag_lower}
\Ex{\dgt(t,r,j)\indt{0}{t-1}} &\succeq 4\gamma^2 \tr{\Sigma}\mathbb{E}\left[E \indt{0}{ t-1}\right] - 4\gamma^2 \frac{\sqrt{\mu_4}R}{T^{\alpha/2}}I
\end{align}
Note that $\frac{\sqrt{\mu_4}R}{T^{\alpha/2}} \geq \frac{\tr(\Sigma)R}{T^{\alpha}}$. Summing over $r,j$ and combining Equations~\eqref{eq:diag_lower} and~\eqref{eq:diag_upper}, we conclude the result.

\end{proof}

For convenience, define  $K^{s} := \sum_{j=0}^{B-1} \Htttr{s}{j+1}{B-1} \Xtt{s}{-j}\Xtttr{s}{-j} \Htt{s}{j+1}{B-1}$

\begin{claim}\label{claim:single_buffer_isometry}
Suppose $\gamma < \frac{1}{R}$. Under the event $\cdt^{0,t-1}$, for every $s \leq t-1$ we must have:
 $$ \frac{I-\Htttr{s}{0}{B-1}\Htt{s}{0}{B-1}}{4\gamma} \preceq K^s \preceq \frac{I-\Htttr{s}{0}{B-1}\Htt{s}{0}{B-1}}{\gammah}$$
Where $\gammah = 4\gamma(1-\gamma R)$
\end{claim}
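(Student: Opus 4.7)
\textbf{Proof plan for Claim~\ref{claim:single_buffer_isometry}.} The plan is to establish both inequalities simultaneously through a telescoping identity that rewrites $I - \Htttr{s}{0}{B-1}\Htt{s}{0}{B-1}$ as a weighted version of $K^s$, where the weights lie in the interval $[\hat\gamma, 4\gamma]$ on the event $\cdt^{0,t-1}$.

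First, I would set up notation. For $j=0,1,\dots,B-1$, write $P_j := \Ppt{s}{-j} = I - 2\gamma \Xtt{s}{-j}\Xtttr{s}{-j}$, which is symmetric, and let $Q_b := \Htt{s}{b}{B-1} = P_b P_{b+1}\cdots P_{B-1}$, with the convention $Q_B = I$. With this notation, the quantity to analyze is $I - Q_0^\top Q_0$ and the target is $K^s = \sum_{j=0}^{B-1} Q_{j+1}^\top \Xtt{s}{-j}\Xtttr{s}{-j} Q_{j+1}$.

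Next, I would carry out the key computation: since $P_j$ is symmetric,
\begin{equation*}
P_j^2 = I - 4\gamma\bigl(1 - \gamma\|\Xtt{s}{-j}\|^2\bigr)\Xtt{s}{-j}\Xtttr{s}{-j} = I - \alpha_j\,\Xtt{s}{-j}\Xtttr{s}{-j},
\end{equation*}
where $\alpha_j := 4\gamma(1 - \gamma\|\Xtt{s}{-j}\|^2)$. Under the event $\cdt^{0,t-1}$ we have $\|\Xtt{s}{-j}\|^2 \le R$ for every $j$, so $\hat\gamma = 4\gamma(1-\gamma R) \le \alpha_j \le 4\gamma$ (the upper bound being trivial, and the lower bound positive since $\gamma < 1/R$). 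Using $Q_j = P_j Q_{j+1}$ and symmetry of $P_j$, I would then derive the one-step telescoping identity
\begin{equation*}
Q_j^\top Q_j = Q_{j+1}^\top P_j^2 Q_{j+1} = Q_{j+1}^\top Q_{j+1} - \alpha_j\, Q_{j+1}^\top \Xtt{s}{-j}\Xtttr{s}{-j} Q_{j+1}.
\end{equation*}

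Finally, summing the identity above from $j=0$ to $j=B-1$ and using $Q_B^\top Q_B = I$ yields the clean expression
\begin{equation*}
I - Q_0^\top Q_0 = \sum_{j=0}^{B-1} \alpha_j\, Q_{j+1}^\top \Xtt{s}{-j}\Xtttr{s}{-j} Q_{j+1}.
\end{equation*}
Each summand is PSD, so sandwiching $\alpha_j$ between $\hat\gamma$ and $4\gamma$ and comparing with $K^s$ gives $\hat\gamma\, K^s \preceq I - \Htttr{s}{0}{B-1}\Htt{s}{0}{B-1} \preceq 4\gamma\, K^s$, which is the claim. There is no real obstacle here: the only subtlety is ensuring that the coefficient bound $\hat\gamma > 0$ holds on $\cdt^{0,t-1}$, which is exactly what the hypothesis $\gamma < 1/R$ guarantees; beyond that, symmetry of $P_j$ and the telescoping structure of products of $P_j$'s do all the work.
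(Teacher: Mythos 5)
Your proof is correct and follows essentially the same route as the paper's: both hinge on writing $\Htttr{s}{j}{B-1}\Htt{s}{j}{B-1} = \Htttr{s}{j+1}{B-1}P_j^2\Htt{s}{j+1}{B-1}$ with $P_j^2 = I - 4\gamma\bigl(1-\gamma\|\Xtt{s}{-j}\|^2\bigr)\Xtt{s}{-j}\Xtttr{s}{-j}$ and telescoping over the buffer, with $\gamma<1/R$ on the event guaranteeing the coefficient stays in $[\gammah,4\gamma]$. The only difference is presentational: the paper runs two separate one-sided PSD recursions (one with coefficient $4\gamma$, one with $\gammah$), whereas you derive a single exact telescoping identity and sandwich the coefficients $\alpha_j$, which yields both bounds at once.
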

\begin{proof}In the entire proof, we suppose that the event $\cdt^{0,t-1}$ holds. Consider:
\begin{align}
&\Htttr{s}{j}{B-1}\Htt{s}{j}{B-1}+4\gamma\Htttr{s}{j+1}{B-1} \Xtt{s}{-j}\Xtttr{s}{-j} \Htt{s}{j+1}{B-1} \nonumber \\
&= \Htttr{s}{j+1}{B-1}\left(I-\left(4\gamma - 4\gamma^2\norms{\Xtt{s}{-j}}^2\right)\Xtt{s}{-j}\Xtttr{s}{-j} \right)\Htt{s}{j+1}{B-1}+4\gamma\Htttr{s}{j+1}{B-1} \Xtt{s}{-j}\Xtttr{s}{-j} \Htt{s}{j+1}{B-1} \nonumber \\
&= \Htttr{s}{j+1}{B-1}\left(I + 4\gamma^2\norms{\Xtt{s}{-j}}^2\Xtt{s}{-j}\Xtttr{s}{-j}\right)\Htt{s}{j+1}{B-1} \nonumber \\
&\succeq \Htttr{s}{j+1}{B-1}\Htt{s}{j+1}{B-1} \label{eq:main_diag_recursion}
\end{align}
Using the recursion in Equation~\eqref{eq:main_diag_recursion}, we show that:

$$\Htttr{s}{0}{B-1}\Htt{s}{0}{B-1} + 4\gamma K^s \succeq I \,.$$
This establishes the lower bound. To establish the upper bound, we consider
$$ \Htttr{s}{j}{B-1}\Htt{s}{j}{B-1}+\gammah\Htttr{s}{j+1}{B-1} \Xtt{s}{-j}\Xtttr{s}{-j} \Htt{s}{j+1}{B-1}\,.$$

Following similar technique used to establish Equation~\eqref{eq:main_diag_recursion}, using the fact that under the event $\cdt^{0,t-1}$ we have $\norms{\Xtt{s}{-j}}^2 \leq R$ we show that:
$$ \Htttr{s}{j}{B-1}\Htt{s}{j}{B-1}+\gammah\Htttr{s}{j+1}{B-1} \Xtt{s}{-j}\Xtttr{s}{-j} \Htt{s}{j+1}{B-1} \preceq \Htttr{s}{j+1}{B-1}\Htt{s}{j+1}{B-1}\,.$$

Using a similar recursion as before, we establish that:
$$\Htttr{s}{0}{B-1}\Htt{s}{0}{B-1} + \gammah K^s \preceq I \,.$$
\end{proof}
%%%%%%%%%%%%%%%%%%%%%%%%%%%%%%%%%%%%%%%%%%%%%%%%%%%%%%%%%%%%%%%%%%

We are now ready to bound the first term in \eqref{eq:diag_1a}:
\begin{\Ieee}{LLL}
 \Ex{\sum_{r=1}^{t}\prodHtttr{t}{r-1}K^{t-r} \prodHtt{t}{r-1}\indt{0}{t-1}}\Ieeen\label{eq:prop1_3}
\end{\Ieee}

It is easy to show via. telescoping sum argument that:

\begin{equation}
 \sum_{r=1}^{t}\prodHtttr{t}{r-1}\left(I - \Htttr{t-r}{0}{B-1}\Htt{t-r}{0}{B-1} \right) \prodHtt{t}{r-1} = I - \prodHtttr{t}{t}\prodHtt{t}{t}
\end{equation}
 We then use Claim~\ref{claim:single_buffer_isometry} to show that under the event $\cdt^{0,t-1}$, we must have:
 
 \begin{equation}\label{eq:diag_lower_isometry}
\frac{I - \prodHtttr{t}{t}\prodHtt{t}{t}}{4\gamma}  \preceq \sum_{r=1}^{t}\prodHtttr{t}{r-1}K^{t-r} \prodHtt{t}{r-1} 
\end{equation}
And:
\begin{equation}\label{eq:diag_upper_isometry}
\sum_{r=1}^{t}\prodHtttr{t}{r-1}K^{t-r} \prodHtt{t}{r-1} \preceq \frac{I - \prodHtttr{t}{t}\prodHtt{t}{t}}{\gammah}
\end{equation}

%%%%%%%%%%%%%%%%%%%%%%%%%%%%%%%%%%%%%%%%%%%%%%%%%%%%%

Finally, combining Lemma~\ref{lem:cross_term}, Lemma~\ref{lem:diag_term}, claim~\ref{claim:single_buffer_isometry}, Equations~\eqref{eq:diag_lower_isometry},~\eqref{eq:diag_upper_isometry} and the bound on $\mu_4$ (stated after assumption~\ref{as:stationarity} in section~\ref{sec:prob}) along with $\gammah=4\gamma(1-\gamma R)$ we get the statement of the proposition.

\end{proof}

\section{Proof of Proposition~\ref{prop:avg_var1}}
\label{sec:prop_avg_var1}
Before delving into the proof, we note some useful results below.
\begin{lemma}
\label{lem:5}
For any random matrix $B\in \mathbb{R}^{d\times d}$ we have that
\begin{equation}
\label{eq:lem5}
\Ex{B^\top}\Ex{B}\preceq \Ex{B^{\top} B}
\end{equation}
Hence 
\begin{equation}
\label{eq:lem5_1}
\norm{\Ex{B}}\leq \sqrt{\norm{\Ex{B^\top B}}}
\end{equation}
\end{lemma}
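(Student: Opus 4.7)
My plan is to prove Lemma~\ref{lem:5} by observing that the PSD inequality is a matrix analogue of the scalar fact $(\mathbb{E} X)^2 \leq \mathbb{E}[X^2]$, which comes from nonnegativity of variance. The cleanest route is to consider the ``matrix variance'' $\mathbb{E}[(B - \mathbb{E} B)^{\top}(B - \mathbb{E} B)]$ and expand. Since this matrix is PSD (being of the form $\mathbb{E}[C^{\top} C]$ for $C = B - \mathbb{E} B$, and expectation preserves the PSD cone by linearity), expanding the product yields
\begin{equation*}
0 \preceq \mathbb{E}[(B - \mathbb{E} B)^{\top}(B - \mathbb{E} B)] = \mathbb{E}[B^{\top}B] - (\mathbb{E} B)^{\top}(\mathbb{E} B),
\end{equation*}
which rearranges to the first claim. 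Equivalently, one can test against an arbitrary vector $v \in \mathbb{R}^d$ and apply scalar Jensen's inequality to $\|Bv\|^2$: $v^{\top}(\mathbb{E} B)^{\top}(\mathbb{E} B)v = \|\mathbb{E}[Bv]\|^2 \leq \mathbb{E}\|Bv\|^2 = v^{\top}\mathbb{E}[B^{\top}B]v$.

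For the operator norm bound, I would use two standard facts. First, for any matrix $M$, $\|M\|^2 = \|M^{\top}M\|$ (the squared operator norm equals the spectral norm of the Gram matrix). Second, the operator norm is monotone on the PSD cone: if $0 \preceq A \preceq C$ then $\|A\| \leq \|C\|$, since both equal the maximum of $v^{\top}(\cdot)v$ over unit vectors $v$. Applying these to the PSD inequality just established,
\begin{equation*}
\|\mathbb{E} B\|^2 = \|(\mathbb{E} B)^{\top}(\mathbb{E} B)\| \leq \|\mathbb{E}[B^{\top}B]\|,
\end{equation*}
and taking square roots yields the desired bound.

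There is essentially no main obstacle; the proof is a one-line expansion plus invocation of monotonicity of $\|\cdot\|$ on the PSD cone. The only subtlety worth flagging is that the expectation of a PSD-valued random matrix is itself PSD, which follows from linearity of expectation applied to $v^{\top}(\cdot)v$ for each fixed $v$.
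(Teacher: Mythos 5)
Your proof is correct and essentially matches the paper's: the paper's entire argument is the vector-testing step you give as the "equivalent" route, namely $x^{\top}\Ex{B^{\top}}\Ex{B}x = \norm{\Ex{Bx}}^2 \leq \Ex{\norm{Bx}^2} = x^{\top}\Ex{B^{\top}B}x$, with the operator-norm consequence left implicit. Your variance expansion is just a repackaging of the same fact, and your explicit treatment of the norm step (PSD monotonicity of $\norm{\cdot}$ plus $\norm{M}^2 = \norm{M^{\top}M}$) fills in what the paper omits.
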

\begin{proof}
Note that for any vector $x\in\mathbb{R}^d$ we have 
\begin{\Ieee}{LLL}
\label{eq:lem5_2}
x^\top \Ex{B^\top}\Ex{B}x=\norm{\Ex{Bx}}^2\leq \Ex{\norm{Bx}^2}=x^\top\Ex{B^{\top} B}x \Ieeen
\end{\Ieee}
\end{proof}

\begin{lemma}
\label{lem:7}
Let $\gamma R B\leq \frac{c}{6}$ for $0<c<1$. The there are constants $c_1,c_2>0$ such that for $\prbndsqinv>c_1\frac{\sqrt{M_4}}{\sigma_{\min}(G)}$ we have
\begin{\Ieee}{LLL}
\label{eq:lem7}
%\norm{\Ex{\Htt{t_2-s}{0}{B-1}}}\leq \left(1-\gammah \sigma_{\min}\right)^{B/2}\Ieeen
\norm{\ch}\leq \sqrt{1-c_2\gamma B\sigma_{\min}(G)} \leq 1-\frac{c_2}{2}\gamma B\sigma_{\min}(G)\Ieeen
\end{\Ieee}
with $1-c_2\gamma B\sigma_{\min}(G)>0$. 
%\begin{equation}
%\label{eq:delta_T}
%0\leq \delta_T\leq c Rd\sigma_{\max}\prbndsq
%\end{equation}

\end{lemma}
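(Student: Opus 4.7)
The plan is to (a) use Lemma~\ref{lem:5} to reduce $\|\mathcal{H}\|^2$ to controlling $\|\mathbb{E}[H^{\top}H\cdot \mathbf{1}[\tilde{\mathcal{D}}^0_{-0}]]\|$ where $H := \prod_{j=0}^{B-1}\tilde{P}^0_{-j}$, (b) invoke Claim~\ref{claim:single_buffer_isometry} to write $H^{\top}H \preceq I - \hat{\gamma}K^0$ under the event, and (c) lower bound the minimum eigenvalue of $\mathbb{E}[K^0\cdot \mathbf{1}[\tilde{\mathcal{D}}^0_{-0}]]$ by $c_3 B \sigma_{\min}(G)$ using the Markov structure of the coupled process. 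Combining these three steps and taking square roots produces the stated contraction factor.

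Concretely, let $W := H\cdot \mathbf{1}[\tilde{\mathcal{D}}^0_{-0}]$ so that $\mathcal{H} = \mathbb{E}[W]$. Lemma~\ref{lem:5} gives $\|\mathcal{H}\|^2 \leq \|\mathbb{E}[W^{\top}W]\| = \|\mathbb{E}[H^{\top}H\cdot \mathbf{1}[\tilde{\mathcal{D}}^0_{-0}]]\|$. Since $\gamma R B \leq c/6 < 1$ forces $\gamma R < 1$, Claim~\ref{claim:single_buffer_isometry} applies pointwise on the event, so
$$H^{\top}H \preceq I - \hat{\gamma}\,K^0, \qquad K^0 := \sum_{j=0}^{B-1}\tilde{H}^{0,\top}_{j+1,B-1}\tilde{X}^0_{-j}\tilde{X}^{0,\top}_{-j}\tilde{H}^0_{j+1,B-1},$$
with $\hat{\gamma} = 4\gamma(1-\gamma R) \geq 10\gamma/3$. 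Taking expectations (the indicator only multiplies PSD quantities by a value in $[0,1]$) and then operator norms,
$$\|\mathbb{E}[H^{\top}H\cdot \mathbf{1}[\tilde{\mathcal{D}}^0_{-0}]]\| \leq 1 - \hat{\gamma}\,\lambda_{\min}\!\bigl(\mathbb{E}[K^0\cdot \mathbf{1}[\tilde{\mathcal{D}}^0_{-0}]]\bigr).$$

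The crux is to show $\lambda_{\min}\!\bigl(\mathbb{E}[K^0\cdot \mathbf{1}[\tilde{\mathcal{D}}^0_{-0}]]\bigr) \geq c_3 B\sigma_{\min}(G) - c_4 B\sqrt{M_4}/T^{\alpha/2}$. Writing $Y_j := \tilde{X}^0_{-j}$ and $h_j := \tilde{H}^0_{j+1,B-1}$, stationarity of the coupled process gives $\mathbb{E}[Y_jY_j^{\top}] = G$, and under the event each $\sigma_{\min}(\tilde{P}^0_{-s}) \geq 1-2\gamma R$, hence by Bernoulli $\sigma_{\min}(h_j) \geq (1-2\gamma R)^{B-1-j} \geq 1-2\gamma RB \geq 2/3$. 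Pairing this with the Markov identity $\mathbb{E}[Y_jY_j^{\top}\mid Y_{j+1},\ldots,Y_{B-1}] = \A Y_{j+1}Y_{j+1}^{\top}\A^{\top} + \Sigma$ (noting that $h_j$ is measurable with respect to the conditioning $\sigma$-algebra) and chaining with the Lyapunov identity $G = \A G \A^{\top} + \Sigma$ across the $B$ summands of $K^0$ produces the $BG$ lower bound. The indicator restriction introduces the Cauchy--Schwarz error $\|\mathbb{E}[Y_jY_j^{\top}\mathbf{1}[\tilde{\mathcal{D}}^{0,C}_{-0}]]\| \leq \sqrt{M_4\,\mathbb{P}(\tilde{\mathcal{D}}^{0,C}_{-0})} \leq \sqrt{M_4}/T^{\alpha/2}$ using Lemma~\ref{lem:2}, which is absorbed into the main term under the hypothesis $T^{\alpha/2} > c_1\sqrt{M_4}/\sigma_{\min}(G)$ for $c_1$ sufficiently large.

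Putting the three steps together yields $\|\mathbb{E}[H^{\top}H\cdot \mathbf{1}]\| \leq 1 - c_2\gamma B\sigma_{\min}(G)$, with positivity ensured by $\gamma B\sigma_{\min}(G) \leq \gamma B R \leq c/6$. Taking square roots and using the elementary inequality $\sqrt{1-x} \leq 1 - x/2$ for $x\in[0,1]$ gives both stated inequalities. The main obstacle is extracting the factor $B$ in the lower bound on $\mathbb{E}[K^0\cdot \mathbf{1}]$: a naive argument using only the $j = B-1$ summand of $K^0$ yields only $\sigma_{\min}(G)$, and the pointwise inequality $h_j^{\top}uu^{\top}h_j \succeq \sigma_{\min}(h_j)^2 uu^{\top}$ is false in general, so the argument must simultaneously thread the Lyapunov/Markov structure through all $B$ indices while controlling the small probability of the bad event via fourth-moment bounds on $\|Y_j\|$.
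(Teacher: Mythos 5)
Your first step and your bookkeeping match the paper: Lemma~\ref{lem:5} reduces the claim to bounding $\norm{\Ex{\Htttr{0}{0}{B-1}\Htt{0}{0}{B-1}1[\cdt^{0}_{-0}]}}$, the indicator is removed by Cauchy--Schwarz at cost $\sqrt{M_4}\prbndsq$, and the hypothesis on $\prbndsqinv$ absorbs that error; the final $\sqrt{1-x}\leq 1-x/2$ step is also as in the paper. The divergence, and the gap, is in how the factor $B\lmin{G}$ is produced. The paper does \emph{not} go through Claim~\ref{claim:single_buffer_isometry}: it uses Lemma~\ref{lem:contraction}, which expands $\Htttr{0}{0}{B-1}\Htt{0}{0}{B-1}$ pointwise on the event and absorbs all higher-order cross terms, via the norm bound $R$ and AM--GM, into a small fraction of $4\gamma\sum_{j}\Xtt{0}{-j}\Xtttr{0}{-j}$; after that, only marginal stationarity $\Ex{\Xtt{0}{-j}\Xtttr{0}{-j}}=G$ is needed, so the $BG$ term appears immediately (Lemma~\ref{lem:almost_sure_contraction}). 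Your route keeps the random matrices $h_j=\Htt{0}{j+1}{B-1}$ inside $K^0$ and therefore needs $\lambda_{\min}\bigl(\Ex{K^0\,1[\cdt^{0}_{-0}]}\bigr)\gtrsim B\lmin{G}$, and the argument you sketch for this ("pairing the Markov identity with the Lyapunov identity and chaining across the $B$ summands") is exactly the missing piece, not a proof.

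Concretely: the only conditioning under which $h_j$ is measurable is one step back, on $\sigma(\Xtt{0}{-(j+1)},\dots,\Xtt{0}{-(B-1)})$, and there the Markov identity gives $\Ex{h_j^{\top}\Xtt{0}{-j}\Xtttr{0}{-j}h_j\mid\cdot}=h_j^{\top}\bigl(\A \Xtt{0}{-(j+1)}\Xtttr{0}{-(j+1)}\A^{\top}+\Sigma\bigr)h_j$. The first term is entangled with $h_j$ (which contains $I-2\gamma\Xtt{0}{-(j+1)}\Xtttr{0}{-(j+1)}$) and cannot be lower bounded usefully, so discarding it and using $\sigma_{\min}(h_j)\geq 2/3$ yields only $h_j^{\top}\Sigma h_j\succeq c\,\sigma_{\min}(\Sigma)I$, i.e.\ a bound of order $B\sigma_{\min}(\Sigma)$ rather than $B\lmin{G}$ --- strictly weaker than the statement (take $\A=\rho I$ with $\rho\to 1$, where $\lmin{G}=\sigma_{\min}(\Sigma)/(1-\rho^2)$). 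Recovering $G$ would require iterating the Markov identity over roughly $\tau_{\mathsf{mix}}$ steps so that $G_k\to G$, but $h_j$ is not measurable with respect to the further-past $\sigma$-algebra for $k\geq 2$, and, as you yourself note, you cannot pass $\sigma_{\min}(h_j)^2$ through the rank-one matrices $\Xtt{0}{-j}\Xtttr{0}{-j}$. So the crux inequality is unproved, and the natural repair is to drop $K^0$ altogether and argue as in Lemma~\ref{lem:contraction}: expand $\Htttr{0}{0}{B-1}\Htt{0}{0}{B-1}$, control the cross terms pointwise on the event, and only then take expectations; with that substitution the rest of your write-up (the $\sqrt{M_4}$ correction, positivity, and the square-root step) goes through.
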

\begin{proof}
Note that $\ch$ can be written as $\ch=\Ex{\Htt{0}{0}{B-1}1[\cdt^0_{-0}]}$. 
First we use Lemma \ref{lem:5} to get 
\begin{equation}
\label{eq:lem7_1}
\norm{\ch}\leq \sqrt{\norm{\Ex{\Htttr{0}{0}{B-1}\Htt{0}{0}{B-1}1[\cdt^0_{-0}]}}}
\end{equation}

Then, from Lemma~\ref{lem:almost_sure_contraction} we can show that there are constants $c_1, c_2>0$ such that 
\begin{\Ieee}{LLL}
\label{eq:lem7_2}
\norm{\Ex{\Htttr{0}{0}{B-1}\Htt{0}{0}{B-1}1[\cdt^0_{-0}]}}\leq \left(1-c_1\gamma B\sigma_{\min}(G)+c_2\gamma B\sqrt{M_4}\prbndsq\right)\Ieeen
\end{\Ieee}

Now choosing $T$ such that $\prbndsqinv> \frac{c_2\sqrt{M_4}}{2c_1 \sigma_{\min}(G)}$ we get
\begin{\Ieee}{LLL}
\label{eq:lem7_3}
\norm{\Ex{\Htttr{0}{0}{B-1}\Htt{0}{0}{B-1}1[\cdt^0_{-0}]}}\leq \left(1-c_3\gamma B\sigma_{\min}(G)\right)\Ieeen
\end{\Ieee}
where $c_3$ is such that the RHS in \eqref{eq:lem7_3} is positive. 
Hence the claim follows.

\end{proof}

%%%%%%%%%%%%%%%%%%%%%%%%%%%%
\begin{proof}[Proof of Proposition~\ref{prop:avg_var1}]
We will prove the proposition only for $a=0$. The arguments for general $a$ are exactly the same. 

For simplicity, we denote 
\begin{equation}
\label{eq:An_notation}
\Anvt\equiv \Anvat{0}{N}
\end{equation}

From recursion \eqref{eq:sgd_expreplay2} we have the following relation between $\Attdiff{t_2-1}{B}$ and $\Attdiff{t_1-1}{B}$ for $t_2>t_1$
\begin{\Ieee}{LLL}
\label{eq:avg_var_new4}
\Attdiff{t_2-1}{B} &=& \Attdiff{t_1-1}{B}\prodHtt{t_2}{t_2-t_1}+\\
&&2\gamma\sum_{r=1}^{t_2-t_1}\sum_{j=0}^{B-1}\Nt{t_2-r}
{-j}\Xtttr{t_2-r}{-j}\Htt{t_2-r}{j+1}{B-1}\prodHtt{t_2}{r-1}.\Ieeen
\end{\Ieee}

Hence we have
\begin{\Ieee}{LLL}
\label{eq:avg_var_new5}
\Attdiff{t_1-1}{B}^{\top}\Attdiff{t_2-1}{B} =\Attdiff{t_1-1}{B}^{\top} \Attdiff{t_1-1}{B}\prodHtt{t_2}{t_2-t_1}+\\
2\gamma \Attdiff{t_1-1}{B}^{\top} \sum_{r=1}^{t_2-t_1}\sum_{j=0}^{B-1}\Nt{t_2-r}
{-j}\Xtttr{t_2-r}{-j}\Htt{t_2-r}{j+1}{B-1}\prodHtt{t_2}{r-1}.\Ieeen
\end{\Ieee}
The second term in \eqref{eq:avg_var_new5} is bounded in claim~\ref{claim:5}

%For the first term in \eqref{eq:avg_var_new5} we have
%\begin{\Ieee}{LLL}
%\label{eq:avg_var_new6a}
%\Ex{\Attdiff{t_1-1}{B}^{\top} \Attdiff{t_1-1}{B}\left(\prod_{s=t_2-t_1}^1\Htt{t_2-s}{0}{B-1}\right)\indh{0}{N-1}}=\\
%\Ex{\Attdiff{t_1-1}{B}^{\top} \Attdiff{t_1-1}{B}\indh{0}{t_1-1}\left(\prod_{s=t_2-t_1}^1\Htt{t_2-s}{0}{B-1}\right)\indt{t_1}{N-1}}- \\
%\Ex{\Attdiff{t_1-1}{B}^{\top} \Attdiff{t_1-1}{B}\indh{0}{t_1-1}\left(\prod_{s=t_2-t_1}^1\Htt{t_2-s}{0}{B-1}\right)\indt{t_1}{N-1}\indc{t_1}{N-1}} \\
%\Ieeen
%\end{\Ieee}
%
%Now using the bound, {\small
%\begin{\Ieee}{LLL}
%\label{eq:avg_var_new6b}
%\Attdiff{t_1-1}{B}^{\top} \Attdiff{t_1-1}{B}\preceq\\
% 4\gamma^2 (Bt_1)\sum_{r=1}^{t_1}\sum_{j=0}^{B-1}\norm{\Nt{t_1-r}{-j}}^2 \left(\prod_{s=1}^{r-1}\Httr{t_1-s}{0}{B-1}\right)\Httr{t_1-r}{j+1}{B-1}\Xt{t_1-r}{-j}\Xtttr{t_1-r}{-j} \Htt{t_1-r}{j+1}{B-1}\left(\prod_{s=r-1}^{1}\Htt{t_1-s}{0}{B-1}\right).\Ieeen 
%\end{\Ieee}}
%
%we can bound the norm of second term in \eqref{eq:avg_var_new6a} as
%\begin{\Ieee}{LLL}
%\norm{\Ex{\Attdiff{t_1-1}{B}^{\top} \Attdiff{t_1-1}{B}\indh{0}{t_1-1}\left(\prod_{s=t_2-t_1}^1\Htt{t_2-s}{0}{B-1}\right)\indt{t_1}{N-1}\indc{t_1}{N-1}}}\\
%\leq c\gamma^2 (Bt_1) R \sum_r\sum_j \Ex{\norm{\Nt{t_1-r}{-j}}^2\indc{t_1}{N-1}} \\
%\leq c\gamma^2 (Bt_1)^2 R \sqrt{\mu_4}\prbndsq \Ieeen\label{eq:avg_var_new6c}\leq c\gamma^2 (B t_1)^2 Rd\sigma_{\max}\prbndsq,\Ieeen\label{eq:avg_var_new6d}
%\end{\Ieee}
%where we have used Cauchy-Schwarz inequality to get \eqref{eq:avg_var_new6c}. 

The first term in \eqref{eq:avg_var_new5} can be analyzed using independence as follows.
\begin{\Ieee}{LLL}
\label{eq:avg_var_new6e}
\Ex{\Attdiff{t_1-1}{B}^{\top} \Attdiff{t_1-1}{B}\indt{0}{t_1-1}\left(\prod_{s=t_2-t_1}^1\Htt{t_2-s}{0}{B-1}\right)\indt{t_1}{N-1}} \\
=\Vt{t_1-1} \Ex{\left(\prod_{s=t_2-t_1}^1\Htt{t_2-s}{0}{B-1}\right)\indt{t_1}{N-1}}\\
=\Vt{t_1-1}\Ex{\left(\prod_{s=t_2-t_1}^1\Htt{t_2-s}{0}{B-1}\right)\indt{t_1}{t_2-1}}\Ex{\indt{t_2}{N-1}}\\
=\Vt{t_1-1}\left(\prod_{s=t_2-t_1}^1 \Ex{\Htt{t_2-s}{0}{B-1}\indt{t_1}{t_2-1}}\right)\Ex{\indt{t_2}{N-1}}=\Vt{t_1-1}\ch^{t_2-t_1}\Ex{\indt{t_2}{N-1}}\\
=\Vt{t_1-1}\ch^{t_2-t_1}-\Vt{t_1-1}\ch^{t_2-t_1}\Ex{\indtc{t_2}{N-1}}.\Ieeen
\end{\Ieee}

Note that, {\small
\begin{\Ieee}{LLL}
\label{eq:avg_var_new6b}
\Attdiff{t_1-1}{B}^{\top} \Attdiff{t_1-1}{B}\preceq 4\gamma^2 (Bt_1)\sum_{r=1}^{t_1}\sum_{j=0}^{B-1}\norm{\Nt{t_1-r}{-j}}^2 \cdot\\
 \left(\prod_{s=1}^{r-1}\Htttr{t_1-s}{0}{B-1}\right)\Htttr{t_1-r}{j+1}{B-1}\Xtt{t_1-r}{-j}\Xtttr{t_1-r}{-j} \Htt{t_1-r}{j+1}{B-1}\left(\prod_{s=r-1}^{1}\Htt{t_1-s}{0}{B-1}\right).\Ieeen 
\end{\Ieee}}

From equation~\eqref{eq:avg_var_new6b}, we have: 
\begin{equation}
\label{eq:avg_var_6f}
\norm{\Vt{t_1-1}}\leq c\gamma^2 (Bt_1)^2 Rd\sigma_{\max}, 
\end{equation}
and further, $\norm{\ch}<1$ from Lemma~\ref{lem:7}. Hence, 
\begin{\Ieee}{LLL}
\label{eq:avg_var_new6g}
\norm{\Vt{t_1-1}\ch^{t_2-t_1}\Ex{\indtc{t_2}{N-1}}} &\leq & \norm{\Vt{t_1-1}\ch^{t_2-t_1}} \prbnd\leq c\gamma^2 (Bt_1)^2 Rd\sigma_{\max}\prbnd. 
\end{\Ieee}
For brevity, given a matrix $Q\in \mathbb{R}^{d\times d}$, let, 
\begin{equation}
\label{eq:gram_matrix}
\sym{Q}=Q+Q^{\top}. 
\end{equation}
Combining everything so far, we have, for $t_2>t_1$: {\small 
\begin{\Ieee}{LLL}
\label{eq:avg_var_new7a}
&&\sym{\Ex{\Attdiff{t_1-1}{B}^{\top}\Attdiff{t_2-1}{B}\indt{0}{N-1}}}\\ 
&\preceq & \sym{\Vt{t_1-1}\ch^{t_2-t_1}} + c_1\gamma^2 (Bt_1)^2 Rd\sigma_{\max}\prbnd I +\\
&&\left(c_3\gamma^2 B^2 t_1 t_2 R d\sigma_{\max} \prbndsq\right)I\Ieeen
\end{\Ieee}}
Since $Bt_2\leq T$  we get: 
\begin{multline}
\label{eq:avg_var_new7b}
\sym{\Ex{\Attdiff{t_1-1}{B}^{\top}\Attdiff{t_2-1}{B}\indt{0}{N-1}}} \preceq \sym{\Vt{t_1-1}\ch^{t_2-t_1}}+ \\
 c_3\gamma^2 T^2 Rd\sigma_{\max}\prbndsq I.
\end{multline}

Therefore we have, 
\begin{align*}
&&\frac{1}{N^2}\sum_{t_1\neq t_2}\Ex{\Attdiff{t_1-1}{B}^{\top}\Attdiff{t_2-1}{B}} \preceq  \frac{1}{N^2}\sum_{t_1=1}^{N-1}\sym{\Vt{t_1-1}\left(\sum_{t_2>t_1}\ch^{t_2-t_1}\right)} \\
&& + c_3\gamma^2 T^2 Rd\sigma_{\max}\prbndsq I.
\end{align*}
Next observe that,
\begin{\Ieee}{LLL}\
\label{eq:avg_var_new9}
&& \frac{1}{N^2}\sum_{t=1}^N \Vt{t-1} +\frac{1}{N^2}\sum_{t_1=1}^{N-1}\sym{\Vt{t_1-1}\left(\sum_{t_2>t_1}\ch^{t_2-t_1}\right)}\\
 & =& \frac{1}{N^2}\sum_{t=1}^N \Vt{t-1} +\frac{1}{N^2}\sum_{t_1=1}^{N-1}\sym{\Vt{t_1-1}\left(\sum_{s=1}^{N-t_1}\ch^{s}\right)} \\
&\preceq &\frac{1}{N^2}\sum_{t=1}^{N}\sym{\Vt{t-1}\left(\sum_{s=0}^{N-t}\ch^{s}\right)}.
\end{\Ieee}
Hence, substituting in \eqref{eq:An1}, we obtain: 
\begin{\Ieee}{LLL}
\Ex{\gram{\left(\hat{A}^v_N\right)}\indt{0}{N-1}}&&\preceq  \frac{1}{N^2}\sum_{t=1}^{N}\sym{\Vt{t-1}\left(\sum_{s=0}^{N-t}\ch^{s}\right)} + \Ieeen\label{eq:avg_var_new10a}\\
&& c_3\gamma^2 T^2 Rd\sigma_{\max}\prbndsq I.\Ieeen\label{eq:avg_var_new10e}
\end{\Ieee}

From Equations~\eqref{eq:avg_var_new10a}-\eqref{eq:avg_var_new10e} we obtain \eqref{eq:avg_var_new1}. 

Now $\sum_{s=0}^{N-t}\ch^s = (I-\ch)^{-1}(I-\ch^{N-t+1})$ since from Lemma~\ref{lem:7} we know that $\norm{\ch}<1$ for large $T$. Thus we get \eqref{eq:avg_var_new1a}.

\end{proof}

\subsection{Claims}

\begin{claim}
\label{claim:5}
 For $\gamma\leq \frac{1}{2R}$ we have
\begin{\Ieee}{LLL}
\label{eq:prop3_3}
&&\norm{\Ex{2\gamma \Attdiff{t_1-1}{B}^{\top} \sum_{r=1}^{t_2-t_1}\sum_{j=0}^{B-1}\Nt{t_2-r}
{-j}\Xtttr{t_2-r}{-j}\Htt{t_2-r}{j+1}{B-1}\prodHtt{t_2}{r-1}}\indt{0}{N-1}}\\
&\leq & c_1\gamma^2 B^2 t_1 t_2 R d\sigma_{\max} \prbndsq\Ieeen
\end{\Ieee}
for some constant $c_1>0$.
\end{claim}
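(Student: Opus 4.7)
The strategy mirrors the proof of Lemma~\ref{lem:cross_term}, adapted for a product containing only a single isolated noise factor. Decompose the inner sum as $Y=\sum_{r=1}^{t_2-t_1}\sum_{j=0}^{B-1} Y_{r,j}$ with
\begin{equation*}
Y_{r,j} \;=\; E_1\,\Nt{t_2-r}{-j}\,E_2^{(r,j)},\quad E_1=2\gamma\,\Attdiff{t_1-1}{B}^{\top},\quad E_2^{(r,j)}=\Xtttr{t_2-r}{-j}\,\Htt{t_2-r}{j+1}{B-1}\,\prodHtt{t_2}{r-1}.
\end{equation*}
The central structural observation is that, in the coupled process, $\Nt{t_2-r}{-j}=\eta^{t_2-r}_{(S-1)-j}$ is unconditionally independent of both $E_1$ and $E_2^{(r,j)}$: buffer independence handles $E_1$ (which depends only on buffers $0,\ldots,t_1-1$, and $r\le t_2-t_1$ forces $t_1\le t_2-r$) and the tail product $\prodHtt{t_2}{r-1}$ (which depends on buffers $t_2-r+1,\ldots,t_2-1$); meanwhile the reverse-order processing inside buffer $t_2-r$ ensures that $\Xtttr{t_2-r}{-j}$ and $\Htt{t_2-r}{j+1}{B-1}$ are functionals only of $\eta^{t_2-r}_k$ with $k<(S-1)-j$. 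In particular, $\Ex{Y_{r,j}}=0$.

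The main obstacle is that $\indt{0}{N-1}$ is not measurable with respect to $\mathcal{F}_{r,j}:=\sigma(E_1,E_2^{(r,j)})$, because it constrains the later covariates $\Xtt{t_2-r}{-i}$ with $i<j$, and those are functions of $\Nt{t_2-r}{-j}$. Following the trick from Lemma~\ref{lem:cross_term}, relax $\indt{0}{N-1}$ to the auxiliary event $\mathcal{M}_{r,j}$ that retains only those covariate-norm bounds from $\indt{0}{N-1}$ which do not depend on $\Nt{t_2-r}{-j}$, namely the constraints in buffers $s\neq t_2-r$ together with $\norm{\Xtt{t_2-r}{-i}}^2\le R$ for $i\ge j$. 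Then $\mathcal{M}_{r,j}\in\mathcal{F}_{r,j}$ and $\indt{0}{N-1}\subseteq\mathcal{M}_{r,j}$. Under $\mathcal{M}_{r,j}$ both envelopes can be controlled: combining Lemma~\ref{lem:bounded_iterates} with Lemma~\ref{lem:coupled_iterate_replacement} gives $\norm{\Att{t_1-1}{B}}\le c\gamma R T$, hence $\norm{E_1}\le c\gamma^2 R T$; the covariate bound together with the step-size condition $\gamma\le\tfrac{1}{2R}$ yield $\norm{E_2^{(r,j)}}\le\sqrt{R}$.

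Because $\Nt{t_2-r}{-j}$ has mean zero and is independent of $\mathcal{F}_{r,j}$, we have $\Ex{1[\mathcal{M}_{r,j}]\,E_1\,\Nt{t_2-r}{-j}\,E_2^{(r,j)}}=0$, so
\begin{equation*}
\Ex{Y_{r,j}\indt{0}{N-1}}\;=\;-\Ex{1[\mathcal{M}_{r,j}]\,E_1\,\Nt{t_2-r}{-j}\,E_2^{(r,j)}\,\indtc{0}{N-1}}.
\end{equation*}
Taking operator norms, using the envelope bounds on $\mathcal{M}_{r,j}$, and applying Cauchy--Schwarz against $\norm{\Nt{t_2-r}{-j}}\indtc{0}{N-1}$ with $\Ex{\norm{\Nt{}{}}^2}=\tr(\Sigma)$ and $\Pb{\indtc{0}{N-1}}\le T^{-\alpha}$ from Lemma~\ref{lem:2}, bounds each summand by $c\gamma^2 R T\,\sqrt{R\,\tr(\Sigma)}\,T^{-\alpha/2}$. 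Summing over $(r,j)\in[1,t_2-t_1]\times[0,B-1]$ and using $(t_2-t_1)B\le T$, $B t_i\le T$, together with the choice $R\le Cd\sigma_{\max}(\Sigma)\log T$ from Section~\ref{sec:main_results} to absorb the extraneous factor $\sqrt{R/(d\sigma_{\max}(\Sigma))}=O(\sqrt{\log T})$ into the polylog slack of $T^{-\alpha/2}$ (equivalently, by inflating $\alpha$ by a constant), produces the claimed bound $c_1\gamma^2 B^2 t_1 t_2 R d\sigma_{\max}(\Sigma)\,T^{-\alpha/2}$. The only non-routine step is isolating the $\mathcal{F}_{r,j}$-measurable relaxation of $\indt{0}{N-1}$; once that is in hand, the remainder is the same Cauchy--Schwarz calculation as in Lemma~\ref{lem:cross_term}.
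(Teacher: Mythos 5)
Your overall mechanism --- isolate the zero-mean noise factor, relax $\indt{0}{N-1}$ to an event that is independent of that noise and on which the deterministic envelopes are bounded, then finish with Cauchy--Schwarz against the low-probability complement --- is exactly the mechanism of Lemma~\ref{lem:cross_term}, which is what the paper's one-line proof of this claim intends. The gap is quantitative and real: you treat $\Attdiff{t_1-1}{B}$ as a black box and bound it through the worst-case iterate bound $\norm{\Att{t_1-1}{B}}\lesssim\gamma RT$ of Lemma~\ref{lem:bounded_iterates}, so each summand is bounded by $c\gamma^2RT\sqrt{R\,\tr(\Sigma)}\prbndsq$ and, after summing over the $(t_2-t_1)B$ pairs $(r,j)$, you obtain $c\gamma^2R\sqrt{R\,\tr(\Sigma)}\,TB(t_2-t_1)\prbndsq$. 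This does not imply the stated bound $c_1\gamma^2B^2t_1t_2Rd\sigma_{\max}(\Sigma)\prbndsq$: the ratio contains the factor $\tfrac{T(t_2-t_1)}{Bt_1t_2}$, which is of order $N/t_1$ and hence polynomial in $T$ when $t_1=O(1)$; the inequalities $(t_2-t_1)B\le T$ and $Bt_i\le T$ that you invoke go in the wrong direction for turning $TB(t_2-t_1)$ into $B^2t_1t_2$. Moreover the remaining factor $\sqrt{R/(d\sigma_{\max}(\Sigma))}$ is $O(\sqrt{\tau_{\mathsf{mix}}\log T})$, not $O(\sqrt{\log T})$, since $R\gtrsim\tr(\Sigma)\log T/(1-\norm{\A}^2)$, so it cannot simply be absorbed by ``inflating $\alpha$'' as claimed; the final sentence of your argument is therefore a non sequitur.

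The fix, which is the paper's intended route, is to also expand $\Attdiff{t_1-1}{B}$ via Equation~\eqref{eq:Atv} into its $Bt_1$ terms $2\gamma\Nt{t_1-r'}{-j'}\Xtttr{t_1-r'}{-j'}\Htt{t_1-r'}{j'+1}{B-1}\prod(\cdot)$. Every summand of the product is then a genuine cross term whose two noise vectors lie in distinct buffers (since $t_1-r'\le t_1-1<t_1\le t_2-r$), and the argument of Lemma~\ref{lem:cross_term} applies verbatim: isolate one noise, use its unconditional independence and zero mean together with the measurable relaxation of the indicator, bound the envelopes by $4\gamma^2\sqrt{R}$ and $\sqrt{R}$, and pay $\tr(\Sigma)\prbndsq$ from the two noise factors, giving $4\gamma^2R\tr(\Sigma)\prbndsq$ per pair. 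With at most $B^2t_1(t_2-t_1)\le B^2t_1t_2$ pairs and $\tr(\Sigma)\le d\sigma_{\max}(\Sigma)$ this yields exactly \eqref{eq:prop3_3}. (If you insist on keeping $\Attdiff{t_1-1}{B}$ unexpanded, you would at minimum need the sharper accumulation bound $\norm{\Att{t_1-1}{B}}\lesssim\gamma RBt_1$ to recover the $t_1$-dependence, and even then an extra $\sqrt{R\tr(\Sigma)}/(d\sigma_{\max}(\Sigma))$ mixing-time factor remains relative to the stated claim.)
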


\begin{proof}

The proof is similar to the proof of Lemma~\ref{lem:cross_term}.

\end{proof}

\section{Proof of Theorem~\ref{th:3} }

\label{sec:proof_thm_last_bias}
\begin{proof}[Proof of Theorem~\ref{th:3}]
%We introduce the coupled process $\{\tilde{X_q}\}$ as before. With this we have:
We start with the following
\begin{\Ieee}{LLL}
\label{eq:th3_1}
\gram{\Atto{t-1}{b}}& = &\prodHtttr{t}{t}(A_0-\A)^{\top}(A_0-A)\prodHtt{t}{t}\\
& \preceq & \norm{A_0-\A}^2 \prodHtttr{t}{t}\prodHtt{t}{t}\Ieeen
\end{\Ieee}

%\begin{claim}
%\label{claim:7}
%There are constants $c_1, c_2>0$ such that if $\prbndsqinv>c_1\frac{\sqrt{M_4}}{\sigma_{\min}(G)}$, 
%\begin{\Ieee}{LLL}
%\label{eq:claim7}
%\Ex{\prodHtttr{t}{t}\prodHtt{t}{t}\indt{0}{t-1}}\preceq \left(1-c_2\gamma B\sigma_{\min}(G)\right)^{t}I+\prbnd I. \Ieeen
%\end{\Ieee}
%\end{claim}
%\begin{proof}
%Using the decomposition $\indh{0}{t-1}=\indt{0}{t-1}(1-\indc{0}{t-1})$, we can write
%\begin{\Ieee}{LLL}
%\label{eq:claim7_1}
%&&\Ex{\prodHtttr{t}{t}\prodHtt{t}{t}\indh{0}{t-1}} \\
%&\preceq & \Ex{\prodHtttr{t}{t}\prodHtt{t}{t}\indt{0}{t-1}} +\prbnd\\
%&= &\Ex{\prodHthtr{t}{t}\prodHth{t}{t}} +\prbnd.\Ieeen
%\end{\Ieee}
%Then, from Lemma~\ref{lem:almost_sure_contraction} we can show that there are constants $c_1, c_2>0$ such that 
%\begin{\Ieee}{LLL}
%\label{eq:claim7_2}
%\norm{\Ex{\prodHtttr{t}{t}\prodHtt{t}{t}\indt{0}{t-1}}}\leq \left(1-c_1\gamma B\sigma_{\min}(G)+c_2\gamma B\sqrt{M_4}\prbndsq\right)^{t}.\Ieeen
%\end{\Ieee}
%Now choosing $T$ such that $\prbndsqinv> \frac{c_2\sqrt{M_4}}{2c_1 \sigma_{\min}(G)}$ we get,
%\begin{\Ieee}{LLL}
%\label{eq:claim7_2a}
%\norm{\Ex{\prodHthtr{t}{t}\prodHth{t}{t}}}\leq \left(1-c_3\gamma B\sigma_{\min}(G)\right)^{t}.\Ieeen
%\end{\Ieee}
%
%Hence the claim follows. 
%
%\end{proof}

From Lemma~\ref{lem:almost_sure_contraction} we can show that there are constants $c_1, c_2>0$ such that 
\begin{\Ieee}{LLL}
\label{eq:claim7_2}
\norm{\Ex{\prodHtttr{t}{t}\prodHtt{t}{t}\indt{0}{t-1}}}\\
\leq \left(1-c_1\gamma B\sigma_{\min}(G)+c_2\gamma B\sqrt{M_4}\prbndsq\right)^{t}.\Ieeen
\end{\Ieee}
Now choosing $T$ such that $\prbndsqinv> \frac{c_2\sqrt{M_4}}{2c_1 \sigma_{\min}(G)}$ we get,
\begin{\Ieee}{LLL}
\label{eq:claim7_2a}
\norm{\Ex{\prodHthtr{t}{t}\prodHth{t}{t}}}\leq \left(1-c_3\gamma B\sigma_{\min}(G)\right)^{t}.\Ieeen
\end{\Ieee}

Thus we get the theorem.

\end{proof}

\section{Proof of Theorem~\ref{th:4}}
\label{sec:proof_prop_avg_bias}
\begin{proof}[Proof of Theorem~\ref{th:4}]
We use the following inequality that is obtained from Lemma \ref{lem:5}
\begin{\Ieee}{LLL}
\label{eq:th4_1}
\gram{\Anoat{a}{N}}\preceq \frac{1}{N-a}\sum_{t=a+1}^{N}\gram{\Atto{t-1}{B}}\Ieeen
\end{\Ieee}

Therefore
\begin{\Ieee}{LLL}
\label{eq:th4_1a}
&&\Ex{\gram{\Anoat{a}{N}}\indt{0}{N-1}}\\
& \preceq & \frac{1}{N-a}\sum_{t=a+1}^{N}\Ex{\gram{\Atto{t-1}{B}}\indt{0}{N-1}}\\
&\preceq & \frac{1}{N-a}\sum_{t=a+1}^{N}\Ex{\gram{\Atto{t-1}{B}}\indt{0}{t-1}}\Ieeen
\end{\Ieee}

Now using theorem \ref{th:3}, we get
\begin{\Ieee}{LLL}
\label{eq:th4_2}
\Ex{\gram{\Anoat{a}{N}}\indt{0}{N-1}} \preceq \\
\left(\frac{1}{N-a}\frac{\left(1-c_1\gamma B\sigma_{\min}(G)\right)^{a+1}}{c_1\gamma B\sigma_{\min}(G)} \right)\norm{A_0-\A}^2 I \Ieeen
\end{\Ieee}

Hence using $1-x\leq e^{-x}$ we get
\begin{\Ieee}{LLL}
\label{eq:th4_3}
\norm{\Ex{\gram{\Anoat{a}{N}}\indt{0}{N-1}}}\\
\leq c\frac{1}{B(N-a)}\frac{e^{-c B \gamma \sigma_{\min}(G)a}}{\gamma \sigma_{\min}(G)} \norm{A_0-\A}^2  \Ieeen
\end{\Ieee}
\end{proof}

%high probability bounds for operator norm

\section{Operator Norm Inequalities}
\label{sec:op_norm}

In this section, we develop the concentration inequalities necessary to obtain bounds on $\lossop$. Consider Equation~\eqref{eq:Atv}

\begin{equation}
\Attdiff{t-1}{B} = 2\gamma\sum_{r=1}^{t}\sum_{j=0}^{B-1}\Nt{t-r}{-j}\Xtttr{t-r}{-j} \Htt{t-r}{j+1}{B-1}\prod_{s=r-1}^{1}\Htt{t-s}{0}{B-1}
\end{equation}

Splitting the sum into $r = 1$ and $r = 2,\dots, t$, it is easy to show the following recursion:

\begin{equation}\label{eq:iterate_recursion}
\Attdiff{t-1}{B} = 2\gamma \sum_{j=0}^{B-1}\Nt{t-1}{-j}\Xtttr{t-1}{-j} \Htt{t-1}{j+1}{B-1} + \Attdiff{t-2}{B}\Htt{t-1}{0}{B-1}
\end{equation}

We will consider the matrix $\Delta_{t-1} :=  2\gamma \sum_{j=0}^{B-1}\Nt{t-1}{-j}\Xtttr{t-1}{-j} \Htt{t-1}{j+1}{B-1}$. Recall the sequence of events $\cdt^{t-1}_{-j}$  for $j = 0,1,\dots, B-1$ as defined in Section~\ref{subsec:notations}. We will pick $R$ as in Section~\ref{sec:main_results} so that $\mathbb{P}(\cdt^{t-1}_{-0})$ is close to $1$. 

For the sake of clarity, we drop the dependence on $t$ while stating and proving some of the technical results since the events and random variables considered there are identically distributed for every $t$. That is, consider $\cdt_{-j}$ instead of $\cdt^{t-1}_{-j}$ and 
$$\Delta :=  2\gamma \sum_{j=0}^{B-1}\eta_{-j}\tilde{X}^{\top}_{-j} \Htt{}{j+1}{B-1}$$

We will bound the exponential moment generating function of $\Delta$:

\begin{lemma}\label{lem:conditional_chernoff}
Suppose Assumption~\ref{as:noise_concentration} holds and that $\gamma R < 1$. Let $\lambda \in \mathbb{R}$ and $x,y \in \mathbb{R}^d$ are arbitrary. %Let $\mathcal{A}$ be either $\cdh_{-0}$ or $\tilde{\mathcal{D}}_{-0}$. 
Then, we have:
\begin{enumerate}
\item  \begin{\Ieee}{LLL}
\mathbb{E}\left[ \exp( \gamma \lambda^2 C_{\mu} \langle x, \Sigma x \rangle \langle y, \Htt{\top}{0}{B-1}\Htt{}{0}{B-1} y\rangle + \lambda\langle x, \Delta y\rangle)| \cdt_{-0}\right]\\
 \leq \frac{\exp\left(\gamma\lambda^2 C_{\mu}\langle x, \Sigma x \rangle \|y\|^2  \right)}{\mathbb{P}(\cdt_{-0})} 
\end{\Ieee}
\item $$\mathbb{E}\left[ \exp(\lambda\langle x, \Delta y\rangle)| \cdt_{-0}\right] \leq \frac{\exp\left(\gamma\lambda^2 C_{\mu}\langle x, \Sigma x \rangle \|y\|^2  \right)}{\mathbb{P}(\cdt_{-0})}  $$ 
%\item \begin{\Ieee}{LLL}\mathbb{E}\left[ \exp(2\gamma \lambda^2 C_{\mu} \langle x, \Sigma x \rangle \langle y, \Htt{\top}{0}{B-1}\Htt{}{0}{B-1} y\rangle + \lambda\langle x, \Delta y\rangle)| \tilde{\mathcal{D}}_{-0}\right]\\
% \leq \frac{\exp\left(2\gamma\lambda^2 C_{\mu}\langle x, \Sigma x \rangle \|y\|^2  \right)}{\mathbb{P}(\tilde{\mathcal{D}}_{-0})}  
%\end{\Ieee}
%\item $$\mathbb{E}\left[ \exp(\lambda\langle x, \Delta y\rangle)| \tilde{\mathcal{D}}_{-0}\right] \leq \frac{ \exp\left(2\gamma\lambda^2 C_{\mu}\langle x, \Sigma x \rangle \|y\|^2  \right)}{\mathbb{P}(\tilde{\mathcal{D}}_{-0})} $$
\end{enumerate}

Where $C_{\mu}$ is as given in Assumption~\ref{as:noise_concentration}
\end{lemma}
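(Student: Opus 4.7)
The bound is an exponential moment inequality for $\Delta = 2\gamma\sum_{j=0}^{B-1}\eta_{-j}\tilde{X}_{-j}^{\top}\Htt{}{j+1}{B-1}$, and I would attack it via a Doob-type exponential supermartingale argument. The key observation is that although the sum is written in reverse SGD order, re-indexing by $k = S-1-j$ rewrites it as $\Delta = 2\gamma\sum_{k=u}^{S-1}\eta_k M_k$ with $M_k := \tilde{X}_k^{\top}\Htt{}{S-k}{B-1}$, and in this forward-time ordering all covariates entering $M_k$ (namely $\tilde{X}_u,\ldots,\tilde{X}_k$) are functions of $\tilde{X}_0$ and $\eta_0,\ldots,\eta_{k-1}$ only. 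Hence, with $\mathcal{F}_{k-1}:=\sigma(\tilde{X}_0,\eta_0,\ldots,\eta_{k-1})$, $M_k$ is $\mathcal{F}_{k-1}$-measurable while $\eta_k$ is independent of $\mathcal{F}_{k-1}$ and sub-Gaussian along any direction $x$ by Assumption~\ref{as:noise_concentration}. This is exactly the decorrelation that reverse replay was designed to produce.

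Building on this, I would introduce
\[
\tilde{W}_k \;=\; \exp\!\Bigl(2\gamma\lambda\sum_{k'=u}^{k}\langle x,\eta_{k'}\rangle\langle M_{k'},y\rangle \;-\; 2\gamma^2\lambda^2 C_\mu\langle x,\Sigma x\rangle\sum_{k'=u}^{k}\langle M_{k'},y\rangle^2\Bigr),
\]
and apply the scalar sub-Gaussian MGF bound to $\langle x,\eta_k\rangle$ conditionally on $\mathcal{F}_{k-1}$ with multiplier $2\gamma\lambda\langle M_k,y\rangle$ to obtain $\mathbb{E}[\tilde{W}_k\mid\mathcal{F}_{k-1}]\le\tilde{W}_{k-1}$, so $\mathbb{E}[\tilde{W}_{S-1}]\le 1$. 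Identifying $\sum_k\langle M_k,y\rangle^2 = y^{\top}Ky$ with $K = \sum_{j=0}^{B-1}\Htt{\top}{j+1}{B-1}\tilde{X}_{-j}\tilde{X}_{-j}^{\top}\Htt{}{j+1}{B-1}$---the very matrix controlled in Claim~\ref{claim:single_buffer_isometry}---this yields
\[
\mathbb{E}\!\Bigl[\exp\!\bigl(\lambda\langle x,\Delta y\rangle - 2\gamma^2\lambda^2 C_\mu\langle x,\Sigma x\rangle\,y^{\top}Ky\bigr)\Bigr]\;\le\; 1.
\]

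For Part~1 I would restrict to $\cdt_{-0}$ and invoke Claim~\ref{claim:single_buffer_isometry}, which on that event gives $y^{\top}Ky \le (\|y\|^2 - \langle y,\Htt{\top}{0}{B-1}\Htt{}{0}{B-1}y\rangle)/\hat{\gamma}$ with $\hat{\gamma}=4\gamma(1-\gamma R)$. Provided $\gamma R$ is small enough that $2\gamma^2/\hat{\gamma}\le\gamma$, this delivers the pointwise (on $\cdt_{-0}$) inequality
\[
\gamma\lambda^2 C_\mu\langle x,\Sigma x\rangle\langle y,\Htt{\top}{0}{B-1}\Htt{}{0}{B-1}y\rangle + 2\gamma^2\lambda^2 C_\mu\langle x,\Sigma x\rangle\,y^{\top}Ky \;\le\; \gamma\lambda^2 C_\mu\langle x,\Sigma x\rangle\|y\|^2.
\]
Multiplying the integrand of the supermartingale bound by $\mathbf{1}[\cdt_{-0}]\cdot\exp$ of the LHS above, upper-bounding that factor by the deterministic $\exp(\gamma\lambda^2 C_\mu\langle x,\Sigma x\rangle\|y\|^2)$, pulling this constant out of the expectation, and dividing by $\mathbb{P}(\cdt_{-0})$ then yields Part~1. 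Part~2 follows immediately because the added term $\gamma\lambda^2 C_\mu\langle x,\Sigma x\rangle\langle y,\Htt{\top}{0}{B-1}\Htt{}{0}{B-1}y\rangle$ is non-negative, so $\exp(\lambda\langle x,\Delta y\rangle)$ is pointwise dominated by the integrand appearing in Part~1.

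The main obstacle is uncovering the correct filtration: the buffer's covariates and noise are densely coupled in reverse SGD order, but forward-time indexing is precisely what renders the sum a martingale transform. Once that structure is exposed, the remaining work is algebraic bookkeeping matching Claim~\ref{claim:single_buffer_isometry}'s PSD estimate to the supermartingale's quadratic variation. A minor technical point: the statement asserts $\gamma R<1$, whereas the clean chaining $2\gamma^2/\hat{\gamma}\le\gamma$ actually requires $\gamma R\le 1/2$, which is consistent with the step-size regime $\gamma\le 1/(8RB)$ used elsewhere in the paper.
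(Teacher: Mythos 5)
Your proposal is correct, and its probabilistic core coincides with the paper's: both exploit that in the reverse-replay sum each noise vector is independent of its matrix multiplier (your forward-time filtration $\mathcal{F}_{k-1}$ is exactly the structure the paper uses when it peels off $\eta_{-0},\eta_{-1},\dots$ one at a time via conditional sub-Gaussian MGF bounds), and both ultimately absorb the accumulated quadratic compensator against $I-\Htt{\top}{0}{B-1}\Htt{}{0}{B-1}$ with the factor $\hat{\gamma}=4\gamma(1-\gamma R)$. Where you differ is in the packaging: the paper interleaves truncation and absorption inside the induction, maintaining a potential $\Xi_k=\gamma\lambda^2 C_{\mu}\langle x,\Sigma x\rangle\langle y,\Htt{\top}{k}{B-1}\Htt{}{k}{B-1}y\rangle$ and using a one-step PSD inequality (its Claim~\ref{claim:shaving}, which needs $\|\Xtt{}{-k}\|^2\le R$ at every step, hence the nested events $\cdt_{-0}\subseteq\cdt_{-1}\subseteq\cdots$), whereas you first prove the clean unconditional self-normalized bound $\mathbb{E}\exp\left(\lambda\langle x,\Delta y\rangle-2\gamma^2\lambda^2C_{\mu}\langle x,\Sigma x\rangle\, y^{\top}Ky\right)\le 1$ and only then restrict to $\cdt_{-0}$ and apply Claim~\ref{claim:single_buffer_isometry} once. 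Your version buys a cleaner separation of the probabilistic step (which needs no event at all) from the deterministic PSD comparison, and it reuses a claim the paper proves anyway for Proposition~\ref{prop:1}; the paper's version keeps everything local to one induction and never needs the intermediate object $K$. The final bookkeeping step you describe (bounding $\exp(\Xi_0+Q)\mathbf{1}[\cdt_{-0}]$ pointwise by the deterministic constant, then dividing by $\mathbb{P}(\cdt_{-0})$) is valid since the self-normalized integrand is nonnegative. Your remark about the constants is also on target: the paper's own Claim~\ref{claim:shaving} requires $\gamma R<1/2$ even though the lemma header states $\gamma R<1$, which matches your condition $2\gamma^2/\hat{\gamma}\le\gamma$ and is harmless given the operating step size $\gamma\le 1/(8RB)$.
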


\begin{proof}

We will just prove item 1 since item 2 follows from it trivially as $$\gamma \lambda^2 C_{\mu} \langle x, \Sigma x \rangle \langle y, \Htt{\top}{0}{B-1}\Htt{}{0}{B-1} y\rangle \geq 0\,.$$ 
 For the sake of clarity, we will take:
$$ \Xi_0 := \gamma \lambda^2 C_{\mu} \langle x, \Sigma x \rangle \langle y, \Htt{\top}{0}{B-1}\Htt{}{0}{B-1} y\rangle $$
and more generally, 
$$\Xi_k =  \gamma \lambda^2 C_{\mu} \langle x, \Sigma x \rangle \langle y, \Htt{\top}{k}{B-1}\Htt{}{k}{B-1} y\rangle $$

Consider $\Delta_{-k} := 2\gamma \sum_{j=k}^{B-1}\eta_{-j}\tilde{X}^{\top}_{-j} \Htt{}{j+1}{B-1}$. We will first prove the following claim before bounding the exponential moment:

\begin{claim}\label{claim:shaving}
Whenever $\|\Xtt{}{-k}\|^2 \leq R$ and $\gamma R < 1/2$, we have:
$$\Xi_{k} + 2\gamma^2\lambda^2 C_{\mu}\langle x, \Sigma x \rangle \langle y,\Htt{\top}{k+1}{B-1}\Xtt{}{-k} \Xtt{\top}{-k} \Htt{}{k+1}{B-1}y \rangle  \leq \Xi_{k+1}$$
\end{claim}
\begin{proof}
We use the fact that $\Htt{\top}{k}{B-1}\Htt{}{k}{B-1} = \Htt{\top}{k+1}{B-1}(I-2\gamma \Xtt{}{-k} \Xtt{\top}{-k})^2\Htt{}{k+1}{B-1}$ to conclude that:
\begin{align}
&\Xi_{k} + 2\gamma^2\lambda^2 C_{\mu}\langle x, \Sigma x \rangle \langle y,\Htt{\top}{k+1}{B-1}\Xtt{}{-k} \Xtt{\top}{-k} \Htt{}{k+1}{B-1}y \rangle  \nonumber \\
&= \gamma \lambda^2 C_{\mu} \langle x, \Sigma x\rangle \langle y, \Htt{\top}{k+1}{B-1}\left(I-2\gamma \Xtt{}{-k} \Xtt{\top}{-k} + 4\gamma^2 \|\Xtt{}{-k}\|^2\Xtt{}{-k} \Xtt{\top}{-k}  \right)\Htt{}{k+1}{B-1} y \rangle  \nonumber \\
&\leq \gamma \lambda^2 C_{\mu} \langle x, \Sigma x\rangle \langle y, \Htt{\top}{k+1}{B-1}\Htt{}{k+1}{B-1} y \rangle  = \Xi_{k+1}
\end{align}
In the second step we have used the fact that when $\gamma \|\Xtt{}{-k}\|^2 \leq 1/2$, we have that 
$$I-2\gamma \Xtt{}{-k} \Xtt{\top}{-k} + 4\gamma^2 \|\Xtt{}{-k}\|^2\Xtt{}{-k} \Xtt{\top}{-k}  \preceq I$$
\end{proof}

First note that $\Delta = 2\gamma \eta_{0}\tilde{X}^{\top}_{0} \Htt{}{1}{B-1} + \Delta_{-1} $. Now, 
\begin{align}
&\mathbb{E}\left[ \exp(\Xi_0 + \lambda\langle x, \Delta y\rangle)| \cdt_{-0}\right] = \frac{1}{\mathbb{P}(\cdt_{-0})}\mathbb{E}\left[ \exp(\Xi_0 + \lambda\langle x, \Delta y\rangle)\mathbbm{1} \left(\cdt_{-0}\right)\right]\nonumber \\
&= \frac{1}{\mathbb{P}(\cdt_{-0})}\mathbb{E}\left[ \exp\left(\Xi_0 + 2\lambda \gamma\langle x, \eta_{-0} \rangle \langle \Xtt{}{-0},\Htt{}{1}{B-1}y \rangle  +\lambda\langle x, \Delta_{-1} y\rangle\right)\mathbbm{1} \left(\cdt_{-0}\right)\right]\nonumber \\
&\leq \frac{1}{\mathbb{P}(\cdt_{-0})}\mathbb{E}\left[ \exp\left( \Xi_0 + 2\gamma^2\lambda^2 C_{\mu}\langle x, \Sigma x \rangle \langle y,\Htt{\top}{1}{B-1}\Xtt{}{-0} \Xtt{\top}{-0} \Htt{}{1}{B-1}y \rangle  +\lambda\langle x, \Delta_{-1} y\rangle\right)\mathbbm{1} \left(\cdt_{-0}\right)\right]\nonumber \\
&\leq \frac{1}{\mathbb{P}(\cdt_{-0})}\mathbb{E}\left[ \exp\left(\Xi_1  +\lambda\langle x, \Delta_{-1} y\rangle\right)\mathbbm{1} \left(\cdt_{-0}\right)\right]\nonumber \\
&\leq \frac{1}{\mathbb{P}(\cdt_{-0})}\mathbb{E}\left[ \exp\left(\Xi_1  +\lambda\langle x, \Delta_{-1} y\rangle\right)\mathbbm{1} \left(\cdt_{-1}\right)\right]\label{eq:first_step_induction}
\end{align}
In the first step we have used the definition of conditional expectation, in the third step we have used the fact that $\eta_{-0}$ is independent of $\cdt_{-0}$, $\Delta_{-1}$, $\tilde{X}^{\top}_{-0}\Htt{}{1}{B-1}$, and $\Delta_{-1}$ and have applied the sub-Gaussianity from Assumption~\ref{as:noise_concentration}.  In the fourth step, using the fact under the event $\cdt_{-0}$,  $\|\tilde{X}_{-0}\|^2 \leq R$ we have applied Claim~\ref{claim:shaving}. In the final step, we have used the fact that $\cdt_{-0}\subseteq \cdt_{-1}$.  We proceed by induction over Equation~\eqref{eq:first_step_induction} to conclude the result. 

\end{proof}

We now consider the matrix $\Htt{}{0}{B-1}$ under the event $\cdt_{-0}$. 
\begin{lemma}\label{lem:contraction}
Suppose that $\gamma RB < \frac{1}{6}$. Then, under the event $\cdt_{-0}$, we have: 

$$I - 4\gamma\left(1+\tfrac{2\gamma BR}{1-4\gamma B R}\right)\sum_{i=0}^{B-1}\Xtt{}{-i}\Xtt{\top}{-i} \preceq \Htt{\top}{0}{B-1}\Htt{}{0}{B-1}\preceq I - 4\gamma\left(1-\tfrac{2\gamma BR}{1-4\gamma B R}\right)\sum_{i=0}^{B-1}\Xtt{}{-i}\Xtt{\top}{-i}$$
\end{lemma}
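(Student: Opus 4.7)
The plan is to carry out a sharp analysis of the recursion for $\Lambda_k := \Htt{\top}{k}{B-1}\Htt{}{k}{B-1}$. Writing $P_k := I - 2\gamma \Xtt{}{-k}\Xtt{\top}{-k}$, we have $\Lambda_B = I$ and $\Lambda_k = \Htt{\top}{k+1}{B-1} P_k^2 \Htt{}{k+1}{B-1}$; combined with the identity $P_k^2 = I - 4\gamma(1-\gamma\|\Xtt{}{-k}\|^2)\Xtt{}{-k}\Xtt{\top}{-k}$ this yields $\Lambda_k = \Lambda_{k+1} - 4\gamma(1-\gamma\|\Xtt{}{-k}\|^2) Y_k Y_k^{\top}$, where $Y_k := \Htt{\top}{k+1}{B-1}\Xtt{}{-k}$. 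Telescoping gives the clean identity
\begin{equation*}
I - \Htt{\top}{0}{B-1}\Htt{}{0}{B-1} = 4\gamma \sum_{k=0}^{B-1}(1-\gamma\|\Xtt{}{-k}\|^2)\, Y_k Y_k^{\top}.
\end{equation*}

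Next I would derive a closed-form relation between $\{Y_k\}$ and $\{\Xtt{}{-k}\}$. An inductive ``peeling'' argument applied to $\Htt{}{k}{B-1} = P_k \Htt{}{k+1}{B-1}$ gives the telescoping identity $I - \Htt{}{k+1}{B-1} = 2\gamma \sum_{j=k+1}^{B-1} \Xtt{}{-j} \Xtt{\top}{-j} \Htt{}{j+1}{B-1}$; taking transpose and applying both sides to $\Xtt{}{-k}$ yields the key recursion
\begin{equation*}
Y_k = \Xtt{}{-k} - 2\gamma \sum_{j>k} (\Xtt{\top}{-k}\Xtt{}{-j})\, Y_j.
\end{equation*}
Stacking $Y_k, \Xtt{}{-k}$ as columns of matrices $\mathbf{Y}, \mathbf{X} \in \mathbb{R}^{d\times B}$, this reads $\mathbf{Y} L^{\top} = \mathbf{X}$ with $L := I + 2\gamma U$, where $U \in \mathbb{R}^{B\times B}$ is the strictly upper-triangular part of the Gram matrix $G := \mathbf{X}^{\top}\mathbf{X}$ (that is, $U_{kj} = \Xtt{\top}{-k}\Xtt{}{-j}$ for $k<j$ and $0$ otherwise). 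Hence $\sum_k Y_k Y_k^{\top} = \mathbf{X} (LL^{\top})^{-1} \mathbf{X}^{\top}$, reducing the desired $d\times d$ PSD statement to a $B\times B$ operator bound on $(LL^{\top})^{-1}$.

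The last step is to sandwich $LL^{\top}$. With $D := \mathrm{diag}(\|\Xtt{}{-0}\|^2,\dots,\|\Xtt{}{-(B-1)}\|^2)$ so that $U + U^{\top} = G - D$, one gets $LL^{\top} = (I - 2\gamma D) + 2\gamma G + 4\gamma^2 UU^{\top}$. For the lower bound, using $G \succeq 0$, $UU^{\top} \succeq 0$, and $\|D\| \leq R$ under $\cdt_{-0}$ gives $LL^{\top} \succeq (1 - 2\gamma R)\, I$. For the upper bound, using $G - D \preceq G \preceq BR\, I$ together with the Cauchy--Schwarz bound $\|U\|^2 \leq \|U\|_F^2 \leq \tfrac{1}{2}\bigl(\sum_k \|\Xtt{}{-k}\|^2\bigr)^2 \leq \tfrac{1}{2}(BR)^2$, gives $LL^{\top} \preceq (1 + 2\gamma BR + 2\gamma^2 B^2 R^2)\, I$. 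Combined with the pointwise scalar bounds $1-\gamma R \leq 1-\gamma\|\Xtt{}{-k}\|^2 \leq 1$ and translated through the map $A \mapsto \mathbf{X} A \mathbf{X}^{\top}$, one obtains matching sandwich bounds on $I - \Htt{\top}{0}{B-1}\Htt{}{0}{B-1}$ in the PSD order with respect to $\sum_i \Xtt{}{-i}\Xtt{\top}{-i}$; routine algebraic simplification using $\gamma RB < 1/6$ then massages these into the stated form with $\epsilon = \tfrac{2\gamma BR}{1-4\gamma BR}$.

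The main technical obstacle is matching the exact constants in $\epsilon$: the numerator $1-6\gamma BR$ appearing in $1-\epsilon = \tfrac{1-6\gamma BR}{1-4\gamma BR}$ arises precisely from combining the $(1-\gamma R)$ slack coming from the scalar diagonal factor with the $O(\gamma BR)$ slack from $\|LL^{\top}\|$, and the hypothesis $\gamma RB < 1/6$ is essential both to keep the denominator $1-4\gamma BR$ positive and to absorb the quadratic correction $2\gamma^2 B^2 R^2$ into lower-order terms.
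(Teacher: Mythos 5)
Your route is genuinely different from the paper's, and its core identities are correct: the telescoping identity $I-\Htt{\top}{0}{B-1}\Htt{}{0}{B-1}=4\gamma\sum_{k}(1-\gamma\|\Xtt{}{-k}\|^2)Y_kY_k^{\top}$, the triangular system $\mathbf{Y}L^{\top}=\mathbf{X}$ with $L=I+2\gamma U$, and hence $\sum_k Y_kY_k^{\top}=\mathbf{X}(LL^{\top})^{-1}\mathbf{X}^{\top}$ all check out, as do the spectral bounds $(1-2\gamma R)I\preceq LL^{\top}\preceq(1+2\gamma BR+2\gamma^2B^2R^2)I$. (The paper instead expands $\Htt{\top}{0}{B-1}\Htt{}{0}{B-1}$ as a polynomial in the $\Xtt{}{-i}\Xtt{\top}{-i}$, dominates each order-$k$ term by $\tfrac{R^{k-1}}{2}\bigl(\Xtt{}{-i_1}\Xtt{\top}{-i_1}+\Xtt{}{-i_k}\Xtt{\top}{-i_k}\bigr)$ via Cauchy--Schwarz/AM--GM, and sums a geometric series; that bookkeeping is precisely what produces $\epsilon=\tfrac{2\gamma BR}{1-4\gamma BR}$.)

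The gap is in your final sentence: the ``routine algebraic simplification'' does not recover the stated constants on the right-hand inequality. Your argument yields
\begin{equation*}
\Htt{\top}{0}{B-1}\Htt{}{0}{B-1}\;\preceq\; I-\frac{4\gamma(1-\gamma R)}{1+2\gamma BR+2\gamma^2B^2R^2}\sum_{i=0}^{B-1}\Xtt{}{-i}\Xtt{\top}{-i},
\end{equation*}
and the lemma requires the coefficient to be at least $4\gamma\,\tfrac{1-6\gamma BR}{1-4\gamma BR}$. To first order in $\gamma$ your coefficient is $4\gamma(1-\gamma R-2\gamma BR)$ while the target is $4\gamma(1-2\gamma BR)$: the extra $-4\gamma^2R$, which enters when you replace the exact weights $1-\gamma\|\Xtt{}{-k}\|^2$ by the worst case $1-\gamma R$, is not compensated by the $O(\gamma^2B^2R^2)$ slack in the target, so your coefficient is strictly smaller whenever $\gamma$ is small (a sufficient condition for failure is $\gamma B^2R<1/15$; the exact comparison reduces to $10\gamma B^2R+12\gamma^2B^3R^2+4\gamma BR\ge 1-$type inequalities that fail for small $\gamma$). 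The left-hand inequality of the lemma does follow from your bounds, since $\tfrac{1}{1-2\gamma R}\le\tfrac{1-2\gamma BR}{1-4\gamma BR}$ for all $B\ge1$. So this is a near miss rather than a structural flaw: your sandwich has the same $1\pm O(\gamma BR)$ form and would suffice for every downstream use (Lemmas~\ref{lem:almost_sure_contraction},~\ref{lem:probable_contraction},~\ref{lem:operator_norm_bound_1} only need the correction factor bounded by a fixed constant), but it does not prove the lemma with $\epsilon=\tfrac{2\gamma BR}{1-4\gamma BR}$ as stated; to get those exact constants you would need to either restate the lemma with the constants your method actually delivers or follow the paper's series-expansion argument.
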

\begin{proof}
By definition, we have: $\Htt{}{0}{B-1} = \prod_{j=0}^{B-1}(I-2\gamma \Xtt{}{-j}\Xtt{\top}{-j})$. Expanding out the product, we get an expression of the form:
\begin{equation}\label{eq:series_expansion}
\Htt{\top}{0}{B-1}\Htt{}{0}{B-1} = I - 4\gamma \sum_{i=0}^{B-1}\Xtt{}{-i}\Xtt{\top}{-i} + (2\gamma)^2 \sum_{i,j} \Xtt{}{-i}\Xtt{\top}{-i}\Xtt{}{-j}\Xtt{\top}{-j} + \dots
\end{equation}
Here, the summation $\sum_{i,j}$ is over all possible combinations possible when the product is expanded and $\dots$ denotes higher order terms of the form $\Xtt{}{-i_1}\Xtt{\top}{-i_1}\dots\Xtt{}{-i_k}\Xtt{\top}{-i_k}$

\begin{claim}\label{claim:AM-GM}
Assume $k \geq 2$ and $i_1,\dots,i_k \in \{0,\dots,B-1\}$. Under the event $\cdt_{-0}$, for any $x \in \mathbb{R}^d$, we have:
$$\biggr|x^{\top}\Xtt{}{-i_1}\Xtt{\top}{-i_1}\dots\Xtt{}{-i_k}\Xtt{\top}{-i_k}x\biggr| \leq \frac{R^{k-1}}{2}\left[x^{\top}\Xtt{}{-i_1}\Xtt{\top}{-i_1}x + x^{\top}\Xtt{}{-i_k}\Xtt{\top}{-i_k}x \right]$$
\end{claim}
\begin{proof}
This follows from an application of AM-GM inequality. It is clear by Cauchy-Schwarz inequality that $|\langle \Xtt{}{i_l}, \Xtt{}{i_{l+1}}\rangle | \leq R$, which implies:

$$\biggr|x^{\top}\Xtt{}{-i_1}\Xtt{\top}{-i_1}\dots\Xtt{}{-i_k}\Xtt{\top}{-i_k}x\biggr| \leq R^{k-1}\biggr|\left[ x^{\top}\Xtt{}{-i_1} \Xtt{\top}{-i_k}x\right] \biggr| \leq \frac{R^{k-1}}{2}\left[ \langle x,\Xtt{}{-i_1}\rangle^2 +\langle\Xtt{}{-i_k},x\rangle^2\right] \,.$$
Where the last inequality follows from an application of the AM-GM inequality.
\end{proof}

From Claim~\ref{claim:AM-GM}, we conclude that:
$$\sum_{i_1,\dots,i_k}\Xtt{}{-i_1}\Xtt{\top}{-i_1}\dots\Xtt{}{-i_k}\Xtt{\top}{-i_k} \preceq (2B)^{k-1}R^{k-1} \sum_{i=0}^{B-1} \Xtt{}{-i}\Xtt{\top}{-i}$$

Plugging this into Equation~\eqref{eq:series_expansion}, we have that under the event $\cdt_{-0}$:

\begin{align}
\Htt{\top}{0}{B-1}\Htt{}{0}{B-1} &\preceq I - 4\gamma \sum_{i=0}^{B-1}\sum_{i=0}^{B-1}\Xtt{}{-i}\Xtt{\top}{-i} + \sum_{k=2}^{2B} (2\gamma)^k(2B)^{k-1}R^{k-1}\sum_{i=0}^{B-1} \Xtt{}{-i}\Xtt{\top}{-i} \nonumber \\
&\preceq I - 4\gamma \sum_{i=0}^{B-1}\sum_{i=0}^{B-1}\Xtt{}{-i}\Xtt{\top}{-i} + 2\gamma \frac{4\gamma B R}{1-4\gamma B R}\sum_{i=0}^{B-1}\sum_{i=0}^{B-1}\Xtt{}{-i}\Xtt{\top}{-i}
\end{align}
Here we have used the fact that $4\gamma BR < 1$ to convert the finite sum to an infinite sum. Using the bound on $\gamma$, we conclude the upper bound. The lower bound follows with a similar proof. 

\end{proof}

\begin{lemma}\label{lem:almost_sure_contraction}
Suppose $\gamma B R < \frac{1}{6}$. Let $G := \mathbb{E}\Xtt{}{-i}\Xtt{\top}{-i}$ and $M_4 := \mathbb{E}\bigr\|\Xtt{}{-i}\bigr\|^4$. Then, we have:

\begin{\Ieee}{LLL}
\mathbb{E}\left[\Htt{\top}{0}{B-1}\Htt{}{0}{B-1}\bigr|\cdt_{-0}\right] &\preceq & I - \frac{4\gamma B}{\mathbb{P}(\cdt_{-0})}\left(1-\tfrac{2\gamma BR}{1-4\gamma B R}\right)G + \\
&&\frac{4\gamma B\sqrt{M_4(1-\mathbb{P}(\cdt_{-0}))}}{\mathbb{P}(\cdt_{-0})}\left(1-\tfrac{2\gamma BR}{1-4\gamma B R}\right)I
\end{\Ieee}

\end{lemma}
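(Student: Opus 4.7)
The plan is to combine the deterministic PSD bound from Lemma~\ref{lem:contraction} with a straightforward control on how conditioning on $\cdt_{-0}$ perturbs the (unconditional) second moment $G$. Since Lemma~\ref{lem:contraction} gives us an almost sure upper bound on $\Htt{\top}{0}{B-1}\Htt{}{0}{B-1}$ on the event $\cdt_{-0}$ involving $\sum_{i=0}^{B-1}\Xtt{}{-i}\Xtt{\top}{-i}$, the entire proof reduces to lower-bounding $\mathbb{E}[\Xtt{}{-i}\Xtt{\top}{-i}\mid \cdt_{-0}]$ in PSD order in terms of $G$, modulo a controllable error that depends only on $M_4$ and $\mathbb{P}(\cdt_{-0}^c)$.

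First, I would apply Lemma~\ref{lem:contraction}: under $\cdt_{-0}$, $\Htt{\top}{0}{B-1}\Htt{}{0}{B-1}\preceq I - 4\gamma(1-\tfrac{2\gamma BR}{1-4\gamma BR})\sum_{i=0}^{B-1}\Xtt{}{-i}\Xtt{\top}{-i}$. Taking $\mathbb{E}[\,\cdot\mid\cdt_{-0}]$ on both sides preserves the PSD order, so the question becomes lower-bounding
\[
\mathbb{E}\left[\sum_{i=0}^{B-1}\Xtt{}{-i}\Xtt{\top}{-i}\Big|\,\cdt_{-0}\right]=\frac{1}{\mathbb{P}(\cdt_{-0})}\,\mathbb{E}\left[\sum_{i=0}^{B-1}\Xtt{}{-i}\Xtt{\top}{-i}\,\mathbbm{1}[\cdt_{-0}]\right].
\]

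Next, by stationarity $\mathbb{E}[\Xtt{}{-i}\Xtt{\top}{-i}]=G$, so I would write
\[
\mathbb{E}\!\left[\Xtt{}{-i}\Xtt{\top}{-i}\,\mathbbm{1}[\cdt_{-0}]\right]=G-\mathbb{E}\!\left[\Xtt{}{-i}\Xtt{\top}{-i}\,\mathbbm{1}[\cdt_{-0}^{c}]\right].
\]
The subtracted term is PSD, and its operator norm is at most $\mathbb{E}[\|\Xtt{}{-i}\|^{2}\mathbbm{1}[\cdt_{-0}^{c}]]\leq \sqrt{M_{4}}\sqrt{\mathbb{P}(\cdt_{-0}^{c})}=\sqrt{M_{4}(1-\mathbb{P}(\cdt_{-0}))}$ by Cauchy--Schwarz and the definition of $M_{4}$. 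Hence that term is $\preceq \sqrt{M_{4}(1-\mathbb{P}(\cdt_{-0}))}\,I$, giving the PSD lower bound
\[
\mathbb{E}\!\left[\Xtt{}{-i}\Xtt{\top}{-i}\,\mathbbm{1}[\cdt_{-0}]\right]\succeq G-\sqrt{M_{4}(1-\mathbb{P}(\cdt_{-0}))}\,I.
\]

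Summing over $i=0,\ldots,B-1$, dividing by $\mathbb{P}(\cdt_{-0})$, multiplying by the positive scalar $-4\gamma(1-\tfrac{2\gamma BR}{1-4\gamma BR})$ (which flips the PSD inequality), and adding $I$ yields the stated bound. The only mild subtlety is checking that $1-\tfrac{2\gamma BR}{1-4\gamma BR}>0$ under the hypothesis $\gamma BR<\tfrac{1}{6}$, so that the sign of the coefficient is correctly preserved and the scalar multiplication makes the inequality meaningful; this follows since $4\gamma BR<\tfrac{2}{3}$ and $2\gamma BR<\tfrac{1}{3}$. Beyond that, there is no genuine obstacle: all the work is already carried out in Lemma~\ref{lem:contraction}, and the remaining step is the standard truncation estimate for second moments.
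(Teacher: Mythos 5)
Your proposal is correct and follows essentially the same route as the paper: apply the almost-sure PSD bound of Lemma~\ref{lem:contraction} on the event $\cdt_{-0}$, then lower-bound the conditional second moment $\mathbb{E}[\Xtt{}{-i}\Xtt{\top}{-i}\mid\cdt_{-0}]$ by $\tfrac{G}{\mathbb{P}(\cdt_{-0})}-\tfrac{\sqrt{M_4(1-\mathbb{P}(\cdt_{-0}))}}{\mathbb{P}(\cdt_{-0})}I$ via the indicator decomposition and Cauchy--Schwarz, which is exactly the paper's key inequality. The only nitpick is the phrase ``multiplying by the positive scalar $-4\gamma(1-\tfrac{2\gamma BR}{1-4\gamma BR})$'': the scalar is negative (its absolute value is positive), which is precisely why the PSD inequality flips as you state, so the logic is fine.
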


\begin{proof}
The result follows from the statement of Lemma~\ref{lem:contraction}, once we show the following inequality via Cauchy Schwarz inequality and the definition of conditional expectation: 
 $$\mathbb{E}\left[\Xtt{}{-i}\Xtt{\top}{-i}\bigr|\cdt_{-0}\right]\succeq \frac{G}{\mathbb{P}(\cdt_{-0})} - I \frac{\sqrt{\mathbb{E}\bigr\|\Xtt{}{-i}\bigr\|^4}\sqrt{1-\mathbb{P}(\cdt_{-0})}}{\mathbb{P}(\cdt_{-0})}.$$  
\end{proof}

Now we will show that $\Htt{}{0}{B-1}$ contracts any given vector with probability at-least $p_0 > 0$. For this we will refer to lemma~\ref{lem:data_subgaussianity} where it is shown that if $X\sim\pi$ then $\langle X, x\rangle$ has mean $0$ and is sub-Gaussian with variance proxy $C_{\mu}x^{\top} Gx$. Using this will show that the matrix $\Htt{}{0}{B-1}$ operating on a given vector $x$ contracts it with a high enough probability.

\begin{lemma}\label{lem:probable_contraction}

Suppose $\gamma RB < \frac{1}{8}$ and that $\mu$ obeys Assumption~\ref{as:noise_concentration}.  There exists a constant $c_0 > 0$ which depends only on $C_\mu$ such that whenever $1-\mathbb{P}(\cdt_{-0}) \leq c_0$, then for any arbitrary $x \in \mathbb{R}^2$
$$\mathbb{P}\left(\|\Htt{}{0}{B-1}x\|^2 \geq \|x\|^2- B\gamma  x^{\top}G x\bigr|\cdt_{-0}\right) \leq 1-p_0 < 1\,.$$
Where $p_0 > 0$ depends only on $C_{\mu}$.
\end{lemma}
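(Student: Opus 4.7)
The plan is to combine the deterministic PSD bound from Lemma~\ref{lem:contraction} with an anti-concentration (Paley--Zygmund type) argument on the stationary quadratic form $\sum_{i=0}^{B-1} \langle x, \tilde X_{-i}\rangle^2$. First I would apply Lemma~\ref{lem:contraction} with the hypothesis $\gamma RB < 1/8$ to simplify $\tfrac{2\gamma BR}{1-4\gamma BR} < 1/2$, so that under the event $\cdt_{-0}$ we have the sample-path bound
\begin{equation*}
\|\Htt{}{0}{B-1}x\|^2 \;=\; x^\top \Htt{\top}{0}{B-1}\Htt{}{0}{B-1} x \;\leq\; \|x\|^2 - 2\gamma \sum_{i=0}^{B-1} \langle x,\tilde X_{-i}\rangle^2 .
\end{equation*}
Thus it suffices to show that, conditional on $\cdt_{-0}$, the event $\{\sum_{i=0}^{B-1}\langle x,\tilde X_{-i}\rangle^2 \geq \tfrac{B}{2} x^\top G x\}$ occurs with probability at least some $p_0 > 0$ depending only on $C_\mu$.

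Next I would establish this anti-concentration unconditionally. Because $\tilde X_{-i} \sim \pi$, Lemma~\ref{lem:data_subgaussianity} implies that each $\langle x, \tilde X_{-i}\rangle$ is mean-zero and sub-Gaussian with variance proxy $C_\mu\, x^\top G x$, so its second moment is $x^\top G x$ and its fourth moment is bounded by $c\, C_\mu^2 (x^\top G x)^2$ for a universal constant $c$. Writing $Z := \sum_{i=0}^{B-1}\langle x,\tilde X_{-i}\rangle^2$, I get $\mathbb{E} Z = B x^\top G x$ and, by Cauchy--Schwarz on the cross-terms, $\mathbb{E} Z^2 \leq c'\, C_\mu^2\, B^2 (x^\top G x)^2$. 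The Paley--Zygmund inequality then yields
\begin{equation*}
\mathbb{P}\bigl( Z \geq \tfrac{1}{2}\mathbb{E} Z \bigr) \;\geq\; \tfrac{1}{4}\cdot \frac{(\mathbb{E} Z)^2}{\mathbb{E} Z^2} \;\geq\; p_0' ,
\end{equation*}
where $p_0' = 1/(4c' C_\mu^2)$ depends only on $C_\mu$.

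Finally I would transfer this to a conditional bound on $\cdt_{-0}$: for any event $A$, $\mathbb{P}(A\mid \cdt_{-0}) \geq \mathbb{P}(A) - \mathbb{P}(\cdt_{-0}^c)/\mathbb{P}(\cdt_{-0})$, so choosing $c_0 := p_0'/2$ in the hypothesis $1 - \mathbb{P}(\cdt_{-0})\leq c_0$ forces $\mathbb{P}(A \mid \cdt_{-0}) \geq p_0'/2 =: p_0$. Applying this with $A = \{Z \geq \tfrac{B}{2} x^\top G x\}$, and combining with the sample-path bound from Step 1 (which forces $\|\Htt{}{0}{B-1}x\|^2 \leq \|x\|^2 - \gamma B x^\top G x$ on $A \cap \cdt_{-0}$), gives the desired conditional probability bound.

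The only delicate step is the Paley--Zygmund computation: I need the fourth moment $\mathbb{E}\langle x, \tilde X\rangle^4$ in terms of the variance proxy, which follows directly from Lemma~\ref{lem:data_subgaussianity} via the standard identity $\mathbb{E}|Y|^4 \lesssim \sigma^4$ for a sub-Gaussian $Y$ with proxy $\sigma^2$, and a careful bookkeeping of the cross-term $\mathbb{E}[\langle x,\tilde X_{-i}\rangle^2 \langle x,\tilde X_{-j}\rangle^2]$ via Cauchy--Schwarz (which does not require independence across $i$). Everything else is bookkeeping with the explicit constants depending on $C_\mu$.
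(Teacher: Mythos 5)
Your proposal is correct and follows essentially the same route as the paper: the deterministic contraction of Lemma~\ref{lem:contraction} under $\gamma RB<1/8$ reduces the claim to anti-concentration of $\sum_{i=0}^{B-1}\langle x,\tilde X_{-i}\rangle^2$, which you (like the paper) establish via Paley--Zygmund using the fourth-moment bound from the sub-Gaussianity in Lemma~\ref{lem:data_subgaussianity} and Cauchy--Schwarz on the cross terms without needing independence, and then transfer to the conditional measure on $\cdt_{-0}$. The only quibble is your final constant bookkeeping (with $c_0=p_0'/2$ the ratio $\mathbb{P}(\cdt_{-0}^c)/\mathbb{P}(\cdt_{-0})$ can slightly exceed $p_0'/2$), but choosing $c_0$ a bit smaller, e.g. $p_0'/4$, fixes this and the lemma only asserts existence of such constants.
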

\begin{proof}
Initially we do not condition on $\cdt_{-0}$. Consider the quantity: $Y:= \sum_{i=0}^{B-1} \langle x, \Xtt{}{-i}\rangle^2$. 

\begin{claim}\label{claim:payley_zygmund}
$$ \mathbb{P}\left( Y \geq 1/2 B x^{\top}G x \right)  \geq q_0$$
where $q_0 > 0$ depends only on sub-Gaussianity parameter $C_{\mu}$
\end{claim}
\begin{proof}
We consider the Payley-Zygmund inequality which states that for any positive random variable $Y$ with a finite second moment, we have:

$$\mathbb{P}\left(Y > \tfrac{1}{2}\mathbb{E}Y\right) \geq \frac{1}{4}\frac{(\mathbb{E}Y)^2}{\mathbb{E}Y^2}.$$
Note that $\mathbb{E}Y = B x^{\top}Gx$. The statement of the lemma follows once we lower bound the quantity $\frac{(\mathbb{E}Y)^2}{\mathbb{E}Y^2}$. Clearly, $(\mathbb{E}Y)^2 = B^2 x^{\top}Gx$. Now, \begin{align}
\mathbb{E}Y^2 &= \sum_{i,j}\mathbb{E}\langle x,X_i\rangle^2 \langle x,X_j\rangle^2 \leq  \sum_{i,j}\sqrt{\mathbb{E}\langle x,X_i\rangle^4} \sqrt{\mathbb{E}\langle x,X_j\rangle^2} = B^2 \mathbb{E} \langle x,X_i\rangle^4 \nonumber \\
&\leq B^2 c_1 C_{\mu}^2 (x^{\top}Gx)^2 \label{eq:fourth_moment_bound}
\end{align}
Here, the second step follows from Cauchy-Schwarz inequality. The third step follows from the fact that $X_i$ are all identically distributed. The fourth step follows from Lemma~\ref{lem:data_subgaussianity} and Theorem 2.1 from \citep{boucheron2013concentration}. The statement of the claim follows once we apply Payley-Zygmund inequality.
\end{proof}

Now, by definition of conditional probabililty and Claim~\ref{claim:payley_zygmund}, we have:
$$\mathbb{P}\left(\sum_{i=0}^{B-1}\langle x, \Xtt{}{-i}\rangle^2 \leq  \frac{B}{2} x^{T}Gx \biggr|\cdt_{-0}\right) \leq \frac{(1-q_0)}{\mathbb{P}(\cdt_{-0})}$$

Now the statement of the lemma follows from an application of Lemma~\ref{lem:contraction}
\end{proof}

%%%%%%%%%%%%%%%%%%%%%%%%%%%%%%%%%%%%%%%%%%%%%%%%%%%%%%%

Now we want to bound the operator norm of $\prod_{s=a}^{a+b}\Htt{s}{0}{B-1}$ with high probability under the event $\cap_{s=a}^{a+b} \cdt_{-0}^{s}$. 
\begin{lemma}\label{lem:operator_norm_bound_1}
Suppose the conditions in Lemma~\ref{lem:probable_contraction} hold. Let $\lmin{G}$ denote the smallest eigenvalue of $G$. We also assume that $\mathbb{P}(\cdt^{a,b}) > 1/2$. Conditioned on the event $\cdt^{a,b}$,
\begin{enumerate}
\item $\|\prod_{s=a}^{b}\Htt{s}{0}{B-1}\| \leq 1$ almost surely
\item  Whenever $b-a+1$ is larger than some constant which depends only on $C_{\mu}$, we have:
$$
\mathbb{P}\left(\|\prod_{s=a}^{b}\Htt{s}{0}{B-1}\| \geq 2(1-\gamma B\lmin{G})^{c_4(b-a+1)}\biggr|\cdt^{a,b}\right)
\leq \exp(-c_3 (b-a+1)+c_5d)$$
Where $c_3,c_4$ and $c_5$ are constants which depend only on $C_{\mu}$
\end{enumerate}

\end{lemma}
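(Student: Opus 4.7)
Proof plan for Lemma~\ref{lem:operator_norm_bound_1}.

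For part (1), the claim follows directly from Lemma~\ref{lem:contraction}: under the event $\cdt^s_{-0}$, each factor satisfies
\[
\Htttr{s}{0}{B-1}\Htt{s}{0}{B-1} \preceq I - 4\gamma\left(1-\tfrac{2\gamma BR}{1-4\gamma BR}\right)\sum_{i=0}^{B-1}\Xtt{s}{-i}\Xttr{s}{-i} \preceq I,
\]
so each $\Htt{s}{0}{B-1}$ is a contraction almost surely. Hence on $\cdt^{a,b} = \cap_{s=a}^b \cdt^s_{-0}$, the product has operator norm at most $1$.

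For part (2), the plan is a coupled random walk plus an $\epsilon$-net argument. Fix a unit vector $x\in\mathbb{R}^d$ and set $x_s := \prod_{s'=a}^{s}\Htt{s'}{0}{B-1}\,x$. Define the ``good'' indicator $Z_s := 1\bigl[\|x_s\|^2 \leq \|x_{s-1}\|^2 - B\gamma x_{s-1}^{\top} G x_{s-1}\bigr]$, so that under $\cdt^{a,b}$ we always have $\|x_s\|^2 \leq \|x_{s-1}\|^2$ (by part (1)), while $Z_s = 1$ gives the multiplicative contraction $\|x_s\|^2 \leq (1-\gamma B\lmin{G})\|x_{s-1}\|^2$. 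In the coupled process the buffers are independent, so $\Htt{s}{0}{B-1}$ is independent of $\mathcal{F}_{s-1} := \sigma(\text{buffers } a,\ldots,s-1)$ and of $\cdt^{s+1,b}$. Applying Lemma~\ref{lem:probable_contraction} to the $\mathcal{F}_{s-1}$-measurable vector $x_{s-1}$ and using this independence, I would get $\mathbb{P}(Z_s = 1 \mid \mathcal{F}_{s-1},\cdt^{a,b}) \geq p_0$, so $\sum_{s=a}^{b} Z_s$ stochastically dominates $\mathrm{Binomial}(b-a+1,p_0)$ conditional on $\cdt^{a,b}$.

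A standard Chernoff bound then gives $\mathbb{P}\bigl(\sum_s Z_s \leq \tfrac{p_0}{2}(b-a+1)\bigm|\cdt^{a,b}\bigr) \leq \exp(-c_3(b-a+1))$ for a constant $c_3$ depending only on $p_0$ (hence only on $C_\mu$). On the complementary event, $\|x_{b}\| \leq (1-\gamma B\lmin{G})^{c_4(b-a+1)}$ with $c_4 = p_0/4$. To go from a fixed $x$ to the operator norm, I would take a $\tfrac12$-net $\mathcal{N}$ of the unit sphere with $|\mathcal{N}| \leq 5^d$, using the standard bound $\|M\| \leq 2\sup_{x\in\mathcal{N}}\|Mx\|$. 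A union bound yields
\[
\mathbb{P}\Bigl(\|\textstyle\prod_{s=a}^{b}\Htt{s}{0}{B-1}\| \geq 2(1-\gamma B\lmin{G})^{c_4(b-a+1)} \Bigm| \cdt^{a,b}\Bigr) \leq 5^d \exp(-c_3(b-a+1)),
\]
which absorbs into the claimed bound $\exp(-c_3(b-a+1) + c_5 d)$.

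The main obstacle will be justifying the conditional stochastic-domination step cleanly: one needs that conditioning on the product event $\cdt^{a,b}$ (rather than just on $\cdt^s_{-0}$) does not spoil the per-buffer probability $p_0$ promised by Lemma~\ref{lem:probable_contraction}. This requires exploiting, in the coupled process, both the product structure $\cdt^{a,b} = \cap_s \cdt^s_{-0}$ and the independence of the different buffers, so that conditioning on the other factors $\cdt^{s'}_{-0}$, $s'\ne s$, is measurable with respect to sigma-algebras independent of buffer $s$. The assumption $\mathbb{P}(\cdt^{a,b}) > 1/2$ is what lets us pass conditional-probability bounds through without extra logarithmic losses; the constant $c_5$ in the exponent then comes entirely from the net cardinality.
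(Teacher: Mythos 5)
Your proposal is correct and follows essentially the same route as the paper's proof: part (1) from Lemma~\ref{lem:contraction}, and part (2) by combining the per-buffer probable contraction of Lemma~\ref{lem:probable_contraction} (made valid under conditioning on $\cdt^{a,b}$ via buffer independence and the product structure of the event) with almost-sure non-expansion, a counting argument showing a constant fraction of buffers contract, and a $\tfrac12$-net union bound over the sphere. The only cosmetic difference is that you count contracting buffers via stochastic domination by a Binomial plus a Chernoff bound, whereas the paper union-bounds over subsets of non-contracting buffers; these are interchangeable.
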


\begin{proof}
\noindent
\begin{enumerate}
\item The proof follows from an application of Lemma~\ref{lem:contraction}.
\item We will prove this with an $\epsilon$ net argument over the sphere in $\mathbb{R}^d$ dimensions. %To bring out the independence of certain random variables, we will prove the result with $\tilde{\mathcal{D}}^{a-1,b-1}$ instead of $\cdt^{a,b}$. The statement follows when we use the face that $\cdt^{a,b} \subseteq \tilde{\mathcal{D}}^{a-1,b-1}$.

Suppose we have arbitrary $x\in \mathbb{R}^d$ such that $\|x\| = 1$. Conditioned on the event $\cdt^{a,b}$, the matrices $\Htt{s}{0}{B-1}$ are all independent for $a \leq s \leq b$. We also note that $\Htt{s}{0}{B-1}$ is independent of $\tilde{\mathcal{D}}^{t}$ for $t \neq s$.
Let $K_{v} := \prod_{s=v}^{b}\Htt{s}{0}{B-1}$. When $v \geq b+1$, we take this product to be identity. Consider the set of events $\mathcal{G}_v := \{\|\Htt{v}{0}{B-1}K_{v+1}x\|^2 \leq \|K_{v+1}x\|^2 (1-\gamma B\lmin{G} \}$. From Lemma~\ref{lem:probable_contraction}, we have that whenever $v \in (a,b)$:
\begin{equation}\label{eq:contraction_improbability}
\mathbb{P}(\mathcal{G}^{c}_v|\tilde{\mathcal{D}}^{v},\Htt{s}{0}{B-1}: s\neq v) \leq 1-p_0
\end{equation}
Where $p_0$ is given in Lemma~\ref{lem:probable_contraction}

Let $D \subseteq \{a,\dots,b\}$ such that $|D| = r$. It is also clear from item 1 and the definitions above that whenever the event $\cap_{v \in D}\mathcal{G}_v$ holds, we have:
\begin{equation}\label{eq:norm_contraction}
\|\prod_{s=a}^{b}\Htt{s}{0}{B-1}x\| \leq (1-\gamma B\lmin{G})^\frac{r}{2}\,.
\end{equation}
Therefore, whenever Equation~\eqref{eq:norm_contraction} is violated, we must have a set $D^{c} \subseteq \{a,\dots,b\}$ such that $|D^{c}| \geq b-a-r$ and the event $\cap_{v \in D^{c}} \mathcal{G}^{c}_v$ holds. We will union bound all such events indexed by $D^{c}$ to obtain an upper bound on the probability that Equation~\eqref{eq:norm_contraction} is violated. Therefore, using Equation~\eqref{eq:contraction_improbability} along with the union bound, we have:

$$\mathbb{P}\left(\|\prod_{s=a}^{b}\Htt{s}{0}{B-1}x\| \geq (1-\gamma B\lmin{G})^\frac{r}{2}\biggr|\cdt^{a,b}\right) \leq \binom{b-a+1}{b-a-r}(1-p_0)^{b-a-r}$$
Whenever $b-a+1$ is larger than some constant depending only on $C_{\mu}$, we can pick $r = c_2(b-a+1)$ for some constant $c_2 > 0$ small enough such that:
$$\mathbb{P}\left(\|\prod_{s=a}^{b}\Htt{s}{0}{B-1}x\| \geq (1-\gamma B\lmin{G})^\frac{r}{2}\biggr|\cdt^{a,b}\right) \leq \exp(-c_3(b-a+1))$$

Now, let $\mathcal{N}$ be a $1/2$-net of the sphere $\mathcal{S}^{d-1}$. Using Corollary 4.2.13 in \citep{vershynin2018high}, we can choose $|\mathcal{N}| \leq 6^d$. By Lemma 4.4.1 in \citep{vershynin2018high} we show that:

\begin{equation}\label{eq:constant_vector_contraction}
\|\prod_{s=a}^{b}\Htt{s}{0}{B-1}\| \leq 2 \sup_{x\in \mathcal{N}}\|\prod_{s=a}^{b}\Htt{s}{0}{B-1}x\|
\end{equation}

By union bounding Equation~\eqref{eq:constant_vector_contraction} for every $x \in \mathcal{N}$, we conclude that:

\begin{align}
&\mathbb{P}\left(\|\prod_{s=a}^{b}\Htt{s}{0}{B-1}\| \geq 2(1-\gamma B\lmin{G})^{c_4(b-a+1)}\biggr|\cdt^{a,b}\right)\leq |\mathcal{N}|\exp(-c_3(b-a+1)) \nonumber \\
&= \exp(-c_3 (b-a+1)+c_5d)
\end{align}
\end{enumerate}

\end{proof}

Now we will give a high probability bound for the following operator:
\begin{equation}\label{eq:geometric_stochastic_op}
F{a,N}:= \sum_{r=a}^{N-1} \prod_{s=a+1}^{r}\Htt{s}{0}{B-1}
\end{equation}
Here, we use the convention that $\prod_{s = a+1}^{a}\Htt{s}{0}{B-1} = I$

\begin{lemma}\label{lem:operator_norm_bound_2}
Suppose $c_4\gamma B\lmin{G} < \frac{1}{4}$ for the constant $c_4$ as given in Lemma~\ref{lem:operator_norm_bound_1}. Suppose all the conditions given in the statement of Lemma~\ref{lem:operator_norm_bound_1} hold. Then, for any $\delta \in (0,1)$, we have:
$$\mathbb{P}\left(\|F_{a,N}\| \geq C\left(d + \log \frac{N}{\delta} + \frac{1}{\gamma B \lmin{G}}\right)\biggr|\cdt^{a,N-1}\right) \leq \delta $$
Where $C$ is a constant which depends only on $C_{\mu}$
\end{lemma}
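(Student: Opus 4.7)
The plan is to chunk the range $\{a+1,\dots,N-1\}$ into $K$ contiguous blocks of length $L$ (with the last one possibly shorter) and to exploit both conclusions of Lemma~\ref{lem:operator_norm_bound_1} in tandem: the almost sure bound $\|\cdot\|\leq 1$ within a block, and the exponential contraction of full blocks. Setting $\tau_k:=a+kL$ and $\Pi_k := \prod_{s=\tau_{k-1}+1}^{\min(\tau_k,N-1)}\Htt{s}{0}{B-1}$ for $k=1,\dots,K$ with $K:=\lceil(N-1-a)/L\rceil$, sub-multiplicativity together with item~1 of Lemma~\ref{lem:operator_norm_bound_1} shows that for any $r\in(\tau_{k-1},\tau_k]$,
\[
\Bigl\|\prod_{s=a+1}^{r}\Htt{s}{0}{B-1}\Bigr\|\;\leq\;\Bigl\|\prod_{s=\tau_{k-1}+1}^{r}\Htt{s}{0}{B-1}\Bigr\|\cdot\prod_{j=1}^{k-1}\|\Pi_j\|\;\leq\;\prod_{j=1}^{k-1}\|\Pi_j\|.
\]
Grouping by block in the definition of $F_{a,N}$ and bounding the number of $r$ in each block by $L$ yields $\|F_{a,N}\|\leq 1+L\sum_{k=1}^{K}\prod_{j=1}^{k-1}\|\Pi_j\|$ (the stray $1$ absorbs the empty $r=a$ term).

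I would then pick $L$ large enough to achieve both geometric contraction of the $\Pi_k$'s and a successful union bound over the $K\leq N$ blocks, namely
\[
L\;:=\;\Bigl\lceil \tfrac{2\ln 2}{c_4\gamma B\lmin{G}}\Bigr\rceil\;+\;\Bigl\lceil\tfrac{c_5 d+\log(2N/\delta)}{c_3}\Bigr\rceil,
\]
with $c_3,c_4,c_5$ the constants from Lemma~\ref{lem:operator_norm_bound_1}. The first summand gives $2(1-\gamma B\lmin{G})^{c_4 L}\leq 1/2$ via $1-x\leq e^{-x}$ (the hypothesis $c_4\gamma B\lmin{G}\leq 1/4$ keeps us away from the boundary), and the second makes $\exp(-c_3 L+c_5 d)\leq \delta/(2N)$. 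Applying item~2 of Lemma~\ref{lem:operator_norm_bound_1} to every full block and union bounding then give $\mathbb{P}(\exists k:\|\Pi_k\|>1/2\mid\cdt^{a,N-1})\leq\delta$. On the complementary good event the telescoping $\sum_{k\geq 1}(1/2)^{k-1}\leq 2$ yields $\|F_{a,N}\|\leq 1+2L\leq C\bigl(d+\log(N/\delta)+1/(\gamma B\lmin{G})\bigr)$, matching the statement.

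A mild subtlety is the change of conditioning. Lemma~\ref{lem:operator_norm_bound_1} is stated conditionally on the block event $\cdt^{\tau_{k-1}+1,\tau_k}$, whereas the target bound is conditional on the smaller event $\cdt^{a,N-1}$. Since $\cdt^{a,N-1}\subseteq\cdt^{\tau_{k-1}+1,\tau_k}$ and $\mathbb{P}(\cdt^{a,N-1})\geq 1/2$ by hypothesis, the elementary inequality $\mathbb{P}(A\mid\cdt^{a,N-1})\leq \mathbb{P}(A\mid\cdt^{\tau_{k-1}+1,\tau_k})/\mathbb{P}(\cdt^{a,N-1})$ at most doubles each block's tail probability and is absorbed into the constant in $L$. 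The last, possibly partial, block is handled by the trivial $\|\Pi_K\|\leq 1$ bound, at worst contributing an extra $L$ that does not alter the order.

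The point requiring most care is the interlocking of the two items of Lemma~\ref{lem:operator_norm_bound_1}: item~2 produces exponential contraction only for products longer than a constant threshold, so naively applying it to every prefix $\prod_{s=a+1}^{r}$ would break for short prefixes. Routing the geometric decay through complete length-$L$ blocks (where item~2 applies) while handling the within-block contribution of $r$ by the cheap a.s.\ bound from item~1 decouples the probabilistic estimate from $r$, which is exactly what allows the final bound to be additive rather than multiplicative in $d+\log(N/\delta)$ and $1/(\gamma B\lmin{G})$.
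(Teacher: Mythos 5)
Your argument is correct, and it reaches the paper's bound by a slightly different decomposition. The paper's proof is a direct term-by-term estimate: it writes $\norm{F_{a,N}}\le\sum_{t=a}^{N-1}\norm{\prod_{s=a+1}^{t}\Htt{s}{0}{B-1}}$, applies item~2 of Lemma~\ref{lem:operator_norm_bound_1} to \emph{every} prefix with $t-a\gtrsim d+\log(N/\delta)$ (union bounding over all $t$ with failure probability $\delta/N$ each), uses the almost sure bound from item~1 for the $O(d+\log\frac{N}{\delta})$ short prefixes, and then sums the geometric series $\sum_j(1-\gamma B\lmin{G})^{c_4 j}$ to produce the $\frac{1}{\gamma B\lmin{G}}$ term. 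You instead block the indices into segments of length $L\asymp d+\log\frac{N}{\delta}+\frac{1}{\gamma B\lmin{G}}$, extract only a fixed constant contraction $\norm{\Pi_k}\le\tfrac12$ per full block, absorb within-block prefixes by the item~1 bound, and pay a factor $L$ per block in the count of terms --- which gives the same order, $1+2L$. The trade-off: the paper's route is shorter and uses the full exponential decay in the prefix length, while yours needs only a constant per-block contraction and is therefore somewhat more modular; you also treat the conditioning mismatch (block event $\cdt^{\tau_{k-1}+1,\tau_k}$ versus the global event $\cdt^{a,N-1}$) explicitly via $\mathbb{P}(\cdt^{a,N-1})\ge\tfrac12$, a point the paper passes over by conditioning on $\cdt^{a,N-1}$ directly (which is also justifiable by the buffer independence of the coupled process). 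Two cosmetic points: you should enlarge $L$ by the constant threshold required for item~2 of Lemma~\ref{lem:operator_norm_bound_1} to apply (absorbed into $C$), and note that only the blocks $1,\dots,K-1$ appearing as factors need the contraction, so the partial last block genuinely needs nothing beyond $\norm{\Pi_K}\le 1$, as you say.
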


\begin{proof}

We consider the triangle inequality: $\norm{F_{a,N}} \leq \sum_{t=a}^{N-1} \norm{\prod_{s=a+1}^{t}\Htt{s}{0}{B-1}}$. By Lemma~\ref{lem:operator_norm_bound_1}, we have that whenever $t-a \geq \frac{c_5d}{c_3} + \frac{\log \frac{N}{\delta}}{c_3} $:

$$ \mathbb{P}\left(\|\prod_{s=a+1}^{t}\Htt{s}{0}{B-1}\| \geq 2(1-\gamma B\lmin{G})^{c_4(t-a)}\biggr|\cdt^{a,N-1}\right) \leq \frac{\delta}{N}$$

Using union bound, we show that when conditioned on $\cdt^{a,N-1}$, with probability at least $1-\delta$ the following holds:
\begin{enumerate}
\item For all $a \leq t \leq N-1$ such that $t -a \geq \frac{c_5d}{c_3} + \frac{\log \frac{N}{\delta}}{c_3}$:
$$\|\prod_{s=t}^{N}\Htt{s}{0}{B-1}\| \leq 2(1-\gamma B\lmin{G})^{c_4(t-a)} $$
\item For all $t$ such that $t -a < \frac{c_5d}{c_3} + \frac{\log \frac{N}{\delta}}{c_3}$, we have: $\|\prod_{s=t}^{N}\Htt{s}{0}{B-1}\| \leq 1.$
For this, we use the almost sure bound given in item 1 of Lemma~\ref{lem:operator_norm_bound_1}
\end{enumerate}
 
Therefore, when conditioned on $\cdt^{a,N-1}$, with probability at least $1-\delta$ we have:

\begin{align}
\|F_{a,N}\| &\leq C(d + \log \frac{N}{\delta}) + 2\sum_{j = 0}^{\infty} (1-\gamma B \lmin{G})^{c_4 j} \nonumber \\
&\leq C(d + \log \frac{N}{\delta}) + 2\sum_{j = 0}^{\infty} \exp(-c_4 j \gamma B \lmin{G}) \nonumber \\
&\leq  C(d + \log \frac{N}{\delta}) + \frac{2}{ 1 - \exp(-c_4 \gamma B \lmin{G})} \nonumber \\
&\leq  C(d +\log \frac{N}{\delta}) + \frac{2}{  c_4 \gamma B \lmin{G} - \tfrac{c_4^2 \gamma^2 B \lmin{G}}{2}} \nonumber \\
&\leq C\left(d + \log \frac{N}{\delta} + \frac{1}{\gamma B \lmin{G}}\right)
\end{align} 
In the first step, we have used the event described above to bound the operator norm via. the infinite geometric series. In the second step, we have used the inequality $(1-x)^{a} \leq \exp(-ax)$ whenever $x \in [0,1]$ and $a > 0$. In the fourth step, we have used the inequality $\exp(-x) \leq 1-x + \frac{x^2}{2}$ whenever $x\in [0,1]$. In the last step, we have absorbed constants into a single constant $C$
\end{proof}

We will now consider the averaged iterate of the coupled process as defined in Equation~\eqref{eq:tail_variance} with $a=0$.
\begin{equation}\label{eq:average_definition}
\Anovt := \frac{1}{N}\sum_{t=1}^{N}\Attdiff{t-1}{B} 
\end{equation}
We recall the definition of $\Delta_{t-1}$ from the beginning of the Section~\ref{sec:op_norm} and the recursion shown in Equation~\eqref{eq:iterate_recursion}.  We combine these with Equation~\eqref{eq:average_definition} to show:

\begin{equation} \label{eq:average_unfurled}
\Anovt = \frac{1}{N}\sum_{t=1}^{N} \Delta_{t-1} F_{t-1,N}
\end{equation}
Where $F_{a,N}$ is as defined in Equation~\eqref{eq:geometric_stochastic_op}. Using the results in Lemma~\ref{lem:conditional_chernoff} and a similar proof technique we show the following theorem. We define the following event as considered in Lemma~\eqref{lem:operator_norm_bound_2}:
 $$\tilde{\mathcal{M}}^{t-1} := \left\{\|F_{t-1,N}\| \leq C\left(d + \log \frac{N}{\delta}+ \tfrac{1}{\gamma B \lmin{G}}\right)\right\}$$

Define the event $\tilde{\mathcal{M}}^{0,N-1} = \cap_{t=0}^{N-1}\tilde{\mathcal{M}}^t$ and recall the definition of the event $\cdt^{0,N-1}$.
 
\begin{theorem}\label{thm:average_iterate_op_bound}
 We suppose that the conditions in Lemmas~\ref{lem:conditional_chernoff},~\ref{lem:operator_norm_bound_2} and~\ref{lem:contraction} hold. We also assume that $\mathbb{P}(\tilde{\mathcal{M}}^{0,N-1}\cap \cdt^{0,N-1}) \geq \frac{1}{2}$. Define $\alpha := C(d + \log \frac{N}{\delta} +\tfrac{1}{\gamma B \lmin{G}} )$ as in the definition of the event $\tilde{\mathcal{M}}^t$ 
$$\mathbb{P}\left(\|\Anovt\|> \beta\biggr| \tilde{\mathcal{M}}^{0,N-1} \cap \cdt^{0,N-1}\right) \leq \exp\left(c_1 d -\frac{\beta^2N}{16\gamma C_{\mu} \lmax{\Sigma}(1+2\alpha)} \right) \,.$$

\end{theorem}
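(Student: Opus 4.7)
The plan is to combine a standard $\epsilon$-net reduction with a one-buffer-at-a-time ``telescoping peeling'' of the moment generating function. Fix unit vectors $x, y$ and set $z_t := F_{t-1,N}y$ for $t = 1, \ldots, N$ (with $z_{N+1} := 0$), so that $S := N\langle x, \Anovt y\rangle = \sum_{t=1}^N \langle x, \Delta_{t-1} z_t\rangle$. The recursion $F_{t-1,N} = I + \Htt{t}{0}{B-1} F_{t,N}$ gives $z_t = y + H_t z_{t+1}$ where $H_t := \Htt{t}{0}{B-1}$, so in particular $z_N = y$ and $\|z_N\|^2 = 1$. Crucially, $\Delta_{t-1}$ depends only on buffer $t-1$ while $z_t$ is measurable with respect to buffers $t, \ldots, N-1$, and since distinct buffers are independent under the coupling this decoupling makes per-buffer conditional MGF bounds clean.

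The core claim is $\mathbb{E}[\exp(\lambda S)\mathbb{1}(\mathcal{E})] \leq \exp\bigl(N \gamma \lambda^2 C_{\mu}\lmax{\Sigma}(1+2\alpha)\bigr)$ where $\mathcal{E} := \tilde{\mathcal{M}}^{0,N-1} \cap \cdt^{0,N-1}$. Expanding $\|z_t\|^2 = 1 + 2\langle y, H_t z_{t+1}\rangle + \langle z_{t+1}, H_t^{\top} H_t z_{t+1}\rangle$ and using $\|H_t\| \leq 1$ on $\cdt^t_{-0}$ (Lemma~\ref{lem:operator_norm_bound_1}) together with $\|z_{t+1}\| \leq \alpha$ on $\tilde{\mathcal{M}}^t$, Cauchy--Schwarz gives $|\langle y, H_t z_{t+1}\rangle| \leq \alpha$, so on the relevant event
\[
\|z_t\|^2 \;\leq\; 1 + 2\alpha + \langle z_{t+1}, H_t^{\top} H_t z_{t+1}\rangle.
\]
I will maintain the invariant that after peeling buffers $0, \ldots, k-1$, $\mathbb{E}[\exp(\lambda S)\mathbb{1}(\mathcal{E})]$ is bounded by $\exp\bigl((k-1)\gamma\lambda^{2}C_{\mu}\lmax{\Sigma}(1+2\alpha)\bigr)$ times an expectation of $\exp\bigl(\lambda\sum_{t>k}\langle x,\Delta_{t-1}z_t\rangle\bigr)\exp\bigl(\gamma\lambda^{2}C_{\mu}\langle x,\Sigma x\rangle\|z_k\|^{2}\bigr)$ against a residual indicator that can be dominated by one measurable in buffers $\{k,\ldots,N-1\}$.

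The step from $k$ to $k+1$ has two phases. First, substituting the identity into the carried factor $\exp(\gamma\lambda^2C_\mu\langle x,\Sigma x\rangle\|z_k\|^2)$ splits off a constant $\exp(\gamma\lambda^2C_\mu\lmax{\Sigma}(1+2\alpha))$ and leaves a residual $\exp(\gamma\lambda^2C_\mu\langle x,\Sigma x\rangle\langle z_{k+1}, H_k^{\top} H_k z_{k+1}\rangle)$ depending on buffer $k$. This residual is precisely the ``bonus'' term on the left side of item~1 of Lemma~\ref{lem:conditional_chernoff}; combining it with $\exp(\lambda\langle x,\Delta_k z_{k+1}\rangle)$ and $\mathbb{1}(\cdt^k_{-0})$ and taking the conditional expectation over buffer $k$ (independent of the remaining buffers), the lemma produces the bound $\exp(\gamma\lambda^2C_\mu\langle x,\Sigma x\rangle\|z_{k+1}\|^2)$, reinstating the invariant. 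After $N$ peels and using $\|z_N\|^2=1$ to absorb the final residual, the MGF bound follows; the one-sided Chernoff inequality at the optimal $\lambda = \beta/(4\gamma C_{\mu}\lmax{\Sigma}(1+2\alpha))$, doubled for two-sidedness, a union bound over a $1/4$-net of $\mathcal{S}^{d-1}$ in each argument (size $\leq 6^{2d}$), and division by $\mathbb{P}(\mathcal{E}) \geq 1/2$ together yield the stated bound with $c_1 \asymp \log 6$.

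The main obstacle is orchestrating this telescoping so as to obtain the \emph{linear}-in-$\alpha$ factor $(1+2\alpha)$ rather than the quadratic $\alpha^2$ that a direct use of item~2 of Lemma~\ref{lem:conditional_chernoff} would give: the ``bonus'' term in item~1 must be generated exactly by the quadratic identity on $\|z_k\|^2$ and absorbed exactly in the next buffer's Lemma application, forcing the peeling order to match the direction in which $z_t$ unfolds. A subsidiary nuisance is that each $\tilde{\mathcal{M}}^s$ depends on buffers $s+1,\ldots,N-1$ and hence on the currently-peeled buffer $k$ whenever $s<k$; these must be dropped using $\mathbb{1}(\tilde{\mathcal{M}}^s)\leq 1$ at the appropriate moment so that the residual indicator remains measurable in the yet-to-be-peeled buffers, without disturbing the $\tilde{\mathcal{M}}^k,\tilde{\mathcal{M}}^{k+1}$ actually needed to validate the identity at the current step.
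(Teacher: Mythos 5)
Your proposal is correct and follows essentially the same route as the paper's proof: the same decomposition $\Anovt=\tfrac{1}{N}\sum_t \Delta_{t-1}F_{t-1,N}$, the same buffer-by-buffer MGF peeling in which the recursion $F_{t-1,N}=I+\Htt{t}{0}{B-1}F_{t,N}$ generates the $(1+2\alpha)$ term plus a quadratic residual that is absorbed by item~1 of Lemma~\ref{lem:conditional_chernoff}, the same relaxation of the $\tilde{\mathcal{M}}^s$ indicators to keep the residual event measurable in unpeeled buffers, and the same Chernoff--$\epsilon$-net--conditioning finish. The only differences are bookkeeping (normalization of $\lambda$ and net-size constants), which do not affect the argument.
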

\begin{proof}

Recall the events $\tilde{\mathcal{D}}^{t,N-1}$ and define $\tilde{\mathcal{M}}^{t,N-1} := \cap_{s=t}^{N-1} \tilde{\mathcal{M}}^{t}$. We recall that $\Delta_{t-1}$ is independent of $F_{t-1,N}$ and $\tilde{\mathcal{D}}^{t,N-1}$. %Notice that we are using the events $\tilde{\mathcal{D}}$ instead of $\cdh$ to ensure the above independence.
 Now consider arbitrary $x,y \in \mathbb{R}^d$ such that $\|x\| = \|y\| = 1$. Define $\Gamma_{t-1,N-1} := \frac{1}{N}\sum_{s=t}^{N} \Delta_{s-1} F_{s-1,N}$. For any $\lambda > 0$, consider the following exponential moment:

\begin{align}
&\mathbb{E} \left[\exp\left(\lambda \langle x, (\Anovt)y\rangle \right)\biggr|\tilde{\mathcal{M}}^{0,N-1} \cap \cdt^{0,N-1}\right]\nonumber\\ &= \frac{\mathbb{E} \left[\exp\left(\lambda \langle x, (\Anovt)y\rangle \right)\mathbbm{1}\left(\tilde{\mathcal{M}}^{0,N-1} \cap \cdt^{0,N-1}\right)\right]}{\mathbb{P}\left(\tilde{\mathcal{M}}^{0,N-1} \cap \cdt^{0,N-1}\right)} \nonumber\\
&=\frac{\mathbb{E} \left[\exp\left(\tfrac{\lambda}{N}\langle x, \Delta_0 F_{0,N}y\rangle + \lambda\langle x, \Gamma_{1,N-1}y\rangle  \right)\mathbbm{1}\left(\tilde{\mathcal{M}}^{0,N-1} \cap \cdt^{0,N-1}\right)\right]}{\mathbb{P}\left(\tilde{\mathcal{M}}^{0,N-1} \cap \cdt^{0,N-1}\right)} \label{eq:average_chernoff_1}
%&\leq  \frac{\mathbb{E} \left[\exp\left(\tfrac{\lambda}{N}\langle x, \Delta_0 F_{0,N}y\rangle + \lambda\langle x, \Gamma_{1,N-1}y\rangle  \right)\mathbbm{1}\left(\tilde{\mathcal{M}}^{0,N-1} \cap \tilde{\mathcal{D}}^{0,N-1}\right)\right]}{\mathbb{P}\left(\tilde{\mathcal{M}}^{0,N-1} \cap \cdt^{0,N-1}\right)} 
\end{align}
%In the last step we have used the fact that $\cdh^{0,N-1} \subseteq \tilde{\mathcal{D}}^{0,N-1} $.
 Here, we note that $\Delta_0$ is independent of $\tilde{\mathcal{M}}^{0,N-1}$, $F_{0,N}$ and $\tilde{D}^{1,N-1}$. We integrate out $\Delta_0$ in Equation~\eqref{eq:average_chernoff_1} using item 2 of Lemma~\ref{lem:conditional_chernoff} by using the fact that $\tilde{\mathcal{D}}^{0,N-1} = \tilde{\mathcal{D}}^{1,N-1}\cap \tilde{\mathcal{D}}_{-0}^{0}$ to show:

\begin{align}
&\mathbb{E} \left[\exp\left(\lambda \langle x, (\Anovt)y\rangle \right)\biggr|\tilde{\mathcal{M}}^{0,N-1} \cap \cdt^{0,N-1}\right] \nonumber \\
 &\leq
 \frac{\mathbb{E} \left[\exp\left( \gamma \frac{\lambda^2 C_{\mu}}{N^2} \langle x,\Sigma x\rangle \|F_{0,N}y\|^2+ \lambda\langle x, \Gamma_{1,N-1}y\rangle  \right)\mathbbm{1}\left( \tilde{\mathcal{M}}^{0,N-1} \cap \tilde{\mathcal{D}}^{1,N-1}\right)\right]}{\mathbb{P}\left(\tilde{\mathcal{M}}^{0,N-1} \cap \cdt^{0,N-1}\right)} 
\label{eq:average_chernoff_2}
\end{align}
We use the fact that $F_{0,N} = I + \Htt{1}{0}{B-1} F_{1,N}$ to conclude:
$\|F_{0,N} y\|^2 = \|y\|^2 + 2\langle y, \Htt{1}{0}{B-1} F_{1,N} y\rangle + \langle y, F^{T}_{1,N} \Htttr{1}{0}{B-1}\Htt{1}{0}{B-1} F_{1,N}y\rangle $.  Under the event $\tilde{\mathcal{M}}^{0,N-1}\cap \tilde{\mathcal{D}}^{1,N-1}$, we have: $\|\Htt{1}{0}{B-1}\| \leq 1$ and $\|F_{1,N}\| \leq \alpha$. Therefore,
$\|F_{0,N} y\|^2 \leq \|y\|^2(1+2\alpha) + \langle y, F^{T}_{1,N} \Htttr{1}{0}{B-1}\Htt{1}{0}{B-1} F_{1,N}y\rangle$. Using this in Equation~\eqref{eq:average_chernoff_2}, we conclude:
{\small
\begin{align}
&\mathbb{P}\left(\tilde{\mathcal{M}}^{0,N-1} \cap \cdt^{0,N-1}\right)\mathbb{E} \left[\exp\left(\lambda \langle x, (\Anovt)y\rangle \right)\biggr|\tilde{\mathcal{M}}^{0,N-1} \cap \cdt^{0,N-1}\right] \nonumber \\
 &\leq
\mathbb{E} \left[\exp\left( \Omega+\lambda\langle x, \Gamma_{1,N-1}y\rangle  \right)\mathbbm{1}\left( \tilde{\mathcal{M}}^{0,N-1} \cap \tilde{\mathcal{D}}^{1,N-1}\right)\right] \nonumber \\
 &\leq \mathbb{E} \left[\exp\left( \Omega+ \lambda\langle x, \Gamma_{1,N-1}y\rangle  \right)\mathbbm{1}\left( \tilde{\mathcal{M}}^{1,N-1} \cap \tilde{\mathcal{D}}^{1,N-1}\right)\right], 
\label{eq:average_chernoff_3}
\end{align}}
where $\Omega := \gamma \frac{\lambda^2 C_{\mu}}{N^2} \langle x,\Sigma x\rangle (1+2\alpha)\|y\|^2 + \gamma \frac{\lambda^2 C_{\mu}}{N^2} \langle x,\Sigma x\rangle \langle y, F^{T}_{1,N} \Htttr{1}{0}{B-1}\Htt{1}{0}{B-1} F_{1,N}y\rangle $. In the last step we have used the fact that $ \tilde{\mathcal{M}}^{0,N-1} \cap \tilde{\mathcal{D}}^{1,N-1}\subseteq \tilde{\mathcal{M}}^{1,N-1} \cap \tilde{\mathcal{D}}^{1,N-1}  $. We continue just like before but use item 1 of Lemma~\ref{lem:conditional_chernoff} instead of item 2 to keep peeling terms of the form $\langle x,\Delta_{t-1} F_{t-1,N} y\rangle$ to conclude:
\begin{align}
\mathbb{E} \left[\exp\left(\lambda \langle x, (\Anovt)y\rangle \right)\biggr|\tilde{\mathcal{M}}^{0,N-1} \cap \cdt^{0,N-1}\right] &\leq 2\exp\left( \gamma \frac{\lambda^2 C_{\mu}}{N} \langle x,\Sigma x\rangle (1+2\alpha)\|y\|^2\right) \nonumber \\
&\leq  2\exp\left( \gamma \frac{\lambda^2 C_{\mu}}{N} \lmax{\Sigma} (1+2\alpha)\right) \label{eq:full_average_chernoff}
\end{align}
Where $\lmax{\Sigma}$ is the maximum eigenvalue of the covariance matrix $\Sigma$. Here we have used the assumption that $\mathbb{P}\left(\tilde{\mathcal{M}}^{0,N-1} \cap \cdt^{0,N-1}\right) \geq \frac{1}{2}$ and the fact that $\|x\| = \|y\| =1$. We apply Chernoff bound to $\langle x,(\Anovt)y \rangle $ using Equation~\eqref{eq:full_average_chernoff} to conclude that for any $\beta, \lambda \in \mathbb{R}^{+}$
\begin{equation}
\mathbb{P}\left( \langle x,(\Anovt)y \rangle  > \beta \biggr|\tilde{\mathcal{M}}^{0,N-1} \cap \cdt^{0,N-1}  \right) \leq  2\exp\left( \gamma \frac{\lambda^2 C_{\mu}}{N} \lmax{\Sigma}\rangle (1+2\alpha) - \beta \lambda\right)
\end{equation}
Choose $\lambda = \frac{N\beta}{2\gamma C_{\mu}\lmax{\Sigma}(1+2\alpha)}$ to conclude:
$$\mathbb{P}\left( \langle x,(\Anovt)y \rangle  > \beta \biggr|\tilde{\mathcal{M}}^{0,N-1} \cap \cdt^{0,N-1}  \right) \leq  2\exp\left(-\frac{\beta^2N}{4\gamma C_{\mu} \lmax{\Sigma}(1+2\alpha)}\right)$$

We now apply an $\epsilon$ net argument just like in Lemma~\ref{lem:operator_norm_bound_1}. Suppose $\mathcal{N}$ is a $1/4$-net of the sphere in $\mathbb{R}^{d}$. By Corollary 4.2.13 in \citep{vershynin2018high}, we can choose $|\mathcal{N}| \leq 12^d$. By Exercise 4.4.3 in \citep{vershynin2018high}, we conclude that:
$$\|\Anovt\| \leq 2 \sup_{x,y \in \mathcal{N}} \langle x,(\Anovt)y \rangle. $$
Therefore, 
\begin{align}
&\mathbb{P}\left( \|\Anovt\|  > \beta \biggr|\tilde{\mathcal{M}}^{0,N-1} \cap \cdt^{0,N-1}  \right)  \nonumber \\
&\leq \mathbb{P}\left( \sup_{x,y \in \mathcal{N}} \langle x,(\Anovt)y \rangle  > \frac{\beta}{2} \biggr|\tilde{\mathcal{M}}^{0,N-1} \cap \cdt^{0,N-1}  \right) \nonumber \\
 &\leq  |\mathcal{N}|^2 \sup_{x,y \in \mathcal{N}} \mathbb{P}\left( \langle x,(\Anovt)y \rangle  > \frac{\beta}{2} \biggr|\tilde{\mathcal{M}}^{0,N-1} \cap \cdt^{0,N-1}  \right) \nonumber \\
&\leq 2(12)^{2d} \exp\left(-\frac{\beta^2N}{16\gamma C_{\mu} \lmax{\Sigma}(1+2\alpha)}\right) \leq \exp\left(c_1 d -\frac{\beta^2N}{16\gamma C_{\mu} \lmax{\Sigma}(1+2\alpha)} \right)
\end{align}

\end{proof}

\section{Lower Bounds}
\label{sec:lower_bounds}
%We now show that our upper bounds are optimal by presenting matching minimax lower bound for $\losspred$. We fix dimension $d$ and horizon $T$ and consider the class of $\var$ models $\mathcal{M}$ such that Assumptions~\ref{as:norm_condition},~\ref{as:noise_concentration}, and~\ref{as:stationarity} hold and $\mathsf{Tr}(\Sigma(\mu)) = \beta \in \mathbb{R}^{+}$ is fixed. Let $\mathcal{F}$ be the class of all estimators for parameter $\A$ given data $(Z_0,\dots,Z_T)$. We want to lower bound the minimax error:
%
%$$\lossmm(\mathcal{M}) =  \inf_{f\in \mathcal{F}}\sup_{\left(\A,\mu\right)\in \mathcal{M}} \mathbb{E}_{(Z_t)\sim \var(\A,\mu)} \losspred(f(Z_0,\dots,Z_T);\A,\mu) - \losspred(\A;\A,\mu)$$ 

Consider the notations as defined in Section~\ref{sec:main_results}. The idea behind the proof is to consider an appropriate Bayesian error lower bound to the minimax error. To construct such a prior distribution, we consider binary tuples $M = (M_{ij} \text{ for } i,j \in [d], i< j) \in \{0,1\}^{d(d-1)/2}$ and $ \epsilon \in (0,\frac{1}{4d}) $. We construct the symmetric matrix corresponding to $M$, denoted by $A(M)$ as:

\begin{equation}
A(M)_{ij} = \begin{cases}
\frac{1}{2} \text{ if } i = j \\
\frac{1}{4d} - \epsilon M_{ij} \text{ if } i < j
\end{cases}
\end{equation}

 For the sake of clarity, we denote $\losspred(\cdot;A(M),\nn(0,\sigma^2I))$ by $\losspred(\cdot;M)$. We use $\pi_M$ to denote the stationary distribution of $\var(A(M),\nn(0,\sigma^2I))$ and the data co-variance matrix at stationarity to be $G_M := \mathbb{E}_{X\sim \pi_M} XX^{\top}$. By $(Z_t)\sim M$, we mean $(Z_1,\dots,Z_T) \sim \var(A(M),\nn(0,\sigma^2I))$. We will first list some useful results in the following Lemmas:

\begin{lemma}
\label{lem:loss_identity}
Suppose Assumption~\ref{as:norm_condition} holds for $\var(\A,\mu)$ and let its stationary distribution be $\pi$. Let $G := \mathbb{E}_{X\sim\pi}XX^{\top}$. Then, 
$$\losspred(A) - \losspred(\A) = \tr\left[(A-\A)^{\top}(A-\A)G\right]$$
\end{lemma}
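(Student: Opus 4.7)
The plan is to expand $\losspred(A)$ using the stationary dynamics $X_{\tau+1}=\A X_\tau+\eta_\tau$ and exploit the independence of $\eta_\tau$ from $X_\tau$ (since $X_\tau$ is measurable with respect to the noises strictly before time $\tau$).

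First I would substitute $X_{\tau+1}=\A X_\tau+\eta_\tau$ to write
\[
\|X_{\tau+1}-AX_\tau\|^2 \;=\; \|(\A-A)X_\tau+\eta_\tau\|^2,
\]
and then expand the square as
\[
\|(\A-A)X_\tau\|^2 \;+\; 2\langle (\A-A)X_\tau,\eta_\tau\rangle \;+\; \|\eta_\tau\|^2.
\]
Taking expectation under $X_\tau\sim\pi$ (with $\eta_\tau$ drawn from $\mu$ and independent of $X_\tau$), the cross term vanishes because $\mathbb{E}\eta_\tau=0$ and $X_\tau\perp\eta_\tau$, while $\mathbb{E}\|\eta_\tau\|^2=\tr(\Sigma)$. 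Thus
\[
\losspred(A)\;=\;\mathbb{E}_{X\sim\pi}\|(A-\A)X\|^2 + \tr(\Sigma),
\]
and in particular $\losspred(\A)=\tr(\Sigma)$.

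The remaining step is the standard trace manipulation: using $\|v\|^2=\tr(vv^\top)$ and cyclicity of trace,
\[
\mathbb{E}_{X\sim\pi}\|(A-\A)X\|^2 \;=\; \mathbb{E}\,\tr\bigl[(A-\A)XX^\top(A-\A)^\top\bigr] \;=\; \tr\bigl[(A-\A)^\top(A-\A)\,G\bigr],
\]
where $G=\mathbb{E}_{X\sim\pi}XX^\top$. Subtracting $\losspred(\A)$ yields the claimed identity.

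There is no real obstacle here; the only point requiring a moment of care is justifying that $\eta_\tau$ is independent of $X_\tau$ under the stationary distribution, which follows from the causal construction of $\pi$ (the stationary state at time $\tau$ depends only on $\{\eta_s\}_{s<\tau}$, and the $\eta_\tau$ are i.i.d.). Everything else is linear algebra.
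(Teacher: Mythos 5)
Your proof is correct and is exactly the routine computation the paper has in mind: the paper states this identity without proof (see the remark after the problem statement and Equation~\eqref{eq:pred_loss_def2}), relying on the same expansion of $\|(\A-A)X_\tau+\eta_\tau\|^2$ with the cross term vanishing by independence and zero mean of $\eta_\tau$, followed by the trace/cyclicity step. Nothing further is needed.
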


\begin{lemma}\label{lem:covar_ineq_lb}
For every $M \in \{0,1\}^{d(d-1)/2}$ we have:
$$  \sigma^2 I \preceq G_M \preceq 3\sigma^2I $$
\end{lemma}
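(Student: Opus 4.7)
The plan is to exploit the symmetry of $A(M)$ together with the simple structure $A(M) = \tfrac{1}{2}I + B(M)$, where $B(M)$ is a symmetric matrix with zero diagonal and nonnegative off-diagonal entries bounded by $\tfrac{1}{4d}$. Since $A(M)$ is symmetric and $\Sigma = \sigma^2 I$, the formula $G_M = \sum_{s=0}^{\infty} A(M)^s \Sigma (A(M)^{\top})^s$ reduces to
\[
G_M = \sigma^2 \sum_{s=0}^{\infty} A(M)^{2s},
\]
so the problem becomes a clean spectral calculation once the eigenvalues of $A(M)$ are controlled.

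First, I would dispatch the lower bound. Each term $A(M)^{2s}$ is a positive semidefinite matrix (even power of a symmetric matrix), and the $s=0$ term contributes exactly $\sigma^2 I$. Hence
\[
G_M \succeq \sigma^2 I,
\]
which is the first inequality. For this direction no spectral bound is needed beyond symmetry.

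For the upper bound I would bound $\|A(M)\|_{\mathrm{op}}$. Writing $A(M) = \tfrac12 I + B(M)$, each row of $B(M)$ has $d-1$ off-diagonal entries, each in $(0,\tfrac{1}{4d}]$, so the $\ell_\infty$ row-sum is at most $\tfrac{d-1}{4d} < \tfrac14$. Since $B(M)$ is symmetric, Gershgorin's theorem (equivalently, the fact that for symmetric matrices the operator norm is bounded by the maximum absolute row sum) gives $\|B(M)\|_{\mathrm{op}} < \tfrac14$, hence every eigenvalue of $A(M)$ lies in $[\tfrac14, \tfrac34]$ and $\|A(M)\|_{\mathrm{op}} \le \tfrac34$. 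Diagonalizing $A(M) = U\Lambda U^{\top}$ with $\lambda_i \in [\tfrac14,\tfrac34]$ gives
\[
G_M = \sigma^2 U \,\mathrm{diag}\!\left(\tfrac{1}{1-\lambda_i^2}\right) U^{\top} \preceq \sigma^2 \cdot \tfrac{1}{1-(3/4)^2} I = \tfrac{16}{7}\sigma^2 I \preceq 3 \sigma^2 I,
\]
completing the proof.

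There is no real obstacle here: the only thing to be careful about is ensuring the off-diagonal entries are nonnegative so the row-sum bound is tight (this uses $\epsilon < \tfrac{1}{4d}$), and confirming that $(3/4)^2 = 9/16$ yields $\tfrac{1}{1-9/16} = \tfrac{16}{7} \le 3$ so that the numerical constant $3$ in the lemma is valid.
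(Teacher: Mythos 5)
Your proof is correct and follows essentially the same route as the paper: a Gershgorin/row-sum bound giving $\|A(M)\|\le \tfrac34$, the series representation $G_M=\sigma^2\sum_{s\ge 0}A(M)^{2s}$ for the upper bound $\tfrac{16}{7}\sigma^2 I\preceq 3\sigma^2 I$, and the $s=0$ term (equivalently the stationarity recursion $G_M=A(M)G_MA(M)^{\top}+\sigma^2 I$ used in the paper) for the lower bound. The only cosmetic difference is that you diagonalize the symmetric $A(M)$ explicitly where the paper bounds $A(M)^i(A(M)^{\top})^i\preceq(9/16)^iI$ directly; the substance is identical.
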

\begin{proof}
First we note by Gershgorin circle theorem that $\|A(M)\| \leq \frac{3}{4}$.
Given a stationary sequence $(Z_{0},\dots,Z_T) \sim M$ and the corresponding noise sequence $\eta_0,\dots, \eta_T \sim \nn(0,\sigma^2 I)$ i.i.d, we have by stationarity definition:
 $Z_{t+1} = A(M) Z_t + \eta_t$ and  $Z_{t+1}, Z_t$ are both stationary. Therefore:
$$G_M = \mathbb{E}Z_{t+1}Z_{t+1}^{\top} = A(M)\mathbb{E}Z_tZ_t^{\top} A(M)^{\top} + \mathbb{E} \eta_t \eta_t^{\top} = A(M)G_M A(M)^{\top} + \sigma^{2}I\,.$$

From this we conclude that $G_M \succeq \sigma^2 I$. Now, expanding the recursion above, we have:
\begin{align}
G_M &= \sigma^2 \sum_{i=0}^{\infty} A(M)^i (A(M)^{\top})^i \preceq \sigma^2 \sum_{i=0}^{\infty} \left(\frac{9}{16}\right)^i I =  \frac{16\sigma^2}{7} I
\end{align}  
In the second step we have the fact that $\|A(M)\| \leq \frac{3}{4}$ to show that $A(M)^i (A(M)^{\top})^i \preceq \left(\frac{9}{16}\right)^i I $
\end{proof}

Suppose $M$ and $M^{\prime}$ are such that their Hamming distance is $1$ (i.e, $A(M)$ and $A(M^{\prime})$ differ in exactly two places). We want to bound the total variation distance between the corresponding stationary sequences $(Z_0,Z_1,\dots,Z_T) \sim \var(A(M),\nn(0,\sigma^2I))$ and $(Z^{\prime}_0,Z^{\prime}_1,\dots,Z^{\prime}_T) \sim \var(A(M^{\prime}),\nn(0,\sigma^2I))$.

\begin{lemma}
\label{lem:KL_inequality}
Let the quantities be as defined above. 
For some universal constant $c$, whenever $\epsilon < c\min(\frac{1}{\sqrt{T}},\frac{1}{d})$, we have:
$$TV\left((Z_0,\dots,Z_T),(Z^{\prime}_0,\dots,Z^{\prime}_T)\right) \leq \frac{1}{2}$$

By the existence of maximal coupling (see Chapter I, Theorem 5.2 in \citep{lindvall2002lectures}), we conclude that we can define $(Z_0,\dots,Z_T)$ and $(Z^{\prime}_0,\dots, Z^{\prime}_T)$ on a common probability space such that:
$$\mathbb{P}((Z_0,\dots,Z_T) =(Z^{\prime}_0,\dots, Z^{\prime}_T)) \geq \frac{1}{2}$$

\end{lemma}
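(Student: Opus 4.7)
The plan is to reduce $\mathrm{TV}$ to a KL divergence bound via Pinsker's inequality and then control the total KL between the joint laws. Writing $P$ and $P'$ for the laws of $(Z_0,\dots,Z_T)$ and $(Z_0',\dots,Z_T')$ respectively, the Markov chain rule for KL gives
\begin{equation*}
\kl{P}{P'} = \kl{\pi_M}{\pi_{M'}} + \sum_{t=0}^{T-1} \mathbb{E}_{Z_t\sim\pi_M}\, \kl{\nn(A(M)Z_t,\sigma^2 I)}{\nn(A(M')Z_t,\sigma^2 I)}.
\end{equation*}
Each conditional KL term equals $\tfrac{1}{2\sigma^2}\|(A(M)-A(M'))Z_t\|^2$, so its expectation is $\tfrac{1}{2\sigma^2}\tr\bigl((A(M)-A(M'))^\top(A(M)-A(M'))\,G_M\bigr)$. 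Because $M$ and $M'$ have Hamming distance $1$, the matrix $A(M)-A(M')$ has exactly two nonzero entries of magnitude $\epsilon$, so its squared Frobenius norm is $2\epsilon^2$; combined with $G_M\preceq 3\sigma^2 I$ from Lemma~\ref{lem:covar_ineq_lb}, each step contributes at most $3\epsilon^2$, and the sum is at most $3T\epsilon^2$.

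For the initial term, both $\pi_M$ and $\pi_{M'}$ are centered Gaussians with covariances $G_M$ and $G_{M'}$, so $\kl{\pi_M}{\pi_{M'}} = \tfrac12 \sum_i (\lambda_i - 1 - \log\lambda_i)$ where $\lambda_i$ are the eigenvalues of $S := G_{M'}^{-1/2}G_M G_{M'}^{-1/2}$. The key quantitative input is a perturbation bound on the Lyapunov series $G_M - G_{M'} = \sigma^2\sum_{i\geq 0}\bigl[A(M)^i(A(M)^\top)^i - A(M')^i(A(M')^\top)^i\bigr]$: telescoping each summand and using $\|A(M)\|,\|A(M')\|\leq 3/4$ (the Gershgorin bound from the proof of Lemma~\ref{lem:covar_ineq_lb}) together with $\|A(M)-A(M')\|\leq \epsilon$ would yield $\|G_M - G_{M'}\|\leq C\sigma^2\epsilon$. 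Combined with $G_{M'}\succeq \sigma^2 I$, the eigenvalues $\lambda_i$ lie in $[1-C\epsilon,\,1+C\epsilon]$, and the estimate $\lambda - 1 - \log\lambda \leq (\lambda - 1)^2$ near $\lambda = 1$ gives $\kl{\pi_M}{\pi_{M'}}\leq Cd\epsilon^2$.

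Combining the two contributions yields $\kl{P}{P'} \leq c_1 (T+d)\epsilon^2$; under the hypothesis $\epsilon < c\min(1/\sqrt{T},\,1/d)$ with $c$ small enough, the right side is at most $1/2$, and Pinsker's inequality $\mathrm{TV}\leq \sqrt{\kl{P}{P'}/2}$ delivers $\mathrm{TV}\leq 1/2$. The existence of a maximal coupling achieving $\mathbb{P}((Z_0,\dots,Z_T)=(Z_0',\dots,Z_T'))\geq 1/2$ is then immediate from the cited Lindvall theorem. I expect the main obstacle to be the Lyapunov perturbation bound controlling $\|G_M - G_{M'}\|$; the rest is routine Gaussian-KL bookkeeping of the type used in Bayesian/Fano-style lower-bound arguments.
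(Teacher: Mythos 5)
Your proposal is correct and follows the same skeleton as the paper's proof: chain rule for KL along the Markov chain, the per-step Gaussian KL bound $\tfrac{1}{2\sigma^2}\tr\bigl((A(M)-A(M'))^{\top}(A(M)-A(M'))G_M\bigr)\leq 3\epsilon^2$ using $G_M\preceq 3\sigma^2 I$ and $\|A(M)-A(M')\|_F^2=2\epsilon^2$, then Pinsker and maximal coupling. The only genuine difference is in the stationary term: the paper bounds $\kl{\pi_M}{\pi_{M'}}$ via the trace and log-determinant pieces separately, using Gershgorin and Weyl's inequalities, and obtains the linear-in-$\epsilon$ bound $5\epsilon d$; you instead bound the covariance perturbation $\|G_M-G_{M'}\|\lesssim \sigma^2\epsilon$ and use the eigenvalue form $\tfrac12\sum_i(\lambda_i-1-\log\lambda_i)\leq \tfrac12\sum_i(\lambda_i-1)^2$, giving $O(d\epsilon^2)$. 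Your variant is quantitatively sharper (it would even tolerate $\epsilon\lesssim 1/\sqrt{d}$ rather than $1/d$), while the paper's is more elementary bookkeeping; both comfortably yield $\mathrm{KL}\lesssim T\epsilon^2+d\epsilon^{1\text{ or }2}\leq$ a small constant under $\epsilon<c\min(1/\sqrt{T},1/d)$. The Lyapunov perturbation step you flag as the main obstacle is in fact routine here: since $A(M)$ is symmetric, $G_M=\sigma^2(I-A(M)^2)^{-1}$, so the resolvent identity gives $G_M-G_{M'}=\sigma^2(I-A(M)^2)^{-1}\bigl(A(M)^2-A(M')^2\bigr)(I-A(M')^2)^{-1}$, and with $\|A(M)\|,\|A(M')\|\leq 3/4$ and $\|A(M)-A(M')\|\leq\epsilon$ this immediately yields $\|G_M-G_{M'}\|\leq C\sigma^2\epsilon$, so your argument closes without further work.
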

\begin{proof}
We will first bound the KL divergence between the two distributions and infer the bound on TV distance from Pinsker's inequality. Consider $p_{M,T}$ and $p_{M^{\prime},T}$ to be the respective probability density functions of $(Z_0,\dots,Z_T) \sim M$ and $(Z^{\prime}_0,\dots, Z^{\prime}_T)\sim M^{\prime}$ respectively. In this proof, we will use $Z_{t,-}$ to denote the tuple $(Z_0,\dots,Z_{t})$. Now, by definition of KL divergence, we have:
\begin{align}
\kl{p_{M,T}}{p_{M^{\prime},T}} &= \mathbb{E}_{Z\sim p_{M,T}} \log \frac{p_{M,T}(Z_0,\dots,Z_T)}{p_{M^{\prime},T}(Z_0,\dots,Z_T)} \nonumber\\
&= \mathbb{E}_{Z\sim p_{M,T}} \log \frac{p_{M,T}(Z_T|Z_{T-1,-})}{p_{M^{\prime},T}(Z_T|Z_{T-1,-})} + \mathbb{E}_{Z\sim p_{M,T}}\log \frac{p_{M,T-1}(Z_0,\dots,Z_{T-1})}{p_{M^{\prime},T-1}(Z_0,\dots,Z_{T-1})} \nonumber \\
&= \mathbb{E}_{Z\sim p_{M,T}} \log \frac{p_{M,T}(Z_T|Z_{T-1,-})}{p_{M^{\prime},T}(Z_T|Z_{T-1,-})} + \kl{p_{M,T-1}}{p_{M^{\prime},T-1}} \nonumber \\
&= \mathbb{E}_{Z\sim p_{M,T}} \log \frac{p_{M,T}(Z_T|Z_{T-1})}{p_{M^{\prime},T}(Z_T|Z_{T-1})} + \kl{p_{M,T-1}}{p_{M^{\prime},T-1}} \label{eq:KL_recursion}
\end{align}
The first 3 steps above follow from the definition of KL divergence and conditional density. In the last step we have used the Markov property of the sequence $Z_0,\dots,Z_T$ which in this case shows that the law of $Z_T| Z_{T-1}$ is the same as the law $Z_T| Z_{T-1,-}$. Using Equation~\eqref{eq:KL_recursion} recursively and noting that $(Z_t,Z_{t-1})$ are identically distributed for every $t \in \{1,\dots,T\}$, we conclude:
 \begin{equation}\label{eq:kl_identity_full_chain}
\kl{p_{M,T}}{p_{M^{\prime},T}} = T\mathbb{E}_{(Z_0,Z_1)\sim p_{M,1}} \log \frac{p_{M,1}(Z_1|Z_{0})}{p_{M^{\prime},1}(Z_1|Z_{0})} + \kl{\pi_M}{\pi_{M^{\prime}}}
\end{equation}

We will first bound $\mathbb{E}_{(Z_0,Z_1)\sim p_{M,1}} \log \frac{p_{M,1}(Z_1|Z_{0})}{p_{M^{\prime},1}(Z_1|Z_{0})}$. Conditioned on $Z_0$, the law of $Z_1$ under the model $M$ is $\nn(A(M)Z_0, \sigma^2 I)$. Similarly, the conditional law of $Z_1$ under the model $M^{\prime}$ is $\nn(A(M^{\prime})Z_0, \sigma^2 I)$. Therefore, a simple calculation shows that:

\begin{align}
\mathbb{E}_{(Z_0,Z_1)\sim p_{M,1}} \log \frac{p_{M,1}(Z_1|Z_{0})}{p_{M^{\prime},1}(Z_1|Z_{0})} &= \mathbb{E}_{Z_0 \sim \pi_{M}}\frac{\|\left(A(M)-A(M^{\prime})\right)Z_0\|^2}{2\sigma^2} \nonumber \\
&= \mathbb{E}_{Z_0 \sim \pi_{M}} \tr\left(\left(A(M)-A(M^{\prime})\right)^{\top}\left(A(M)-A(M^{\prime})\right) \frac{Z_0 Z_0^{\top}}{2\sigma^2}\right) \nonumber \\
&=  \frac{1}{2\sigma^2}\tr\left(\left(A(M)-A(M^{\prime})\right)^{\top}\left(A(M)-A(M^{\prime})\right) G_M\right) \nonumber \\
&\leq \frac{3}{2}\tr\left(\left(A(M)-A(M^{\prime})\right)^{\top}\left(A(M)-A(M^{\prime})\right)\right) \nonumber \\
&= \frac{3}{2}\|A(M)-A(M^{\prime})\|_{\mathsf{F}}^2 = 3 \epsilon^2. \label{eq:time_dependent_part_kl}
\end{align}
In the first step, we have used standard KL formula for Gaussians with different mean but same variance. In the third step we have used the fact that $Z_0\sim \pi_{M}$. In the fourth step, we have used the upper bound on $G_M$ from Lemma~\ref{lem:covar_ineq_lb}. In the last step we have used the definition of $A(M)$ and the fact that the Hamming distance between $M$ and $M^{\prime}$ is $1$. Now we consider: $\kl{\pi_M}{\pi_{M^{\prime}}}$

Clearly, $\pi_M = \nn(0, G_M)$. By standard formula for KL divergence between Gaussians, 
\begin{equation}\label{eq:kl_identity_gaussian}
\kl{\pi_M}{\pi_{M^{\prime}}} =  \frac{1}{2}\left[\tr(G_{M^{\prime}}^{-1}G_M) - d + \log \frac{\mathsf{det}G_{M^{\prime}}}{\mathsf{det}G_M}\right]. 
\end{equation}
First we consider $\tr(G_{M^{\prime}}^{-1}G_M)$. Clearly, $G_M = \sigma^2 (I - A(M)^2)^{-1}$ and $G_{M^{\prime}} = \sigma^2 (I - A(M^{\prime})^2)^{-1}$. Therefore, $ G_{M^{\prime}}^{-1} = G_{M}^{-1} + \frac{A(M)^2 - A(M^{\prime})^2}{\sigma^2}$. We have:
\begin{align}
\tr(G_{M^{\prime}}^{-1}G_M) &= \tr(I) + \tr\left(\frac{A(M)^2 - A(M^{\prime})^2}{\sigma^2} G_{M}\right) \leq d + d \bigr\|\tfrac{A(M)^2 - A(M^{\prime})^2}{\sigma^2} G_{M}\bigr\| \nonumber \\
&\leq d + d \frac{\|G_M\|}{\sigma^2} \|A(M)^2 - A(M^{\prime})^2\| \leq d + 3d \|A(M)^2 - A(M^{\prime})^2\| \nonumber \\
&= d + 3d\| (A(M)-A(M^{\prime}))A(M) + A(M^{\prime})(A(M)-A(M^{\prime}))\| \nonumber \\
&\leq d + 3d\left[\| A(M)-A(M^{\prime})\|\|A(M)\| + \|A(M^{\prime})\|\|A(M)-A(M^{\prime}\|\right] \nonumber \\
&\leq d + \frac{9}{2}d\epsilon. \label{eq:trace_part_kl}
\end{align}
In the second step we have used the fact that $tr(B) \leq d\|B\|$. In the future steps, we have made use of the sub-multiplicativity of the operator norm and the upper bound on $\|G_M\|$ given by Lemma~\ref{lem:covar_ineq_lb}. We have also used the fact that by Gershgorin theorem $\|A(M)\| \leq \frac{3}{4}$ and $\|A(M)- A(M^{\prime})\| = \epsilon$. 

Next, we will bound $\log \frac{\mathsf{det}G_{M^{\prime}}}{\mathsf{det}G_M}$. Suppose $\mu_1 \geq \dots \geq \mu_d$ be the eigenvalues of $A(M)$ and $\mu^{\prime}_1 \geq \dots \geq \mu^{\prime}_d$ be the eigenvalues of $A(M^{\prime})$. We conclude that:
$$\log \frac{\mathsf{det}G_{M^{\prime}}}{\mathsf{det}G_M} = \sum_{i=1}^{d} \log \left(\frac{1-\mu_i^2}{ 1- (\mu_i^{\prime})^2}\right).$$
Now, $\|A(M) - A(M^{\prime})\| \leq \epsilon$. Therefore, we conclude by Weyl inequalities that $|\mu_i - \mu^{\prime}_i| \leq \epsilon$. By Gershgorin circle theorem, we also conclude that $\frac{1}{4} \leq \mu^{\prime}_i \leq \frac{3}{4}$

Plugging this into the equation above, we have: 

\begin{align}
\log \frac{\mathsf{det}G_{M^{\prime}}}{\mathsf{det}G_M} &= \sum_{i=1}^{d} \log \left(\frac{1-\mu_i^2}{ 1- (\mu_i^{\prime})^2}\right) \leq \sum_{i=1}^{d} \log \left(\frac{1-(\mu^{\prime}_i-\epsilon)^2}{ 1- (\mu_i^{\prime})^2}\right) = \sum_{i=1}^{d} \log \left(1 + \frac{2\mu_i^{\prime} -\epsilon^2}{1-(\mu_i^{\prime})^{2}}\right)\nonumber \\
&\leq \sum_{i=1}^{d} \log \left(1 +4\epsilon\right)\leq 4\epsilon d \label{eq:determinant_part_kl}
\end{align}
Combining Equations~\eqref{eq:trace_part_kl} and~\eqref{eq:determinant_part_kl} along with Equation~\eqref{eq:kl_identity_gaussian} we conclude:
$$\kl{\pi_M}{\pi_{M^{\prime}}}  \leq 5\epsilon d.$$
Using this along with Equations~\eqref{eq:time_dependent_part_kl} and~\eqref{eq:kl_identity_full_chain}, we conclude:
\begin{equation}
\kl{p_{M,T}}{p_{M^{\prime},T}} = 3\epsilon^2T + 5\epsilon d. 
\end{equation}
From this we conclude that when $\epsilon$ is as given in the statement of the lemma, we have:
\begin{equation}
\kl{p_{M,T}}{p_{M^{\prime},T}} \leq \frac{1}{8}. 
\end{equation}
By Pinsker's inequality, which states that $\mathsf{TV} \leq \sqrt{2 \mathsf{KL}}$, we conclude the result of the lemma. 
\end{proof}

\begin{proof}[Theorem~\ref{thm:main_lower_bound}]

We first note that when we choose $\sigma^2$ such that $d\sigma^2 = \beta$, we have $$\var(A(M),\mathcal{N}(0,\sigma^2I)) \in \mathcal{M}$$ for every $M \in \{0,1\}^{d(d-1)/2}$. We pick $\epsilon = c \min(\frac{1}{\sqrt{T}},\frac{1}{d})$ so that Lemma~\ref{lem:KL_inequality} is satisfied.

 We draw $M$ randomly from the uniform measure over $\{0,1\}^{d(d-1)/2}$ and lower bound the minimax error by Bayesian error. 

\begin{equation}\label{eq:bayesian_lb}
\lossmm(\mathcal{M}) \geq \inf_{f\in \mathcal{F}}\mathbb{E}_{M} \mathbb{E}_{(Z_t)\sim M}\losspred(f(Z_0,\dots,Z_T);M) - \losspred(A(M);M)
\end{equation}

We will now uniformly lower bound $\mathbb{E}_{M} \mathbb{E}_{(Z_t)\sim M}\losspred(f(Z_0,\dots,Z_T);M) - \losspred(A(M);M)$ for every fixed choice of $f\in \mathcal{F}$ to conclude the statement of the theorem from Equation~\eqref{eq:bayesian_lb}. Henceforth, we will denote $f(Z_0,\dots,Z_T)$ by $\ahat(M)$ whenever $(Z_t) \sim M$. 
By Lemma~\ref{lem:loss_identity}, we conclude that:
$$\losspred(\ahat(M);M)- \losspred(A(M);M)= \tr\left[(\ahat(M)-A(M))^{\top}(\ahat(M)-A(M))G_M\right].$$
$(\ahat(M)-A(M))^{\top}(\ahat(M)-A(M))$ is a PSD matrix and by Lemma~\ref{lem:covar_ineq_lb}, $G_M \geq \sigma^2I$ for every $M$. Therefore, we conclude that with probability $1$ we have:

\begin{align}
\losspred(\ahat(M);M)- &\losspred(A(M);M)\geq \sigma^2\tr\left[(\ahat(M)-A(M))^{\top}(\ahat(M)-A(M))\right] \nonumber \\
&= \sigma^2\|\ahat(M) - A(M)\|^2_{\mathsf{F}} \geq 2\sigma^2\sum_{\substack{i,j \in [d]\\ i < j}} (\ahat(M)_{ij}-A(M)_{ij})^2. \label{eq:intermediate_lb_ineq_1}
\end{align}
Therefore, we conclude that:
\begin{equation}\label{eq:intermediate_lb_ineq_2}
\mathbb{E}_M \mathbb{E}_{Z_t\sim M}\losspred(\ahat(M);M)- \losspred(A(M);M) \geq 2\sum_{\substack{i,j \in [d]\\ i < j}} \mathbb{E}_{M}\mathbb{E}_{(Z_t)\sim M}(\ahat(M)_{ij}-A(M)_{ij})^2. 
\end{equation}
We will now lower bound every term in the summation in the RHS of Equation~\eqref{eq:intermediate_lb_ineq_2}. Fix $(i,j)$. Let $M_{\sim ij}$ denote all the co-ordinates of $M$ other than $(i,j)$. We define $M^{+},M^{-} \in \{0,1\}^{d(d-1)/2}$ so that $M^{+}_{\sim ij} = M_{\sim ij}$ and $M^{+}_{ij} = 1$. Similarly, let $M^{-}_{\sim ij} = M_{\sim ij}$ and $M^{-}_{ij} = 0$. Therefore, we have:

\begin{align}
 \mathbb{E}_{M}\mathbb{E}_{(Z_t)\sim M}(\ahat(M)_{ij}-A(M)_{ij})^2 &= \frac{1}{2}\mathbb{E}_{M_{\sim ij}} \mathbb{E}_{(Z_t)\sim M^+}(\ahat(M^+)_{ij}-A(M^+)_{ij})^2 \nonumber \\ &\quad + \frac{1}{2}\mathbb{E}_{M_{\sim ij}}\mathbb{E}_{(Z_t)\sim M^-}(\ahat(M^-)_{ij}-A(M^-)_{ij})^2. \label{eq:intermediate_lb_ineq_3}
\end{align}
Now, $M^+$ and $M^-$ differ in exactly one co-ordinate. We invoke Lemma~\ref{lem:KL_inequality} to show that there exists a coupling between $(Z_t^{+})\sim M^+$ and $Z_t^{-} \sim M^-$ such that $\mathbb{P}(Z_t^{+} = Z_t^{-}) \geq \frac{1}{2}$. Call this event $\Gamma$ (we ignore the dependence on $M_{\sim ij}$ for the sake of clarity). In this event, we must have $\ahat(M^{+}) = \ahat(M^-)$ since our estimator $f \in \mathcal{F}$ is a measurable function of the data. For any fixed $M_{\sim ij}$, we have:
\begin{align}
 &\mathbb{E}_{(Z_t)\sim M^+}(\ahat(M^+)_{ij}-A(M^+)_{ij})^2 +\mathbb{E}_{(Z_t)\sim M^-}(\ahat(M^-)_{ij}-A(M^-)_{ij})^2 \nonumber\\ 
&\geq  \mathbb{E}_{(Z_t)}\mathbbm{1}(\Gamma)\left[(\ahat(M^+)_{ij}-A(M^+)_{ij})^2 + (\ahat(M^+)_{ij}-A(M^-)_{ij})^2\right] \nonumber\\
 &\geq \mathbb{P}(\Gamma)(A(M^-)_{ij}-A(M^+)_{ij})^2 \geq \frac{1}{2}(A(M^-)_{ij}-A(M^+)_{ij})^2 = \frac{\epsilon^2}{2}. \label{eq:intermediate_lb_ineq_4}
\end{align} 
In the second line we have used the fact that under event $\Gamma$, $\ahat(M^+) = \ahat(M^{-})$. In the third line, we have used the inequality $(x-y)^2 + (x-z)^2 \geq \frac{1}{2} (y-z)^2$. In the fourth line, we have used the fact that $\mathbb{P}(\Gamma) \geq 1/2$. Using Equation~\eqref{eq:intermediate_lb_ineq_4} along with Equations~\eqref{eq:intermediate_lb_ineq_3} and~\eqref{eq:intermediate_lb_ineq_2}, we conclude that for every estimator $f \in \mathcal{F}$ the following holds:
$$\mathbb{E}_M \mathbb{E}_{Z_t\sim M}[\losspred(\ahat(M);M)- \losspred(A(M);M) ]\geq \frac{d(d-1)\epsilon^2 \sigma^2}{4}.$$ 
Using above equation with Equation~\eqref{eq:bayesian_lb}, we conclude the statement of the theorem. 
\end{proof}

\begin{remark}
We can show a similar lower bound by considering a discrete prior over the space of orthogonal matrices. In particular taking $\A$ to be an orthogonal matrix scaled by $\rho$, we can endow the orthogonal (or special orthogonal) group with metric induced by the Frobenius norm. Then from \citep[Proposition 7]{szarek1982nets}, we can construct an $\epsilon$-cover of cardinality $d^{\frac{d(d-1)}{2}}$. But then from the proof of \citep[Proposition 3]{cai2013sparse}, for $\alpha\in(0,1)$, there exists a \emph{local packing} of the space with packing distance $\alpha \epsilon$ and cardinality at least $c^{d(d-1)/2}$ where $c>1$. Further the diameter of this local packing is at most $2\epsilon$ (in Frobenius norm). Now using standard arguments from Fano's inequality (c.f.\citep[Proposition 3]{cai2013sparse}) or Birge's inequality (c.f.\citep[Lemma F.1]{simchowitz2018learning}) we can get a similar lower bound on the prediction error as Theorem~\ref{thm:main_lower_bound} but with explicit dependence on $\rho$. 
\end{remark}
% technical proofs
\section{Techincal Proofs}

\label{sec:technical_proofs}

\subsection{Proof of Lemma~\ref{lem:bounded_iterates}}
\begin{proof}
Consider the $\sgdber$ iteration: 
\begin{align}
\At{t-1}{i+1} &=  \At{t-1}{i} - 2 \gamma (\At{t-1}{i}\Xt{t-1}{-i}-\Xt{t-1}{-(i+1)})\Xt{t-1,\top}{-i} \nonumber \\
&= \At{t-1}{i} (I-2\gamma\Xt{t-1}{-i}\Xt{t-1,\top}{-i} ) + 2\gamma \Xt{t-1}{-(i-1))}\Xt{t-1,\top}{-(i+1)}  
\end{align}
Observe that for our choice of $\gamma$ and under the event $\cd^{0,N-1}$, we have $\|(I-2\gamma\Xt{t-1}{-i}\Xt{t-1,\top}{-i} ) \| \leq 1$ and $\|\Xt{t-1}{-(i+1)}\Xt{t-1,\top}{-i} \| \leq R$. Therefore, triangle inequality implies:
$$\|\At{t-1}{i+1}\| \leq \|\At{t-1}{i} \|+ 2\gamma R$$
We conclude the bound in the Lemma.

\end{proof}

\subsection{Proof of Lemma~\ref{lem:coupled_iterate_replacement}}

\begin{proof}
We again consider the evolution equation: $\Xtt{t-1}{-i}$
\begin{align}
\At{t-1}{i+1} &= \At{t-1}{i} - 2 \gamma (\At{t-1}{i}\Xt{t-1}{-i}-\Xt{t-1}{-(i+1)})\Xttr{t-1}{-i} \nonumber \\
&= \At{t-1}{i} - 2\gamma(\At{t-1}{i}\Xtt{t-1}{-i}-\Xtt{t-1}{-(i+1)})\Xtttr{t-1}{-i} + \Delta_{t,i}
\end{align}
Where  
$$\Delta_{t,i} = 2\gamma\At{t-1}{i}\left(\Xtt{t-1}{-i} \Xtttr{t-1}{-i}  - \Xt{t-1}{-i} \Xttr{t-1}{-i} \right) + 2\gamma\left(\Xt{t-1}{-(i+1)}\Xttr{t-1}{-i} - \Xtt{t-1}{-(i+1)}\Xtttr{t-1}{-i}\right) $$
 Using Lemmas~\ref{lem:bounded_iterates} and~\ref{lem:1}, we conclude that:
$$\|\Delta_{t,i}\| \leq (16\gamma^2R^2T + 8\gamma R)\norm{\A^u} $$
Using the recursion for $\tilde{A}_{i}^{t}$, we conclude:
\begin{align}
\At{t-1}{i+1} -\Att{t-1}{i+1}  &= (\At{t-1}{i}  - \Att{t-1}{i})\Ppt{t}{i} + \Delta_{t,i} \nonumber \\
\implies \norm{\At{t-1}{i+1} -\Att{t-1}{i+1}} &\leq \norm{\At{t-1}{i}  - \Att{t-1}{i}}\norm{\Ppt{t}{i}} + (16\gamma^2R^2T + 8\gamma R)\norm{\A^u}  \nonumber \\
\implies \norm{\At{t-1}{i+1} -\Att{t-1}{i+1}} &\leq \norm{\At{t-1}{i}  - \Att{t-1}{i}} +  (16\gamma^2R^2T + 8\gamma R)\norm{\A^u} \label{eq:coupling_distance_recursion}
\end{align}
In the last step we have used the fact that under the event $\cdh^{0,N-1}$, we must have $\norm{\Ppt{t}{i}}\leq 1$. We conclude the statement of the lemma from Equation~\eqref{eq:coupling_distance_recursion}. 
\end{proof}

\subsection{Proof of Lemma~\ref{lem:coupling_AA^t}}

\begin{proof}
First we have
\begin{\Ieee}{LLL}
\label{eq:coupling_AA^t_1}
\Ex{\gram{\left(\At{t-1}{j}-\A\right)}\ind{0}{t-1}}&\preceq & \Ex{\gram{\left(\At{t-1}{j}-\A\right)}\indh{0}{t-1}}\\
&&+4\gamma^2 (Bt)^2 R\sqrt{\mu_4}\prbndsq I\\
&\preceq & \Ex{\gram{\left(\At{t-1}{j}-\A\right)}\indh{0}{t-1}}\\
&&+ c\gamma^2 d\sigma_{\max}(\Sigma) R T^2\prbndsq I\Ieeen
\end{\Ieee}

Next, we have 
\begin{\Ieee}{LLL}
\label{eq:coupling_AA^t_2}
\norm{\gram{\left(\At{t-1}{j}-\A\right)}-\gram{\left(\Att{t-1}{j}-\A\right)} }\\
\leq \norm{\At{t-1}{j}-\Att{t-1}{j}} \left(\norm{\left(\At{t-1}{j}-\A\right)}+\norm{\left(\Att{t-1}{j}-\A\right)}\right)\\
\leq \norm{\At{t-1}{j}-\Att{t-1}{j}} \left(2\norm{\A}+\norm{\At{t-1}{j}}+\norm{\Att{t-1}{j}}\right)\Ieeen
\end{\Ieee}
Thus on the event $\cdh^{0,t-1}$, using lemma~\ref{lem:coupled_iterate_replacement} and lemma~\ref{lem:bounded_iterates} we get
\begin{\Ieee}{LLL}
\label{eq:coupling_AA^t_3}
\norm{\gram{\left(\At{t-1}{j}-\A\right)}-\gram{\left(\Att{t-1}{j}-\A\right)} }\\
\leq c(\gamma^2 R^2 T^2 +\gamma R T)(\gamma R T+\norm{\A}+\norm{A_0})\norm{\A^u}
\leq c\gamma^3 R^3 T^3\norm{\A^u}\Ieeen
\end{\Ieee}
for some constant $c$. (We have suppressed the dependence on $A_0$ and $\A$ since they are constants and $\gamma R T$ grows with $T$).

The proof follows by combining \eqref{eq:coupling_AA^t_1} and \eqref{eq:coupling_AA^t_3}. 

The proof of \eqref{eq:coupling_AA^t_b} follows similarly. 
\end{proof}
\section{Prediction error for sparse systems}
\label{sec:sparse_system}
In this section we consider the $\var(\A,\mu)$ model with sparse $\A$ whose sparsity pattern is known. We will present a modification of $\sgdber$ that takes into account the sparsity pattern information. Formally, let $S_l=\{k:\Aa_{l,k}\neq 0\}$ be support or sparsity pattern of row $l$ of $\A$. Further let $s_l=|S_l|$ denote the sparsity of row $j$. We assume that $S_l$ is known for each $1\leq l\leq d$. The claim is that the excess expected prediction loss is of order $\frac{\sum_l s_l\sigma_l^2}{T}$. We will present only a sketch of the proof highlighting the main steps. Detailed calculations follow similarly as in sections~\ref{sec:bias_var_analysis} and \ref{sec:pred_loss}.\\

The modification of the $\sgdber$ algorithm to use the sparsity pattern is as follows. Let $\atr_l$ denote row $l$ of $\A$. The algorithmic iterates are given by $(A^{t-1}_j)$ where row $l$ is $a^{t-1,\top}_{j,l}$. Let $a^0_{0,l}=0\in\mathbb{R}^d$. Let $\{e_l:1\leq l\leq d\}$ denote the standard basis of $\mathbb{R}^d$.  Let $P_{S_l}:\mathbb{R}^d\to \mathbb{R}^d$ denote the (self adjoint) orthogonal projection operator onto the subspace spanned by $\{e_l:l\in S_l\}$. Then update for row $l$ is given by 
\begin{equation}
\label{eq:sparse_sgd1}
a^{t-1,\top}_{j+1,l}=\left[a^{t-1,\top}_{j,l}-2\gamma(a^{t-1,\top}_{j,l}\Xt{t-1}{-j}-\langle e_l, X^{t-1}_{-(j-1)}\rangle)\Xttr{t-1}{-j}\right]P_{S_l}
\end{equation}
and $a^{t}_{0,l}=a^{t-1}_{B,l}$. Since each iterate above has sparsity pattern $S_l$ by construction, we can rewrite the above as
\begin{equation}
\label{eq:sparse_sgd2}
a^{t-1,\top}_{j+1,l}=a^{t-1,\top}_{j,l}-2\gamma(a^{t-1,\top}_{j,l}\Xt{t-1}{-j}-\langle e_l, X^{t-1}_{-(j-1)}\rangle)\left(P_{S_l}\Xt{t-1}{-j}\right)^{\top}
\end{equation}

Notice that $ a^{t-1,\top}_{j,l}\Xt{t-1}{-j}=a^{t-1,\top}_{j,l}P_{S_l}\Xt{t-1}{-j}$ and 
$$\langle e_l, X^{t-1}_{-(j-1)}\rangle =\atr_l \Xt{t-1}{-j}+\eta^{t-1}_{-j,l} $$
 Thus
\begin{\Ieee}{LLL}
\label{eq:sparse_sgd3}
\left(a^{t-1}_{j+1,l}-\a_l\right)^{\top}=\left(a^{t-1}_{j,l}-\a_l\right)^{\top}\left(P_{S_l}-2\gamma\left(P_{S_l}\Xt{t-1}{-j}\right)\left(P_{S_l}\Xt{t-1}{-j}\right)^{\top}\right)+2\gamma \eta^{t-1}_{-j,l}\left(P_{S_l}\Xt{t-1}{-j}\right)^{\top}\\
\Ieeen
\end{\Ieee}

For a vector $v\in \mathbb{R}^d$, let $v_{S_l}\in\mathbb{R}^{s_l}$ be the vector corresponding to the support $S_l$ i.e. entries in $v_{S_l}$ correspond to the entries in $v$ whose indices are in $S_l$. So we can rewrite \eqref{eq:sparse_sgd3} completely in $\mathbb{R}^{s_l}$ as
\begin{\Ieee}{LLL}
\label{eq:sparse_sgd4}
\left(a^{t-1}_{j+1,l}-\a_l\right)^{\top}_{S_l}=\left(a^{t-1}_{j,l}-\a_l\right)^{\top}_{S_l}\left(I_{s_l}-2\gamma\left(\Xt{t-1}{-j}\right)_{S_l}\left(\Xt{t-1}{-j}\right)^{\top}_{S_l}\right)+2\gamma \eta^{t-1}_{-j,l}\left(\Xt{t-1}{-j}\right)^{\top}_{S_l}\\
\Ieeen
\end{\Ieee}
where $I_{s_l}$ is the identity matrix of dimension $s_l$. 

Our goal is to bound the expected prediction error for this modified $\sgdber$. To that end, we will make some important observations. 
\begin{enumerate}[label=(\arabic*)]
\item Since we focus on prediction error, the entire analysis can be carried out row by row. To see this, if $\hat A$ is any estimator, the 
$$\losspred(\hat A;\A,\mu)-\tr(\Sigma)=\tr(G(\hat A-\A)^{\top}(\hat A-A))=\sum_{l=1}^d \tr(G(\hat a_l-\a_l)(\hat a_l-\a_l)^{\top})$$
where $\hat a^{\top}_l$ is the row $l$ of $\hat A$. 
\item If $\hat a_l$ and $\a_l$ have sparsity pattern $S_l$ then
\begin{\Ieee}{LLL}
\tr(G(\hat a_l-\a_l)(\hat a_l-\a_l)^{\top})&=&\tr(P_{S_l}GP_{S_l}(\hat a_l-\a_l)(\hat a_l-\a_l)^{\top})\\
&=&\tr(G_{S_l}(\hat a_l-\a_l)_{S_l}(\hat a_l-\a_l)^{\top}_{S_l})
\end{\Ieee}
where $G_{S_l}\in \mathbb{R}^{s_l\times s_l}$ is the submatrix of $G$ obtained by picking rows and columns corresponding to indices in $S_l$. 
\item Under the stationary measure, we have $\Ex{\left(P_{S_l}\Xt{t-1}{-j}\right)\left(P_{S_l}\Xt{t-1}{-j}\right)^{\top}}=P_{S_l}GP_{S_l}$. Thus, with high probability $\norm{P_{S_l}\Xt{t-1}{-j}}^2\leq c s_l\sigma_{\max}(G)\log T$. 

\item Letting $s_0=\max_{l}s_l$, we can set $R=c s_0\sigma_{\max}(G)\log T$ and use step size $\gamma=O(1/RB)$.

\item We can perform the same bias-variance decomposition as described in section~\ref{sec:bias_variance} to obtain $a^{t-1,v}_{B,l}$ and $a^{t-1,b}_{B,l}$. 

\item From previous observations, the variance of last iterate corresponding to row $l$ turns out to be
$$\gamma\sigma_l^2(1-o(1))I_{s_l}\preceq \Ex{\left(a^{t-1,v}_{B,l}\right)_{S_l}\left(a^{t-1,v}_{B,l}\right)^{\top}_{S_l}}\preceq \frac{\gamma}{1-\gamma R} \sigma_l^2(1+o(1))I_{s_l} $$
where $\sigma_l^2=\Sigma_{l,l}$. 

\item Similarly, the variance of the average iterate $\Ex{(\hat a^{v}_{0,N,l})(\hat a^{v}_{0,N,l})^{\top}}$ corresponding to row $l$ can be bounded upto leading order by 
$$\frac{1}{N^2}\sum_{t=1}^N \left[V_{t-1,l}(I_{s_l}-\ch_{S_l})^{-1}+(I_{s_l}-\ch^{\top}_{S_l})^{-1}V_{t-1,l}\right] $$
where $V_{t-1,l}=\Ex{\left(a^{t-1,v}_{B,l}\right)_{S_l}\left(a^{t-1,v}_{B,l}\right)^{\top}_{S_l}}$ and (with abuse of notation) $\ch_{S_l}$ is defined as
$$\ch_{S_l}=\Ex{\prod_{j=0}^{B-1}\left(I_{s_l}-2\gamma (\Xtt{0}{-j})_{S_l} (\Xtt{0}{-j})_{S_l}^{\top}\right)1\left[\cap_{j=0}^{B-1}\left\{\norm{(\Xtt{0}{-j})_{S_l}}^2\leq R\right\}\right]}$$ 
where $\Xtt{0}{0}\distas{}\pi$.

\item  Now, similar to lemma~\ref{lem:H_plus_HT_bound} we can bound $\ch_{S_l}+\ch_{S_l}^{\top}$ by $2(I_{s_l}-c\gamma B G_{s_l})$ upto leading order. 

\item Thus similar to lemma~\ref{lem:predloss_iden_vari} we obtain 
$$\tr(G_{S_l}(I-\ch_{S_l})^{-1})\leq c\frac{s_l}{\gamma B}$$

\item Finally as in section~\ref{sec:pred_var} we can bound the variance of prediction error of row $l$ upto leading order by
$$\tr(G\Ex{(\hat a^{v}_{0,N,l})(\hat a^{v}_{0,N,l})^{\top}})\lesssim \frac{\sigma_l^2 s_l}{T}$$
Thus summing over $l$ we get
$$\tr\left(G\Ex{(\hat A^{v}_{0,N})(\hat A^{v}_{0,N})^{\top}}\right)\lesssim \frac{\sum_l \sigma_l^2 s_l}{T}$$

\item Bias can also be analyzed in a similar way and it will be of strictly lower order (using suitable tail-averaging). 

\item Thus the excess prediction loss  is given bounded as
$$ \Ex{\losspred(\hat A_{N/2,N};\A,\mu)}-\tr(\Sigma)\lesssim \frac{\sum_l \sigma_l^2 s_l}{T} $$
\end{enumerate}

So the modified $\sgdber$ algorithm effectively utilizes the low dimensional structure in $\A$.

%\section{My Proof of Theorem 1}

%This is a boring technical proof.

%\section{My Proof of Theorem 2}

%This is a complete version of a proof sketched in the main text.

\end{document}